\documentclass{article}
\usepackage[utf8]{inputenc}
\usepackage{xcolor}
\usepackage{booktabs}
\usepackage[round,compress]{natbib}
\usepackage[colorlinks=true,bookmarks=false,linkcolor=blue,urlcolor=blue,citecolor=blue,breaklinks=true]{hyperref}
\usepackage[accepted]{icml2024}
\usepackage{fancyhdr}
\usepackage{amsfonts}
\usepackage{amsmath}
\usepackage{amsthm}
\usepackage{amssymb}
\usepackage{dsfont}
\usepackage{multirow}
\usepackage{xcolor}
\usepackage{color}
\usepackage{graphicx}

\usepackage{verbatim}
\usepackage{xspace} 
\newcommand{\algname}[1]{{\sf #1}\xspace}

\usepackage{enumerate}
\usepackage{enumitem}


\providecommand{\lin}[1]{\ensuremath{\left\langle #1 \right\rangle}}

  \providecommand{\R}{\mathbb{R}} 
  
  \DeclareMathOperator{\E}{{\mathbb E}}

  \DeclareMathOperator*{\argmin}{arg\,min}


  \renewcommand{\aa}{\mathbf{a}}
  \providecommand{\bb}{\mathbf{b}}

  \renewcommand{\gg}{\mathbf{g}}
  \providecommand{\hh}{\mathbf{h}}

  \let\lll\ll
  \renewcommand{\ll}{\mathbf{l}}

  \providecommand{\vv}{\mathbf{v}}
  
  \providecommand{\xx}{\mathbf{x}}
  \providecommand{\yy}{\mathbf{y}}
  \providecommand{\zz}{\mathbf{z}}
  
  \providecommand{\mA}{\mathbf{A}}


  \providecommand{\cO}{\mathcal{O}}

  \usepackage{bm}

%
\RequirePackage[colorinlistoftodos,bordercolor=orange,backgroundcolor=orange!20,linecolor=orange,textsize=scriptsize]{todonotes}
\providecommand{\mycomment}[3]{\todo[caption={},color=#3!20,inline]{\textbf{#1: }#2}}%
\providecommand{\myinlinecomment}[3]{%
  {\color{#1}#2: #3}}%
\newcommand\commenter[2]%
{%
  \expandafter\newcommand\csname i#1\endcsname[1]{\myinlinecomment{#2}{#1}{##1}}
  \expandafter\newcommand\csname #1\endcsname[1]{\mycomment{#1}{##1}{#2}}
}
  



\newtheorem{lemma}{Lemma}
\newtheorem{corollary}[lemma]{Corollary}

\newtheorem{definition}{Definition}
\newtheorem{remark}[lemma]{Remark}
\newtheorem{assumption}{Assumption}
\newtheorem{theorem}[lemma]{Theorem}

\usepackage[colorlinks=true,linkcolor=blue]{hyperref} 
\usepackage[capitalize,noabbrev]{cleveref}

\usepackage{url}

\renewcommand{\epsilon}{\varepsilon}

\usepackage{tcolorbox}

\newtcbox{\comparison}{on line,
  colframe=blue,colback=white,
  boxrule=0.5pt,arc=4pt,boxsep=0pt,left=6pt,right=6pt,top=6pt,bottom=6pt}
\providecommand{\Avg}{{\frac{1}{n}\sum_{i=1}^n}}

\commenter{seb}{red}
\commenter{xiaowen}{blue}
\commenter{anton}{orange}

\begin{document}

\twocolumn[
\icmltitle{Federated Optimization with Doubly Regularized Drift Correction}

\icmlsetsymbol{equal}{*}

\begin{icmlauthorlist}
\icmlauthor{Xiaowen Jiang}{yyy,comp}
\icmlauthor{Anton Rodomanov}{yyy}
\icmlauthor{Sebastian U. Stich}{yyy}
\end{icmlauthorlist}

\icmlaffiliation{yyy}{CISPA Helmholtz Center for Information Security, Saarbrücken, Germany}
\icmlaffiliation{comp}{Universität des Saarlandes, Saarbrücken, Germany}

\icmlcorrespondingauthor{}{\{xiaowen.jiang, anton.rodomanov, stich\}@cispa.de}

\icmlkeywords{Machine Learning, ICML}

\vskip 0.3in
]

\printAffiliationsAndNotice 

\begin{abstract}
    Federated learning is a distributed optimization paradigm that allows training machine learning models across decentralized devices while keeping the data localized. The standard method, \algname{FedAvg}, suffers from client drift which can hamper performance and increase communication costs over centralized methods. Previous works proposed various strategies to mitigate drift, yet none have shown uniformly improved communication-computation trade-offs over vanilla gradient descent.

    In this work, we revisit \algname{DANE}, an established method in distributed optimization.
    We show that (i) \algname{DANE} can achieve the desired communication reduction under Hessian similarity constraints. 
    Furthermore, (ii) we present an extension, \algname{DANE+}, which supports arbitrary inexact local solvers and has more freedom to choose how to aggregate the local updates.  We propose (iii) a novel method, \algname{FedRed}, which has improved local computational complexity and retains the same communication complexity compared to \algname{DANE/DANE+}. This is achieved by using doubly regularized drift correction.

\end{abstract}

\begin{table*}[ht!]
\resizebox{\textwidth}{!}
{\begin{minipage}{1.82\textwidth}
\centering
\begin{tabular}{@{}ccccccccc@{}}
\toprule
\multirow{2}{*}{\textbf{Algorithm}} &
 \multicolumn{2}{c}{\textbf{$\mu$-strongly convex}} & \multicolumn{2}{c}{\textbf{General convex}} & 
 \multicolumn{2}{c}{\textbf{Non-convex}} & 
 \multirow{2}{*}{\textbf{Guarantee}} 
 \\
 \cmidrule(lr){2-3}\cmidrule(lr){4-5}\cmidrule(lr){6-7}
 & \multicolumn{1}{c}{\# comm rounds} & 
\multicolumn{1}{c}{\# local steps at round $r$} & \multicolumn{1}{c}{\# comm rounds} & 
\multicolumn{1}{c}{\# local steps at round $r$} & \multicolumn{1}{c}{\# comm rounds} & 
\multicolumn{1}{c}{\# local steps at round $r$} & 
\\
\midrule
Centralized GD~{\small{\cite{nesterov-book}}} 
\footnote{The smoothness parameter $L$ for centralized \algname{GD} can be as small as the Lipschitz constant of the global function $f$.}
& $\cO\bigl(\frac{L}{\mu}\log(\frac{R_0^2}{\epsilon})\bigr)$ 
& $1$ 
& $\cO\bigl( \frac{LR_0^2}{\epsilon} \bigr)$ 
& $1$ 
& $\cO\bigl( \frac{LF_0}{\epsilon} \bigr)$ 
& $1$
& deterministic 
\\
\rule{0pt}{4ex} 
Scaffold~{\small{\cite{scaffold}}} 
\footnote{To achieve $||\nabla f(\xx^R)||^2 \le \epsilon$,
\algname{Scaffold} requires 
$\frac{L + \delta_B K}{\mu K} \log (\frac{R_0^2}{\epsilon})$
communication rounds
for strongly-convex quadratics,  and 
$\frac{(L + \delta_B K) F_0}{K \epsilon}$
communication rounds
for convex quadratics, where $K$ is the number of local steps.
 }
& $\cO\bigl(\frac{L}{\mu}\log(\frac{R_0^2}{\epsilon})\bigr)$ 
& $\forall K \ge 1$ 
  \footnote{\algname{Scaffold} allows to use any number of local 
            steps $K$ by choosing the stepsize to be inversely
            proportional to $K$. }
& $\cO\bigl( \frac{LR_0^2}{\epsilon} \bigr)$ 
& $\forall K \ge 1$ 
& $\cO\bigl( \frac{LF_0}{\epsilon} \bigr)$ 
& $\forall K \ge 1$
& deterministic 
\\
\rule{0pt}{4ex} 
FedDyn~{\small{\cite{feddyn}}}
\footnote{\algname{FedDyn} and \algname{SONATA} assume that the local subproblem can be 
solved exactly. `-' means there is no definition of local steps.\label{FedDyn}}
& $\cO\bigl(\frac{L}{\mu}\log(\frac{R_0^2 + A^2/L}{\epsilon})\bigr)$
& -
& $\cO( \frac{LR_0^2 + A^2}{\epsilon})$
& -
& $\cO( \frac{L F_0}{\epsilon} )$
& -
& deterministic
\\
\rule{0pt}{4ex} 
Scaffnew~{\small{\cite{proxskip}}} 
\footnote{For \algname{Scaffnew} and \algname{FedRed-GD},
the column `\# comm rounds' represents the expected number of 
total communications required to reach $\epsilon$ accuracy.
The column  `\# of local steps at round $r$' 
is replaced with the expected number of local steps between two communications. The general convex result of \algname{Scaffnew} is established in Theorem 11 in the \algname{RandProx} paper~\cite{randprox}.
\label{scaffnew}}
& $\cO\bigl(\sqrt{\frac{L}{\mu}}\log(\frac{R_0^2 + \frac{H_0^2}{\mu L}}
{\epsilon})\bigr)$ 
& $\sqrt{\frac{L}{\mu}}$
& $\cO\bigl( \frac{pLR_0^2+\frac{H_0^2}{Lp}}{\epsilon}\bigr)$
& $\frac{1}{p}$
& unknown
& unknown
& in expectation 
\\
\rule{0pt}{4ex} 
MimeMVR~{\small{\cite{mime}}}
\footnote{Mime assumes 
$\Avg ||\nabla f_i (\xx) - \nabla f(\xx)||^2 \le G^2$ for any
$\xx \in \R^d$. Note $G = +\infty$ for some simple quadratics.}
& unknown
  \footnote{`unknown' means no theoretical results are established so far. }
& unknown
& unknown
& unknown
& $\cO\bigl(
    \frac{\delta_B G F_0}{\sqrt{n} \epsilon^{3/2}} 
    + \frac{G^2}{n\epsilon} + \frac{\delta_B F_0}{\epsilon} 
    \bigr)$
& $\frac{L}{\delta_B}$
& deterministic
\\
\rule{0pt}{4ex} 
SONATA~\cite{sonata}
\footref{FedDyn}
&
$\cO\bigl(\frac{\delta_B}{\mu} \log (\frac{R_0^2}{\epsilon})\bigr)$
&
-
&
unknown
&
unknown
&
unknown
&
unknown
&
deterministic
\\
\rule{0pt}{4ex} 
CE-LGD~\cite{ce-lgd}
\footnote{The communication and computational complexity for the 
non-convex case is min-max optimal for minimizing twice-continuously differentiable smooth functions under $\delta_B$-BHD using first-order gradient methods.}
&
unknown
&
unknown
&
unknown
&
unknown
&
$\cO( \frac{\delta_B F_0}{\epsilon} )$
&
$\frac{L}{\delta_B}$
&
in expectation
\vspace{1mm}
\\
\hline
\vspace{1mm}
\rule{0pt}{4ex} 
SVRP-GD~\cite{svrp}
\footnote{\algname{SVRP} aims at minimizing a different measure which is the total amount of information transmitted between the server and the clients.}
&
$\cO\bigl( (n + \frac{\delta_A^2}{\mu^2}) \log (\frac{R_0^2}{\epsilon})\bigr)$
&
$\cO\Bigl(\frac{L\delta_A^2+\mu}{\mu\delta_A^2+\mu}\log\bigl(\frac{\max\{\mu^2n/\delta_A^2,1\}}{\epsilon}\bigr)\Bigr)$
&
unknown
&
unknown
&
unknown
&
unknown
&
in expectation
\\
\hline
\rule{0pt}{4ex} 
DANE~\cite{dane} (\textbf{this work}) 
& $\cO\bigl(\frac{\delta_A}{\mu} \log(\frac{R_0^2}{\epsilon}) \bigr)$ 
& -\footnote{\algname{DANE} uses exact local solvers.}
& $\cO(\frac{\delta_A R_0^2}{\epsilon})$ 
& -
& -
& -
& deterministic
\\
\rule{0pt}{4ex}
$\text{DANE+-GD}$ (\textbf{this work}, Alg.~\ref{Alg:FrameworkDeterministic}) 
& $\cO\bigl(\frac{\delta_A}{\mu} \log(\frac{R_0^2}{\epsilon}) \bigr)$ 
& $\cO\Bigl(
\frac{L}{\mu+\delta_A} 
\log\bigl( \frac{L}{\mu + \delta_A}(r+1) \bigr) \Bigr)$ 
& $\cO(\frac{\delta_A R_0^2}{\epsilon})$ 
& $\cO\Bigl(
\frac{L}{\delta_A} 
\log\bigl( \frac{L}{\delta_A}(r+1) \bigr) \Bigr)$ 
& $\cO(\frac{\delta_B F_0}{\epsilon})$  
& $\cO\Bigl( 
        \frac{L}{\delta_B} R
        \Bigr)$
  \footnote{$R$ is the number of communication rounds.}
& deterministic
\\
\rule{0pt}{4ex} 
FedRed-GD (\textbf{this work}, Alg.~\ref{Alg:GDLocalSolver}) 
\footref{scaffnew}
& $\cO\bigl(\frac{\delta_A + \mu}{\mu}
    \log(\frac{R_0^2}{\epsilon}) \bigr)$ 
& $\frac{L}{\delta_A}$ 
& $\cO(\frac{\delta_A R_0^2}{\epsilon})$ 
& $\frac{L}{\delta_A}$
& $\cO(\frac{\delta_B F_0}{\epsilon})$  
& $\frac{L}{\delta_B}$ 
& in expectation 
\\
\bottomrule
\end{tabular}
\end{minipage}}
\caption{\small{Summary of convergence behaviors of the considered \textbf{non-accelerated} distributed algorithms where $L$ and $\mu$ stand for the smoothness and strong-convexity parameters of each function $f_i$, $\delta_A$, 
$\delta_B$ are defined in~\eqref{df:HessianSimilarity} and~\eqref{df:MaxHessianSimilarity}, 
$R_0^2 := ||\xx^0 - \xx^\star||^2$,
$F_0 := f(\xx^0) - f^\star$, 
$H_0^2 := \Avg ||\hh_{i,0} - \nabla f_i(\xx^\star)||^2$, and 
$A^2 := \frac{1}{L n}\sum_{i=1}^n||\nabla f_i(\xx^\star)||^2$.
The suboptimality $\epsilon$ is defined via $||\hat{\xx}^R-\xx^\star||^2$,
$f(\hat{\xx}^R)-f^\star$ and $||\nabla f(\hat{\xx}^R)||^2$ 
respectively
for strongly-convex, general convex, and non-convex functions
($\hat{\xx}^R$ is a certain output produced by the algorithm after $R$ communications.)
}} 
\label{tab:summary}
\end{table*}

\section{Introduction}
With the growing scale of datasets and the complexity of models, distributed optimization plays an increasingly important role in large-scale machine learning.
Federated learning has
emerged as an essential modern distributed learning paradigm where clients (e.g.\ phones and hospitals) collaboratively train a model without sharing their data, thereby ensuring a certain level of privacy~\cite{mcmahan2017FL,kairouz2021advances}. However, privacy leaks can still occur~\cite{inference-attack}.

Tackling communication bottlenecks is one of the key challenges in modern federated optimization~\cite{konevcny2016federated, mcmahan2017FL}. The relatively slow and unstable internet connections of participating clients often make communication highly expensive, which might impact the effectiveness of the overall training process. Therefore, a standard metric to evaluate the efficiency of a federated optimization algorithm is the total number of communication rounds required to reach a certain accuracy.

\algname{FedAvg}~\cite{mcmahan2017FL} as the pioneering algorithm improves the communication efficiency by 
doing multiple local stochastic gradient descent updates before communicating to the server. This strategy has demonstrated significant success in practice.  However, when the data is heterogeneous, \algname{FedAvg} suffers from \emph{client drift}, which might result in slow and unstable convergence~\cite{scaffold}. Subsequently, some advanced techniques, including drift correction and regularization, have been proposed to address this issue~\cite{scaffold,fedprox,feddyn,fedlin,proxskip}. The communication complexity established in these methods depends on the smoothness constant $L$ in general, which shows no advantage over the centralized gradient-based methods (see the first four rows in Table~\ref{tab:summary}, \algname{Scaffnew} has a similar complexity than the centralized fast gradient method under strong-convexity~\cite{nesterov-book} ).

Instead of proving the dependency on $L$, a recent line of research tries to develop algorithms with guarantees that rely on a potentially smaller constant.
\citet{scaffold} first demonstrated that when minimizing quadratics, \algname{Scaffold} can exploit the hidden similarity to reduce communication. 
Specifically, the communication complexity only depends on a measure $\delta_B$, which measures the maximum dissimilarity of Hessians among individual functions. Notably, $\delta_B$ is always smaller than $L$ up to a constant and can be much smaller in practice. Subsequently, SONATA~\cite{sonata}, its accelerated version~\cite{acc-sonata}, and \algname{Accelerated ExtraGradient sliding}~\cite{grad-sliding} show explicit communication reduction in terms of $\delta_B$ under strong convexity. 
For instance, the required communication rounds for \algname{SONATA} to reach $\epsilon$ accuracy is $\cO\bigl(\frac{\delta_B}{\mu}\log(1/\epsilon)\bigr)$ which is $\frac{L}{\delta_B}$ smaller than that achieved by centralized \algname{GD} (see line 6 of Table~\ref{tab:summary}).
Later, \algname{CE-LGD}~\cite{ce-lgd} also achieves improved communication complexity for smooth non-convex functions. 

More recently, \citet{svrp} introduced the averaged Hessian dissimilarity constant $\delta_A$, which can be even smaller than $\delta_B$. They established communication complexity with respect to $\delta_A$ for \algname{SVRP} under strong-convexity. Compared to \algname{SONATA}, the communication rounds turn into $\cO\bigl((n+\frac{\delta_A^2}{\mu^2})\log(1/\epsilon))$ (see line 8 of Table~\ref{tab:summary}). However, the aim of \algname{SVRP} is to reduce the total amount of information transmitted between the server and clients rather than to reduce the total number of communication rounds. This is orthogonal to the focus of this work.

So far, there are no methods that achieve provably good communication complexity with respect to $\delta_A$ for solving strongly-convex and general convex problems. In this work, we revisit \algname{DANE}, an established method in distributed optimization. Previously, its communication efficiency has only been proved when minimizing quadratic functions with a high probability~\cite{dane}.
 
We show that this historically first foundational algorithm already achieves good theoretical bounds in terms of $\delta_A$ under function similarity. 
We further utilize this fact to design simple algorithms that are efficient in terms of both local computation and total communication complexity for minimizing more general functions.

\paragraph{Contributions.}
We make the following contributions:

\begin{itemize}[leftmargin=12pt,itemsep=0pt,topsep=0pt]
    \item We identify the key mechanism that allows communication reduction for arbitrary functions when they share certain similarities. 
    This technique, which we call \emph{regularized drift correction}, 
    has already been implicitly used in \algname{DANE}~\cite{dane} and other algorithms.
    \item We establish improved communication complexity for \algname{DANE} in terms of $\delta_A$ under both general convexity and strong convexity.
    \item We present a slightly extended framework \algname{DANE+} for \algname{DANE}. 
    \algname{DANE+} allows the use of inexact local solvers and arbitrary 
    control variates.
    In addition, it has more freedom to choose how to aggregate the local updates.
    We  
    provide a tighter analysis of the communication and 
    local computation complexity for this framework, 
    for arbitrary continuously differentiable functions. 
    We show that \algname{DANE+} achieves 
    deterministic communication reduction across all the cases.
    \item We propose a novel framework \algname{FedRed} which employs
    \emph{doubly regularized drift correction}. 
    We prove that \algname{FedRed} enjoys the same communication 
    reduction as \algname{DANE+} but has improved local computational
    complexity. Specifically, when specifying \algname{GD} as a local solver, \algname{FedRed} may require fewer communication rounds than 
    vanilla GD without incurring additional computational overhead.
    Consequently, \algname{FedRed} demonstrates
    the effectiveness of taking standard local gradient steps for the minimization of smooth functions.
\end{itemize}

Table~\ref{tab:summary} summarizes our 
main complexity results. Compared with the first 7 algorithms, \algname{DANE}, \algname{DANE+} and \algname{FedRed} achieve communication complexity that depends on $\delta_A$ instead of $L$ and $\delta_B$ for minimizing convex problems. Compared with \algname{SVRP}, our rates for strongly-convex problems is $\cO(\frac{\delta_A}{\mu})$ instead of $\cO(\frac{\delta_A^2}{\mu^2}+n)$. Moreover, \algname{DANE+} and \algname{FedRed} also achieve the rate of $\cO(\frac{\delta_B F_0}{\epsilon})$ for solving non-convex problems, which is the same as \algname{CE-LGD} and is better than all the other methods. Furthermore, when specifying \algname{GD} as a local solver, \algname{FedRed-GD} is strictly better than \algname{DANE+-GD} in terms of total local computations.

\begin{figure*}[tb!]
    \centering
    \includegraphics[width=0.80\textwidth]{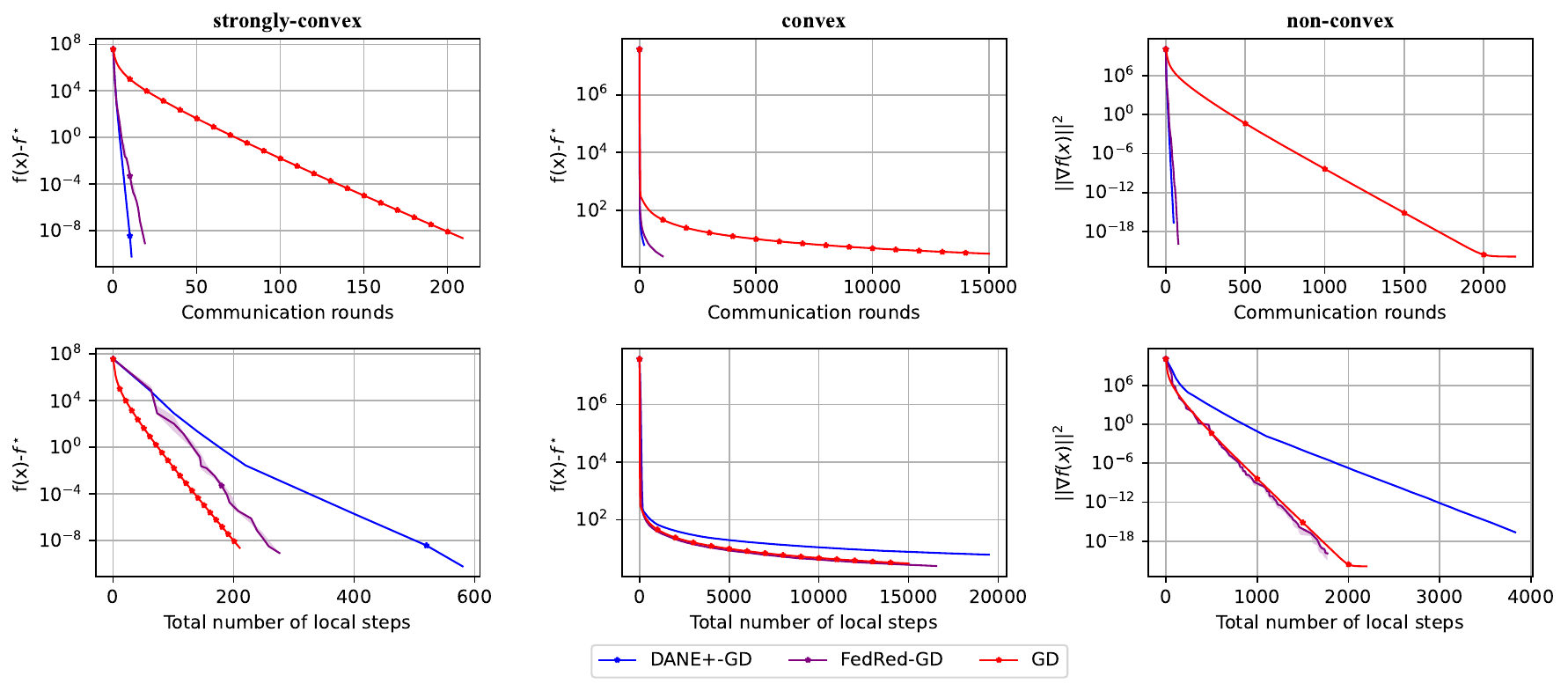}
    \caption{Illustrating communication reduction for \algname{DANE+-GD} and \algname{FedRed-GD} on synthetic dataset using quadratic loss with 
    $\frac{L}{\delta_A}\approx\frac{L}{\delta_B}\gtrsim 20$. 
    \algname{DANE+-GD} and \algname{FedRed-GD} require roughly
    $20$ times fewer communication rounds to 
    reach the same suboptimality as \algname{GD} while the total number of local computations of \algname{FedRed-GD} is at the same scale as \algname{GD}.
    (Repeated 3 times for \algname{FedRed-GD}. The solid lines and the shaded area represent the mean and the region between the minimum and the maximum values.)}
    \label{fig:quadratics}
\end{figure*}

\paragraph{Related work.}
The seminal work \algname{DANE}~\cite{dane} is the first distributed algorithm that shows communication reduction for minimizing
quadratics when the number of clients is large. 
Follow-up works interpret \algname{DANE} as a preconditioning method.
They provide better analysis and derive faster rates for quadratic losses~\cite{disco,dane-hb,giant}. 
\citet{spag} propose \algname{SPAG}, an accelerated method,
and prove a better uniform concentration bound of the conditioning number
when solving strongly-convex problems. 
All these methods assume the iteration-subproblems
can be solved exactly. (In this work, we study inexact local solvers 
for \algname{DANE}).

On the other hand, federated optimization algorithms lean towards 
solving the subproblems inexactly by taking local updates.
The standard FL method \algname{FedAvg} has been shown ineffectiveness 
for convex optimization in the heterogeneous setting ~\cite{woodworth2020minibatch}.
The celebrated \algname{Scaffold}~\cite{scaffold} adds control variate to cope with the client drift issues in heterogeneous networks. It is also the first work to quantify the usefulness of local-steps for quadratics. 
Afterwards, drift correction is employed in many other works such as
FedDC \cite{feddc} and Adabest \cite{adabest}.
\algname{FedPVR}~\cite{fedpvr} propose to apply
drift correction only to the last layers of neural networks.
\algname{Scaffnew}~\cite{proxskip} first
illustrates the usefulness of taking standard local gradient steps under 
strongy-convexity by using a special choice of control variate. 
More advanced methods with refined analysis and features such as client sampling and compression have 
been proposed~\cite{tighterproxskip,tamuna,5thgeneration,randprox}. 
\algname{FedProx}~\cite{fedprox} adds a proximal regularization term to the local functions to control the deviation between client and server models. \algname{FedDyn}~\cite{feddyn} proposes dynamic 
regularization to improve the convergence of \algname{FedProx}. \algname{FedPD}~\cite{fedpd} applies both drift correction and regularization and design the algorithm
from the primal-dual perspective.

A more closely related line of research focuses on the study of convergence guarantee under relaxed function similarity conditions. SONATA~\cite{sonata}  and Accelerated SONATA~\cite{acc-sonata} provide better communication guarantee than vanilla \algname{GD} under strong convexity. \algname{Accelerated ExtraGradient sliding} improves the rate of \cite{acc-sonata} and allows the use of inexact local solvers. \citet{svrp} work with a more relaxed similarity assumption and propose a centralized stochastic proximal point variance-reduced method called \algname{SVRP}. \algname{Mime} combines drift correction with momentum for solving non-convex problems. Recently, \algname{CE-LGD}~\cite{ce-lgd} proposed to use recursive gradients together with momentum for non-convex distributed optimization.

\section{Problem formulation and background}
\label{sec:backgraound}

In this work, we consider the following distributed optimization problem: \vspace{-1mm}
\[
\min_{\xx\in\R^d}
\biggl[ f(\xx)=\frac{1}{n}\sum_{i=1}^n f_i(\xx) \biggr] \;. 
\]
We focus on the standard federated learning setting where $n$ clients jointly train a model and a central server coordinates the global learning procedure. Specifically, during each communication round $r=0,1,2...$,
\begin{enumerate}[label=(\roman*),itemsep=1pt,topsep=1pt]
    \item the server broadcasts $\xx^r$ to the clients;
    \item each client $i \in [n]$ calls a local solver to compute the next iterate $\xx_{i,r+1}$;
    \item 
    the server aggregates 
    $\xx^{r+1}$ using $(\xx_{i,r+1})_{i=1}^n$.
\end{enumerate}
\textbf{Goal:} We assume that establishing the connection between the server and the clients is expensive. Therefore, the main goal is to reduce the total number of communication rounds required to reach a desired accuracy.

\textbf{Notations:}
We use $\xx_{i}^\star$ and $\xx^\star$ to denote $\argmin_x \{f_i(\xx)\}$ and $\argmin_x \{f(\xx)\}$ if they exist. We assume $f$ is lower bounded and denote the infimum of $f$ by $f^\star$.

\textbf{Vanilla/Centralized GD.} Gradient descent is a standard method that solves the problem. At each communication round $r$, each device computes its gradient $\nabla f_i(\xx^r)$, and performs one local gradient descent step: $\xx_{i,r+1} = \xx^r - \eta \nabla f_i(\xx^r)$. Then the server updates the global model by: $\xx^{r+1} = \frac{1}{n}\sum_{i=1}^n \xx_{i,r+1}$. The update rule can thus be written as:
$\xx^{r+1} = \xx^{r} - \eta \nabla f(\xx^r)$. The convergence of this method is well understood~\cite{nesterov-book}.

\textbf{Client drift.}
\algname{FedAvg} or \algname{Local-(S)GD} approximate the global function $f$ by $f_i$ on each device. Each device $i$ 
runs multiple (stochastic) gradient descent steps initialized at $\xx^r$, i.e. $\xx_{i,r+1} \approx \argmin_x f_i(\xx)$. Then $\xx^{r+1} = \Avg \xx_{i,r+1}$ is aggregated by the 
server.

This method suffers from client drift when $\{f_i\}$ are heterogeneous~\cite{scaffold,analysislocalGD} because the local updates move towards the minimizers of each $f_i$. However, in general, $\Avg \xx_{i}^\star \neq \xx^\star$. 

\textbf{Regularization.}
To address the issue of client drift, \algname{FedProx}~\cite{fedprox} proposes to
add regularization to each local function $f_i$ to control the local 
iterates from deviating from the current global state, i.e.\ 
$\xx_{i,r+1} 
\approx 
\argmin_x \{ f_i(\xx) + \frac{\lambda}{2}||\xx - \xx^r||^2 \}$.
However, the non-convergence issue still exists. Suppose we initialize
at $\xx^\star$ ($\xx^0 = \xx^\star$), and local solvers return the exact minimizers.  We have 
$\nabla f_i(\xx_{i,1}) + \lambda (\xx_{i,1} - \xx^\star) = 0$,
which implies that 
$\xx^1 = \xx^\star - \frac{1}{\lambda} \Avg \nabla f_i (\xx_{i,1})$.
Since $\xx^1$ is typically not equal to $\xx^\star$,
$\xx^\star$ is not a stationary point for \algname{FedProx}. \looseness=-1

\textbf{Drift correction.} \algname{Scaffold}~\cite{scaffold} addresses the non-convergence issues by using drift correction. It approximates the global function by shifting each $f_i$ based on the current 
difference of gradients, i.e. 
$\xx_{i,r+1} \approx \argmin_x\{f_i(\xx) 
        + \lin{\nabla f(\xx^r) - \nabla f_i(\xx^r), \xx}\}$,
and uses \algname{(S)GD} to solve the subproblem.
However, \algname{Scaffold} requires much smaller stepsize than 
\algname{GD} to ensure convergence. Consequently, the overall communication complexity is no better than \algname{GD}. Moreover, 
the subproblem itself is not well-defined. For instance, if some $f_i$ is 
a linear function, then the subproblem has no solution. 
Indeed, \algname{Scaffold} can be viewed as a composite gradient descent method
with stepsize set to be infinity, which we state as below.

\textbf{Regularized drift correction.}
The appropriate solution to the previously mentioned issues is to incorporate both drift correction and regularization, a strategy already applied in 
\algname{DANE}. Here, we provide a more intuitive explanation
and cast it as a composite optimization method.
Note that $f$ can be written as $f = f_i + \psi$ where $\psi := f - f_i$.
As each client has no access to $\psi$, the standard approach is to linearize $\psi$ at $\xx^r$ and perform the composite gradient method:
$\xx_{i,r+1} \approx \argmin\{f_i(\xx) + \lin{\nabla \psi(\xx^r), \xx - \xx^r} + \frac{\lambda}{2}||\xx - \xx^r||^2\}$, where $\lambda > 0$ 
should approximately be set to $||\nabla \psi^2 (\xx^r)||$. 
If $||\nabla \psi^2 (\xx^r)||$ is much smaller than the smoothness
parameter of $f$, or equivalently if $f_i$ and $f$ have similar Hessians,
then this approach can provably converge faster than \algname{GD}.
This motivates us to study the following algorithms which incorporate
regularized drift correction to distributed optimization settings.

\section{DANE+ with regularized drift correction}

In this section, we first describe a  framework that generalizes \algname{DANE}(Algorithm~\ref{Alg:FrameworkDeterministic}) 
for distributed optimization with regularized drift correction. We then 
discuss how it achieves communication reduction when
the functions of clients exhibit certain second-order similarities.

\begin{algorithm}[tb]
\begin{algorithmic}[1]
\STATE {\bfseries Input:} 
$\lambda \ge 0$, 
$\xx^0 \in \R^d$
\FOR{$r=0,1,2...$}
\FOR{\textbf{each client $i\in [n]$ in parallel}} 
\STATE
Update $\hh_{i,r}$
\STATE
Set $
\xx_{i,r+1}
\approx
\argmin_{\xx \in \R^d}
\bigl\{ F_{i,r}(\xx) \bigr\} 
$, where
\STATE
$F_{i,r}(\xx) := f_i(\xx) - \langle \xx, \hh_{i,r} \rangle 
+ \frac{\lambda}{2}||\xx-\xx^r||^2 \;. $
\ENDFOR
\STATE
Set
$
\xx^{r+1}=\text{Average}(\xx_{i,r+1})_{i=1}^n 
$,
\STATE
where $\text{Average}(\cdot)$ denotes an averaging strategy. 
\ENDFOR
\caption{DANE+}
\label{Alg:FrameworkDeterministic}
\end{algorithmic}
\end{algorithm}

\begin{algorithm*}[tb]
\begin{minipage}{\textwidth}
\begin{algorithmic}[1]
\STATE {\bfseries Input:} 
$p \in [0,1]$,
$\lambda, \eta \ge 0$, $\xx^0 \in \R^d$
\STATE 
Set $\xx_{i,0} = \Tilde{\xx}_0 = \xx^0$, 
 $\forall i \in [n]$.
\FOR{$k=0,1,2...$}
\FOR{\textbf{each client $i\in [n]$ in parallel}} 
\STATE
Update $\hh_{i,k}$
\STATE
\hspace{50mm}$\xx_{i,k+1} 
=
\argmin_{\xx \in \R^d}
\bigl\{ F_{i,k}(\xx) \bigr\} \;. $
\footnote{When $F_{i,k}$ is non-convex, it is sufficient to 
            return a point such that
            $||\nabla F_{i,k} (\xx_{i,k+1})||=0$ and 
            $F_{i,k}(\xx_{i,k+1}) \le F_{i,k}(\xx_{i,k})$.}
\STATE

where \vspace*{-2mm} $$F_{i,k}(\xx) 
:= 
f_i(\xx)
-
\lin{ \xx, \hh_{i,k} }
+
\frac{\eta}{2}||\xx-\xx_{i,k}||^2
+ \frac{\lambda}{2}||\xx-\Tilde{\xx}_k||^2 \;.$$ 
\STATE 
Sample $\theta_k \in \{0, 1\}$ 
where $\text{Prob} (\theta_k = 1) = p$
\vspace*{1mm}
\STATE
\hspace{40mm}$
\Tilde{\xx}_{k+1} =
\begin{cases}
     \text{Average}( \xx_{i,k+1} )_{i=1}^n & 
    \text{if }\theta_k = 1 
    \\
    \Tilde{\xx}_{k} & \text{otherwise}  
\end{cases}
\;.
$
\ENDFOR
\ENDFOR
\caption{FedRed: \textbf{Fe}derated optimization framework with \textbf{d}oubly \textbf{Re}gularized \textbf{d}rift correction}
\label{Alg:FedRed}
\end{algorithmic}
\end{minipage}
\end{algorithm*}

\begin{algorithm*}[tb]
\begin{algorithmic}[1]
\STATE 
The same as Algorithm~\ref{Alg:FedRed},
except on line 6 \\
\vspace*{-3mm}
$$F_{i,k}(\xx) 
:= 
f_i(\xx_{i,k})
+
\lin{ \gg_i(\xx_{i,k}), \xx - \xx_{i,k}}
-
\lin{ \xx, \hh_{i,k} }
+
\frac{\eta}{2}||\xx-\xx_{i,k}||^2
+ \frac{\lambda}{2}||\xx-\Tilde{\xx}_k||^2 \;,$$ 
where $\gg_i(\xx, \xi_i)$ is a stochastic 
estimator of $\nabla f_i(\xx)$ for any $\xx \in \R^d$.
\caption{FedRed-(S)GD}
\label{Alg:GDLocalSolver}
\end{algorithmic}
\end{algorithm*}

\textbf{Method.} 
During each communication round $r$ of \algname{DANE+}, 
each client $i$ is assigned a local objective function $F_{i,r}$ 
and returns an approximate solution for it by running a local solver.
The function $F_{i,r}$ has three standard components: 
i) the individual loss function $f_i$, ii) an inner product term 
for drift correction, and iii) a regularizer that controls the 
deviation of the local iterates from the current global model. 
After that, the server applies an averaging strategy to aggregate all the iterates returned by the clients. 

To run the algorithm, we need to choose a control variate $\hh_{i,r}$
and an averaging method.

\citet{scaffold} proposed two choices of control
variates for drift correction. In this work, we study the
properties of the standard one which is defined as follows:
\begin{equation}
\hh_{i, r}
:=
\nabla f_i(\xx^r) - \nabla f(\xx^r)
\;.
\label{eq:ControlVariate2}
\end{equation}

Another choice of control variate is also used in \algname{Scaffnew}.
In this work, we focus on Hessian Dissimilarity and 
defer the studies of that one for \algname{DANE+} to Appendix~\ref{sec:ProofsForControlVariate1}.

There exist various ways of averaging vectors. 
Let $(\xx_i)_{i=1}^n$ be a set of vectors where $\xx_i\in\R^d$.
The standard averaging is defined as follows. 
\begin{equation}
    \text{Avg}(\xx_i)_{i=1}^n := \textstyle \Avg \xx_i \;.
    \nonumber
\end{equation}

In this work, we also work on randomized averaging:
\begin{equation}
    \text{Rand}(\xx_{i})_{i=1}^n := \xx_{\hat{j}} \;.
    \nonumber
\end{equation}
where $\hat{j}$ denotes a random variable that follows a 
fixed probability distribution at each iteration.

Suppose that control variate~\eqref{eq:ControlVariate2} is used, 
each local solver provides the exact solution, and that standard 
averaging is applied, then \algname{DANE+} is equivalent to \algname{DANE}.

\textbf{Communication Reduction.}
As discussed in Section~\ref{sec:backgraound}, regularized drift correction
can reduce the communication given that the functions $\{f_i\}$ are similar.
A reasonable measure is to look at the second-order Hessian 
dissimilarity~\cite{mime,scaffold,ali2023}, which is defined as follows:

\begin{definition}[Bounded Hessian Dissimilarity
\label{df:MaxHessianSimilarity}
\cite{scaffold}]
Let $f_i : \R^d \to \R$ be continuously differentiable for any $i\in[n]$, then $\{ f_i \}$ have $\delta_{B}$-BHD if for any $\xx, \yy \in \R^d$
and any $i \in [n]$,
\begin{equation}
    ||\nabla h_i(\xx) - \nabla h_i(\yy)|| \le \delta_{B}\;. 
    \label{eq:MaxHessianSimilarity}
\end{equation}
where $h_i := f_i - f$, and $f := \frac{1}{n}\sum_{i=1}^n f_i$.
\end{definition}

The standard Bounded Hessian Dissimilarity defined in~\cite{scaffold}
assumes $||\nabla h_i^2(\xx)|| \le \delta_B$ for any $\xx \in \R^d$,
which implies Definition~\ref{df:MaxHessianSimilarity} and is thus slightly
stronger. Here, we only need first-order differentiability.

We also work on a weaker Hessian Dissimilarity definition.
\begin{definition}[Averaged Hessian Dissimilarity~\cite{svrp}]
\label{df:HessianSimilarity}
Let $f_i : \R^d \to \R$ be continuously differentiable for any $i\in[n]$, then $\{ f_i \}$ have $\delta_A$-AHD if for any $\xx, \yy \in \R^d$,
\begin{equation}
    \frac{1}{n}\sum_{i=1}^n 
    ||\nabla h_i(\xx) - \nabla h_i(\yy)||^2 
    \le 
    \delta_A^2 ||\xx - \yy||^2 \;.
    \label{eq:HessianSimilarity}
\end{equation}
where $h_i := f_i - f$ and 
$f := \frac{1}{n}\sum_{i=1}^n f_i$.
\end{definition}

Even if each $f_i$ is twice-continuously differentiable,
Definition~\ref{df:HessianSimilarity} is still weaker than 
assuming $\Avg||\nabla h_i^2(\xx)||\le\delta_A^2$ for any 
$\xx\in\R^d$. The details can be found in
Remark~\ref{rm:HessianDissimilarity}.

Suppose each $f_i$ is $L$-smooth, then $\delta_A \le 2 L$ and $\delta_B \le 2 L$. If further assuming each $f_i$ is convex, then $\delta_A \le L$ and $\delta_B \le L$. 
In practice, we expect the clients to be 
similar and thus $\delta_A \lll L$ and $\delta_B \lll L$~\cite{mime}.
In principle, if each $h_i$ is
$\delta_i$-smooth, then $\delta_A = (\frac{1}{n} \sum_i \delta_i^2)^{1 / 2}$ can potentially be much smaller (up to $\sqrt{n}$ times) than $\delta_B = \max_i \{\delta_i\}$.

\subsection{Theory}
We state the convergence rates of \algname{DANE+} in this section. 
All the proofs can be found in Appendix~\ref{sec:ProofsOfMainResults}.

\textbf{Nonconvex.}
When the local functions $\{f_i\}$ are non-convex, 
it becomes difficult to obtain an approximate minimizer of $F_{i,r}$.
Instead, it is more realistic to assume that each local solver can  converge to an approximate stationary point. In this case, 
the standard averaging may be less effective than randomized averaging, 
especially when $\{f_i\}$ are similar. The intuition is as follows.

Suppose that $f_i = f$ for any $i \in [n]$ . 
Then $F_{i,r}(\xx) \equiv f(\xx)$ if we set $\lambda = 0$. 
Assume that each local solver can compute an exact stationary point.
Then it is more reasonable to update the global model by 
selecting an update $\xx_{i,r+1}$ rather than the standardally averaging all the iterates, as the averaged stationary points can be even
worse. That being said, the standard averaging can still be effective and
more stable if each client uses the same local solver. We leave this 
possibility in the future work.

In the following, we show that it is sufficient to arbitrarily 
choose a model from $(\xx_{i,r+1})_{i=1}^n$ to update the global model.
For instance, $\xx^{r+1} = \text{Rand}(\xx_{i,r+1})_{i=1}^n$. 
In practice, this index set can be predetermined and during each round, only a single client is required to do local updates.

\begin{theorem}
    \label{thm:Main-DANE+NonConvex}
    Consider Algorithm~\ref{Alg:FrameworkDeterministic} with control variate~\eqref{eq:ControlVariate2}. Let the global model be
    updated by choosing an arbitrary local model for each communication round. 
    Let $f_i : \R^d \to \R$ be continuously differentiable for any $i \in [n]$.
    Assume that $\{f_i\}$ have $\delta_B$-BHD. 
    Suppose that the solutions returned by local solvers satisfy $F_{i,r}(\xx_{i,r+1}) \le F_{i,r}(\xx^r)$ and
    $||\nabla F_{i,r}(\xx_{i,r+1})|| \le e_{r+1}$
    with $e_{r+1} \ge 0$
    for any $r \ge 0$ and any $i \in [n]$.
    Let $\lambda = a\delta_B$ with $a>1$
    Then after $R$ communication rounds, we have:
    \begin{equation}
        ||\nabla f(\Bar{\xx}^R)||^2
        \le
        \frac{4(a+1)^2}{(a-1)} \frac{\delta_B (f(\xx^0) - f^\star)}{R}
        + 2\frac{1}{R}\sum_{r=1}^{R} e_r^2 \;.
        \nonumber
    \end{equation}
    where $\Bar{\xx}^R = \arg\min_{\xx\in\{\xx^r\}_{r=0}^R} 
    \{||\nabla f(\xx)||\}$.
\end{theorem}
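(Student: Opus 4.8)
The plan is to establish, for each communication round, two complementary estimates for the chosen global update $\xx^{r+1}$ (which equals $\xx_{j,r+1}$ for whatever index $j$ is selected at round $r$): a descent bound showing $f$ decreases by a multiple of $\norm{\xx^{r+1}-\xx^r}^2$, and an upper bound on $\norm{\nabla f(\xx^{r+1})}$ in terms of $e_{r+1}$ and the same displacement $\norm{\xx^{r+1}-\xx^r}$. Combining the two, telescoping the descent, and invoking the minimum-over-iterates definition of $\Bar{\xx}^R$ then yields the claim. Crucially, all hypotheses on the local solvers hold uniformly over $i$, so the derivation is independent of which index $j$ is chosen; the resulting bound is therefore deterministic.

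For the descent step, I would first rearrange the decrease condition $F_{j,r}(\xx^{r+1}) \le F_{j,r}(\xx^r)$. Since the quadratic regularizer vanishes at $\xx^r$, this gives $f_j(\xx^{r+1}) - f_j(\xx^r) \le \lin{\xx^{r+1}-\xx^r, \hh_{j,r}} - \frac{\lambda}{2}\norm{\xx^{r+1}-\xx^r}^2$. I would then write the global objective as $f = f_j - h_j$ and apply the standard descent lemma to $h_j$, which is $\delta_B$-smooth by $\delta_B$-BHD, obtaining $-(h_j(\xx^{r+1}) - h_j(\xx^r)) \le -\lin{\nabla h_j(\xx^r), \xx^{r+1}-\xx^r} + \frac{\delta_B}{2}\norm{\xx^{r+1}-\xx^r}^2$. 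The key observation is that with control variate~\eqref{eq:ControlVariate2} we have exactly $\hh_{j,r} = \nabla f_j(\xx^r) - \nabla f(\xx^r) = \nabla h_j(\xx^r)$, so adding the two inequalities cancels the inner-product terms and leaves $f(\xx^{r+1}) - f(\xx^r) \le -\frac{(a-1)\delta_B}{2}\norm{\xx^{r+1}-\xx^r}^2$, using $\lambda = a\delta_B$ with $a > 1$.

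For the gradient step, I would expand $\nabla F_{j,r}(\xx^{r+1}) = \nabla f_j(\xx^{r+1}) - \hh_{j,r} + \lambda(\xx^{r+1}-\xx^r)$, substitute $\nabla f_j = \nabla f + \nabla h_j$ and $\hh_{j,r} = \nabla h_j(\xx^r)$, and solve for $\nabla f(\xx^{r+1})$. The triangle inequality together with the approximate-stationarity hypothesis $\norm{\nabla F_{j,r}(\xx^{r+1})} \le e_{r+1}$ and the $\delta_B$-Lipschitz bound $\norm{\nabla h_j(\xx^{r+1}) - \nabla h_j(\xx^r)} \le \delta_B\norm{\xx^{r+1}-\xx^r}$ then gives $\norm{\nabla f(\xx^{r+1})} \le e_{r+1} + (1+a)\delta_B\norm{\xx^{r+1}-\xx^r}$. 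Squaring with $(x+y)^2 \le 2x^2 + 2y^2$ and substituting the displacement bound from the descent inequality produces $\norm{\nabla f(\xx^{r+1})}^2 \le 2e_{r+1}^2 + \frac{4(1+a)^2\delta_B}{a-1}\bigl(f(\xx^r) - f(\xx^{r+1})\bigr)$.

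Summing this over $r = 0, \dots, R-1$, the function-value differences telescope to $f(\xx^0) - f(\xx^R) \le f(\xx^0) - f^\star$, and dividing by $R$ while bounding $\norm{\nabla f(\Bar{\xx}^R)}^2$ by the average of $\norm{\nabla f(\xx^r)}^2$ over $r \in \{1,\dots,R\}$ delivers the stated inequality. I expect the only real subtlety to be the exact cancellation in the descent step — everything rests on recognizing that the drift-correction variate coincides with $\nabla h_j(\xx^r)$, which is precisely the regularized-drift-correction mechanism highlighted in Section~\ref{sec:backgraound}; the remaining manipulations are routine smoothness bookkeeping.
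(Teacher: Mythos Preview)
Your proposal is correct and follows essentially the same route as the paper's proof. The paper packages the descent step as Lemma~\ref{thm:TwoPointRelationNonConvexFramework}/Lemma~\ref{thm:LemmaNonConvexFramework} and the gradient step as Lemma~\ref{thm:LowerBoundFir}, but the underlying manipulations---using $\hh_{j,r}=\nabla h_j(\xx^r)$ to cancel the inner product, applying $\delta_B$-smoothness of $h_j$, then bounding $\norm{\nabla f(\xx^{r+1})}$ via the triangle inequality and telescoping---are identical to yours, down to the constants.
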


Theorem~\ref{thm:Main-DANE+NonConvex} gives a convergence guarantee
of \algname{DANE+} for arbitrary functions (not necessarily with Lipschitz gradient) with any
local solvers. The minimal squared gradient norm decreases at a rate of $\cO(\frac{\delta_B (f(\xx^0) - f^\star)}{R})$. 
To reach arbitrary accuracy $\epsilon$, the average errors should be at most $\epsilon$.
We next establish the local computational complexity assuming that the gradient of each~$f_i$ is Lipschitz.

\begin{corollary}
    Consider Algorithm~\ref{Alg:FrameworkDeterministic} with control variate~\eqref{eq:ControlVariate2}. Let the global model be
    updated by choosing an arbitrary local model for each communication round. 
    Let $f_i : \R^d \to \R$ be continuously differentiable 
    and $L$-smooth for any $i \in [n]$.
    Assume that $\{f_i\}$ have $\delta_B$-BHD.
    Suppose each local solver runs the standard gradient descent 
    starting from $\xx^r$ for $\Theta\Bigl( \frac{L}{\delta_B}R \Bigr)$ local steps,
    for any $r \ge 0$,
    and returns the point with the minimum gradient norm.
    Let $\lambda = 2\delta_B$. 
    After $R$ communication rounds, we have:
    \begin{equation}
        ||\nabla f(\Bar{\xx}^R)||^2
        \le 
        \frac{72 \delta_B (f(\xx^0) - f^\star)}{R} \;.
        \nonumber
    \end{equation}
    where $\Bar{\xx}^R = \arg\min_{\xx\in\{\xx^r\}_{r=0}^R} 
    \{||\nabla f(\xx)||\}$.
\end{corollary}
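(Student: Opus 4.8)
The plan is to invoke Theorem~\ref{thm:Main-DANE+NonConvex} as a black box with $a=2$ and then control the local-solver error term $\frac{1}{R}\sum_{r=1}^R e_r^2$. Setting $a=2$ makes the leading constant $\frac{4(a+1)^2}{a-1}=36$, so it suffices to show that running $K=\Theta(\frac{L}{\delta_B}R)$ gradient steps on each subproblem forces $\frac{2}{R}\sum_{r=1}^R e_r^2 \le \frac{36\delta_B F_0}{R}$ (writing $F_0 := f(\xx^0)-f^\star$), which then yields the claimed constant $72$.

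First I would record that each $F_{i,r}$ is $(L+\lambda)$-smooth, since its Hessian is $\nabla^2 f_i + \lambda\mI$ with $-L\mI \preceq \nabla^2 f_i \preceq L\mI$. Hence standard non-convex GD with stepsize $1/(L+\lambda)$ started at $\xx^r$ satisfies, for the returned minimum-gradient-norm iterate, $e_{r+1}^2 \le \frac{2(L+\lambda)}{K}\bigl(F_{i,r}(\xx^r) - F_{i,r}^\star\bigr)$ with $F_{i,r}^\star := \inf F_{i,r}$; monotone decrease of GD function values simultaneously guarantees the descent condition $F_{i,r}(\xx_{i,r+1}) \le F_{i,r}(\xx^r)$ required by the theorem.

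The crux is bounding $F_{i,r}(\xx^r)-F_{i,r}^\star$. I would rewrite $F_{i,r}$ via $f_i=f+h_i$ and complete the linear drift-correction term into a Bregman form, obtaining (up to an additive constant) the identity $F_{i,r}(\xx)-F_{i,r}(\xx^r) = [f(\xx)-f(\xx^r)] + D_{i,r}(\xx) + \frac{\lambda}{2}\|\xx-\xx^r\|^2$, where $D_{i,r}(\xx):=h_i(\xx)-h_i(\xx^r)-\langle \nabla h_i(\xx^r),\xx-\xx^r\rangle$. Since $\delta_B$-BHD makes $\nabla h_i$ be $\delta_B$-Lipschitz, the ascent/descent lemma gives $D_{i,r}(\xx)\ge -\frac{\delta_B}{2}\|\xx-\xx^r\|^2$, so with $\lambda=2\delta_B\ge\delta_B$ the last two terms are nonnegative; combined with $f(\xx)\ge f^\star$ this yields $F_{i,r}(\xx^r)-F_{i,r}^\star \le f(\xx^r)-f^\star$. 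The very same identity shows, for any single aggregated model $\xx^{r+1}=\xx_{i,r+1}$, that $f(\xx^{r+1}) \le f(\xx^r) - \frac{\lambda-\delta_B}{2}\|\xx^{r+1}-\xx^r\|^2 \le f(\xx^r)$, so the objective values are non-increasing and $f(\xx^r)-f^\star\le F_0$ for every $r$.

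Putting these together gives $e_r^2 \le \frac{2(L+\lambda)}{K}F_0$, hence $\frac{2}{R}\sum_{r=1}^R e_r^2 \le \frac{4(L+\lambda)}{K}F_0$; choosing $K \ge \frac{(L+2\delta_B)R}{9\delta_B} = \Theta(\frac{L}{\delta_B}R)$ (this is genuinely $\Theta(\frac{L}{\delta_B}R)$ because $\delta_B\le 2L$) makes this term at most $\frac{36\delta_B F_0}{R}$, so the total is $\frac{72\delta_B F_0}{R}$. The main obstacle is exactly the lower-bound step: $F_{i,r}$ can be genuinely non-convex and even looks unbounded below at first sight when $\delta_B<L$, so a naive non-convex GD bound is vacuous; one must exploit the hidden $\delta_B$-strong convexity of the combined drift-correction-plus-regularization part, together with the lower-boundedness of $f$, to certify both $F_{i,r}^\star>-\infty$ and the sharp estimate $F_{i,r}(\xx^r)-F_{i,r}^\star\le f(\xx^r)-f^\star$. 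Everything else is routine bookkeeping of constants.
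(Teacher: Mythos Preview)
Your proposal is correct and follows essentially the same route as the paper: invoke the non-convex theorem with $a=2$, bound the local gradient-descent error via $\|\nabla F_{i,r}(\xx_{i,r+1})\|^2 \lesssim \frac{L+\lambda}{K}(F_{i,r}(\xx^r)-F_{i,r}^\star)$, use the Bregman-type identity (the paper's Lemma~\ref{thm:TwoPointRelationNonConvexFramework}) to get $F_{i,r}(\xx^r)-F_{i,r}^\star \le f(\xx^r)-f^\star$, and then use monotone decrease of $f(\xx^r)$ (the paper's Lemma~\ref{thm:LemmaNonConvexFramework}) to replace this by $F_0$. Your identification of the ``crux'' matches exactly what the paper isolates as the key steps; the only differences are cosmetic constants in the GD bound and the resulting threshold for $K$.
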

The communication complexity of \algname{DANE+-GD} is reduced 
by a factor of $\frac{L}{\delta_B}$ compared with \algname{GD}. 
However, the total gradient computations to reach $\epsilon$ accuracy
is $R=\Theta\bigl(\frac{\delta_B(f(\xx^0)-f^\star)}{\epsilon}\bigr)$ 
times worse than \algname{GD}.

\textbf{Convex.} In contrast to non-convex problems, convex 
optimization can benefit from the standard averaging. This allows 
us to obtain the following convergence guarantee depending on the 
Averaged Hessian Dissimilarity.

\begin{theorem}
\label{thm:MainDANE+InExactSolutionConvex}
    Consider Algorithm~\ref{Alg:FrameworkDeterministic} with control variate~\eqref{eq:ControlVariate2} and the standard
    averaging. 
    Let $f_i : \R^d \to \R$ be continuously differentiable and 
    $\mu$-convex with $\mu \ge 0$ for any $i \in [n]$.
    Assume that $\{f_i\}$ have $\delta_A$-AHD. 
    In general, suppose that the solutions returned by local 
    solvers satisfy $\Avg ||\nabla F_{i,r}(\xx_{i,r+1})||^2 \le e_{r+1}^2$
    for any $r \ge 0$ with $e_{r+1} \ge 0$.
    Let $\lambda \ge 2\delta_A$. After $R$ communication rounds, we have:
    \begin{equation}
    \begin{split}
        f(\Bar{\xx}^R) &- f(\xx^\star) + \frac{\mu}{2} 
        \Avg ||\xx_{i,R} - \xx^\star||^2 
        \le
        \\
        &\frac{\mu}{(1+\frac{\mu}{\lambda})^R - 1} ||\xx^0 - \xx^\star||^2 
        + \frac{2}{\mu + \lambda} \sum_{r=1}^R e_r^2 \;.
    \end{split}
    \nonumber
    \end{equation}
    where $\Bar{\xx}^R := \argmin_{\xx \in \{\xx^r\}_{r=1}^R} f(\xx)$.
\end{theorem}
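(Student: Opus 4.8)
The plan is to recognize \algname{DANE+} with control variate~\eqref{eq:ControlVariate2} and standard averaging as an \emph{inexact proximal-point method} on $f$, and then to run a weighted-telescoping energy argument. The starting observation is that the control variate satisfies $\hh_{i,r}=\nabla f_i(\xx^r)-\nabla f(\xx^r)=\nabla h_i(\xx^r)$, so $\Avg\hh_{i,r}=0$ and the \emph{average} of the local objectives is exactly the proximal model $\Avg F_{i,r}(\xx)=f(\xx)+\frac{\lambda}{2}\norm{\xx-\xx^r}^2$. Each $F_{i,r}$ is $(\mu+\lambda)$-strongly convex, and the inexactness assumption reads $\ee_{i,r+1}:=\nabla F_{i,r}(\xx_{i,r+1})$ with $\Avg\norm{\ee_{i,r+1}}^2\le e_{r+1}^2$.

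First I would derive a one-round inequality by comparing each $F_{i,r}$ at $\xx_{i,r+1}$ against both $\xx^\star$ and $\xx^{r+1}$. Strong convexity against $\xx^\star$, after averaging and using $\Avg\lin{\xx^\star,\hh_{i,r}}=0$, gives the upper bound $\Avg F_{i,r}(\xx_{i,r+1})\le f(\xx^\star)+\frac\lambda2\norm{\xx^\star-\xx^r}^2-\frac{\mu+\lambda}2\Avg\norm{\xx_{i,r+1}-\xx^\star}^2-\Avg\lin{\ee_{i,r+1},\xx^\star-\xx_{i,r+1}}$. For a matching lower bound I would expand $\Avg F_{i,r}(\xx_{i,r+1})$ directly, lower-bounding $\Avg f_i(\xx_{i,r+1})$ by convexity around $\xx^{r+1}$ and rewriting $\Avg\lin{\xx_{i,r+1},\hh_{i,r}}=\Avg\lin{\dd_{i,r+1},\nabla h_i(\xx^r)}$, where $\dd_{i,r+1}:=\xx_{i,r+1}-\xx^{r+1}$ satisfies $\Avg\dd_{i,r+1}=0$.

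The key step — and where I expect the main difficulty — is handling the drift/dissimilarity cross terms. After the cancellations above, the cross terms collapse to $T:=\Avg\lin{\nabla h_i(\xx^{r+1})-\nabla h_i(\xx^r),\dd_{i,r+1}}$, together with $\frac\lambda2\norm{\xx^{r+1}-\xx^r}^2+\frac{\mu+\lambda}2\Avg\norm{\dd_{i,r+1}}^2$. The crucial point is that $T$ now features $\nabla h_i$ evaluated at the \emph{common} pair $(\xx^{r+1},\xx^r)$, so that despite the per-client $\dd_{i,r+1}$ the averaged condition $\delta_A$-AHD applies: by Cauchy--Schwarz, $|T|\le\delta_A\norm{\xx^{r+1}-\xx^r}(\Avg\norm{\dd_{i,r+1}}^2)^{1/2}$. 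Arranging the cross terms so that only common-point gradient differences survive is the whole trick; a naive expansion leaves $\nabla h_i$ at the client-dependent points $\xx_{i,r+1}$, where AHD (which only controls common-point differences) cannot be invoked. With $\lambda\ge2\delta_A$, an AM--GM estimate shows $\frac\lambda2\norm{\xx^{r+1}-\xx^r}^2+T+\frac{\mu+\lambda}2\Avg\norm{\dd_{i,r+1}}^2\ge0$, so all drift terms drop and chaining the two bounds yields the clean per-round inequality
\[
\bigl[f(\xx^{r+1})-f^\star\bigr]+\tfrac{\mu+\lambda}2\Avg\norm{\xx_{i,r+1}-\xx^\star}^2
\le
\tfrac\lambda2\norm{\xx^r-\xx^\star}^2-\Avg\lin{\ee_{i,r+1},\xx^\star-\xx_{i,r+1}}.
\]

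Finally I would close the recursion. The solver-error inner product is split via Cauchy--Schwarz and Young's inequality, with one part absorbed into the $\frac{\mu+\lambda}2\Avg\norm{\xx_{i,r+1}-\xx^\star}^2$ slack and the remainder contributing the $\frac{2}{\mu+\lambda}\sum e_r^2$ term. Writing $q:=1+\frac\mu\lambda$ and $a_r:=\frac\lambda2\norm{\xx^r-\xx^\star}^2$, Jensen's inequality $\Avg\norm{\xx_{i,r+1}-\xx^\star}^2\ge\norm{\xx^{r+1}-\xx^\star}^2$ turns the display into $[f(\xx^{r+1})-f^\star]+q\,a_{r+1}\le a_r+(\text{error})$. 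Multiplying round $r$ by $q^{r}$ and summing telescopes the quadratic terms; dividing by $\sum_{r=0}^{R-1}q^r=\frac{q^R-1}{q-1}$ and using $f(\bar\xx^R)\le f(\xx^{r+1})$ produces the stated prefactor $\frac{\mu}{(1+\mu/\lambda)^R-1}\norm{\xx^0-\xx^\star}^2$, while the surviving last-round distance is retained as $\frac\mu2\Avg\norm{\xx_{i,R}-\xx^\star}^2$ on the left. The general-convex bound follows by letting $\mu\to0$, where the prefactor tends to $\frac\lambda R$. The bookkeeping of constants in this last telescoping is routine; the conceptual content lies entirely in the common-point reduction of $T$ that makes the averaged AHD usable.
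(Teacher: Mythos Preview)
Your derivation of the per-round inequality is essentially the paper's: the reduction of the cross term to $T=\Avg\lin{\nabla h_i(\xx^{r+1})-\nabla h_i(\xx^r),\dd_{i,r+1}}$ and the use of $\delta_A$-AHD at the common pair $(\xx^{r+1},\xx^r)$ is exactly Lemma~\ref{thm:BasicRecurrenceControlVariate2}. So the ``conceptual content'' you identify is right and matches the paper.

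The gap is in the last step. Your plan is to decouple the error via Young,
\[
\Avg\lin{\ee_{i,r+1},\xx_{i,r+1}-\xx^\star}\le \tfrac{\alpha}{2}\,A_{r+1}+\tfrac{1}{2\alpha}\,e_{r+1}^2,
\qquad A_{r+1}:=\Avg\norm{\xx_{i,r+1}-\xx^\star}^2,
\]
and absorb $\tfrac{\alpha}{2}A_{r+1}$ into the $\tfrac{\mu+\lambda}{2}A_{r+1}$ ``slack''. But that slack is not free: after absorption the left-hand coefficient drops to $\tfrac{\mu+\lambda-\alpha}{2}$, so the effective contraction becomes $\tilde q=\tfrac{\lambda}{\mu+\lambda-\alpha}$, not $\tfrac{\lambda}{\mu+\lambda}$. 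To land on the stated error prefactor $\tfrac{2}{\mu+\lambda}$ you would need $\alpha$ of order $\mu+\lambda$, which destroys the telescoping; conversely, keeping the full contraction $q=1+\mu/\lambda$ forces $\alpha\to 0$ and the error term blows up. In the limiting case $\mu=0$ there is no slack at all, and a fixed-$\alpha$ Young argument cannot close the recursion. So ``routine bookkeeping'' does not suffice here.

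The paper avoids this by \emph{not} decoupling per round. It keeps the Cauchy--Schwarz bound $\Avg\lin{\ee_{i,r+1},\xx_{i,r+1}-\xx^\star}\le \sqrt{C_{r+1}}\sqrt{A_{r+1}}$, telescopes to obtain a quantity $Q_R$ that dominates $A_R/q^R$, and observes that $Q_R$ obeys a square-root recurrence $Q_{R+1}-Q_R\le \tfrac{2}{\mu+\lambda}\sqrt{C_{R+1}/q^{R+1}}\sqrt{Q_{R+1}}$ (this is Lemma~\ref{thm:SquareRootRecurrence}). Solving that recurrence gives $Q_R\le 2\norm{\xx^0-\xx^\star}^2+\tfrac{4}{(\mu+\lambda)^2}\bigl(\sum_r\sqrt{C_r/q^r}\bigr)^2$; a final Cauchy--Schwarz $(\sum e_r/\sqrt{q^r})^2\le(\sum e_r^2)(\sum 1/q^r)$ then produces exactly $\tfrac{2}{\mu+\lambda}\sum_r e_r^2$ after dividing by $\sum_r 1/q^r$, with the contraction factor untouched. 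This square-root-recurrence trick (Lemma~\ref{thm:LemmaConvexFrameworkInExactSolution}) is the missing ingredient in your plan.
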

Theorem~\ref{thm:MainDANE+InExactSolutionConvex} provides the  
convergence guarantee for \algname{DANE+} when the local functions are convex (and not necessarily smooth). Each local solver can be chosen 
arbitrarily, provided that it can return a solution that satisfies 
a specific accuracy condition. The convergence rate 
is a continuous estimate both in $\mu$ and $\lambda$,
since $\frac{\mu}{(1+\frac{\mu}{\lambda})^R - 1} \le \frac{\lambda}{R}$.
To reach a certain accuracy $\epsilon$, the errors should satisfy
$\sum_{r=1}^R e_r^2 \le \frac{\epsilon (\mu + \lambda)}{2}$. 
We next show that if the solutions returned by local solvers satisfy 
more specific conditions, then the additional error term $\sum_{r=1}^R e_r^2$ 
can be removed.
\begin{theorem}
    \label{thm:MainConvexFrameworkInExactSolutionSpecial}
    Under the same conditions as Theorem~\ref{thm:MainDANE+InExactSolutionConvex},
    suppose that the solutions returned by local solvers satisfy $\sum_{i=1}^n ||\nabla F_{i,r}(\xx_{i,r+1})||^2 \le e_r^2 \sum_{i=1}^n ||\xx_{i,r+1} - \xx^r||^2$ for any
    $r \ge 0$ and $e_r \ge 0$.
    Let $\lambda \ge 2\delta_A$ and let 
    $\sum_{r=0}^{+\infty} e_r^2 \le \frac{\lambda (\mu + \lambda)}{8}$.
    After $R$ communication rounds: 
    \begin{align}
        f(\Bar{\xx}^R) &- f(\xx^\star) + \frac{\mu}{2} 
        \Avg ||\xx_{i,R} - \xx^\star||^2 
        \le
        \nonumber
        \\
        &\frac{\mu}{[(1+\frac{\mu}{\lambda})^R - 1]} ||\xx^0 - \xx^\star||^2 
        \le
        \frac{\lambda}{R} ||\xx^0 - \xx^\star||^2 
        \;.
        \nonumber
    \end{align}
    where $\Bar{\xx}^R := \argmin_{\xx \in \{\xx^r\}_{r=1}^R} f(\xx)$.
\end{theorem}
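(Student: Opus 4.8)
The plan is to reuse the one-step descent inequality that underlies Theorem~\ref{thm:MainDANE+InExactSolutionConvex} and to change only the way the local inexactness is accounted for. Concretely, I would first re-derive, for control variate~\eqref{eq:ControlVariate2} and the standard averaging, the per-round estimate obtained in the proof of Theorem~\ref{thm:MainDANE+InExactSolutionConvex} \emph{before} the inexactness term is bounded. Writing $\ee_{i,r+1} := \nabla F_{i,r}(\xx_{i,r+1})$, the subproblem optimality relation reads $\nabla f(\xx_{i,r+1}) + \lambda(\xx_{i,r+1}-\xx^r) = \ee_{i,r+1} - [\nabla h_i(\xx_{i,r+1}) - \nabla h_i(\xx^r)]$, so combining $\mu$-convexity of $f$ at each $\xx_{i,r+1}$ with the three-point identity for the proximal term yields, after averaging over $i$ and applying Jensen ($f(\xx^{r+1})\le\Avg f(\xx_{i,r+1})$ and $\norm{\xx^{r+1}-\xx^\star}^2\le\Avg\norm{\xx_{i,r+1}-\xx^\star}^2$), an inequality of the shape
\[
  (f(\xx^{r+1})-f^\star) + \tfrac{\mu+\lambda}{2}\Avg\norm{\xx_{i,r+1}-\xx^\star}^2 + \tfrac{\lambda}{2}\Avg\norm{\xx_{i,r+1}-\xx^r}^2 \le \tfrac{\lambda}{2}\Avg\norm{\xx_{i,r}-\xx^\star}^2 + (\mathrm{AHD}) + (\mathrm{err}),
\]
where the $(\mathrm{AHD})$ remainder is controlled by Definition~\ref{df:HessianSimilarity} and absorbed into the displacement term using $\lambda\ge2\delta_A$ exactly as in Theorem~\ref{thm:MainDANE+InExactSolutionConvex}, and $(\mathrm{err})\lesssim \tfrac{1}{\mu+\lambda}\Avg\norm{\ee_{i,r+1}}^2$.

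The crucial observation — and the only place where Theorem~\ref{thm:MainConvexFrameworkInExactSolutionSpecial} departs from Theorem~\ref{thm:MainDANE+InExactSolutionConvex} — is that the displacement term $\Avg\norm{\xx_{i,r+1}-\xx^r}^2$ sits on the \emph{good} side with coefficient of order $\lambda$ but is simply discarded in the proof of Theorem~\ref{thm:MainDANE+InExactSolutionConvex}. Under the relative accuracy condition $\sum_{i}\norm{\ee_{i,r+1}}^2\le e_r^2\sum_i\norm{\xx_{i,r+1}-\xx^r}^2$, the inexactness remainder becomes $(\mathrm{err})\lesssim \tfrac{e_r^2}{\mu+\lambda}\Avg\norm{\xx_{i,r+1}-\xx^r}^2$, i.e.\ it has exactly the same form as the discarded displacement term. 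I would therefore move it to the left, keeping the combined coefficient $\bigl(\tfrac{\lambda}{2}-c\,\tfrac{e_r^2}{\mu+\lambda}\bigr)\Avg\norm{\xx_{i,r+1}-\xx^r}^2$; matching constants so that this coefficient stays nonnegative is what produces the threshold $\tfrac{\lambda(\mu+\lambda)}{8}$. Since $e_r^2\le\sum_{r'}e_{r'}^2\le\tfrac{\lambda(\mu+\lambda)}{8}$ for every $r$, the coefficient is indeed $\ge0$ round by round, so the displacement term can be dropped and one recovers \emph{exactly} the error-free recursion — this is why, unlike in Theorem~\ref{thm:MainDANE+InExactSolutionConvex}, no additive $\sum_r e_r^2$ term survives.

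It then remains to telescope the clean recursion $(f(\xx^{r+1})-f^\star)+\tfrac{\mu+\lambda}{2}V_{r+1}\le\tfrac{\lambda}{2}V_r$ with $V_r:=\Avg\norm{\xx_{i,r}-\xx^\star}^2$. Multiplying the $r$-th inequality by the geometric weight $w_r=(1+\tfrac{\mu}{\lambda})^{r+1}$ makes the $V$-terms telescope; summing over $r=0,\dots,R-1$, retaining the terminal $\tfrac{\mu+\lambda}{2}(1+\tfrac{\mu}{\lambda})^R V_R$ term, and dividing by $\sum_r w_r=\tfrac{(1+\mu/\lambda)[(1+\mu/\lambda)^R-1]}{\mu/\lambda}$ gives, via $f(\Bar{\xx}^R)-f^\star=\min_{1\le r\le R}(f(\xx^r)-f^\star)\le\tfrac{\sum_r w_r(f(\xx^{r+1})-f^\star)}{\sum_r w_r}$, the stated bound $f(\Bar{\xx}^R)-f^\star+\tfrac{\mu}{2}\Avg\norm{\xx_{i,R}-\xx^\star}^2\le\tfrac{\mu}{(1+\mu/\lambda)^R-1}\norm{\xx^0-\xx^\star}^2$. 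The final inequality $\tfrac{\mu}{(1+\mu/\lambda)^R-1}\le\tfrac{\lambda}{R}$ is just the elementary Bernoulli estimate $(1+\mu/\lambda)^R-1\ge R\mu/\lambda$, which also delivers the general convex case $\mu=0$ by continuity.

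I expect the main obstacle to be the bookkeeping of the displacement coefficient. I must verify that after the $(\mathrm{AHD})$ remainder has already consumed part of the $\tfrac{\lambda}{2}\Avg\norm{\xx_{i,r+1}-\xx^r}^2$ budget (this is precisely what forces $\lambda\ge2\delta_A$), enough of it is left to absorb the relative error, and that the residual coefficient is nonnegative exactly under the stated threshold. Everything else — the proximal three-point identity, the Jensen steps, and the weighted telescoping — is routine and identical to Theorem~\ref{thm:MainDANE+InExactSolutionConvex}.
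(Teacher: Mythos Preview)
Your core idea — that the retained displacement term $\tfrac{c\lambda}{2}\Avg\norm{\xx_{i,r+1}-\xx^r}^2$ can absorb the inexactness under the relative-accuracy hypothesis — is exactly what drives the paper's proof. But the per-round implementation you sketch has a gap. The raw error term is $\Avg\langle\ee_{i,r+1},\xx_{i,r+1}-\xx^\star\rangle$, which is coupled to $\xx_{i,r+1}-\xx^\star$, not to $\xx_{i,r+1}-\xx^r$. To obtain $(\mathrm{err})\lesssim\tfrac{1}{\mu+\lambda}\Avg\norm{\ee_{i,r+1}}^2$ per round you must apply Young and consume part of the $\tfrac{\mu+\lambda}{2}A_{r+1}$ on the left; after doing so the $A_{r+1}$ coefficient drops below $\tfrac{\lambda}{2}$ whenever $\mu<\lambda$, and the recursion no longer telescopes with ratio $q=\tfrac{\lambda}{\mu+\lambda}$. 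In particular the argument collapses at $\mu=0$, which the theorem must cover. Salvaging it by taking a Young parameter $\epsilon\le\mu$ preserves the telescoping but forces $e_r^2\lesssim\lambda\mu$ per round, far tighter than the stated $\tfrac{\lambda(\mu+\lambda)}{8}$ and again vacuous at $\mu=0$.

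The paper avoids this by \emph{not} decoupling the error per round. Its Lemma~\ref{thm:LemmaConvexFrameworkInExactSolution} keeps the error as $\sqrt{C_{r+1}}\sqrt{A_{r+1}}$ (Cauchy--Schwarz on the inner product), telescopes the raw recursion, and handles the resulting implicit inequality via a square-root recurrence (Lemma~\ref{thm:SquareRootRecurrence}); this yields on the right the global quantity $\bigl(\sum_r\sqrt{C_r/q^r}\bigr)^2$ while the full $\sum_r B_{r-1}/q^r$ survives on the left. Only \emph{after} summing does one substitute $C_r\le e_{r-1}^2 B_{r-1}$ and apply Cauchy--Schwarz once more, $\bigl(\sum_r e_{r-1}\sqrt{B_{r-1}/q^r}\bigr)^2\le\bigl(\sum_r e_{r-1}^2\bigr)\bigl(\sum_r B_{r-1}/q^r\bigr)$, so that the $B$-sums cancel. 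This is precisely why the hypothesis is a \emph{sum} bound $\sum_r e_r^2\le\tfrac{\lambda(\mu+\lambda)}{8}$ rather than a per-round bound, and why no $A_{r+1}$ budget is ever spent on the error. Your remark that ``$e_r^2\le\sum_{r'}e_{r'}^2$'' trivialises the sum condition is a symptom that the mechanism producing the threshold has been misidentified.
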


In practice, it is sufficient to stop running the local solver on each device
once the solutions satisfy $||\nabla F_{i,r} (\xx_{i,r+1})|| \le e_r ||\xx_{i,r+1} - \xx^r||$ with 
$
    e_r^2 := \frac{\lambda (\mu + \lambda)}
    {8(r+1)(r+2)}.
$
As a direct corollary of Theorem~\ref{thm:MainConvexFrameworkInExactSolutionSpecial},
we can estimate the required number of local iterations for standard 
first-order methods assuming that each function $f_i$ is $L$-smooth.

\begin{corollary}
\label{thm:MainDANE+InExactSolutionConvexLocalSteps}
    Consider Algorithm~\ref{Alg:FrameworkDeterministic} with control variate~\eqref{eq:ControlVariate2} and standard averaging. 
    Let $f_i : \R^d \to \R$ be continuously differentiable, 
    $\mu$-convex with $\mu \ge 0$, and $L$-smooth for any $i \in [n]$.
    Assume that $\{f_i\}$ have $\delta_A$-AHD with $L \ge \delta_A + \mu$. 
    Suppose for any $r \ge 0$,
    each device uses the standard (fast) gradient descent 
    initialized at $\xx^r$ until $||\nabla F_{i,r} (\xx_{i,r+1})||^2 \le \frac{\lambda (\mu + \lambda)}
    {8(r+1)(r+2)} ||\xx_{i,r+1} - \xx^r||^2$ is satisfied.
    Let $\lambda = 2\delta_A$.
    To have the same convergence guarantee as in 
    Theorem~\ref{thm:MainConvexFrameworkInExactSolutionSpecial},
    the total number of local steps $K_r$ required at each round $r$
    is no more than:
    
    \algname{DANE+-GD}: \vspace{-2mm}
    \begin{equation}
        K_r \le \Theta\Biggl( 
          \frac{L}{\delta_A+\mu}
          \ln\biggl( \frac{L}{\delta_A+\mu} (r+1) \biggr)
          \Biggr) \;.
          \nonumber
    \end{equation}
\algname{DANE+-FGD}: \vspace{-2mm}
    \begin{equation}
        K_r \le \Theta\Biggl( 
          \sqrt{\frac{L}{\delta_A+\mu}}
          \ln\biggl( \frac{L}{\delta_A+\mu} (r+1) \biggr)
          \Biggr) \;.
          \nonumber
    \end{equation}
\end{corollary}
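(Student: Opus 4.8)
The plan is to reduce the statement to a standard convergence-rate bound for strongly convex smooth minimization, applied separately to each local subproblem. First I would record the regularity of the local objective: since $f_i$ is $\mu$-convex and $L$-smooth, the linear term $-\lin{\xx, \hh_{i,r}}$ affects neither constant, while the regularizer $\frac{\lambda}{2}\norm{\xx - \xx^r}^2$ raises both by $\lambda$. Hence $F_{i,r}$ is $(\mu+\lambda)$-strongly convex and $(L+\lambda)$-smooth, with effective condition number $\kappa := \frac{L+\lambda}{\mu+\lambda}$. I would denote by $\xx_{i,r}^\star$ its unique minimizer and by $D := \norm{\xx^r - \xx_{i,r}^\star}$ the initial distance of the local solver, which starts at $\xx^r$. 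If $D = 0$ the stopping criterion holds trivially at the zeroth step, so I assume $D > 0$.

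The crux is converting the \emph{relative} stopping test $\norm{\nabla F_{i,r}(\xx_{i,r+1})} \le e_r \norm{\xx_{i,r+1} - \xx^r}$ into an absolute accuracy requirement on the inner solver. For the numerator, smoothness gives $\norm{\nabla F_{i,r}(\xx^{(k)})} \le (L+\lambda)\norm{\xx^{(k)} - \xx_{i,r}^\star}$ at the $k$-th inner iterate $\xx^{(k)}$. For the denominator I would first invoke a short warm-up phase: the standard contraction $\norm{\xx^{(k)} - \xx_{i,r}^\star} \le (1 - \kappa^{-1})^{k/2} D$ for GD (respectively an accelerated contraction of the form $(1-\Theta(\kappa^{-1/2}))$ per step for FGD, up to a condition-number factor relating the Lyapunov function to the squared distance) shows that after $k_0 = \Theta(\kappa)$ (resp.\ $\Theta(\sqrt{\kappa})$) steps one has $\norm{\xx^{(k)} - \xx_{i,r}^\star} \le D/2$, whence the reverse triangle inequality yields $\norm{\xx^{(k)} - \xx^r} \ge D/2$ for all $k \ge k_0$. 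Combining the two bounds gives, for $k \ge k_0$,
\[
\frac{\norm{\nabla F_{i,r}(\xx^{(k)})}}{\norm{\xx^{(k)} - \xx^r}} \le 2(L+\lambda)(1-\kappa^{-1})^{k/2} ,
\]
with $\kappa^{-1}$ replaced by $\Theta(\kappa^{-1/2})$ in the exponent for FGD, so the whole difficulty is reduced to driving this geometrically decaying quantity below $e_r$.

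Finally I would solve $2(L+\lambda)(1-\kappa^{-1})^{k/2} \le e_r$ for $k$, using $-\ln(1-x) \ge x$, to obtain $K_r = \Theta\bigl(\kappa \ln((L+\lambda)/e_r)\bigr)$ for GD and $K_r = \Theta\bigl(\sqrt{\kappa}\,\ln((L+\lambda)/e_r)\bigr)$ for FGD. It then remains to substitute $\lambda = 2\delta_A$ and $e_r^2 = \frac{\lambda(\mu+\lambda)}{8(r+1)(r+2)}$ and simplify, using $L \ge \delta_A + \mu$, which forces $L + \lambda = \Theta(L)$ and $\mu + \lambda = \Theta(\delta_A + \mu)$, so that $\kappa = \Theta\bigl(\frac{L}{\delta_A+\mu}\bigr)$; moreover $(L+\lambda)/e_r = \Theta\bigl(\frac{L}{\delta_A+\mu}(r+1)\bigr)$ up to constants and lower-order factors absorbed by the logarithm, producing exactly the two stated bounds. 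The main obstacle I anticipate is the denominator control: the relative criterion can only be met once the iterate has provably moved a constant fraction of the way toward $\xx_{i,r}^\star$, so the warm-up argument and the degenerate case $D = 0$ must be handled carefully to make the bound hold uniformly in $k$ rather than only asymptotically.
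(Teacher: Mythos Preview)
Your approach is correct and reaches the right bounds, but the paper handles the relative stopping criterion differently and more economically. Rather than a warm-up phase plus reverse triangle inequality to lower-bound $\norm{\xx^{(k)} - \xx^r}$, the paper uses the strong-convexity gradient inequality $\norm{\xx_{i,r+1} - \xx_{i,r}^\star} \le \frac{1}{\mu+\lambda}\norm{\nabla F_{i,r}(\xx_{i,r+1})}$ together with the triangle inequality to write $\norm{\xx_{i,r+1} - \xx^r} \ge D - \frac{1}{\mu+\lambda}\norm{\nabla F_{i,r}(\xx_{i,r+1})}$. Substituting this into the relative condition $\norm{\nabla F_{i,r}(\xx_{i,r+1})} \le e_r\norm{\xx_{i,r+1} - \xx^r}$ and solving algebraically yields the sufficient absolute target $\norm{\nabla F_{i,r}(\xx_{i,r+1})} \le \frac{e_r}{1 + e_r/(\mu+\lambda)}\, D$, after which the standard (F)GD rate is applied directly. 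This sidesteps the warm-up bookkeeping and the $D = 0$ case analysis you flag as the main obstacle: the algebraic conversion holds for every iterate, not only after $k_0$ steps. Your route is more elementary in that it uses only smoothness and the distance contraction, not the gradient-to-distance inequality from strong convexity, but it costs an additive $\Theta(\kappa)$ (or $\Theta(\sqrt{\kappa})$) warm-up term and separate treatment of degenerate initializations, both of which the paper's one-line trick absorbs automatically.
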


The total computational complexity of \algname{DANE+-GD} is equivalent 
to that of the centralized gradient descent method, up to a logarithmic factor that depends on $r$.

\section{FedRed: an optimization framework with Doubly Regularized Drift Correction}
While \algname{DANE+} can achieve communication speed-up, the 
local computations are inefficient. This is because, for every communication round, a difficult subproblem has to be approximately solved. In this section, we present a general 
framework that improves the overall computational efficiency while maintaining the communication reduction.

\textbf{Key idea: Let us add an additional regularizer to improve the conditioning of the subproblem.}
\algname{FedRed}~\ref{Alg:FedRed} retains two sequences throughout the iteration: $\{\xx_{i,k}\}$ denote the iterates stored on each device, and $\{\Tilde{\xx}_{k}\}$ are the reference points. At each iteration $k$, 
\algname{FedRed} adds an additional regularizer to the proxy function 
defined in \algname{DANE+}. By setting $\eta$ to be a large number
(e.g. $2L$ where $L$ is the Lipschitz constant),
$F_{i,k}$ can become a function with a good condition number
(an absolute constant) and thus 
can be solved efficiently in a constant (up to the logarithmic factor) number of steps.

However, strong regularization prevents aggressive progress of the algorithm.
Therefore, the communication is only triggered with probability $p$,
with the reference points being adjusted to the averaged $(\xx_{i,k+1})_{i=1}^n$.
Lastly, the control variates $\{\hh_{i,k}\}$ 
are updated accordingly. We study the same control variate
as defined in~\eqref{eq:ControlVariate2}:
\begin{equation}
    \hh_{i, k}
    :=
    \nabla f_i(\Tilde{\xx}_k)
    -
    \nabla f(\Tilde{\xx}_k)
    \;.
    \label{eq:ControlVariateLocal2}
\end{equation}

Note that \algname{DANE+} becomes a special case of \algname{FedRed}
when $p = 1$ and $\eta = 0$. We next show the convergence rates for
\algname{FedRed} using exact local solvers.

\subsection{FedRed with Exact Solvers}

\begin{theorem}
    Consider Algorithm~\ref{Alg:FedRed} with control 
    variate~\eqref{eq:ControlVariateLocal2} and randomized averaging
    with random index set $\{i_k\}_{k=0}^{+\infty}$.
    Let $f_i : \R^d \to \R$ be continuously differentiable for any $i \in [n]$. 
    Assume that $\{f_i\}$ have $\delta_B$-BHD.
    Let $\lambda = \delta_B$, $p = \frac{\lambda}{\eta}$ and 
    $\eta \ge 4 \delta_B$.
    For any $K \ge 1$, it holds that:
    \begin{equation}
        \E\Bigl[ ||\nabla f(\Bar{\xx}_{K})||^2 \Bigr]
        \le 
        \frac{150\eta (f(\xx^0) - f^\star)}{K} 
        \;,
        \nonumber
    \end{equation}
    where $\Bar{\xx}_K$ is uniformly sampled from 
    $(\xx_{i_k,k})_{k=0}^{K-1}$.
\end{theorem}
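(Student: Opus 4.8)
The plan is to build a Lyapunov function that combines the objective value with the device-to-reference drift $D_k := \frac1n\sum_{i=1}^n\norm{\xx_{i,k}-\Tilde{\xx}_k}^2$, and to telescope it. Writing $h_i := f_i - f$ so that the control variate is $\hh_{i,k} = \nabla h_i(\Tilde{\xx}_k)$, I would first establish a per-device descent. Since $\xx_{i,k+1}$ minimizes $F_{i,k}$, I compare $F_{i,k}(\xx_{i,k+1}) \le F_{i,k}(\Tilde{\xx}_k)$ and convert $f_i$ into $f$ using $\delta_B$-smoothness of $h_i$ (whose Bregman divergence at $\Tilde{\xx}_k$ is at most $\tfrac{\delta_B}{2}\norm{\cdot - \Tilde{\xx}_k}^2$); the linear drift-correction term $\lin{\cdot,\nabla h_i(\Tilde{\xx}_k)}$ then cancels exactly, and with $\lambda=\delta_B$ this yields
\[ f(\xx_{i,k+1}) - f(\Tilde{\xx}_k) \le -\tfrac{\eta}{2}\norm{\xx_{i,k+1}-\xx_{i,k}}^2 + \tfrac{\eta}{2}\norm{\xx_{i,k}-\Tilde{\xx}_k}^2, \]
crucially without ever using smoothness of $f_i$ itself. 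Averaging over $i$ and taking expectation over the communication indicator $\theta_k$, which overwrites $\Tilde{\xx}_{k+1}$ by a sampled device iterate with probability $p$, produces a decrease term of order $-p\eta A_k$ (with $A_k := \frac1n\sum_i\norm{\xx_{i,k+1}-\xx_{i,k}}^2$) together with an unfavorable $+\,\Theta(p\eta)\,D_k$.

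Next I would extract the quantity to be bounded. The stationarity condition $\nabla F_{i,k}(\xx_{i,k+1})=0$, together with $\nabla f = \nabla f_i - \nabla h_i$ and $\delta_B$-smoothness, gives $\norm{\nabla f(\xx_{i,k+1})} \le 2\delta_B\norm{\xx_{i,k+1}-\Tilde{\xx}_k} + \eta\norm{\xx_{i,k+1}-\xx_{i,k}}$, so the target $\frac1n\sum_i\norm{\nabla f(\xx_{i,k+1})}^2$ is controlled by $\eta^2 A_k$ and $\delta_B^2 B_k$, where $B_k := \frac1n\sum_i\norm{\xx_{i,k+1}-\Tilde{\xx}_k}^2$. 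For the drift I would rewrite the same stationarity condition as $\xx_{i,k+1}-\Tilde{\xx}_k = \frac{\eta}{\eta+\lambda}(\xx_{i,k}-\Tilde{\xx}_k) + \frac{1}{\eta+\lambda}w_i$ with $w_i := \nabla h_i(\Tilde{\xx}_k)-\nabla f_i(\xx_{i,k+1})$, exhibiting $\xx_{i,k+1}$ as a contraction of $\xx_{i,k}$ toward $\Tilde{\xx}_k$ by the exact factor $\frac{\eta}{\eta+\lambda}=1-\Theta(p)$, plus a correction governed by $\nabla f(\xx_{i,k+1})$. For the randomized averaging I use that, conditioned on communication, $\Tilde{\xx}_{k+1}=\xx_{i_k,k+1}$ makes the new drift at most twice the variance of $\{\xx_{i,k+1}\}$ around their mean, which is itself at most $B_k$. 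Combining, $\E[D_{k+1}]$ contracts like $(1-\Theta(p))D_k$ plus terms in $A_k$ and in $\frac1n\sum_i\norm{\nabla f(\xx_{i,k+1})}^2$.

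I would then combine the descent and the drift recursion into a single Lyapunov function of the form $\Psi_k = (\text{objective progress}) + \tfrac{c}{p}D_k$, where the objective-progress term is chosen so that it decreases at \emph{every} iteration (e.g.\ through the device values $\frac1n\sum_i f(\xx_{i,k})$ rather than only $f(\Tilde{\xx}_k)$, since the reference moves only with probability $p$). The weight $\tfrac cp$ is tuned so that the $\Theta(p)$ drift contraction cancels the unfavorable $D_k$ term generated by the descent, while the large proximal parameter $\eta$ makes the per-step decrease dominate the $A_k$- and $B_k$-feedback routed back through the drift recursion. Summing over $k=0,\dots,K-1$ with $D_0=0$ (all iterates start at $\xx^0$, so $\Psi_0=f(\xx^0)-f^\star$) and lower-bounding the telescoped decrease by a multiple of $\frac1\eta\sum_k\frac1n\sum_i\E\norm{\nabla f(\xx_{i,k})}^2$ gives the $\cO(\eta F_0/K)$ rate after dividing by $K$; uniform sampling of $\Bar{\xx}_K$ from $(\xx_{i_k,k})_{k=0}^{K-1}$ (up to the harmless reindexing between steps $k$ and $k+1$) then converts the average into $\E[\norm{\nabla f(\Bar{\xx}_K)}^2]$.

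The crux—and the step I expect to be hardest—is the drift control: between the infrequent probability-$p$ communications the device iterates genuinely spread out, so $D_k$ does not contract on its own, and the natural descent inequality feeds back a positive $D_k$-term of the \emph{same order} as the useful decrease. Closing the argument requires balancing three competing effects simultaneously—the exact contraction factor $\frac{\eta}{\eta+\lambda}$ coming from the strongly regularized exact solve, the factor-two inflation of the drift produced by randomized single-index averaging, and the self-referential appearance of $\norm{\nabla f(\xx_{i,k+1})}^2$ inside the stationarity residual $w_i$. The constants are tight: this balance is precisely what pins down $\lambda=\delta_B$, $\eta\ge 4\delta_B$, and $p=\lambda/\eta$, and carefully tracking all three effects with explicit constants is what yields the numerical factor $150$.
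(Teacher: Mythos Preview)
Your plan follows a different route from the paper's, and the difference is instructive.

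\textbf{Where you diverge.} You compare $F_{i,k}(\xx_{i,k+1})\le F_{i,k}(\Tilde{\xx}_k)$ and track the \emph{averaged} drift $D_k=\frac1n\sum_i\norm{\xx_{i,k}-\Tilde{\xx}_k}^2$, handling the communication step via the variance bound $\E_{i_k}[\frac1n\sum_j\norm{\xx_{j,k+1}-\xx_{i_k,k+1}}^2]\le 2B_k$. The paper instead compares $F_{i,k}(\xx_{i,k+1})\le F_{i,k}(\xx_{i,k})$ and, crucially, does \emph{not} average over devices at all: it writes the per-device inequality
\[
f(\xx_{i,k})+A\norm{\xx_{i,k}-\Tilde{\xx}_k}^2 \;\ge\; f(\xx_{i,k+1})+B\norm{\xx_{i,k+1}-\Tilde{\xx}_k}^2+C\norm{\nabla f(\xx_{i,k+1})}^2
\]
with explicit $A,B,C$, observes that this holds for every $i$, and then evaluates it at the \emph{same random index} $i=i_k$ that is used for averaging. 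The payoff is that when $\theta_k=1$ one has $\Tilde{\xx}_{k+1}=\xx_{i_k,k+1}$ exactly, so $\E_{\theta_k}\bigl[\norm{\xx_{i_k,k+1}-\Tilde{\xx}_{k+1}}^2\bigr]=(1-p)\,\norm{\xx_{i_k,k+1}-\Tilde{\xx}_k}^2$: the drift for the tracked index zeros out \emph{perfectly} on communication, with no factor-$2$ loss and no self-referential $\norm{\nabla f(\xx_{i,k+1})}^2$ feeding back into the drift recursion. The whole argument then reduces to checking the single algebraic condition $A\le \frac{B}{1-p}$, which is what fixes $\lambda=\delta_B$, $p=\lambda/\eta$, $\alpha=\tfrac23\eta$, $\eta\ge 4\delta_B$, and produces the constant $150$ directly.

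\textbf{What each approach buys.} Your route is conceptually natural (Lyapunov $=$ objective $+$ weighted average drift) but forces you to juggle the three competing effects you describe, and the factor-$2$ from randomized averaging plus the $\norm{\nabla f}$-feedback in $w_i$ make the constant bookkeeping genuinely delicate; it is not obvious from your sketch that the balance closes with the stated $150$. The paper's single-index trick sidesteps all three difficulties at once: no variance inflation, no separate drift recursion, and the ``self-referential'' term never appears because the step-length bound $\norm{\xx_{i,k+1}-\xx_{i,k}}$ is lower-bounded (not the drift upper-bounded) in terms of $\norm{\nabla f(\xx_{i,k+1})}$. A secondary point: comparing at $\xx_{i,k}$ rather than $\Tilde{\xx}_k$ lets the paper work under the weaker nonconvex guarantee in the algorithm's footnote ($F_{i,k}(\xx_{i,k+1})\le F_{i,k}(\xx_{i,k})$ and stationarity), whereas your comparison requires the global minimizer.
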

To reach $\epsilon$-accuracy, 
the communication complexity is $p\cO(\frac{\eta}{K})=
\cO(\frac{\delta_B}{\epsilon})$ in expectation, which is the same as \algname{DANE+}. However, due to 
the additional regularization, the subproblem becomes easier to solve.
The same conclusion can also be derived for the convex settings.
\begin{theorem}
    Consider Algorithm~\ref{Alg:FedRed} with control 
    variate~\eqref{eq:ControlVariateLocal2} and the standard
    averaging.
    Let $f_i : \R^d \to \R$ be continuously differentiable, 
    $\mu$-convex with $\mu \ge 0$ for any $i \in [n]$. 
    Assume that $\{f_i\}$ have $\delta_A$-AHD.
    By choosing $p = \frac{\lambda + \mu/2}{\eta + \mu/2}$ and
                $\eta \ge \lambda \ge \delta_A$, 
    for any $K \ge 1$, it holds that:
    \begin{equation}
    \begin{split}
        &\E[ f(\Bar{\Bar{\xx}}_{K}) - f^\star] 
        + 
        \frac{\mu}{4}
        \E\biggl[ 
        ||\Tilde{\xx}_{K} - \xx^\star||^2
        + 
        \frac{1}{n} \sum_{i=1}^n
        ||\xx_{i,K} \\&- \xx^\star||^2
        \biggr]
        \le 
        \frac{\mu}{2} \frac{||\xx_0 - \xx^\star||^2 }{(1 + \frac{\mu}{2\eta})^K - 1}
        \le
        \frac{\eta ||\xx_0 - \xx^\star||^2}{K} \;.
        \nonumber
    \end{split}
    \end{equation}
    where  
    $
    \Bar{\Bar\xx}_{K} 
    :=
    \sum_{k=1}^K \frac{1}{q^k} \Bar{\xx}_k / \sum_{k=1}^K \frac{1}{q^k}$,
    $\Bar{\xx}_k := \Avg \xx_{i,k}$,
    and $q := 1 - \frac{\mu}{2\eta + \mu}$.
    \nonumber
\end{theorem}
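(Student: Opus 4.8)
The plan is to prove a one-step contraction for the Lyapunov function
$$\Psi_k := \frac{\mu}{4}\Bigl( ||\Tilde{\xx}_k - \xx^\star||^2 + \Avg ||\xx_{i,k} - \xx^\star||^2 \Bigr),$$
and then unroll it with geometric weights. Since $\Tilde{\xx}_0 = \xx_{i,0} = \xx^0$ we have $\Psi_0 = \frac{\mu}{2}||\xx^0 - \xx^\star||^2$, which already matches the numerator on the right-hand side. Concretely, the target of the per-step analysis is the estimate
$$\E_k[\Psi_{k+1}] + (1-q)\bigl(f(\bar{\xx}_{k+1}) - f^\star\bigr) \le q\,\Psi_k, \qquad q := 1 - \frac{\mu}{2\eta + \mu},$$
where $\E_k$ is the conditional expectation over the Bernoulli variable $\theta_k$, and $\bar{\xx}_{k+1} := \Avg\xx_{i,k+1}$ is deterministic given round $k$ (the local solves do not depend on $\theta_k$).

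First I would write the stationarity condition of each exactly solved subproblem,
$$\nabla f_i(\xx_{i,k+1}) - \hh_{i,k} + \eta(\xx_{i,k+1} - \xx_{i,k}) + \lambda(\xx_{i,k+1} - \Tilde{\xx}_k) = 0,$$
and substitute $\hh_{i,k} = \nabla h_i(\Tilde{\xx}_k)$ with $h_i := f_i - f$. Averaging over $i$ and using $\Avg \nabla h_i \equiv 0$ shows the control variate cancels, giving a clean relation for $\bar\xx_{k+1}$. Taking the inner product of the stationarity condition with $\xx_{i,k+1} - \xx^\star$ and applying the cosine identity $\lin{a-b,a-c} = \tfrac12(||a-c||^2+||a-b||^2-||b-c||^2)$ to the two quadratic regularizers converts the $\eta$- and $\lambda$-terms into differences of exactly the squared distances appearing in $\Psi$. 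Splitting $\nabla f_i = \nabla f + \nabla h_i$, the $\nabla f$ contribution is lower bounded by $\mu$-convexity of $f$ as $f(\xx_{i,k+1}) - f^\star + \frac{\mu}{2}||\xx_{i,k+1}-\xx^\star||^2$; after averaging and Jensen ($f(\bar\xx_{k+1}) \le \Avg f(\xx_{i,k+1})$) this produces the function-value term.

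The main obstacle is controlling the drift-correction residual $\Avg\lin{\nabla h_i(\xx_{i,k+1}) - \nabla h_i(\Tilde\xx_k), \xx_{i,k+1} - \xx^\star}$, in which the arguments of $\nabla h_i$ are client-dependent, so the AHD inequality cannot be invoked verbatim. I would handle it exactly as in the convex \algname{DANE+} analysis (Theorem~\ref{thm:MainDANE+InExactSolutionConvex}): bound the residual by Young's inequality against the $\frac{\lambda}{2}||\xx_{i,k+1}-\Tilde\xx_k||^2$ term produced by the cosine identity, then dominate $\Avg||\nabla h_i(\xx_{i,k+1}) - \nabla h_i(\Tilde\xx_k)||^2$ through $\delta_A$-AHD. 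The hypothesis $\lambda \ge \delta_A$ is precisely what lets the regularizer absorb this error with room to spare, collapsing the deterministic step into a bound on $\Avg||\xx_{i,k+1}-\xx^\star||^2$ plus the function value, expressed in terms of $||\Tilde\xx_k-\xx^\star||^2$ and $\Avg||\xx_{i,k}-\xx^\star||^2$.

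Finally I would take $\E_k$ over $\theta_k$, using $\E_k||\Tilde\xx_{k+1}-\xx^\star||^2 = p||\bar\xx_{k+1}-\xx^\star||^2 + (1-p)||\Tilde\xx_k-\xx^\star||^2$ (standard averaging). Substituting $p = \frac{\lambda + \mu/2}{\eta+\mu/2}$ is the balance that turns the deterministic estimate into the clean contraction above with $q = \frac{2\eta}{2\eta+\mu}$; I expect checking that this specific $p$ makes every coefficient line up to be the most delicate bookkeeping. Multiplying by $q^{-(k+1)}$, telescoping over $k=0,\dots,K-1$, and taking total expectation gives $q^{-K}\E[\Psi_K] + (1-q)\sum_{j=1}^K q^{-j}\E[f(\bar\xx_j)-f^\star] \le \Psi_0$. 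Since $q^{-1} = 1+\frac{\mu}{2\eta}$, one verifies $(1-q)\sum_{j=1}^K q^{-j} = q^{-K}-1$; applying Jensen with the weighted average $\bar{\bar{\xx}}_K$ (weights $q^{-k}$) and using $q^{-K}\ge q^{-K}-1$ to merge the two nonnegative terms yields $\E[f(\bar{\bar{\xx}}_K)-f^\star] + \E[\Psi_K] \le \frac{\Psi_0}{q^{-K}-1} = \frac{\mu}{2}\frac{||\xx_0-\xx^\star||^2}{(1+\frac{\mu}{2\eta})^K-1}$. The last inequality $\le \frac{\eta||\xx_0-\xx^\star||^2}{K}$ then follows from Bernoulli's inequality $(1+\frac{\mu}{2\eta})^K - 1 \ge \frac{\mu K}{2\eta}$, and since all estimates are continuous in $\mu$ the bound extends to the general convex case $\mu=0$.
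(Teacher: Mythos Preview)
Your overall architecture --- one-step contraction for a combined Lyapunov, balancing $p$ to equalize the $\Tilde{\xx}$- and $\xx_i$-coefficients, then weighted telescoping and Bernoulli --- matches the paper's. The gap is precisely in the step you flag as the ``main obstacle''. You correctly observe that in the residual
\[
\Avg\bigl\langle \nabla h_i(\xx_{i,k+1}) - \nabla h_i(\Tilde\xx_k),\ \xx_{i,k+1} - \xx^\star \bigr\rangle
\]
the first argument of $\nabla h_i$ is client-dependent, but then claim you will ``dominate $\Avg||\nabla h_i(\xx_{i,k+1}) - \nabla h_i(\Tilde\xx_k)||^2$ through $\delta_A$-AHD.'' You cannot: Definition~\ref{df:HessianSimilarity} requires the \emph{same} pair $(\xx,\yy)$ for every $i$, so it says nothing about $\Avg||\nabla h_i(\xx_{i,k+1}) - \nabla h_i(\Tilde\xx_k)||^2$ with $\xx_{i,k+1}$ varying in $i$. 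What your bound actually needs is $\delta_B$-BHD, which controls each summand separately; the argument would then prove the theorem only with $\delta_B$ in place of $\delta_A$. There is also a secondary mismatch: the vector paired with the gradient difference in your residual is $\xx_{i,k+1}-\xx^\star$, not $\xx_{i,k+1}-\Tilde{\xx}_k$ or $\xx_{i,k+1}-\Bar{\xx}_{k+1}$, so ``Young against $\frac{\lambda}{2}||\xx_{i,k+1}-\Tilde\xx_k||^2$'' would not cancel cleanly even if the AHD step were valid.

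The fix is not a different Young bound but a different decomposition \emph{before} the residual is formed. Rather than splitting $\nabla f_i(\xx_{i,k+1}) = \nabla f(\xx_{i,k+1}) + \nabla h_i(\xx_{i,k+1})$ and invoking convexity of $f$, the paper applies convexity of each $f_i$ at the \emph{common} point $\Bar\xx_{k+1}$:
\[
f_i(\xx_{i,k+1}) \ge f_i(\Bar\xx_{k+1}) + \langle \nabla f_i(\Bar\xx_{k+1}),\ \xx_{i,k+1} - \Bar\xx_{k+1}\rangle.
\]
After averaging and using $\Avg\nabla h_i(\Bar\xx_{k+1}) = \Avg\nabla h_i(\Tilde\xx_k) = 0$, the cross term becomes
\[
\Avg\bigl\langle \nabla h_i(\Bar\xx_{k+1}) - \nabla h_i(\Tilde\xx_k),\ \xx_{i,k+1} - \Bar\xx_{k+1} \bigr\rangle,
\]
in which both gradient arguments are client-independent and the second factor is centered. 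Now AHD applies verbatim, and together with the variance decomposition $\frac{\lambda}{2}\Avg||\xx_{i,k+1}-\Tilde\xx_k||^2 = \frac{\lambda}{2}||\Bar\xx_{k+1}-\Tilde\xx_k||^2 + \frac{\lambda}{2}\Avg||\xx_{i,k+1}-\Bar\xx_{k+1}||^2$ and~\eqref{eq:QuadraticLowerBound}, the whole block is nonnegative once $\lambda\ge\delta_A$. This is exactly the mechanism of Lemma~\ref{thm:BasicRecurrenceControlVariate2}, so ``handling it as in the DANE+ analysis'' really means recentering at $\Bar\xx_{k+1}$, not just Young plus AHD. With this single change the remainder of your plan goes through as written.
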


\subsection{Comparison to existing algorithms.}
In this section, we compare \algname{FedRed} with two similar algorithms.
\algname{SVRP}~\cite{svrp} is a stochastic proximal point variance-reduced method. As discussed in the introduction, the main focus of \algname{SVRP} is different in this paper. The main algorithmic differences between \algname{SVRP} and \algname{FedRed} are that: 1) \algname{FedRed} uses double regularization and thus the regularized centering points are different 2) For \algname{SVRP}, communication happens at each iteration , while \algname{FedRed} skips communication with probability $1-p$, and 3) \algname{FedRed} allows standard averaging for convex optimization. Another similar algorithm is \algname{FedPD}~\cite{fedpd} where regularized drift correction and the strategy of skipping communication with probability are used. However, due to the different regularizations, \algname{FedPD} is less efficient in local computations. 

\subsection{FedRed-(S)GD}
Previous theorems assume that the exact minimizers (or stationary points)
of the subproblems are returned at each iteration. In this section, 
we show that, in fact, it suffices to make only one local step for minimizing
smooth functions. 

Note the derivation of Algorithm~\ref{Alg:GDLocalSolver} is to simply
linearize $f_i$ at $\xx_{i,k}$. Therefore, the solution of the subproblem  has a closed form:  
$\xx_{i,k+1} = 
\frac{\eta}{\eta+\lambda} \xx_{i,k}
+
\frac{\lambda}{\eta + \lambda} \Tilde{\xx}_k 
-
\frac{1}{\eta + \lambda}
(\gg_i(\xx_{i,k}) - \hh_{i,k})
$,
which is a convex combination of $\xx_{i,k}$ and $\Tilde{\xx}_k$
followed by a gradient descent step.

\begin{figure*}[tb!]
    \centering
    \includegraphics[width=1\textwidth]{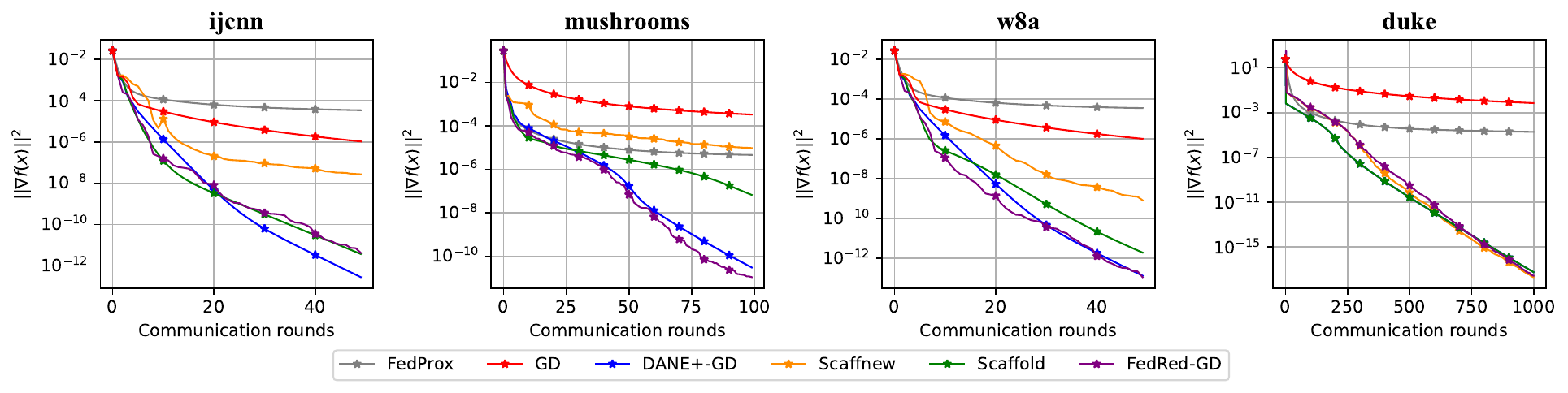}
    \caption{Comparison of \algname{DANE+-GD} and \algname{FedRed-GD} against four other distributed optimizers on four LIBSVM datasets using regularized logistic loss.}
    \label{fig:libsvm-main}
\end{figure*}

Here we allow the use of a stochastic gradient $\gg_i(\xx,\xi_i)$ 
instead of the full gradient $\nabla f_i(\xx)$. 
While more extensions could be made such as approximating 
$\hh_{i,k}$ using stochastic gradients, we do not attempt to be 
exhaustive here as the goal is to illustrate the  
communication reduction. We make the following standard assumptions 
for $\gg_i(\xx, \xi_i)$. 

\begin{assumption}
\label{assump:StochsticGradientfi}
    For any $i \in [n]$, $\gg_i(\xx, \xi_i)$ is an unbiased stochastic 
    estimator of $\nabla f_i (\xx)$ with bounded variance $\sigma^2$
    such that for any $\xx \in \R^d$, it holds that:
    \begin{equation}
        \E[\gg_i(\xx)] = \nabla f_i(\xx),\quad
        \E[||\gg_i(\xx) - \nabla f_i(\xx)||^2] \le \sigma^2. 
    \end{equation}
\end{assumption}

\begin{theorem}
\label{thm:main-FedRed-GDNonconvex}
    Consider Algorithm~\ref{Alg:GDLocalSolver} with control 
    variate~\eqref{eq:ControlVariateLocal2} and randomized averaging
    with random index set $\{i_k\}_{k=0}^{+\infty}$.
    Let $f_i : \R^d \to \R$ be continuously differentiable
    and $L$-smooth for any $i \in [n]$. 
    Assume that $\{f_i\}$ have $\delta_B$-BHD with
    $\delta_B \le L$ and that Assumption~\ref{assump:StochsticGradientfi} holds. 
    By choosing 
    $\eta = 3L + \sqrt{9L^2 + \frac{L \sigma^2 K}{f(\xx^0) - f^\star} } $, 
    $\lambda = \delta_B$
    and $p = \frac{\delta_B}{L}$, for any $K \ge 1$, it holds that:
    \begin{align}
        \E\bigl[||\nabla f(\Bar{\xx}_K)||^2 \bigr]
        &\le 
        \frac{96 L (f (\xx^0) - f^\star)}{K}
        \\
        &+ 24 \sqrt{\frac{L (f(\xx^0) - f^\star)}{K}} \sigma
        \;.
    \end{align}
    where $\Bar{\xx}_K$ is uniformly sampled from 
    $(\xx_{i_k,k})_{k=0}^{K-1}$.
\end{theorem}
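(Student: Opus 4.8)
The plan is to run a nonconvex SGD-style descent analysis built on the observation that the closed-form \algname{FedRed-(S)GD} update is an approximate stochastic gradient step on the global objective $f$, whose bias is controlled entirely by the drift $\norm{\xx_{i,k}-\Tilde{\xx}_k}$ together with $\delta_B$-BHD. First I would rewrite the closed form as
$$\xx_{i,k+1}-\xx_{i,k}=-\frac{1}{\eta+\lambda}\bigl[\dd_{i,k}+\lambda(\xx_{i,k}-\Tilde{\xx}_k)\bigr],\qquad \dd_{i,k}:=\gg_i(\xx_{i,k})-\hh_{i,k}.$$
Taking expectation over the stochastic sample and substituting the control variate~\eqref{eq:ControlVariateLocal2}, the expected step direction equals $\nabla f(\xx_{i,k})+\ee_{i,k}$, where $\ee_{i,k}=[\nabla h_i(\xx_{i,k})-\nabla h_i(\Tilde{\xx}_k)]+\lambda(\xx_{i,k}-\Tilde{\xx}_k)$ with $h_i=f_i-f$. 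This is the single most important identity: the control variate cancels the heterogeneity so the leading term is the true global gradient, and $\delta_B$-BHD bounds the bias by $\norm{\ee_{i,k}}\le(\delta_B+\lambda)\norm{\xx_{i,k}-\Tilde{\xx}_k}$.

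Second, I would derive a per-step descent inequality. Applying $L$-smoothness of $f$ to $f(\xx_{i,k+1})$, expanding the inner product and squared-norm terms, taking conditional expectation over the sample (splitting $\Eb{\norm{\xx_{i,k+1}-\xx_{i,k}}^2}$ into its mean plus the variance contribution $\sigma^2/(\eta+\lambda)^2$), and using Young's inequality on the cross term $\lin{\nabla f(\xx_{i,k}),\ee_{i,k}}$, I expect to obtain
$$\Eb{f(\xx_{i,k+1})}\le f(\xx_{i,k})-\frac{c_1}{\eta+\lambda}\norm{\nabla f(\xx_{i,k})}^2+\frac{c_2\,\delta_B^2}{\eta+\lambda}\norm{\xx_{i,k}-\Tilde{\xx}_k}^2+\frac{c_3\,L\sigma^2}{(\eta+\lambda)^2}.$$
The choice $\eta\ge 6L\ge L+\lambda$ makes the stepsize $1/(\eta+\lambda)$ small enough that the $\frac{L}{2(\eta+\lambda)^2}$ curvature term is dominated by the linear gain, leaving a net negative gradient coefficient.

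Third --- and this is where the main work lies --- I must control the accumulated drift $\norm{\xx_{i,k}-\Tilde{\xx}_k}^2$. Between communications ($\theta_k=0$) the reference point is frozen and the map $\xx_{i,k}\mapsto\xx_{i,k+1}$ contracts the drift by a factor $\eta/(\eta+\lambda)$ toward the subproblem solution while injecting variance $\sigma^2$; at a communication step ($\theta_k=1$, probability $p$) randomized averaging resets $\Tilde{\xx}_{k+1}$ to a single device iterate. The plan is to fold the drift into a Lyapunov function $\Phi_k=\Eb{f(\Tilde{\xx}_k)-f^\star}+\beta\,\Eb{\Avg\norm{\xx_{i,k}-\Tilde{\xx}_k}^2}$, propagate the reference value via the randomized-averaging identity $\Eb{g(\Tilde{\xx}_{k+1})\mid\cdot}=p\,\Avg g(\xx_{i,k+1})+(1-p)\,g(\Tilde{\xx}_k)$, and choose $\beta$ so that the drift penalty from the descent step cancels against the geometric contraction. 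The hard part will be exactly this coupling: showing that the parameter balance $\lambda=\delta_B$, $p=\delta_B/L$, $\eta\ge 6L$ simultaneously makes the contraction absorb the $\delta_B^2$-drift penalty and keeps the variance amplified by the frozen inner loop under control, so that $\Phi_{k+1}\le\Phi_k-\frac{c}{\eta+\lambda}\Eb{\norm{\nabla f(\xx_{i_k,k})}^2}+\frac{c'\,L\sigma^2}{(\eta+\lambda)^2}$.

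Finally, I would telescope this recursion from $k=0$ --- where $\Phi_0=f(\xx^0)-f^\star$ since all iterates start at $\xx^0$ with zero drift --- to $k=K-1$, divide by $K$, and identify the left-hand side with $\Eb{\norm{\nabla f(\Bar{\xx}_K)}^2}$ under the uniform sampling of $\Bar{\xx}_K$ from $(\xx_{i_k,k})_{k=0}^{K-1}$. This yields a bound of the form $\Eb{\norm{\nabla f(\Bar{\xx}_K)}^2}\le \cO\bigl(\frac{\eta F_0}{K}+\frac{L\sigma^2}{\eta}\bigr)$ with $F_0=f(\xx^0)-f^\star$. Plugging in $\eta=3L+\sqrt{9L^2+L\sigma^2K/F_0}$, which is $\Theta(L)$ when $\sigma=0$ and $\Theta(\sqrt{L\sigma^2K/F_0})$ for large noise, balances the bias term $\frac{\eta F_0}{K}$ against the variance term $\frac{L\sigma^2}{\eta}$ and recovers the stated $\frac{96LF_0}{K}+24\sigma\sqrt{LF_0/K}$.
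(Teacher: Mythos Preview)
Your overall plan---smoothness descent giving a negative gradient-norm term plus a drift penalty, coupling with a Lyapunov function that absorbs the drift, telescoping, then optimizing $\eta$---matches the paper's. The concrete gap is your choice of Lyapunov potential.

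You propose $\Phi_k=\Eb{f(\Tilde{\xx}_k)-f^\star}+\beta\,\Eb{\Avg\norm{\xx_{i,k}-\Tilde{\xx}_k}^2}$. But $\Tilde{\xx}_k$ only changes at communication steps (probability $p$), while your descent inequality controls $f(\xx_{i,k+1})-f(\xx_{i,k})$, not $f(\Tilde{\xx}_{k+1})-f(\Tilde{\xx}_k)$. Propagating via your randomized-averaging identity gives $\Eb{f(\Tilde{\xx}_{k+1})}=(1-p)f(\Tilde{\xx}_k)+p\,\Eb{\Avg f(\xx_{i,k+1})}$, so the gradient-norm gain enters only with coefficient $p\cdot\frac{c}{\eta+\lambda}$, and moreover the term $p\,\Eb{\Avg f(\xx_{i,k})}$ (not $p\,f(\Tilde{\xx}_k)$) appears on the right, preventing a clean recursion. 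If you try to close this by bounding $f(\xx_{i,k})-f(\Tilde{\xx}_k)$ via smoothness you pick up an $L$-sized drift term that the $\delta_B$-scale contraction cannot absorb. Net effect: the telescoped bound is a factor $L/\delta_B$ too large and does not recover the stated $96LF_0/K$.

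The paper instead tracks $f$ at the \emph{device iterate}: its Lyapunov is $\Eb{f(\xx_{i_k,k})}+C\,\Eb{\norm{\xx_{i_k,k}-\Tilde{\xx}_k}^2}$, with $i_k$ the randomized-averaging index. The descent inequality then gives a full $\frac{c}{\eta+\lambda}\Eb{\norm{\nabla f(\xx_{i_k,k})}^2}$ at every step, and the distributional identity $\E_{i_k}[\,\cdot\,(\xx_{i_k,k+1})]=\E_{i_{k+1}}[\,\cdot\,(\xx_{i_{k+1},k+1})]$ makes the recursion telescope. The drift component comes directly from the $(\eta+\lambda)$-strong-convexity of the subproblem, which produces a $\frac{\lambda}{2}\norm{\xx_{i,k+1}-\Tilde{\xx}_k}^2$ term on the right; after taking expectation over $\theta_k$ this becomes $\frac{\lambda}{2(1-p)}\norm{\xx_{i_k,k+1}-\Tilde{\xx}_{k+1}}^2$, and the condition $C\le\frac{\lambda}{2(1-p)}$ (verified for $\lambda=\delta_B$, $p=\delta_B/L$, $\eta\ge 3L$ with an auxiliary Young parameter $\alpha=5L$) closes the recursion. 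Your final optimization of $\eta$ is correct once this Lyapunov is fixed.
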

According to Theorem~\ref{thm:main-FedRed-GDNonconvex},
\algname{FedRed-GD} (using full-batch gradient) 
requires the same total gradient
computations as gradient descent to reach a certain accuracy.
However, in expectation, it can skip communication for every
$\Theta(\frac{L}{\delta_B})$ steps without additional cost.

\begin{theorem}
    Consider Algorithm~\ref{Alg:GDLocalSolver} with control 
    variate~\eqref{eq:ControlVariateLocal2} and the standard
    averaging.
    Let $f_i : \R^d \to \R$ be continuously differentiable, 
    $\mu$-convex with $\mu \ge 0$ and $L$-smooth for any $i \in [n]$. 
    Assume that $\{f_i\}$ have $\delta_A$-AHD with $\delta_A + \mu \le L$
    and that Assumption~\ref{assump:StochsticGradientfi} holds.
    Let 
    $\lambda = \delta_A$, $p = \frac{\lambda + \mu / 2}{\eta - \mu/2}$, 
    and $\eta > L$. 
    To reach $\epsilon$-accuracy, i.e. 
    $\E[ f(\Bar{\Bar{\xx}}_{K}) - f^\star] \le \epsilon$, 
    by choosing $\eta = \frac{\sigma^2}{\epsilon} + L$,
    the total number of iterations is no more than:
\begin{equation}
    K \le \Biggl\lceil \Bigl( \frac{2L}{\mu} + \frac{2\sigma^2}{\mu \epsilon}\Bigr)
    \ln\Bigl(1 + \frac{\mu||\xx^0 - \xx^\star||^2}{\epsilon} \Bigr) \Biggr\rceil 
    \;.
    \nonumber
\end{equation}
where  
$
\Bar{\Bar\xx}_{K} 
:=
\sum_{k=1}^K \frac{1}{q^k} \Bar{\xx}_k / \sum_{k=1}^K \frac{1}{q^k}$,
$\Bar{\xx}_k := \Avg \xx_{i,k}$,
and $q := 1 - \frac{\mu}{2\eta - \mu}$.
\end{theorem}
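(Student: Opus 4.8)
The plan is to reuse the Lyapunov-function machinery already developed for the exact-solver convex theorem, but to carry through the two additional error sources specific to Algorithm~\ref{Alg:GDLocalSolver}: the \emph{linearization} of $f_i$ and the \emph{stochastic noise} in $\gg_i$. The starting observation is that the closed-form one-step update stated just before the theorem,
\[
\xx_{i,k+1} = \tfrac{\eta}{\eta+\lambda}\xx_{i,k} + \tfrac{\lambda}{\eta+\lambda}\Tilde{\xx}_k - \tfrac{1}{\eta+\lambda}\bigl(\gg_i(\xx_{i,k}) - \hh_{i,k}\bigr),
\]
is the \emph{exact} minimizer of a quadratic surrogate in which $f_i$ is replaced by its linear model at $\xx_{i,k}$ plus the regularizer $\frac{\eta}{2}\|\cdot-\xx_{i,k}\|^2$. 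Because $\eta>L$, this surrogate is a valid \emph{upper} model of $f_i$, so minimizing it yields genuine descent; this is exactly where the hypothesis $\eta>L$ is used, and it lets us treat one local (S)GD step as an exact subproblem solve against an upper model rather than as an inexact \algname{DANE+} solve.

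First I would derive a one-step inequality. Conditioning on the history and taking expectation over $\xi_{i,k}$, I split $\E\|\xx_{i,k+1}-\xx^\star\|^2$ into the contribution of the mean update plus an additive variance term bounded by $\sigma^2/(\eta+\lambda)^2$ via Assumption~\ref{assump:StochsticGradientfi}. For the mean update I use the three-point identity for the minimizer of a strongly convex quadratic together with $\mu$-convexity and $L$-smoothness of $f_i$. The control variate~\eqref{eq:ControlVariateLocal2} cancels the client-specific first-order terms, so that after averaging over $i$ the residual first-order term is governed by $\nabla f$, while the mismatch between $\xx_{i,k}$ and $\Tilde{\xx}_k$ appears precisely through the gradient differences $\nabla h_i(\xx_{i,k})-\nabla h_i(\Tilde{\xx}_k)$. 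These are controlled by the $\delta_A$-AHD condition of Definition~\ref{df:HessianSimilarity}, and the choice $\lambda=\delta_A$ lets the regularizer absorb them. This is the step that produces a $\delta_A$-dependence rather than an $L$-dependence in the contraction.

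Next I would assemble a Lyapunov function combining $\Avg\|\xx_{i,k}-\xx^\star\|^2$, the reference-point term $\frac1p\|\Tilde{\xx}_k-\xx^\star\|^2$, and the suboptimality $f(\Bar{\xx}_k)-f^\star$. Averaging over the Bernoulli variable $\theta_k$ uses the identity $\E_{\theta}\|\Tilde{\xx}_{k+1}-\xx^\star\|^2 = p\|\Bar{\xx}_{k+1}-\xx^\star\|^2 + (1-p)\|\Tilde{\xx}_k-\xx^\star\|^2$, which is what lets a skipped communication still contract in expectation (the \algname{ProxSkip}-type argument). The choice $p=\frac{\lambda+\mu/2}{\eta-\mu/2}$ is engineered so that the coefficients of the two distance terms collapse into a single geometric factor $q=1-\frac{\mu}{2\eta-\mu}$; en route one checks that $p\le 1$, which follows from $\eta>L\ge\delta_A+\mu=\lambda+\mu$, and that $q\in(0,1)$. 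The outcome is a recursion of the form $\E[\Psi_{k+1}] \le q\,\E[\Psi_k] - (\text{weighted gap in }f) + \Theta(\sigma^2/\eta^2)$.

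Finally I would telescope with the weights $w_k=1/q^k$, then apply Jensen's inequality and convexity of $f$ to the definition of $\Bar{\Bar{\xx}}_K$ to extract the suboptimality at the weighted average; the geometric part reproduces the exact-solver bound $\frac{\mu}{2}\|\xx^0-\xx^\star\|^2/((1/q)^K-1)$, while the accumulated noise leaves a floor proportional to the effective stepsize times variance, i.e.\ $\Theta(\sigma^2/\eta)$. Choosing $\eta=\frac{\sigma^2}{\epsilon}+L$ forces that floor below $\epsilon$, after which the geometric term dictates the count: writing $(1/q)^K\ge 1+\frac{\mu\|\xx^0-\xx^\star\|^2}{\epsilon}$, using $\ln(1/q)\ge 1-q=\frac{\mu}{2\eta-\mu}$ and $2\eta-\mu\le 2L+\frac{2\sigma^2}{\epsilon}$ yields the claimed $K\le\lceil(\frac{2L}{\mu}+\frac{2\sigma^2}{\mu\epsilon})\ln(1+\frac{\mu\|\xx^0-\xx^\star\|^2}{\epsilon})\rceil$. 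I expect the main obstacle to be closing the per-step descent so that simultaneously (i) the contraction rate depends on $\delta_A$ through $\lambda$ and not on $L$, and (ii) the stochastic error enters only as $\sigma^2/\eta^2$ per step, which is the scaling that makes $\eta\sim\sigma^2/\epsilon$ convert the noise floor into an $O(\epsilon)$ term; balancing these while keeping $p$ a valid probability is the delicate part, whereas the telescoping and the translation into an iteration count are routine.
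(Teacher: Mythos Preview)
Your proposal is correct and follows essentially the same route as the paper: view the one-step update as the exact minimizer of the linearized surrogate $G_{i,k}(\xx)=f_i(\xx_{i,k})+\lin{\gg_i(\xx_{i,k}),\xx-\xx_{i,k}}+\tfrac{\eta}{2}\|\xx-\xx_{i,k}\|^2$, use the three-point strong-convexity inequality for that minimizer, sandwich $G_{i,k}$ between $f_i$ via $\mu$-convexity and $L$-smoothness (this is where $\eta>L$ is used), absorb the drift via $\delta_A$-AHD with $\lambda=\delta_A$, average over $\theta_k$ in the ProxSkip manner, and then set $p=\frac{\lambda+\mu/2}{\eta-\mu/2}$ so the two distance coefficients coincide and telescope with weights $1/q^k$. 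The only minor bookkeeping difference is that the paper tracks the stochastic error at the function-value level, obtaining a per-step additive term $\tfrac{\sigma^2}{2(\eta-L)}$ directly (rather than $\sigma^2/(\eta+\lambda)^2$ in the squared-distance and converting later); both give the same $\Theta(\sigma^2/\eta)$ floor and hence the same choice $\eta=\tfrac{\sigma^2}{\epsilon}+L$ and the stated bound on $K$.
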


Since $\Bigl( \frac{2L}{\mu} + \frac{2\sigma^2}{\mu \epsilon}\Bigr)
    \ln\Bigl(1 + \frac{\mu||\xx^0 - \xx^\star||^2}{\epsilon} \Bigr) 
    \le 
    \frac{2L ||\xx^0 - \xx^\star||^2}{\epsilon} 
    +
    \frac{2\sigma^2 ||\xx^0 - \xx^\star||^2}{\epsilon^2}$,
the estimate of $K$ is continuous in $\mu$. 
Let us suppose $\sigma = 0$ and $\mu = 0$ for simplicity.
\algname{FedRed-GD} achieves the same computational complexity $K=\cO(\frac{L}{\epsilon})$ as \algname{GD}. However, the 
communication complexity is $pK = \cO(\frac{\delta_A}{\epsilon})$
which is $\frac{L}{\delta_A}$ times faster than \algname{GD}.
The same conclusion remains effective where $\mu > 0$.

\textbf{Discussion.} Corollary~\ref{thm:MainDANE+InExactSolutionConvexLocalSteps} shows 
\algname{DANE+-FGD} achieves a faster local convergence rate than 
\algname{DANE+-GD}. We suspect the fast gradient method can also improve complexity for  \algname{FedRed}. We leave this potential
as a future work.

\section{Experiments}
In this section, we illustrate the main theoretical properties of our studied methods in numerical experiments on both simulated and real datasets. 

\textbf{Synthetic data.}
We consider the minimization problem of the form:
$f(\xx)=\Avg f_i(\xx)$ where 
$f_i(\xx) := \frac{1}{m} \sum_{j=1}^{m} 
\frac{1}{2}(\xx - \bb_{i,j})^T A_{i,j} (\xx - \bb_{i,j}) 
+ \beta \sum_{k=1}^d \frac{[\xx]_k^2}{1 + [\xx]_k^2 }$, $\bb_{i,j} \in \R^d$, $A_{i,j} \in \R^{d \times d}$, and $[\cdot]_k$ is an indexing operation of a vector. We set $\beta = 0$ for convex problems and
$\beta = 400$ for the non-convex case.
We further use $n=5$, $m=10$, and $d=1000$. Note that 
$\delta_A = \sqrt{\Avg ||\Bar{A}_i - \Bar{A}||^2}$ and 
$\delta_B = \max_i\{||\Bar{A}_i - \Bar{A}||\}$ where 
$\Bar{A}_i:= \frac{1}{m} \sum_{j=1}^{m} A_{i,j}$,\ and
$\Bar{A}:= \Avg \Bar{A}_i$. 
We generate $\{A_{i,j}\}$ such
that $\max_{i,j}\{||A_{i,j}||\} = 100$ and 
$\delta_A \approx \delta_B\approx 5$. 
We generate a strongly-convex instance by further controlling
the minimum eigenvalue of $A_{i,j}$ to be $1$, and a general convex instance by setting some of the eigenvalues to small values close
to zero (while ensuring that each $A_{i,j}$ is positive 
semi-definite). For the non-convex instance, we leave each $A_{i,j}$
as an indefinite matrix. By these
constructions, we have that 
$\frac{L}{\delta_A}\approx\frac{L}{\delta_B} \gtrsim 20$.

We compare \algname{DANE+-GD} and \algname{FedRed-GD} against the vanilla 
gradient descent method. We compute approximate solutions for \algname{DANE+-GD} by running local gradient descent until certain stopping criteria are reached. We use the constant probability ($\approx 0.05$) schedule for \algname{FedRed-GD}. 
Lastly, we set the same step size for all three methods. 
In Figure~\ref{fig:quadratics}, we can observe that 
both \algname{DANE+-GD} and \algname{FedRed-GD} require approximately $20$ 
times fewer communication rounds than \algname{GD} to reach the same accuracy. More importantly, the total cost of gradient 
computation for \algname{FedRed-GD} is at the same scale as \algname{GD},
demonstrating the usefulness of local steps and validating the theory.

\textbf{Binary classification on LIBSVM datasets.}
We experiment with the binary classification task on four
real-world LIBSVM datasets~\cite{libsvm}. We use the 
standard regularized logistic loss: 
$f(\xx)=\Avg f_i(\xx)$ with 
$
f_i(\xx) 
:= 
\frac{n}{M}\sum_{j=1}^{m_i} 
\log(1 + \exp(-y_{i,j} \aa_{i,j}^T \xx))
+
\frac{1}{2 M}
||\xx||^2
$
where $(\aa_{i,j},y_{i,j})\in\R^{d+1}$ 
are feature and labels and $M := \sum_{i=1}^n{m_i}$ is the total number of data points in the training dataset.
We use $n=5$ and split the dataset according to the 
Dirichlet distribution. We benchmark against popular 
distributed algorithms including \algname{Scaffold}~\cite{scaffold}, \algname{Scaffnew}~\cite{proxskip}, \algname{Fedprox}~\cite{fedprox}, and \algname{GD}. We use control variate~\eqref{eq:ControlVariateLocal2} for Scaffold.
We perform grid search to find the best hyper-parameters
for each algorithm including the number of local steps and the stepsizes. From Figure~\ref{fig:libsvm-main}, we observe that 
\algname{DANE+-GD} and \algname{FedRed-GD} consistently achieve 
fast convergence due to the implicit similarities of the objective functions among the workers.

\textbf{Deep learning tasks.} We defer the study of our proposed 
algorithms for image classification tasks using ResNet-18~\cite{resnet} to Appendix~\ref{sec:DLexperiments}. More experiments with different neural network structures on various datasets need to be further investigated. 

\textbf{Practical choices of hyper-parameters.}
For \algname{FedRed}, the main theories suggest that $p \sim \frac{\delta_A}{L}$ for convex problems and $p \sim \frac{\delta_B}{L}$ for non-convex instances. The stepsize $\eta$ should be of order $L$. Hence, we can always fix $\lambda = p\eta$ which has the same order as the similarity constant.

\section{Conclusion}
We propose a new federated optimization framework that 
simultaneously  achieves both communication reduction and 
efficient local computations. Interesting future directions may include: theoretical analysis for client sampling and inexact
control variates, extensions to the decentralized settings,
accelerated version of \algname{FedRed}, 
a more comprehensive study of \algname{FedRed} and \algname{DANE+}
for deep learning.

\section*{Acknowledgments}
We acknowledge partial funding from a Google Scholar Research Award. 

\newpage

\bibliographystyle{plainnat}
\bibliography{reference}

\appendix
\numberwithin{equation}{section}
\numberwithin{figure}{section}
\numberwithin{table}{section}

\newpage
\onecolumn
{\Huge\textbf{Appendix}}
\vspace{0.5cm}
\small

\section{Technical Preliminaries}
\label{sec:TechnicalPreliminaries}

\subsection{Basic Definitions}
We use the following definitions throughout the paper.

\begin{definition}[Convexity]
A differentiable function $f:\R^d\to\mathbb{R}$ is $\mu$-convex with $\mu \ge 0$ if\;$\forall\; \xx,\yy\in\R^d$,
\begin{equation}
    f(\yy)
    \ge 
    f(\xx) + \lin{\nabla f(\xx),\yy-\xx} + \frac{\mu}{2}||\xx-\yy||^2 \;.
    \label{df:stconvex}
\end{equation}
\end{definition} 

\begin{definition}[$L$-smooth]
Let function $f:\R^d\to\mathbb{R}$ be differentiable. $f$ is smooth if there exists $L\ge0$ such that $\forall\;\xx,\yy\in\R^d$,
\begin{equation}
    ||\nabla f(\xx)-\nabla f(\yy)||\le L||\xx-\yy||\;.
    \label{df:smooth}
\end{equation}
\end{definition}

\subsection{Useful Lemmas}
\label{sec:UsefulLemmas}
We frequently use the following helpful lemmas for the proofs. 
\begin{lemma}
    Let $\xx,\yy\in \R^d$. For any $\gamma > 0$, we have:
    \begin{equation}
        -\frac{1}{2\gamma} ||\xx||^2 
        - \frac{\gamma}{2}||\yy||^2
        \le
        \lin{\xx, \yy} \le \frac{1}{2\gamma} ||\xx||^2 
        + \frac{\gamma}{2}||\yy||^2 \;,
        \label{eq:BasicInequality1}
    \end{equation}
    \begin{equation}
        ||\xx + \yy||^2 \le (1 + \gamma) ||\xx||^2 
        + \Bigl( 1 + \frac{1}{\gamma} \Bigr) ||\yy||^2\;.
        \label{eq:BasicInequality2}
    \end{equation}
\end{lemma}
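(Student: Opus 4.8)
The plan is to derive both inequalities from the single elementary fact that the squared Euclidean norm is nonnegative, combined with the bilinearity of the inner product. No structural assumptions on $\xx,\yy$ beyond being elements of $\R^d$ are needed, so this is purely an algebraic manipulation.

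For the two-sided estimate \eqref{eq:BasicInequality1}, I would introduce the rescaled vectors $\aa := \frac{1}{\sqrt{\gamma}}\xx$ and $\bb := \sqrt{\gamma}\,\yy$, which is legitimate since $\gamma > 0$. Expanding the nonnegative quantity $\norm{\aa - \bb}^2 \ge 0$ via $\norm{\aa-\bb}^2 = \norm{\aa}^2 - 2\lin{\aa,\bb} + \norm{\bb}^2$ gives $\frac{1}{\gamma}\norm{\xx}^2 - 2\lin{\xx,\yy} + \gamma\norm{\yy}^2 \ge 0$, and rearranging yields the upper bound $\lin{\xx,\yy} \le \frac{1}{2\gamma}\norm{\xx}^2 + \frac{\gamma}{2}\norm{\yy}^2$. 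Symmetrically, expanding $\norm{\aa+\bb}^2 \ge 0$ produces $\frac{1}{\gamma}\norm{\xx}^2 + 2\lin{\xx,\yy} + \gamma\norm{\yy}^2 \ge 0$, which rearranges to the lower bound $\lin{\xx,\yy} \ge -\frac{1}{2\gamma}\norm{\xx}^2 - \frac{\gamma}{2}\norm{\yy}^2$. Together these are exactly \eqref{eq:BasicInequality1}.

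For \eqref{eq:BasicInequality2}, I would expand $\norm{\xx+\yy}^2 = \norm{\xx}^2 + 2\lin{\xx,\yy} + \norm{\yy}^2$ and bound the cross term $2\lin{\xx,\yy}$ using the upper bound just established. The one point requiring attention is the choice of free parameter: to land on the stated constants $(1+\gamma)$ on $\norm{\xx}^2$ and $(1+\tfrac{1}{\gamma})$ on $\norm{\yy}^2$, I must invoke \eqref{eq:BasicInequality1} with the parameter $\tfrac{1}{\gamma}$ rather than $\gamma$, giving $2\lin{\xx,\yy} \le \gamma\norm{\xx}^2 + \tfrac{1}{\gamma}\norm{\yy}^2$. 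Substituting this back produces $\norm{\xx+\yy}^2 \le (1+\gamma)\norm{\xx}^2 + (1+\tfrac{1}{\gamma})\norm{\yy}^2$, as claimed.

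Since every step is a direct expansion or a rearrangement of a nonnegativity statement, there is no genuine obstacle here; the only thing to be careful about is the bookkeeping of the scaling parameter in the final step, so that \eqref{eq:BasicInequality1} is applied with the correct argument ($\tfrac{1}{\gamma}$) to match the constants in \eqref{eq:BasicInequality2}.
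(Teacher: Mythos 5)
Your proof is correct: both parts follow exactly as you argue, expanding $\norm{\tfrac{1}{\sqrt{\gamma}}\xx \mp \sqrt{\gamma}\,\yy}^2 \ge 0$ for \eqref{eq:BasicInequality1} and then invoking it with parameter $\tfrac{1}{\gamma}$ to bound the cross term in \eqref{eq:BasicInequality2}, and your bookkeeping of the parameter is right ($2\lin{\xx,\yy} \le \gamma\norm{\xx}^2 + \tfrac{1}{\gamma}\norm{\yy}^2$ indeed yields the stated constants). The paper states this lemma without any proof, treating it as the standard Young/weighted Cauchy--Schwarz inequality, so your argument is precisely the canonical one it implicitly relies on.
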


\begin{lemma}
    Let $\xx \in \R^d$ and $a > 0$. For any $\yy \in \R^d$, 
    we have:
    \begin{equation}
        \lin{\xx, \yy} + \frac{a}{2}||\yy||^2 \ge -\frac{||\xx||^2}{2a} \;.
        \label{eq:QuadraticLowerBound}
    \end{equation}
\end{lemma}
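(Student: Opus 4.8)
The plan is to recognize this as a standard completing-the-square bound, tight at $\yy = -\xx/a$. The cleanest route simply reuses the lower bound already recorded in \eqref{eq:BasicInequality1}. First I would apply \eqref{eq:BasicInequality1} with the free parameter $\gamma$ set equal to $a$, which gives $\lin{\xx,\yy} \ge -\frac{1}{2a}||\xx||^2 - \frac{a}{2}||\yy||^2$. Adding $\frac{a}{2}||\yy||^2$ to both sides then cancels the $-\frac{a}{2}||\yy||^2$ term and produces exactly \eqref{eq:QuadraticLowerBound}.

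If one prefers a self-contained argument not invoking \eqref{eq:BasicInequality1}, I would instead complete the square directly: rewrite
\[
\frac{a}{2}||\yy||^2 + \lin{\xx,\yy} = \frac{a}{2}\Bigl\|\yy + \tfrac{1}{a}\xx\Bigr\|^2 - \frac{1}{2a}||\xx||^2,
\]
which one verifies by expanding the squared norm (the cross term contributes $\lin{\xx,\yy}$, and the $\tfrac{1}{a^2}||\xx||^2$ term contributes $+\tfrac{1}{2a}||\xx||^2$, matching). Since the squared-norm term on the right is nonnegative for every $\yy$, discarding it yields the claimed lower bound.

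There is no real obstacle here: the statement is a one-line quadratic lower bound, and both routes are routine. The only point worth flagging is sign and coefficient bookkeeping when completing the square — ensuring the leftover constant is $-\tfrac{1}{2a}||\xx||^2$ rather than, say, $-\tfrac{a}{2}||\xx||^2$ — which the choice $\gamma = a$ in the first route sidesteps entirely. I would present the first route as the main proof for brevity and note that equality holds at $\yy = -\tfrac{1}{a}\xx$, confirming the bound is tight.
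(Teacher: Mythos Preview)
Your proposal is correct and essentially matches the paper's approach: the paper's one-line proof invokes the first-order optimality condition for the convex quadratic $\yy \mapsto \lin{\xx,\yy} + \frac{a}{2}\|\yy\|^2$, which is exactly your completing-the-square argument (minimizer $\yy = -\xx/a$, minimum value $-\|\xx\|^2/(2a)$). Your alternative route via \eqref{eq:BasicInequality1} is an equally valid and equally short presentation.
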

\begin{proof}
    The claim follows from the first-order optimality condition. 
\end{proof}

\begin{lemma}
    \label{thm:SquareRootRecurrence}
    Let $\{x_r\}_{r=0}^{+\infty}$ be a non-negative sequence such that 
    $x_{r+1} - x_r \le a_{r+1} \sqrt{x_{r+1}}$ for any $r \ge 0$,
    where $a_i \ge 0$ for any $i \ge 1$. 
    Then for any $R \ge 1$, we have:
    \begin{equation}
        x_R 
        \le 
        \Bigl( \sqrt{x_0} + \sum_{r = 1}^R a_r \Bigr)^2
        \le
        2 x_0 + 2 \Bigl( \sum_{r=1}^R a_r \Bigr)^2 \;.
    \end{equation}
\end{lemma}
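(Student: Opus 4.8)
The plan is to linearize the recurrence by passing to square roots. I would set $y_r := \sqrt{x_r} \ge 0$, so that the hypothesis $x_{r+1}-x_r \le a_{r+1}\sqrt{x_{r+1}}$ rewrites as $y_{r+1}^2 - y_r^2 \le a_{r+1}\, y_{r+1}$. Factoring the left-hand side as $(y_{r+1}-y_r)(y_{r+1}+y_r)$, my intermediate goal is the much cleaner per-step bound $y_{r+1}-y_r \le a_{r+1}$, after which the statement follows by a routine telescoping sum and a single squaring step.

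To establish the per-step bound I would argue by cases on the sign of $y_{r+1}-y_r$. If $y_{r+1}\le y_r$, then $y_{r+1}-y_r \le 0 \le a_{r+1}$ trivially. If instead $y_{r+1} > y_r$, then in particular $y_{r+1}>0$ and $y_{r+1}+y_r \ge y_{r+1}$, so multiplying the positive quantity $y_{r+1}-y_r$ by the larger factor gives $(y_{r+1}-y_r)\,y_{r+1} \le (y_{r+1}-y_r)(y_{r+1}+y_r) \le a_{r+1}\,y_{r+1}$, and dividing through by $y_{r+1}>0$ yields $y_{r+1}-y_r \le a_{r+1}$.

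Telescoping this bound over $r=0,\dots,R-1$ gives $y_R - y_0 \le \sum_{r=1}^R a_r$, that is, $\sqrt{x_R} \le \sqrt{x_0} + \sum_{r=1}^R a_r$. Both sides are non-negative, so squaring preserves the inequality and delivers the first claimed bound. For the second bound I would apply Young's inequality to the cross term: writing $S := \sum_{r=1}^R a_r$ and using inequality~\eqref{eq:BasicInequality1} with $\gamma=1$ (in scalar form, $2ab \le a^2 + b^2$), we get $(\sqrt{x_0}+S)^2 = x_0 + 2\sqrt{x_0}\,S + S^2 \le 2x_0 + 2S^2$.

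The only point requiring care — and hence the main obstacle, such as it is — is the division by $y_{r+1}$, which is justified only after isolating the branch $y_{r+1}>y_r$ that forces $y_{r+1}>0$; the degenerate possibility $x_{r+1}=0$ is safely absorbed into the trivial branch. Beyond that the argument is elementary algebra.
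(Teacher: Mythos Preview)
Your proposal is correct and follows essentially the same approach as the paper: factor $x_{r+1}-x_r=(\sqrt{x_{r+1}}-\sqrt{x_r})(\sqrt{x_{r+1}}+\sqrt{x_r})$, use $\sqrt{x_{r+1}}+\sqrt{x_r}\ge\sqrt{x_{r+1}}$ to deduce $\sqrt{x_{r+1}}-\sqrt{x_r}\le a_{r+1}$, telescope, and square. Your version is slightly more careful than the paper's in making the case split explicit (the paper writes the key inequality without separating the branch $\sqrt{x_{r+1}}\le\sqrt{x_r}$ or the degenerate case $x_{r+1}=0$), but the argument is otherwise identical.
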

\begin{proof}
    Indeed, for any $r \ge 0$, we have:
    \begin{equation}
        x_{r+1} - x_r 
        = 
        (\sqrt{x_{r+1}} - \sqrt{x_{r}}) 
        (\sqrt{x_{r+1}} + \sqrt{x_{r}}) 
        \ge 
        (\sqrt{x_{r+1}} - \sqrt{x_{r}}) \sqrt{x_{r+1}}
        \;.
    \end{equation}
    It follows that:
    \begin{equation}
        \sqrt{x_{r+1}} \le \sqrt{x_r} + a_{r+1} \;.
    \end{equation}
    Summing up from $r=0$ to $R-1$, we get:
    \begin{equation}
        \sqrt{x_R} \le \sqrt{x_0} + \sum_{r=1}^R a_{r} \;.
    \end{equation}
    Taking the square on both sides, we have:
    \begin{equation}
        x_R 
        \le 
        \Bigl( \sqrt{x_0} + \sum_{r = 1}^R a_r \Bigr)^2
        \le
        2 x_0 + 2 \Bigl( \sum_{r=1}^R a_r \Bigr)^2 \;. 
        \qedhere
    \end{equation}
\end{proof}

\begin{lemma}
    \label{thm:StrongConvexityRecurrence}
    Let $q > 0$ and
    let $(F_k)_{k=1}^{+\infty}$, $(D_k)_{k=1}^{+\infty}$ be two non-negative
    sequences such that: $F_{k+1} + D_{k+1} \le q D_k$ for any $k \ge 0$.
    Then for any $K \ge 1$, it holds that:
    \begin{equation}
        \frac{1}{S_k} 
        \sum_{k=1}^K \frac{F_{k}}{q^k}
        +
        \frac{1 - q}{1 - q^K} D_K
        \le 
        \frac{1 - q}{\frac{1}{q^K} - 1} D_0 \;,
    \end{equation}
    where $S_k := \sum_{k=1}^K \frac{1}{q^k}$.
    \begin{proof}
        Indeed, for any $k \ge 0$, we have:
        \begin{equation}
            \frac{F_{k+1}}{q^{k+1}} + \frac{D_{k+1}}{q^{k+1}} \le \frac{D_k}{q^k} \;.
        \end{equation}
        Summing up from $k = 0$ to $K - 1$, we get:
        \begin{equation}
            \sum_{k=1}^K \frac{F_{k}}{q^k} + \frac{D_K}{q^K} \le D_0 \;.
        \end{equation}
        Dividing both sides by $\sum_{k=1}^{K} \frac{1}{q^k}$ and using the 
        fact that $\sum_{k=1}^{K} \frac{1}{q^k} = \frac{\frac{1}{q^K} - 1}{1 - q}$,
        we obtain:
        \begin{equation}
            \frac{1 - q}{\frac{1}{q^K} - 1} 
            \sum_{k=1}^K \frac{F_{k}}{q^k}
            +
            \frac{1 - q}{1 - q^K} D_K
            \le 
            \frac{1 - q}{\frac{1}{q^K} - 1} D_0 \;.
        \end{equation}
    \end{proof}
\end{lemma}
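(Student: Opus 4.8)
The plan is to exploit the multiplicative structure of the hypothesis $F_{k+1} + D_{k+1} \le q D_k$: such a recurrence becomes telescoping once we rescale by the appropriate power of $q$. First I would divide both sides by $q^{k+1}$, which preserves the inequality since $q > 0$ guarantees $q^{k+1} > 0$, obtaining
\[
    \frac{F_{k+1}}{q^{k+1}} + \frac{D_{k+1}}{q^{k+1}} \le \frac{D_k}{q^k} \;.
\]
The purpose of this rescaling is that the $D$-term now carries the same weight $q^{-k}$ on the right at index $k$ and on the left at index $k+1$, so that summation cancels the interior terms.

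Next I would sum this inequality over $k = 0, 1, \dots, K-1$. The $D$-contributions telescope, leaving only the endpoint $D_K / q^K$ on the left and $D_0$ on the right, while the $F$-contributions accumulate into $\sum_{k=1}^K F_k / q^k$. This produces the compact intermediate bound
\[
    \sum_{k=1}^K \frac{F_k}{q^k} + \frac{D_K}{q^K} \le D_0 \;.
\]
Nonnegativity of all terms (by assumption on the two sequences) ensures there are no cancellation pathologies when passing from the recurrence to this summed form.

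To recover the stated form I would normalize by the geometric sum $S_K := \sum_{k=1}^K q^{-k}$, using its closed form $S_K = \frac{q^{-K} - 1}{1 - q}$. From this one reads off the two coefficients appearing in the claim, namely $\frac{1}{S_K} = \frac{1-q}{q^{-K}-1}$ and $\frac{1}{S_K\, q^K} = \frac{1-q}{1 - q^K}$, the latter because $(q^{-K}-1)q^K = 1 - q^K$. Dividing the intermediate bound through by $S_K$ and substituting these identities then yields exactly the asserted inequality.

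Since the argument is a pure telescoping followed by bookkeeping, I do not expect a genuine obstacle. The only point demanding care is the geometric-sum normalization: one must track sign conventions so that the factors $q^{-K}-1$ and $1-q^K$ carry the correct signs and the division by $S_K$ does not reverse the inequality. This is safe because $S_K$ is a sum of strictly positive terms, hence $S_K > 0$ whether or not $q < 1$ (when $q > 1$ both the numerator and denominator of the closed form change sign). A secondary detail worth checking is that the summation is started at $k = 0$, invoking the hypothesis at that index, so that $D_0$ rather than $D_1$ appears on the right-hand side.
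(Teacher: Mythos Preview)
Your proposal is correct and follows exactly the same approach as the paper: divide the recurrence by $q^{k+1}$, telescope from $k=0$ to $K-1$, and then normalize by $S_K = \sum_{k=1}^K q^{-k}$ using its closed form. The additional remarks you make about the sign of $S_K$ and the starting index are sound but not needed beyond what the paper does.
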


\begin{lemma}
    Let $\{x_i\}_{i=1}^n$ be a set of vectors in $\R^d$ and let 
    $\vv \in \R^d$ be an arbitrary vector. It holds that:
    \begin{equation}
        \Avg ||\xx_{i} - \vv||^2 \
        =
        ||\xx - \vv||^2 + \Avg ||\xx_{i} - \xx||^2 ,\quad
        \text{with} \quad
        \xx := \Avg \xx_i \;.
        \label{eq:AverageOfSquaredDifference}
    \end{equation}
\end{lemma}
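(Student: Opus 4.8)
The plan is to prove this identity by decomposing each vector $\xx_i - \vv$ around the mean $\xx = \Avg \xx_i$ and then exploiting the defining property of the mean, namely that the deviations $\xx_i - \xx$ average to the zero vector. This is the standard ``bias--variance''-type decomposition, so no heavy machinery is needed; the whole content is one expansion followed by a cancellation.

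First I would fix an index $i$ and insert $\xx$ via $\xx_i - \vv = (\xx_i - \xx) + (\xx - \vv)$, then expand the squared norm using the elementary identity $||\aa + \bb||^2 = ||\aa||^2 + 2\lin{\aa, \bb} + ||\bb||^2$. This yields
\[
||\xx_i - \vv||^2 = ||\xx_i - \xx||^2 + 2\lin{\xx_i - \xx,\, \xx - \vv} + ||\xx - \vv||^2 \;.
\]
Next I would average this over $i = 1, \dots, n$. The first term on the right becomes $\Avg ||\xx_i - \xx||^2$, which is exactly the variance-like term we want, and the last term $||\xx - \vv||^2$ does not depend on $i$, so averaging leaves it unchanged.

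The one substantive step is handling the cross term. Averaging it produces $2\lin{\Avg(\xx_i - \xx),\, \xx - \vv}$, and by linearity of the average together with the hypothesis $\xx := \Avg \xx_i$ we get $\Avg(\xx_i - \xx) = \xx - \xx = \0$, so the entire cross term vanishes and the claimed equality follows. I do not expect any real obstacle here; the only point that must be respected is that $\xx$ is the \emph{arithmetic mean} of the $\xx_i$ rather than an arbitrary reference vector, since that is precisely what forces the cross term to cancel. (For a general reference point the identity would instead carry a nonzero linear correction term.)
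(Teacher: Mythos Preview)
Your proposal is correct and follows exactly the same approach as the paper: insert and subtract the mean $\xx$, expand the squared norm, and use $\Avg(\xx_i - \xx) = \0$ to kill the cross term. There is nothing to add.
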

\begin{proof}
    Indeed,
    \begin{align}
        \Avg ||\xx_{i} - \vv||^2
        &=
        \Avg ||\xx_{i} - \xx + \xx - \vv||^2
        \\
        &= 
        \Avg \Bigl[ ||\xx_{i} - \xx||^2 + 
        2\lin{\xx_{i} - \xx, \xx - \vv} + ||\xx - \vv||^2 \Bigr]
        \\
        &=
        ||\xx - \vv||^2 + \Avg ||\xx_{i} - \xx||^2 \;.
        \qedhere
    \end{align}
\end{proof}

\begin{lemma}[\citet{nesterov-book}, Lemma 1.2.3]
    Smoothness~\eqref{df:smooth} implies that there exists a quadratic upper bound on f:
\begin{equation}
    f(\yy) \le f(\xx)+\lin{\nabla f(\xx),\yy-\xx}|+\frac{L}{2}||\yy-\xx||^2, \; \forall\xx,\yy\in\R^d \;. \label{eq:SmoothUpperBound}
\end{equation}
\end{lemma}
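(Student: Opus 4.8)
The plan is to reduce the multivariate inequality to a one-dimensional integral along the segment joining $\xx$ and $\yy$, and then to control the resulting integrand using the Lipschitz bound~\eqref{df:smooth}. Concretely, I would introduce the auxiliary function $\phi(t) := f(\xx + t(\yy - \xx))$ for $t \in [0,1]$, which is differentiable with $\phi'(t) = \lin{\nabla f(\xx + t(\yy - \xx)), \yy - \xx}$ by the chain rule.

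First I would apply the fundamental theorem of calculus to $\phi$ and subtract the linear term $\lin{\nabla f(\xx), \yy - \xx} = \int_0^1 \lin{\nabla f(\xx), \yy - \xx}\, dt$ from both sides, obtaining
\[
f(\yy) - f(\xx) - \lin{\nabla f(\xx), \yy - \xx} = \int_0^1 \lin{\nabla f(\xx + t(\yy - \xx)) - \nabla f(\xx), \yy - \xx} \, dt \;.
\]
Next I would bound the integrand by Cauchy--Schwarz followed by the smoothness hypothesis: for each $t \in [0,1]$,
\[
\lin{\nabla f(\xx + t(\yy - \xx)) - \nabla f(\xx), \yy - \xx} \le ||\nabla f(\xx + t(\yy - \xx)) - \nabla f(\xx)|| \cdot ||\yy - \xx|| \le L t \, ||\yy - \xx||^2 \;,
\]
where the last step uses~\eqref{df:smooth} applied to the pair $\xx + t(\yy - \xx)$ and $\xx$, whose difference has norm $t\,||\yy - \xx||$. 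Integrating this pointwise estimate over $t \in [0,1]$ and using $\int_0^1 t\, dt = \tfrac{1}{2}$ gives the target $\frac{L}{2}||\yy - \xx||^2$; rearranging then recovers~\eqref{eq:SmoothUpperBound}.

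This argument is entirely routine, so I do not expect a genuine obstacle. The only point requiring minor care is justifying the integral representation, i.e.\ that $\phi$ is continuously differentiable so that the fundamental theorem of calculus applies; this is immediate since $f$ is differentiable with a Lipschitz (hence continuous) gradient. Everything else is a direct application of Cauchy--Schwarz, the Lipschitz hypothesis, and an elementary one-dimensional integral.
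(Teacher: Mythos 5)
Your proof is correct and is essentially the same argument as the one the paper points to (Nesterov's Lemma 1.2.3): parametrize the segment, apply the fundamental theorem of calculus to $\phi(t) = f(\xx + t(\yy - \xx))$, bound the integrand via Cauchy--Schwarz and the Lipschitz gradient condition, and integrate $Lt\,\norm{\yy-\xx}^2$ over $[0,1]$. No gaps; the care you note about continuous differentiability of $\phi$ is exactly the right (and only) technical point.
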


\begin{lemma}[\citet{nesterov-book}, Theorem 2.1.12]
    Suppose a function $f$ is $L$-smooth and $\mu$-convex, then it holds that:
\begin{equation}
    \lin{\nabla f(\xx) - \nabla f(\yy), \xx - \yy}
    \ge 
    \frac{\mu L}{\mu + L}||\xx -\yy||^2 
    + 
    \frac{1}{\mu + L} ||\nabla f(\xx) - \nabla f(\yy)||^2,
    \qquad
    \forall \xx, \yy \in \R^d.
\label{eq:ConvexSmoothInnerProductLowerBound}
\end{equation}
\end{lemma}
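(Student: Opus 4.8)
The plan is to reduce the claim to the purely convex case via a quadratic shift, and then prove the convex case (``co-coercivity of the gradient'') from scratch using only the smoothness upper bound~\eqref{eq:SmoothUpperBound} and the definition of convexity~\eqref{df:stconvex}. First I would introduce the auxiliary function $\phi(\xx) := f(\xx) - \frac{\mu}{2}\norm{\xx}^2$, whose gradient is $\nabla\phi(\xx) = \nabla f(\xx) - \mu\xx$. By $\mu$-convexity~\eqref{df:stconvex}, $\phi$ is convex. Moreover, subtracting $\frac{\mu}{2}\norm{\yy}^2$ from both sides of~\eqref{eq:SmoothUpperBound} applied to $f$ and regrouping the quadratic terms shows that $\phi$ obeys the same quadratic upper bound but with the reduced constant $\beta := L - \mu$, namely $\phi(\yy) \le \phi(\xx) + \lin{\nabla\phi(\xx), \yy - \xx} + \frac{\beta}{2}\norm{\yy-\xx}^2$. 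This reduction is the conceptual heart of the argument: the strong-convexity modulus is absorbed into the shift, and the effective smoothness constant drops from $L$ to $L-\mu$.

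Next I would establish the Bregman form of co-coercivity for $\phi$. Fixing $\xx$, consider $g(\zz) := \phi(\zz) - \lin{\nabla\phi(\xx), \zz}$, which is convex with $\nabla g(\xx) = \0$, so that $\xx$ is a global minimizer, and which inherits the $\beta$-quadratic upper bound. Minimizing the right-hand side of that upper bound over $\zz$ (the minimizer being $\zz = \yy - \frac{1}{\beta}\nabla g(\yy)$) gives $g(\xx) \le g(\yy) - \frac{1}{2\beta}\norm{\nabla g(\yy)}^2$. Rewriting this in terms of $\phi$ yields the one-sided bound $\phi(\yy) - \phi(\xx) - \lin{\nabla\phi(\xx), \yy - \xx} \ge \frac{1}{2\beta}\norm{\nabla\phi(\yy) - \nabla\phi(\xx)}^2$. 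Adding this inequality to its copy with $\xx$ and $\yy$ interchanged cancels the function values and produces $\lin{\nabla\phi(\xx) - \nabla\phi(\yy), \xx - \yy} \ge \frac{1}{\beta}\norm{\nabla\phi(\xx) - \nabla\phi(\yy)}^2$.

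Finally I would substitute $\nabla\phi = \nabla f - \mu\,\mathrm{id}$ back in. Writing $a := \lin{\nabla f(\xx) - \nabla f(\yy), \xx - \yy}$, $s := \norm{\xx - \yy}^2$ and $G := \norm{\nabla f(\xx) - \nabla f(\yy)}^2$, the left side becomes $a - \mu s$ while the squared-gradient term expands to $G - 2\mu a + \mu^2 s$. Clearing the denominator $\beta = L - \mu$ and collecting terms reduces the co-coercivity estimate to $(L + \mu)\,a \ge L\mu\, s + G$, which is exactly~\eqref{eq:ConvexSmoothInnerProductLowerBound} after dividing by $L + \mu$.

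The two endpoints deserve separate treatment. When $\mu = 0$ there is no shift and the argument is plain co-coercivity of $f$ with constant $L$, matching the claim since $\frac{\mu L}{\mu + L} = 0$. When $\mu = L$ the constant $\beta$ vanishes and the co-coercivity step is unavailable; here I would instead observe that the hypotheses squeeze the gradient map to be exactly linear, i.e. $\lin{\nabla f(\xx) - \nabla f(\yy), \xx - \yy} \le \norm{\nabla f(\xx)-\nabla f(\yy)}\,\norm{\xx-\yy} \le L\norm{\xx-\yy}^2$ together with the lower bound $\ge \mu\norm{\xx-\yy}^2 = L\norm{\xx-\yy}^2$ forces equality throughout, so $\nabla f(\xx) - \nabla f(\yy) = \mu(\xx - \yy)$, and direct substitution verifies~\eqref{eq:ConvexSmoothInnerProductLowerBound} with equality. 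The main obstacle is not any single step, each of which is routine, but rather the bookkeeping in the final algebraic rearrangement and the care needed to verify the transfer of the quadratic upper bound from $f$ to $\phi$ with the sharp constant $L - \mu$; getting that constant right is precisely what makes the resulting inequality tight.
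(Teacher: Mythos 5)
Your proof is correct: the reduction $\phi := f - \frac{\mu}{2}\norm{\cdot}^2$ with effective smoothness constant $L-\mu$, the co-coercivity bound obtained by minimizing the quadratic upper bound of $g(\zz) := \phi(\zz) - \lin{\nabla\phi(\xx),\zz}$, and the final algebra giving $(L+\mu)a \ge \mu L s + G$ all check out, as do your separate treatments of the degenerate endpoints $\mu = 0$ and $\mu = L$. The paper itself offers no proof of this lemma --- it simply cites \citet{nesterov-book}, Theorem 2.1.12 --- and your argument is essentially the standard one from that reference, so there is nothing to compare beyond noting the match.
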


\begin{lemma}[Hessian Dissimilarity]
\label{thm:SecondOrderToFirstOrderAHD}
Let $h_i : \R^d \to \R$ be twice continuously differentiable for any $i\in[n]$. Suppose for any $\xx \in \R^d$ and any $p \ge 1$, 
it holds that:
\begin{equation}
    \frac{1}{n} \sum_{i=1}^n
    ||\nabla^2 h_i(\xx)||^p \le \delta^p\;. 
    \label{eq:HessianSimilarityOld}
\end{equation}
Then for any $\xx, \yy \in \R^d$, we have:
\begin{equation}
    \frac{1}{n}\sum_{i=1}^n 
    ||\nabla h_i(\xx) - \nabla h_i(\yy)||^p 
    \le 
    \delta^p ||\xx - \yy||^p\;.
    \label{eq:LemmaOfHessianDissimilarity}
\end{equation}
\end{lemma}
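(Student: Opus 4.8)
The plan is to reduce the first-order dissimilarity~\eqref{eq:LemmaOfHessianDissimilarity} to the second-order bound~\eqref{eq:HessianSimilarityOld} via an integral representation of the gradient difference along the segment joining the two points. First I would fix $\xx, \yy \in \R^d$ and, for each $i \in [n]$, invoke the fundamental theorem of calculus (legitimate since each $h_i$ is twice continuously differentiable, so the integrand is continuous) to write
$$\nabla h_i(\xx) - \nabla h_i(\yy) = \int_0^1 \nabla^2 h_i(\yy + t(\xx-\yy))\,(\xx-\yy)\,dt.$$
Taking norms, using the triangle inequality for integrals, and applying operator-norm submultiplicativity $\|\nabla^2 h_i(\cdot)(\xx-\yy)\| \le \|\nabla^2 h_i(\cdot)\|\,\|\xx-\yy\|$ (the spectral norm of the symmetric Hessian is exactly the Euclidean operator norm) yields
$$\|\nabla h_i(\xx) - \nabla h_i(\yy)\| \le \|\xx - \yy\| \int_0^1 \|\nabla^2 h_i(\yy + t(\xx-\yy))\|\,dt.$$

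Next I would raise this to the $p$-th power and apply Jensen's inequality to the convex map $s \mapsto s^p$ (valid since $p \ge 1$), treating $dt$ as a probability measure on $[0,1]$, to pull the exponent inside:
$$\|\nabla h_i(\xx) - \nabla h_i(\yy)\|^p \le \|\xx - \yy\|^p \int_0^1 \|\nabla^2 h_i(\yy + t(\xx-\yy))\|^p\,dt.$$
The essential point is to carry out this convexity step \emph{for each individual $i$}, before any averaging over the clients, so that the $p$-th power ends up inside the integral termwise.

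Finally I would average over $i \in [n]$ and interchange the finite sum with the integral (immediate by linearity), obtaining
$$\frac{1}{n}\sum_{i=1}^n \|\nabla h_i(\xx) - \nabla h_i(\yy)\|^p \le \|\xx - \yy\|^p \int_0^1 \Bigl( \frac{1}{n}\sum_{i=1}^n \|\nabla^2 h_i(\yy + t(\xx-\yy))\|^p \Bigr) dt.$$
Invoking the hypothesis~\eqref{eq:HessianSimilarityOld} at the point $\yy + t(\xx-\yy)$ bounds the inner average by $\delta^p$ uniformly in $t$, and integrating over $[0,1]$ gives $\delta^p \|\xx-\yy\|^p$, which is the claim. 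The only real obstacle is the bookkeeping around the order of operations: Jensen's inequality must be applied per-client before averaging, and the second-order assumption must be used pointwise along the entire segment $[\yy,\xx]$ rather than only at its endpoints; once these are arranged correctly, the remainder is a routine chain of inequalities.
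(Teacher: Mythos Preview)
Your proposal is correct and follows essentially the same approach as the paper: the integral representation of $\nabla h_i(\xx)-\nabla h_i(\yy)$ along the segment, the operator-norm bound, Jensen's inequality for $s\mapsto s^p$ applied per client, and then averaging and invoking the hypothesis pointwise along the segment. The only cosmetic difference is the orientation of the segment (you integrate from $\yy$ to $\xx$, the paper from $\xx$ to $\yy$), which is immaterial.
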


\begin{proof}
    Since $h_i$ is twice continuously differentiable, by Taylor’s Theorem, we have:
    \begin{equation}
        \nabla h_i(\yy) 
        = \nabla h_i(\xx) 
            + 
            \int_0^1 \nabla^2 h_i(\xx + t (\yy - \xx)) [\yy - \xx] dt \;. 
    \end{equation}
    It follows that:
    \begin{align}
        ||\nabla h_i(\yy) - \nabla h_i(\xx)||^p
        &= 
        \Bigl|\Bigl| \int_0^1 \nabla^2 h_i(\xx + t (\yy - \xx)) [\yy - \xx] dt \Bigl|\Bigl|^p
        \\
        &\le
        \biggl[
        \int_0^1 ||\nabla^2 h_i(\xx + t (\yy - \xx))|| ||\yy - \xx|| dt 
        \biggr]^p
        \\
        &=
        ||\yy - \xx||^p 
        \biggl[
        \int_0^1 ||\nabla^2 h_i(\xx + t (\yy - \xx))|| dt 
        \biggr]^p 
        \\
        &\le 
        ||\yy - \xx||^p\int_0^1 
        ||\nabla^2 h_i(\xx + t (\yy - \xx))||^p dt \;.
    \end{align}
    where in the last inequality, we use the Jensen's inequality
    and the fact that $\tau \mapsto \tau^p$ is a convex function for $p \geq 1$.
    
    By our assumption that $\frac{1}{n}\sum_{i=1}^n || \nabla^2 h_i (\zz) ||^p \le \delta^p$ for any $\zz \in \R^d$, we get:
    \begin{align}
        \Avg ||\nabla h_i(\yy) - \nabla h_i(\xx)||^p
        &\le 
        ||\yy - \xx||^p \int_0^1 
        \Avg ||\nabla^2 h_i(\xx + t (\yy - \xx))||^p dt  
        \\
        &\le
        \delta^p ||\yy - \xx||^p\;. 
        \qedhere
    \end{align}
\end{proof}

\begin{remark}
\label{rm:HessianDissimilarity}
According to Lemma~\ref{thm:SecondOrderToFirstOrderAHD}, 
if each $f_i$ is twice continuously differentiable and satisfies:
$\Avg ||\nabla^2 f_i(\xx) - \nabla f^2 (\xx)||^2 \le \delta_A ^2$,
for any $\xx \in \R^d$, then $\{f_i\}$ have $\delta_A$-AHD. 
However the reverse does not hold. 
Let $n = 3$ and 
$h_i := f_i - f := \frac{1}{2} \xx^T \mA_i \xx$, where $\xx \in \R^2$
and
\begin{equation}
    \mA_1 = \begin{pmatrix} 3 & 0 \\ 0 & 2 \end{pmatrix} \quad
    \mA_2 = \begin{pmatrix} -1 & 0 \\ 0 & 1 \end{pmatrix} \quad 
    \mA_3 = \begin{pmatrix} -2 & 0 \\ 0 & -3 \end{pmatrix} \quad \;.
\end{equation}
It holds that:
\begin{equation}
    \Avg ||\nabla h_i (\xx) - \nabla h_i (\yy)||^2 
    = \frac{9+1+4}{3}||\xx - \yy||^2 = \frac{14}{3} ||\xx - \yy||^2\;,
    \forall \xx, \yy \in \R^2 \;.
\end{equation}
On the other hand,
\begin{equation}
    \Avg ||\nabla^2 h_i (\xx)||^2 
    = \frac{9+1+9}{3} = \frac{19}{3}, \quad
    \forall \xx \in \R^2 \;.
    \qedhere
\end{equation}
\end{remark}

\section{Proofs of main results}
\label{sec:ProofsOfMainResults}

\subsection{Convex results and proofs for Algorithm~\ref{Alg:FrameworkDeterministic} with control variate~\eqref{eq:ControlVariate2}}

\begin{lemma}
    Consider Algorithm~\ref{Alg:FrameworkDeterministic}. Let $f_i : \R^d \to \R$ be $\mu$-convex with $\mu \ge 0$ for any $i \in [n]$. Then for any
    $r \ge 0$, and any $\xx \in \R^d$, we have: 
    \begin{align}
        f_i(\xx) 
        - \lin{\xx, \hh_{i,r}}
        + \frac{\lambda}{2} ||\xx - \xx^r||^2 
        \ge 
        f_i(\xx_{i,r+1}) 
        &- \lin{\xx_{i,r+1}, \hh_{i,r}}
        + \frac{\lambda}{2} ||\xx_{i,r+1} - \xx^r||^2
        \\ 
        &+
        \lin{\nabla F_{i,r}(\xx_{i,r+1}), \xx - \xx_{i,r+1}}
        + 
        \frac{\mu + \lambda}{2} 
        ||\xx_{i,r+1} - \xx||^2 \; .
    \end{align}
    \label{thm:OneStepRecurrenceOfConvexFramework}
\end{lemma}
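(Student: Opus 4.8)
The plan is to observe that the left-hand side of the claimed inequality is exactly $F_{i,r}(\xx)$ and that the first three terms on the right-hand side are exactly $F_{i,r}(\xx_{i,r+1})$, so the entire statement is nothing more than the $(\mu+\lambda)$-strong-convexity lower bound for $F_{i,r}$ applied at the point $\xx_{i,r+1}$. The first step is therefore to verify that $F_{i,r}$ is $(\mu+\lambda)$-strongly convex. I would do this by decomposing $F_{i,r}$ into its three additive summands: the loss $f_i$, which is $\mu$-convex by hypothesis and hence satisfies~\eqref{df:stconvex} with parameter $\mu$; the term $-\lin{\xx,\hh_{i,r}}$, which is affine and so satisfies the convexity inequality with parameter $0$; and the regularizer $\frac{\lambda}{2}\norm{\xx-\xx^r}^2$, which is a pure quadratic and satisfies the convexity inequality as an \emph{equality} with parameter $\lambda$ (a one-line computation completing the square confirms this). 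Summing the three inequalities and using linearity of the gradient yields, for all $\ww,\zz\in\R^d$,
\[
F_{i,r}(\ww) \ge F_{i,r}(\zz) + \lin{\nabla F_{i,r}(\zz), \ww-\zz} + \frac{\mu+\lambda}{2}\norm{\ww-\zz}^2 \;.
\]

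The second step is to instantiate this bound at base point $\zz=\xx_{i,r+1}$ and evaluation point $\ww=\xx$ (arbitrary), and then substitute the definition $F_{i,r}(\cdot) = f_i(\cdot)-\lin{\cdot,\hh_{i,r}}+\frac{\lambda}{2}\norm{\cdot-\xx^r}^2$ on both sides to recover the stated inequality verbatim. A point worth emphasizing in the write-up is that this argument does \emph{not} require $\xx_{i,r+1}$ to be an exact or even approximate minimizer of $F_{i,r}$; the gradient term $\lin{\nabla F_{i,r}(\xx_{i,r+1}),\xx-\xx_{i,r+1}}$ is kept precisely because no first-order optimality condition is imposed, which is exactly what makes this lemma compatible with the inexact local solvers used throughout \algname{DANE+}.

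I do not expect a genuine obstacle here, as the proof is a direct application of strong convexity. The only detail deserving care is confirming that the strong-convexity modulus of $F_{i,r}$ is exactly $\mu+\lambda$: the quadratic regularizer contributes its full $\lambda$ (being a pure quadratic, it attains equality in the convexity bound), $f_i$ contributes its $\mu$, and the affine drift-correction term contributes nothing. This additivity of strong-convexity parameters under summation is the single structural fact the entire argument rests on.
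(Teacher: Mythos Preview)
Your proposal is correct and matches the paper's proof essentially verbatim: the paper simply notes that $F_{i,r}$ is $(\mu+\lambda)$-convex, invokes the strong-convexity inequality~\eqref{df:stconvex} at the base point $\xx_{i,r+1}$, and then expands the definition of $F_{i,r}$ on both sides. Your additional decomposition into the three summands to justify the modulus $\mu+\lambda$ is a welcome elaboration that the paper leaves implicit.
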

\begin{proof}
    Let
    $
    F_{i,r} (\xx)
    := 
    f_i(\xx) - \lin{\xx, \hh_{i,r}} 
    + \frac{\lambda}{2} ||\xx - \xx^r||^2
    $. Since $F_{i,r}$ is $(\mu + \lambda)$-convex, we have:
    \begin{align}
        F_{i,r}(\xx)
        &\stackrel{\eqref{df:stconvex}}{\ge} 
        F_{i,r}(\xx_{i,r+1}) 
        + 
        \lin{\nabla F_{i,r}(\xx_{i,r+1}), \xx - \xx_{i,r+1}}
        +
        \frac{\mu+\lambda}{2} ||\xx_{i,r+1} - \xx||^2 \;.
    \end{align} 
    Plugging in the definition of $F_{i,r}$, we get the claim.
\end{proof}

\begin{lemma}
    Consider Algorithm~\ref{Alg:FrameworkDeterministic} with control variate~\eqref{eq:ControlVariate2} and 
    the standard averaging.
    Let $f_i : \R^d \to \R$ be continuously differentiable and $\mu$-convex with $\mu \ge 0$ for any $i \in [n]$. Assume that $\{f_i\}$ have $\delta_A$-AHD. Let $\lambda \ge \frac{\delta_A}{1-c}$ for some $c \in [0,1)$.
    Then for any $\xx \in \R^d$ and any $r \ge 0$, we have:
    \begin{equation}
    \begin{split}
        f(\xx^{r+1}) - f(\xx)
        +
        \frac{\mu + \lambda}{2} \Avg ||\xx_{i,r+1} - \xx||^2
        &+
        \frac{c\lambda}{2}\Avg ||\xx_{i,r+1} - \xx^r||^2
        \\
        &\le 
        \frac{\lambda}{2} ||\xx^r - \xx||^2 
        +
        \Avg \lin{\nabla F_{i,r}(\xx_{i,r+1}), \xx_{i,r+1} - \xx}
        \;.
    \end{split}
    \end{equation}
\label{thm:BasicRecurrenceControlVariate2}
\end{lemma}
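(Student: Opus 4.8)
The plan is to derive this one-step recurrence from the per-client inequality of Lemma~\ref{thm:OneStepRecurrenceOfConvexFramework}, average it over $i$, and then reduce the whole statement to a single ``key claim'' that I settle using the control-variate identity together with $\delta_A$-AHD. First I would rearrange Lemma~\ref{thm:OneStepRecurrenceOfConvexFramework} so that the $\xx_{i,r+1}$-terms sit on the left, expand $F_{i,r}$, and average over $i$. The control variate~\eqref{eq:ControlVariate2} gives $\Avg \hh_{i,r} = \Avg(\nabla f_i(\xx^r) - \nabla f(\xx^r)) = \0$, so every term $\langle \xx, \hh_{i,r}\rangle$ attached to the fixed comparison point $\xx$ averages to zero, and $\Avg f_i(\xx) = f(\xx)$. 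After collecting terms, the claimed inequality follows as soon as I can prove
\[
f(\xx^{r+1}) \le \Avg f_i(\xx_{i,r+1}) - \Avg\langle \xx_{i,r+1}, \hh_{i,r}\rangle + \tfrac{(1-c)\lambda}{2}\Avg\|\xx_{i,r+1}-\xx^r\|^2 .
\]

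The heart of the argument is this key claim, and the main obstacle is that the tempting route — bound $f(\xx^{r+1}) \le \Avg f(\xx_{i,r+1})$ by Jensen and then control each client's remainder separately — does not work: $\delta_A$-AHD only bounds the \emph{averaged} gradient deviation at a \emph{common} pair of points, whereas a per-client Taylor remainder produces terms like $\Avg \delta_i \|\xx_{i,r+1}-\xx^r\|^2$ that are strictly larger than $\delta_A\Avg\|\cdot\|^2$ in general. The fix is to center everything at the common averaged iterate $\xx^{r+1} = \Avg \xx_{i,r+1}$ and exploit the \emph{exact} identity $\Avg f_i(\xx^{r+1}) = f(\xx^{r+1})$. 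Writing $\vv_i := \xx_{i,r+1}-\xx^{r+1}$ so that $\Avg \vv_i = \0$, the key claim becomes equivalent to
\[
\Avg\langle \vv_i, \hh_{i,r}\rangle - \Avg\bigl[f_i(\xx_{i,r+1}) - f_i(\xx^{r+1})\bigr] \le \tfrac{(1-c)\lambda}{2}\Avg\|\xx_{i,r+1}-\xx^r\|^2 .
\]

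To handle the left side I would expand $f_i(\xx_{i,r+1})-f_i(\xx^{r+1}) = \langle \nabla f_i(\xx^{r+1}), \vv_i\rangle + D_{f_i}(\xx_{i,r+1},\xx^{r+1})$, where $D_{f_i}(\xx,\yy):= f_i(\xx)-f_i(\yy)-\langle\nabla f_i(\yy),\xx-\yy\rangle \ge 0$ is the Bregman divergence of the convex $f_i$. Since $\nabla f_i = \nabla f + \nabla h_i$ and $\Avg\vv_i = \0$, the first-order term through $\nabla f(\xx^{r+1})$ vanishes, and because $\hh_{i,r} = \nabla h_i(\xx^r)$ the surviving inner products combine into $\Avg\langle \vv_i, \nabla h_i(\xx^r)-\nabla h_i(\xx^{r+1})\rangle$. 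The nonnegative Bregman terms $-\Avg D_{f_i}(\xx_{i,r+1},\xx^{r+1})$ can simply be dropped (this is exactly where centering at $\xx^{r+1}$ rather than Jensen-bounding early pays off), leaving a gradient difference evaluated at the two \emph{common} points $\xx^r$ and $\xx^{r+1}$ — precisely the form $\delta_A$-AHD controls. I then apply Young's inequality~\eqref{eq:BasicInequality1} with parameter $\gamma = \delta_A$, use $\delta_A$-AHD to get $\Avg\|\nabla h_i(\xx^r)-\nabla h_i(\xx^{r+1})\|^2 \le \delta_A^2\|\xx^{r+1}-\xx^r\|^2$, and recombine via the variance identity~\eqref{eq:AverageOfSquaredDifference}, $\Avg\|\xx_{i,r+1}-\xx^r\|^2 = \|\xx^{r+1}-\xx^r\|^2 + \Avg\|\vv_i\|^2$, which gives $\tfrac{\delta_A}{2}\|\xx^{r+1}-\xx^r\|^2 + \tfrac{\delta_A}{2}\Avg\|\vv_i\|^2 = \tfrac{\delta_A}{2}\Avg\|\xx_{i,r+1}-\xx^r\|^2$. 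The key claim follows because $\tfrac{\delta_A}{2} \le \tfrac{(1-c)\lambda}{2}$ under the hypothesis $\lambda \ge \delta_A/(1-c)$.

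In short, the only genuinely delicate point is arranging the error terms so that the gradient differences inside them involve only the common iterates $\xx^r$ and $\xx^{r+1}$; this dictates centering the per-client deviation at $\xx^{r+1}$, using the averaging identity $\Avg f_i(\xx^{r+1}) = f(\xx^{r+1})$, and discarding the convex Bregman remainders with the correct sign. The degenerate case $\delta_A = 0$ (all $h_i$ affine) is immediate since the gradient-difference term vanishes identically, and the parameter $\gamma = \delta_A$ can be taken as a limit. Everything else is bookkeeping.
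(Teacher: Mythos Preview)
Your proposal is correct and follows essentially the same route as the paper: start from Lemma~\ref{thm:OneStepRecurrenceOfConvexFramework}, use convexity of $f_i$ to replace $f_i(\xx_{i,r+1})$ by $f_i(\xx^{r+1})$ plus a first-order term and a nonnegative Bregman remainder, average, and observe that the surviving inner products become $\Avg\langle \vv_i, \nabla h_i(\xx^r)-\nabla h_i(\xx^{r+1})\rangle$ at the common pair $(\xx^r,\xx^{r+1})$, which is controlled via Young's inequality, $\delta_A$-AHD, and the variance decomposition~\eqref{eq:AverageOfSquaredDifference}. The only cosmetic differences are that the paper applies Young with parameter $(1-c)\lambda$ rather than $\delta_A$ (sidestepping your $\delta_A=0$ edge case) and retains the $\tfrac{\mu}{2}\Avg\|\vv_i\|^2$ term from strong convexity until the very end before dropping it, whereas you discard the Bregman remainder immediately.
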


\begin{proof}
    According to Lemma~\ref{thm:OneStepRecurrenceOfConvexFramework}, 
    we have: 
    \begin{align}
        f_i(\xx) 
        + \lin{\xx, \nabla f(\xx^r) - \nabla f_i(\xx^r)}
        &+ \frac{\lambda}{2} ||\xx - \xx^r||^2 
        \ge 
        f_i(\xx_{i,r+1}) 
        + \lin{\xx_{i,r+1}, \nabla f(\xx^r) - \nabla f_i(\xx^r)}
        \\
        &+ \frac{\lambda}{2} ||\xx_{i,r+1} - \xx^r||^2 
        + 
        \lin{\nabla F_{i,r}(\xx_{i,r+1}), \xx - \xx_{i,r+1}}
        + 
        \frac{\mu + \lambda}{2} 
        ||\xx_{i,r+1} - \xx||^2 \; .
    \end{align}
    By $\mu$-convexity of $f_i$, we further get:
    \begin{align}
        f_i(\xx) 
        + \lin{\xx, \nabla f(\xx^r) - \nabla f_i(\xx^r)}
        &+ \frac{\lambda}{2} ||\xx - \xx^r||^2 
        \stackrel{\eqref{df:stconvex}}{\ge} 
        f_i(\xx^{r+1}) 
        + \lin{\nabla f_i(\xx^{r+1}), \xx_{i,r+1} - \xx^{r+1}}
        \\
        &+
        \frac{\mu}{2} ||\xx_{i,r+1} - \xx^{r+1}||^2
        + \lin{\xx_{i,r+1}, \nabla f(\xx^r) - \nabla f_i(\xx^r)}
        + \frac{\lambda}{2} ||\xx_{i,r+1} - \xx^r||^2 
        \\
        &+
        \lin{\nabla F_{i,r}(\xx_{i,r+1}), \xx - \xx_{i,r+1}}
        + 
        \frac{\mu + \lambda}{2} 
        ||\xx_{i,r+1} - \xx||^2 \; .
    \end{align}
    Taking the average on both sides over $i=1$ to $n$, we get:
    \begin{equation}
    \begin{split}
        f(\xx) &+ \frac{\lambda}{2} ||\xx - \xx^r||^2 
        \ge 
        f(\xx^{r+1}) 
        + 
        \Avg \Big[ 
        \lin{\nabla f_i(\xx^{r+1}), \xx_{i,r+1} - \xx^{r+1}}
        +
        \lin{\xx_{i,r+1}, \nabla f(\xx^r) - \nabla f_i(\xx^r)}
        \Bigr]
        \\
        &+ 
        \frac{\mu}{2} \Avg ||\xx_{i,r+1} - \xx^{r+1}||^2
        + 
        \frac{\lambda}{2}
        \Avg ||\xx_{i,r+1} - \xx^r||^2 
        +
        \frac{\mu + \lambda}{2} 
        \Avg ||\xx_{i,r+1} - \xx||^2 
        \\
        &\qquad+
        \Avg \lin{\nabla F_{i,r}(\xx_{i,r+1}), \xx - \xx_{i,r+1}} \;.
        \label{eq:BasicRecurrenceConvex1}
    \end{split}
    \end{equation}
    Let $h_i = f_i - f$.
    Note that 
    $
    \Avg \lin{\nabla f(\xx^{r+1}), \xx_{i,r+1} - \xx^{r+1}}
    =
    \Avg \lin{\nabla f(\xx^{r}) - \nabla f_i(\xx^r),\xx^{r+1}}
    = 0
    $. It follows that:
    \allowdisplaybreaks{
    \begin{align}
        &\;\quad \Avg \Bigl[ 
        \lin{\nabla f_i(\xx^{r+1}), \xx_{i,r+1} - \xx^{r+1}}
        +
        \lin{\xx_{i,r+1}, \nabla f(\xx^r) - \nabla f_i(\xx^r)}
        \Bigr]
        \\
        &=
        \Avg \lin{\nabla f_i(\xx^{r+1}) - \nabla f(\xx^{r+1})
        - \nabla f_i(\xx^{r}) + \nabla f(\xx^{r}), \xx_{i,r+1} - \xx^{r+1}}
        \\
        &=
        \Avg \lin{\nabla h_i (\xx^{r+1}) - \nabla h_i (\xx^r), 
        \xx_{i,r+1} - \xx^{r+1}}
        \;.
    \end{align}}
    
    We next split $\frac{\lambda}{2}\Avg ||\xx_{i,r+1} - \xx^r||^2$ 
    into 
    $\frac{c\lambda}{2}\Avg ||\xx_{i,r+1} - \xx^r||^2
     + \frac{(1-c) \lambda}{2}\Avg ||\xx_{i,r+1} - \xx^r||^2$ 
    with $0 \le c \le 1$. We 
    combine the second part with 
    $\Avg \lin{\nabla h_i (\xx^{r+1}) - \nabla h_i (\xx^r), 
        \xx_{i,r+1} - \xx^{r+1}}$ to get:
    \begin{align*}
        &\qquad \frac{(1-c)\lambda}{2} \Avg ||\xx_{i,r+1} - \xx^r||^2
        +
        \Avg \lin{\nabla h_i (\xx^{r+1}) - \nabla h_i (\xx^r), 
        \xx_{i,r+1} - \xx^{r+1}} 
        \\
        &\stackrel{\eqref{eq:AverageOfSquaredDifference}}=
        \frac{(1-c)\lambda}{2}||\xx^{r+1} - \xx^r||^2 
        +
        \Avg \biggl[
        \frac{(1-c) \lambda}{2}||\xx_{i,r+1} - \xx^{r+1}||^2 
        +
        \lin{\nabla h_i (\xx^{r+1}) - \nabla h_i (\xx^r), 
        \xx_{i,r+1} - \xx^{r+1}} 
        \biggr]
        \\
        &\stackrel{\eqref{eq:QuadraticLowerBound}}\ge 
        \frac{(1-c)\lambda}{2}||\xx^{r+1} - \xx^r||^2 
        - \frac{1}{2(1-c)\lambda} \Avg ||\nabla h_i (\xx^{r+1}) - \nabla h_i (\xx^r)||^2
        \\
        &\stackrel{\eqref{eq:HessianSimilarity}}\ge 
        \frac{(1-c)\lambda}{2}||\xx^{r+1} - \xx^r||^2 
        -\frac{\delta_A^2}{2(1-c)\lambda}
        ||\xx^{r+1} - \xx^r||^2 \ge 0.
    \end{align*}
    where in the last inequality, we use the assumption that 
    $\lambda \ge \frac{\delta_A}{1-c}$.
    
    Plugging this inequality into~\eqref{eq:BasicRecurrenceConvex1},
    and dropping the non-negative 
    $\frac{\mu}{2}\Avg ||\xx_{i,r+1} - \xx^{r+1}||^2$,
    we get the claim.
\end{proof}

\begin{corollary}
    \label{thm:ConvexFrameworkControlVariate2ExactSolution}
    Consider Algorithm~\ref{Alg:FrameworkDeterministic} with control variate~\eqref{eq:ControlVariate2} and the standard
    averaging. Let $f_i : \R^d \to \R$ be continuously differentiable and $\mu$-convex with $\mu \ge 0$ for any $i \in [n]$. Assume that $\{f_i\}$ have $\delta$-AHD. 
    Let $\lambda \ge \delta_A$ and
    suppose that each local solver provides the exact solution. 
    Then for any $r \ge 0$, we have:
    \begin{equation}
         f(\xx^{r+1}) - f(\xx^r) 
        \le - \frac{\mu + \lambda}{2} ||\xx^{r+1} - \xx^{r}||^2 \;,
    \end{equation}
    and after $R$ communication rounds, it holds that:
    \begin{equation}
        f(\xx^R) - f(\xx^\star) + \frac{\mu}{2} ||\xx^R - \xx^\star||^2 
        \le
        \frac{\mu}{2[(1+\frac{\mu}{\lambda})^R - 1]} ||\xx^0 - \xx^\star||^2
        \le
        \frac{\lambda}{2R}||\xx^0 - \xx^\star||^2.
    \end{equation}
\end{corollary}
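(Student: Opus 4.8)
The plan is to obtain both claims directly from Lemma~\ref{thm:BasicRecurrenceControlVariate2}, specialized to exact local solvers. Since exactness means $\nabla F_{i,r}(\xx_{i,r+1}) = 0$, the inner-product term on the right-hand side of that lemma vanishes. Taking $c = 0$, which is admissible precisely because the hypothesis gives $\lambda \ge \delta_A = \frac{\delta_A}{1-0}$, the lemma collapses, for every $\xx \in \R^d$ and every $r \ge 0$, to the clean one-step estimate $f(\xx^{r+1}) - f(\xx) + \frac{\mu+\lambda}{2}\Avg ||\xx_{i,r+1}-\xx||^2 \le \frac{\lambda}{2}||\xx^r-\xx||^2$. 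Everything else is extracting the two desired consequences from this inequality by suitable choices of $\xx$.

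First I would prove the descent inequality by setting $\xx = \xx^r$, which annihilates the right-hand side, and then lower-bounding the average via \eqref{eq:AverageOfSquaredDifference}: because standard averaging gives $\xx^{r+1} = \Avg \xx_{i,r+1}$, one has $\Avg ||\xx_{i,r+1}-\xx^r||^2 \ge ||\xx^{r+1}-\xx^r||^2$. This yields $f(\xx^{r+1}) - f(\xx^r) \le -\frac{\mu+\lambda}{2}||\xx^{r+1}-\xx^r||^2$, and in particular shows the sequence $\{f(\xx^r)\}$ is non-increasing, a monotonicity fact I will reuse below.

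Next, for the convergence estimate I would set $\xx = \xx^\star$ and again drop the nonnegative variance term through \eqref{eq:AverageOfSquaredDifference}, obtaining the contraction $f(\xx^{r+1}) - f^\star + \frac{\mu+\lambda}{2}||\xx^{r+1}-\xx^\star||^2 \le \frac{\lambda}{2}||\xx^r-\xx^\star||^2$. Abbreviating $\beta := 1 + \frac{\mu}{\lambda}$ and $V_r := \frac{\lambda}{2}||\xx^r-\xx^\star||^2$, this reads $f(\xx^{r+1})-f^\star + \beta V_{r+1} \le V_r$. I would multiply by $\beta^r$ and sum over $r = 0,\dots,R-1$; the weighted distance terms telescope to $V_0 - \beta^R V_R$, giving $\sum_{r=0}^{R-1}\beta^r(f(\xx^{r+1})-f^\star) + \beta^R V_R \le V_0$ (equivalently, one can cast the recurrence with $q = 1/\beta$ into Lemma~\ref{thm:StrongConvexityRecurrence}). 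Using the monotonicity of $f(\xx^r)$ to replace each $f(\xx^{r+1})-f^\star$ by the smaller quantity $f(\xx^R)-f^\star$, and $\sum_{r=0}^{R-1}\beta^r = \frac{\beta^R-1}{\beta-1}$, dividing through by $\frac{\beta^R-1}{\beta-1}$ produces $f(\xx^R)-f^\star + \frac{\mu\beta^R}{2(\beta^R-1)}||\xx^R-\xx^\star||^2 \le \frac{\mu}{2(\beta^R-1)}||\xx^0-\xx^\star||^2$.

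Finally I would tidy up the constants. Since $\frac{\beta^R}{\beta^R-1} \ge 1$ for $R \ge 1$, the coefficient of $||\xx^R-\xx^\star||^2$ can be lowered to $\frac{\mu}{2}$, which is exactly the first displayed bound after substituting $\beta^R = (1+\frac{\mu}{\lambda})^R$; and Bernoulli's inequality $\beta^R \ge 1 + R(\beta-1)$ gives $\beta^R - 1 \ge \frac{\mu R}{\lambda}$, hence $\frac{\mu}{2(\beta^R-1)} \le \frac{\lambda}{2R}$, the second displayed bound. The only mildly delicate steps I anticipate are the two coefficient reconciliations: matching the natural contraction factor $\frac{\mu+\lambda}{2}$ to the target weight $\frac{\mu}{2}$ on the distance term, and verifying that the estimate stays well-behaved as $\mu \to 0$ (where $\beta^R - 1 \to \frac{\mu R}{\lambda}$ recovers $\frac{\lambda}{2R}$). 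Both are handled by the elementary inequalities just named, so the argument should go through without further friction.
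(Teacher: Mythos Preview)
Your proposal is correct and follows essentially the same route as the paper: specialize Lemma~\ref{thm:BasicRecurrenceControlVariate2} to exact solvers (so the inner-product term vanishes), use \eqref{eq:AverageOfSquaredDifference} to pass from $\Avg\|\xx_{i,r+1}-\xx\|^2$ to $\|\xx^{r+1}-\xx\|^2$, then take $\xx = \xx^r$ for the descent claim and $\xx = \xx^\star$ for the contraction, telescoping with weights $\beta^r$ (equivalently, the paper divides by $q^{r+1}$ with $q = 1/\beta$) and invoking the monotonicity of $f(\xx^r)$. Your choice $c = 0$ is the correct one; the paper's proof writes ``$c = 1$'' but this is a typo, since $c \in [0,1)$ in the lemma and the constraint $\lambda \ge \delta_A$ exactly matches $c = 0$.
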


\begin{proof}
    By the assumption that the subproblem is solved exactly, we have 
    $||\nabla F_{i,r}(\xx_{i,r+1})|| = 0$ for any $i \in [n]$ and $r \ge 0$. According to Lemma~\ref{thm:BasicRecurrenceControlVariate2} with $c = 1$,
    we have:
    \begin{align}
        f(\xx^{r+1}) - f(\xx)
        +
        \frac{\mu + \lambda}{2} \Avg ||\xx_{i,r+1} - \xx||^2
        \le 
        \frac{\lambda}{2} ||\xx^r - \xx||^2  \;.
    \end{align}
    with $\lambda \ge \delta_A$.
    It follows that:
    \begin{equation}
        f(\xx^{r+1}) - f(\xx) 
        \stackrel{\eqref{eq:AverageOfSquaredDifference}}\le 
        \frac{\lambda}{2} ||\xx^r - \xx||^2
        -
        \frac{\lambda + \mu}{2} ||\xx^{r+1} - \xx||^2 \;.
    \end{equation}
    Let $\xx = \xx_r$. The function value gap monotonically decreases as:
    \begin{equation}
        f(\xx^{r+1}) - f(\xx^r) 
        \le - \frac{\mu + \lambda}{2} ||\xx^{r+1} - \xx^{r}||^2 \;.
        \label{eq:SufficientDecreaseControlVariate2}
    \end{equation}
     Let $\xx = \xx^\star$. We have:
     \begin{equation}
         \frac{2}{\mu + \lambda} 
         \bigl( f(\xx^{r+1}) - f(\xx^\star) \bigr)
         \le 
         \underbrace{\Bigl( 1 - \frac{\mu}{\mu + \lambda} \Bigr)}_{:=q} ||\xx^r - \xx^\star||^2
         - 
         ||\xx^{r+1} - \xx^\star||^2 \;.
     \end{equation}
     Dividing both sides by $q^{r+1}$, we get:
     \begin{equation}
         \frac{2}{\mu + \lambda} \frac{1}{q^{r+1}}
         \bigl( f(\xx^{r+1}) - f(\xx^\star) \bigr)
         \le 
         \frac{||\xx^r - \xx^\star||^2}{q^r}
         - 
         \frac{||\xx^{r+1} - \xx^\star||^2}{q^{r+1}} \;.
     \end{equation}
     Summing up from $r=0$ to $R-1$, we have:
     \begin{align}
         \biggl[ \frac{2}{\mu + \lambda} 
         \sum_{r=1}^R \frac{1}{q^r} \biggr]
         \bigl( f(\xx^R) - f(\xx^\star) \bigr)
         \stackrel{\eqref{eq:SufficientDecreaseControlVariate2}}{\leq}
         \frac{2}{\mu + \lambda} 
         \sum_{r=1}^R \frac{1}{q^{r}} \bigl( f(\xx^{r}) - f(\xx^\star) \bigr)
         \le 
         ||\xx^0 - \xx^\star||^2 - \frac{||\xx^R - \xx^\star||^2}{q^R} \;.
     \end{align}
     Using the fact that 
     $
     \sum^{R}_{r=1}\frac{1}{q^r} = \frac{\frac{1}{q^R} - 1}{1-q}
     $ 
     and rearranging, we have:
     \begin{align}
        f(\xx^R) - f^\star
        &\le 
        \frac{\mu + \lambda}{2} 
        \frac{\frac{\mu}{\mu+\lambda}}{(1+\frac{\mu}{\lambda})^R -1}
        ||\xx^0 - \xx^\star||^2 
        -
        \frac{\mu+\lambda}{2}\frac{\frac{\mu}{\mu+\lambda}}{1-q^R}
        ||\xx^R - \xx^\star||^2 
        \\
        &\le 
        \frac{\mu}{2[(1 + \frac{\mu}{\lambda})^R - 1]}
        ||\xx^0 - \xx^\star||^2 
        -
        \frac{\mu}{2} ||\xx^R - \xx^\star||^2 \;.
        \qedhere
     \end{align}
\end{proof}

\begin{lemma}
    \label{thm:LemmaConvexFrameworkInExactSolution}
    Consider Algorithm~\ref{Alg:FrameworkDeterministic} with control variate~\eqref{eq:ControlVariate2} and the standard
    averaging. 
    Let $f_i : \R^d \to \R$ be continuously differentiable and $\mu$-convex with $\mu \ge 0$ for any $i \in [n]$.
    Assume that $\{f_i\}$ have $\delta_A$-AHD. 
    After $R$ communication rounds, we have: 
    \begin{equation}
        \frac{2}{\mu+\lambda} \sum_{r=1}^R \frac{f(\xx^r) - f^\star}{q^r}
        + \frac{A_R}{q^R}
        + \frac{\lambda}{2(\mu+\lambda)} \sum_{r=1}^R \frac{B_{r-1}}{q^r}
        \le 
        2||\xx^0 - \xx^\star||^2 + \frac{4}{(\mu + \lambda)^2} 
        \Biggl( \sum_{r=1}^R  
        \sqrt{\frac{C_{r}}{q^{r}}} \Biggr)^2
        \;.
    \end{equation}
    where 
    $A_r := \Avg ||\xx_{i,r} - \xx^\star||^2$, 
    $B_r := \Avg ||\xx_{i,r+1} - \xx^r||^2$, 
    $C_r := \Avg ||\nabla F_{i,r}(\xx_{i,r})||^2$,
    and 
    $q := \frac{\lambda}{\lambda + \mu}$.
\end{lemma}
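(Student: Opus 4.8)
The plan is to convert the one-step estimate of Lemma~\ref{thm:BasicRecurrenceControlVariate2} into a geometrically reweighted potential that telescopes, and then to absorb the inexactness of the local solves through the square-root recurrence of Lemma~\ref{thm:SquareRootRecurrence}.

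First I would instantiate Lemma~\ref{thm:BasicRecurrenceControlVariate2} at $\xx = \xx^\star$ with $c = \frac12$ (so its hypothesis reads $\lambda \ge 2\delta_A$, matching the coefficient $\frac{\lambda}{2(\mu+\lambda)}$ appearing in front of the $B$-terms). This gives, for every $r \ge 0$,
\[
f(\xx^{r+1}) - f^\star + \frac{\mu+\lambda}{2} A_{r+1} + \frac{\lambda}{4} B_r \le \frac{\lambda}{2}||\xx^r - \xx^\star||^2 + \Avg \lin{\nabla F_{i,r}(\xx_{i,r+1}), \xx_{i,r+1} - \xx^\star} \;.
\]
Applying \eqref{eq:AverageOfSquaredDifference} with $\vv = \xx^\star$ shows $||\xx^r - \xx^\star||^2 = A_r - \Avg||\xx_{i,r} - \xx^r||^2 \le A_r$, so I can upper bound the first right-hand term by $\frac{\lambda}{2}A_r$. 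Dividing through by $\frac{\mu+\lambda}{2}$, multiplying by $q^{-(r+1)}$ with $q := \frac{\lambda}{\mu+\lambda} \in (0,1]$, and using $\frac{\lambda/2}{(\mu+\lambda)/2} = q$, the deterministic distance terms line up as $\frac{A_{r+1}}{q^{r+1}}$ and $\frac{A_r}{q^r}$. Summing over $r$ telescopes these and collects exactly the left-hand side $\Phi_R$ of the claim, with $\Phi_0 = A_0 = ||\xx^0 - \xx^\star||^2$.

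The crux is the inner-product error term, which is \emph{self-referential}: it is paired with $\xx_{i,r+1} - \xx^\star$, whose averaged squared norm is $A_{r+1}$, itself buried inside the potential. I would write the telescoped inequality as the one-step recurrence $\Phi_R - \Phi_{R-1} \le \frac{2}{(\mu+\lambda)q^R}\Avg\lin{\nabla F_{i,R-1}(\xx_{i,R}), \xx_{i,R} - \xx^\star}$ and then apply Cauchy--Schwarz twice — once inside each inner product and once across the average — to obtain $\Avg\lin{\nabla F_{i,R-1}(\xx_{i,R}), \xx_{i,R} - \xx^\star} \le \sqrt{C_R}\,\sqrt{A_R}$, where $C_R$ is the averaged squared subproblem-gradient residual at the returned iterates (matching the statement up to the index shift produced by the telescoping). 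The elementary bound $A_R/q^R \le \Phi_R$ then converts this into $\Phi_R - \Phi_{R-1} \le a_R\sqrt{\Phi_R}$ with $a_R \asymp \frac{1}{\mu+\lambda}\sqrt{C_R/q^R}$, which is precisely the hypothesis of Lemma~\ref{thm:SquareRootRecurrence}.

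Finally I would invoke Lemma~\ref{thm:SquareRootRecurrence}, whose conclusion $\Phi_R \le 2\Phi_0 + 2\bigl(\sum_{r=1}^R a_r\bigr)^2$ produces the stated bound $2||\xx^0-\xx^\star||^2 + \frac{4}{(\mu+\lambda)^2}\bigl(\sum_{r=1}^R\sqrt{C_r/q^r}\bigr)^2$. The main obstacle is exactly the coupling noted above: because the error depends on the current iterate distances, a naive telescoping would leave $A_{r+1}$ on both sides and fail to close, and the square-root recurrence is the device that resolves this, at the cost of the harmless factor $2$. The only other care needed is bookkeeping — keeping the $q^{-r}$ weights consistent, using $q \in (0,1]$ so the geometric identity $\sum_{r=1}^R q^{-r} = \frac{q^{-R}-1}{1-q}$ is available, and tracking the index shift so that the residual sequence appears as $C_r$.
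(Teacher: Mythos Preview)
Your proposal is correct and follows essentially the same route as the paper: instantiate Lemma~\ref{thm:BasicRecurrenceControlVariate2} at $\xx^\star$ with $c=\tfrac12$, bound $\|\xx^r-\xx^\star\|^2\le A_r$ via \eqref{eq:AverageOfSquaredDifference}, apply Cauchy--Schwarz to the inner-product error, telescope with weights $q^{-r}$, and close with Lemma~\ref{thm:SquareRootRecurrence}. The only cosmetic difference is that the paper runs the square-root recurrence on the right-hand side accumulator $Q_R := A_0 + \frac{2}{\mu+\lambda}\sum_{r=1}^R q^{-r}\sqrt{C_r A_r}$ (so $Q_R-Q_{R-1}$ is \emph{exactly} the error term by definition, and one uses $A_R/q^R \le Q_R$ from $\Phi_R\le Q_R$), whereas you run it on the left-hand side potential $\Phi_R$ itself; both yield the identical bound $2A_0 + \frac{4}{(\mu+\lambda)^2}\bigl(\sum_r\sqrt{C_r/q^r}\bigr)^2$.
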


\begin{proof}
     According to Lemma~\ref{thm:BasicRecurrenceControlVariate2}
    with $\xx = \xx^\star$, for any $r \ge 0$, we have:
    \allowdisplaybreaks{
    \begin{align}
        f(\xx^{r+1}) - f(\xx^\star)
        &+
        \frac{\mu + \lambda}{2} \Avg ||\xx_{i,r+1} - \xx^\star||^2
        +
        \frac{\lambda}{4}\Avg ||\xx_{i,r+1} - \xx^r||^2
        \\
        &\le 
        \frac{\lambda}{2} ||\xx^r - \xx^\star||^2 
        +
        \Avg ||\nabla F_{i,r}(\xx_{i,r+1})|| ||\xx_{i,r+1} - \xx^\star||
        \\ 
        &\stackrel{\eqref{eq:AverageOfSquaredDifference}}\le 
        \frac{\lambda}{2} \Avg ||\xx_{i,r} - \xx^\star||^2 
        +
        \Avg ||\nabla F_{i,r}(\xx_{i,r+1})|| ||\xx_{i,r+1} - \xx^\star||
        \\
        &\le
        \frac{\lambda}{2} \Avg ||\xx_{i,r} - \xx^\star||^2 
        +
        \sqrt{\Avg ||\nabla F_{i,r}(\xx_{i,r+1})||^2}
        \sqrt{\Avg ||\xx_{i,r+1} - \xx^\star||^2} \;,
    \end{align}}
    where in the last inequality, we use the standard Cauchy-Schwarz
    inequality. We now use the simplified notations  and
    divide both sides by $\frac{\mu + \lambda}{2}$ to get:
    \begin{equation}
        \frac{2}{\mu + \lambda} 
        \bigl( f(\xx^{r+1}) - f^\star\bigr)
        + A_{r+1} + \frac{\lambda}{2(\mu + \lambda)} B_r
        \le 
        \underbrace{(1 - \frac{\mu}{\mu + \lambda})}_{:=q} A_r
        +
        \frac{2}{\mu+\lambda}\sqrt{C_{r+1}}\sqrt{A_{r+1}} \;.
    \end{equation}
    Dividing both sides by $q^{r+1}$ and summing up from $r=0$ to $R-1$, 
    we obtain:
    \begin{equation}
        \frac{2}{\mu+\lambda} \sum_{r=1}^R \frac{f(\xx^r) - f^\star}{q^r}
        + \frac{A_R}{q^R}
        + \frac{\lambda}{2(\mu+\lambda)} \sum_{r=1}^R \frac{B_{r-1}}{q^r}
        \le 
        \underbrace{A_0 + \frac{2}{\mu+\lambda} \sum_{r=1}^R \frac{\sqrt{C_r}\sqrt{A_r}}{q^r}}_{:= Q_R}
        \;,
        \label{eq:DiccoMainRecurrenceConvexInexact}
    \end{equation}
    from which we can deduce that $\frac{A_R}{q^R} \le Q_R$ for any $R \ge 1$.
    
    We next upper bound $Q_R$. Let $Q_0 := A_0$.
    By definition, for any $R \ge 0$, it holds that:
    \begin{align}
        Q_{R+1} - Q_{R} 
        &=
        \frac{2}{\mu + \lambda}
        \frac{\sqrt{C_{R+1}} \sqrt{A_{R+1}}}{q^{R+1}}  
        \\
        &\le
        \frac{2}{\mu + \lambda} 
        \frac{\sqrt{C_{R+1}} \sqrt{Q_{R+1}q^{R+1}}}{q^{R+1}}
        \\
        &=
        \frac{2}{\mu + \lambda} 
        \frac{\sqrt{C_{R+1}} \sqrt{Q_{R+1}}}{\sqrt{q^{R+1}}} \;.
    \end{align}
    We now apply Lemma~\ref{thm:SquareRootRecurrence} 
    with $a_{r+1} = \frac{2}{\mu + \lambda} \sqrt{\frac{C_{r+1}}{q^{r+1}}}$.
    For any $R \ge 1$, we get:
    \begin{align}
        Q_{R} 
        \le 
        2Q_0 + \frac{4}{(\mu + \lambda)^2} 
        \Biggl( \sum_{r=1}^R  
        \sqrt{\frac{C_{r}}{q^{r}}} \Biggr)^2
        \;.
    \end{align}
    Plugging this upper bound into~\eqref{eq:DiccoMainRecurrenceConvexInexact},
    we get the claim.
\end{proof}

\begin{theorem}
    \label{thm:ConvexFrameworkInExactSolutionGeneral}
    Consider Algorithm~\ref{Alg:FrameworkDeterministic} with control variate~\eqref{eq:ControlVariate2} and the standard
    averaging. 
    Let $f_i : \R^d \to \R$ be continuously differentiable and $\mu$-convex with $\mu \ge 0$ for any $i \in [n]$.
    Assume that $\{f_i\}$ have $\delta_A$-AHD. 
    In general, suppose that the solutions returned by local solvers satisfy $\Avg ||\nabla F_{i,r}(\xx_{i,r+1})||^2 \le e_{r+1}^2$ for any
    $r \ge 0$ and $e_{r+1} \ge 0$.
    Let $\lambda \ge 2\delta_A$. 
    After $R$ communication rounds, we have:
    \begin{equation}
         f(\Bar{\xx}^R) - f(\xx^\star) + \frac{\mu}{2} 
         \Avg ||\xx_{i,R} - \xx^\star||^2 
        \le
        \frac{\mu}{(1+\frac{\mu}{\lambda})^R - 1} ||\xx^0 - \xx^\star||^2 
        + \frac{2}{\mu + \lambda} \sum_{r=1}^R e_r^2
        \le
        \frac{\lambda}{R} ||\xx^0 - \xx^\star||^2 
        + \frac{2}{\mu + \lambda} \sum_{r=1}^R e_r^2
        \;.
    \end{equation}
    where $\Bar{\xx}^R := \argmin_{\xx \in \{\xx^r\}_{r=1}^R} f(\xx)$.
\end{theorem}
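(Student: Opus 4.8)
The plan is to build directly on the master recurrence already established in Lemma~\ref{thm:LemmaConvexFrameworkInExactSolution}. With $q := \frac{\lambda}{\lambda+\mu}$ and the notation $A_r := \Avg\|\xx_{i,r}-\xx^\star\|^2$ from that lemma, and writing $C_r$ for the averaged squared subproblem-gradient norm appearing there, it states
\[
\frac{2}{\mu+\lambda}\sum_{r=1}^R \frac{f(\xx^r)-f^\star}{q^r} + \frac{A_R}{q^R} + \frac{\lambda}{2(\mu+\lambda)}\sum_{r=1}^R\frac{B_{r-1}}{q^r} \le 2\|\xx^0-\xx^\star\|^2 + \frac{4}{(\mu+\lambda)^2}\Bigl(\sum_{r=1}^R\sqrt{\tfrac{C_r}{q^r}}\Bigr)^2 .
\]
Since $\lambda \ge 2\delta_A$ is exactly the hypothesis (the choice $c=\tfrac12$) under which that lemma applies, I would first substitute the error bound $C_r \le e_r^2$, which the stated accuracy condition $\Avg\|\nabla F_{i,r}(\xx_{i,r+1})\|^2 \le e_{r+1}^2$ supplies after a shift of index, and discard the non-negative $B$-term on the left.

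Next I would convert the weighted average of function values into the output gap. By definition $\Bar{\xx}^R = \argmin_{\xx\in\{\xx^r\}_{r=1}^R}f(\xx)$, so $\sum_{r=1}^R q^{-r}(f(\xx^r)-f^\star) \ge (f(\Bar{\xx}^R)-f^\star)S$ with $S := \sum_{r=1}^R q^{-r}$, while the term $A_R/q^R$ is kept intact on the left.

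The key step, and the one I expect to do the real work, is to multiply the resulting inequality through by $\frac{\mu+\lambda}{2S}$ and simplify each piece using the closed form $S = \frac{q^{-R}-1}{1-q}$ together with $1-q = \frac{\mu}{\mu+\lambda}$ and $1/q = 1+\mu/\lambda$. The main term becomes $\frac{\mu+\lambda}{S}\|\xx^0-\xx^\star\|^2 = \frac{\mu}{(1+\mu/\lambda)^R-1}\|\xx^0-\xx^\star\|^2$; the coefficient of $A_R$ becomes $\frac{\mu+\lambda}{2Sq^R} = \frac{\mu}{2(1-q^R)} \ge \frac{\mu}{2}$, which recovers the $\frac{\mu}{2}\Avg\|\xx_{i,R}-\xx^\star\|^2$ on the left. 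The crucial trick is the error term: a plain Cauchy--Schwarz gives $\bigl(\sum_{r=1}^R e_r q^{-r/2}\bigr)^2 \le \bigl(\sum_{r=1}^R e_r^2\bigr)\bigl(\sum_{r=1}^R q^{-r}\bigr) = \bigl(\sum_{r=1}^R e_r^2\bigr)S$, so the factor $S$ exactly cancels the $\frac{1}{S}$ coming from the multiplication, collapsing the awkward squared weighted sum into $\frac{2}{\mu+\lambda}\sum_{r=1}^R e_r^2$. This is precisely what turns the $\bigl(\sum\sqrt{C_r/q^r}\bigr)^2$ produced by the square-root recurrence (Lemma~\ref{thm:SquareRootRecurrence}) inside Lemma~\ref{thm:LemmaConvexFrameworkInExactSolution} into the clean additive error advertised in the statement, and it also explains why the leading constant here ($\frac{\mu}{(1+\mu/\lambda)^R-1}$) is twice that of the exact case in Corollary~\ref{thm:ConvexFrameworkControlVariate2ExactSolution}.

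Finally, to obtain the second inequality I would discard the non-negative distance term and bound the main factor by Bernoulli's inequality: $(1+\mu/\lambda)^R \ge 1 + R\mu/\lambda$ yields $(1+\mu/\lambda)^R - 1 \ge R\mu/\lambda$, hence $\frac{\mu}{(1+\mu/\lambda)^R-1} \le \frac{\lambda}{R}$. The only subtlety worth double-checking is the index bookkeeping between the hypothesis on $\Avg\|\nabla F_{i,r}(\xx_{i,r+1})\|^2$ and the quantity $C_r$ in the lemma; after the shift $C_r \le e_r^2$ holds, and the remainder of the argument is pure algebra layered on top of Lemma~\ref{thm:LemmaConvexFrameworkInExactSolution} and the single application of Cauchy--Schwarz.
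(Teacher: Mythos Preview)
Your proposal is correct and follows essentially the same route as the paper's proof: invoke Lemma~\ref{thm:LemmaConvexFrameworkInExactSolution}, drop the $B$-term, apply Cauchy--Schwarz to $\bigl(\sum_r e_r q^{-r/2}\bigr)^2 \le \bigl(\sum_r e_r^2\bigr)\sum_r q^{-r}$, divide through by $S=\sum_r q^{-r}$, and simplify via $1-q=\mu/(\mu+\lambda)$ and $1/q=1+\mu/\lambda$. One cosmetic remark: the second inequality in the statement is between the two right-hand expressions, so you do not need to ``discard the non-negative distance term''---Bernoulli on the main factor alone suffices.
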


\begin{proof}
    Applying Lemma~\ref{thm:LemmaConvexFrameworkInExactSolution}, 
    dropping the non-negative 
    $\frac{\lambda}{2(\mu+\lambda)} \sum_{r=1}^R \frac{B_{r-1}}{q^r}$,
    and plugging $||\nabla F_{i,r}(\xx_{i,r+1})||^2 \le e_{r+1}^2$
    into the bound, we obtain:
    \begin{align}
        \frac{2}{\mu+\lambda} \sum_{r=1}^R \frac{f(\xx^r) - f^\star}{q^r}
        + \frac{A_R}{q^R}
        &\le 
        2||\xx^0 - \xx^\star||^2 + \frac{4}{(\mu + \lambda)^2} 
       \Biggl( \sum_{r=1}^R  
        \frac{e_r}{(\sqrt{q})^r} \Biggr)^2 
        \\
        &\le 
        2||\xx^0 - \xx^\star||^2  + \frac{4}{(\mu + \lambda)^2} \sum_{r=1}^R e_r^2
        \sum_{r=1}^R \frac{1}{q^r} \;.
    \end{align}
    Define $\Bar{\xx}^R := \argmin_{\xx \in \{\xx^r\}_{r=1}^R} f(\xx)$.
    Dividing both sides by 
    $\sum_{r=1}^R \frac{1}{q^r} = \frac{\frac{1}{q^R} - 1}{1-q}$, we get:
    \begin{equation}
        \frac{2}{\mu + \lambda} \bigl( f(\Bar{\xx}^R) - f^\star \bigr)
        +
        (1-q) A_R 
        \le 
        \frac{2 (1-q)}{\frac{1}{q^R}-1} ||\xx^0 - \xx^\star||^2  + 
        \frac{4}{(\mu+\lambda)^2} \sum_{r=1}^R e_r^2  \;.
    \end{equation}
    Plugging in the definition of $q$, we get the claim.
\end{proof}

\begin{theorem}
    \label{thm:ConvexFrameworkInExactSolutionSpecial}
    Consider Algorithm~\ref{Alg:FrameworkDeterministic} with control variate~\eqref{eq:ControlVariate2} and the standard
    averaging. 
    Let $f_i : \R^d \to \R$ be continuously differentiable and $\mu$-convex with $\mu \ge 0$ for any $i \in [n]$.
    Assume that $\{f_i\}$ have $\delta_A$-AHD. 
    Suppose that the solutions returned by local solvers satisfy $\sum_{i=1}^n ||\nabla F_{i,r}(\xx_{i,r+1})||^2 \le e_r^2 \sum_{i=1}^n ||\xx_{i,r+1} - \xx^r||^2$ for any
    $r \ge 0$ with $e_r \ge 0$.
    Let $\lambda \ge 2\delta_A$ and let 
    $\sum_{r=0}^{+\infty} e_r^2 \le \frac{\lambda (\mu + \lambda)}{8}$.
    After $R$ communication rounds, we have:
    \begin{equation}
         f(\Bar{\xx}^R) - f(\xx^\star) + \frac{\mu}{2} 
         \Avg ||\xx_{i,R} - \xx^\star||^2 
        \le
        \frac{\mu}{[(1+\frac{\mu}{\lambda})^R - 1]} ||\xx^0 - \xx^\star||^2 
        \le
        \frac{\lambda}{R} ||\xx^0 - \xx^\star||^2 
        \;.
    \end{equation}
    where $\Bar{\xx}^R := \argmin_{\xx \in \{\xx^r\}_{r=1}^R} f(\xx)$.

\end{theorem}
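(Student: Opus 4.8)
The plan is to substitute the refined accuracy condition into the general inexact recurrence of Lemma~\ref{thm:LemmaConvexFrameworkInExactSolution} and to show that the resulting error term can be absorbed entirely into a matching term that is already present on the left-hand side. In the notation of that lemma --- with $A_r := \Avg\|\xx_{i,r}-\xx^\star\|^2$, $B_r := \Avg\|\xx_{i,r+1}-\xx^r\|^2$, the gradient residuals $C_r$, and $q := \tfrac{\lambda}{\lambda+\mu}$ --- the lemma (applicable here since $\lambda \ge 2\delta_A$ makes the choice $c=\tfrac12$ admissible in Lemma~\ref{thm:BasicRecurrenceControlVariate2}) yields
$$\frac{2}{\mu+\lambda}\sum_{r=1}^R \frac{f(\xx^r)-f^\star}{q^r} + \frac{A_R}{q^R} + \frac{\lambda}{2(\mu+\lambda)}\sum_{r=1}^R \frac{B_{r-1}}{q^r} \le 2\|\xx^0-\xx^\star\|^2 + \frac{4}{(\mu+\lambda)^2}\Bigl(\sum_{r=1}^R \sqrt{\tfrac{C_r}{q^r}}\Bigr)^2.$$
The key observation is that the stopping rule $\sum_i\|\nabla F_{i,r}(\xx_{i,r+1})\|^2 \le e_r^2 \sum_i\|\xx_{i,r+1}-\xx^r\|^2$ is exactly the coupling $C_r \le e_{r-1}^2 B_{r-1}$, i.e.\ $\sqrt{C_r} \le e_{r-1}\sqrt{B_{r-1}}$, between each residual in the right-hand sum and the displacement one step earlier; this is what ties the troublesome error term to the benign displacement term already on the left.

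First I would feed this coupling into the squared sum and then apply Cauchy--Schwarz to separate error magnitudes from displacements:
$$\Bigl(\sum_{r=1}^R \sqrt{\tfrac{C_r}{q^r}}\Bigr)^2 \le \Bigl(\sum_{r=1}^R e_{r-1}\sqrt{\tfrac{B_{r-1}}{q^r}}\Bigr)^2 \le \Bigl(\sum_{r=1}^R e_{r-1}^2\Bigr)\Bigl(\sum_{r=1}^R \tfrac{B_{r-1}}{q^r}\Bigr).$$
Since $\sum_{r=1}^R e_{r-1}^2 \le \sum_{r=0}^{+\infty} e_r^2 \le \tfrac{\lambda(\mu+\lambda)}{8}$ by assumption, the prefactor $\tfrac{4}{(\mu+\lambda)^2}$ turns the right-hand error term into exactly $\tfrac{\lambda}{2(\mu+\lambda)}\sum_{r=1}^R \tfrac{B_{r-1}}{q^r}$, which cancels the identical term on the left. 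What survives is the clean recurrence
$$\frac{2}{\mu+\lambda}\sum_{r=1}^R \frac{f(\xx^r)-f^\star}{q^r} + \frac{A_R}{q^R} \le 2\|\xx^0-\xx^\star\|^2.$$

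To finish, I would divide through by $\sum_{r=1}^R q^{-r} = \tfrac{q^{-R}-1}{1-q}$. Using that $\Bar{\xx}^R$ minimizes $f$ over $\{\xx^r\}_{r=1}^R$ gives $\sum_{r=1}^R q^{-r}(f(\xx^r)-f^\star) \ge (f(\Bar{\xx}^R)-f^\star)\sum_{r=1}^R q^{-r}$, and bounding $\tfrac{1}{1-q^R}\ge 1$ keeps the full coefficient $1-q = \tfrac{\mu}{\mu+\lambda}$ on $A_R$. Multiplying by $\tfrac{\mu+\lambda}{2}$ and substituting $q^{-R} = (1+\tfrac{\mu}{\lambda})^R$ then produces the stated bound $f(\Bar{\xx}^R)-f^\star + \tfrac{\mu}{2}A_R \le \tfrac{\mu}{(1+\mu/\lambda)^R - 1}\|\xx^0-\xx^\star\|^2$, and the final $\tfrac{\lambda}{R}$ estimate follows from the Bernoulli-type inequality $(1+\tfrac{\mu}{\lambda})^R - 1 \ge R\tfrac{\mu}{\lambda}$. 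The main obstacle --- and the only genuinely delicate step --- is the coefficient bookkeeping: one must verify that the choice $c=\tfrac12$ forced by $\lambda\ge 2\delta_A$, the factor $\tfrac{4}{(\mu+\lambda)^2}$ in Lemma~\ref{thm:LemmaConvexFrameworkInExactSolution}, and the budget $\tfrac{\lambda(\mu+\lambda)}{8}$ on $\sum_r e_r^2$ align precisely so that the displacement terms cancel rather than merely being dominated; any slack here would leave a residual $B$-term that cannot simply be discarded.
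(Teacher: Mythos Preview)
Your proposal is correct and follows essentially the same route as the paper's own proof: invoke Lemma~\ref{thm:LemmaConvexFrameworkInExactSolution}, use the stopping rule to write $C_r \le e_{r-1}^2 B_{r-1}$, apply Cauchy--Schwarz to the squared sum, and then observe that the budget $\sum_r e_r^2 \le \tfrac{\lambda(\mu+\lambda)}{8}$ makes the error term exactly match the $\tfrac{\lambda}{2(\mu+\lambda)}\sum_r B_{r-1}/q^r$ term on the left, leaving the clean two-term inequality that is then normalized by $\sum_r q^{-r}$. The coefficient bookkeeping you flag as the delicate step is indeed the only non-routine point, and you have it right.
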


\begin{proof}
    Applying Lemma~\ref{thm:LemmaConvexFrameworkInExactSolution}, 
    and plugging 
    $\sum_{i=1}^n ||\nabla F_{i,r}(\xx_{i,r+1})||^2 
    \le e_r^2 \sum_{i=1}^n ||\xx_{i,r+1} - \xx^r||^2$
    into the bound, we obtain:
    \begin{align}
        \frac{2}{\mu+\lambda} \sum_{r=1}^R \frac{f(\xx^r) - f^\star}{q^r}
        + \frac{A_R}{q^R}
        + \frac{\lambda}{2(\mu+\lambda)} \sum_{r=1}^R \frac{B_{r-1}}{q^r}
        &\le 
        2||\xx^0 - \xx^\star||^2 + \frac{4}{(\mu + \lambda)^2} 
        \Biggl( \sum_{r=1}^R e_{r-1}
        \sqrt{\frac{B_{r-1}}{q^{r}}} \Biggr)^2 
        \\
        &\le 
        2||\xx^0 - \xx^\star||^2 + \frac{4}{(\mu + \lambda)^2} 
        \sum_{r=1}^R e_{r-1}^2 
        \sum_{r=1}^R \frac{B_{r-1}}{q^{r}} \;.
    \end{align}
    Let $\frac{4}{(\mu + \lambda)^2} 
        \sum_{r=1}^R e_{r-1}^2 \le \frac{\lambda}{2(\mu+\lambda)}$.
    We get:
    \begin{equation}
        \frac{2}{\mu+\lambda} \sum_{r=1}^R \frac{f(\xx^r) - f^\star}{q^r}
        + \frac{A_R}{q^R} 
        \le 
        2 ||\xx^0 - \xx^\star||^2 \;.
    \end{equation}
    Dividing both sides by 
    $\sum_{r=1}^R \frac{1}{q^r} = \frac{\frac{1}{q^R} - 1}{1-q}$, we get:
    \begin{equation}
        \frac{2}{\mu + \lambda} \bigl( f(\Bar{\xx}^R) - f^\star \bigr)
        +
        (1-q) A_R 
        \le 
        \frac{2 (1-q)}{\frac{1}{q^R}-1} ||\xx^0 - \xx^\star||^2 \;.
    \end{equation}
    Plugging in the definition of $q$, we get the claim.
\end{proof}

\begin{corollary}
Consider Algorithm~\ref{Alg:FrameworkDeterministic} with control variate~\eqref{eq:ControlVariate2} and the standard
    averaging. 
    Let $f_i : \R^d \to \R$ be continuously differentiable, $\mu$-convex with $\mu \ge 0$ and $L$-smooth for any $i \in [n]$.
    Assume that $\{f_i\}$ have $\delta_A$-AHD. 
    Suppose that each local solver returns a solution such that
    $\sum_{i=1}^n ||\nabla F_{i,r}(\xx_{i,r+1})||^2 
    \le 
    \frac{\lambda (\mu+\lambda)}{8 (r+1)(r+2)}
    \sum_{i=1}^n ||\xx_{i,r+1} - \xx^r||^2$ for any $r \ge 0$.
    Let $\lambda = 2\delta_A$.
    Then after $R$ communication rounds, we have:
    \begin{equation}
         f(\Bar{\xx}^R) - f(\xx^\star) + \frac{\mu}{2} 
         \Avg ||\xx_{i,R} - \xx^\star||^2 
        \le
        \frac{\mu}{[(1+\frac{\mu}{2\delta_A})^R - 1]} ||\xx^0 - \xx^\star||^2 
        \le
        \frac{2\delta_A}{R} ||\xx^0 - \xx^\star||^2 
        \;.
    \end{equation}
    where $\Bar{\xx}^R := \argmin_{\xx \in \{\xx^r\}_{r=1}^R} f(\xx)$. 
    
    Suppose each device uses the standard gradient descent initialized
    at $\xx^r$, Then the total number of local steps required at each round $r$
    is no more than:
     \begin{equation}
        K_r
        =
        \Theta\Biggl( \frac{L}{\mu + \lambda} \ln 
        \biggl( \frac{L}{\sqrt{\lambda (\mu + \lambda)}} (r+2) \biggr) 
        \Biggr) \qquad (\algname{DICCO-GD}) \;.
    \end{equation}

    Suppose each device uses the fast gradient descent initialized
    at $\xx^r$, Then the total number of local steps required at each round $r$
    is no more than:
     \begin{equation}
        K_r
        =
        \Theta\Biggl( \sqrt{\frac{L}{\mu + \lambda}} \ln 
        \biggl( \frac{L}{\sqrt{\lambda (\mu + \lambda)}} (r+2) \biggr) 
        \Biggr) \qquad (\algname{DICCO-FGD}) \;.
    \end{equation}
    
\end{corollary}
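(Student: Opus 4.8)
The plan is to split the argument into two independent parts: the convergence guarantee, which I would obtain as an immediate specialization of Theorem~\ref{thm:ConvexFrameworkInExactSolutionSpecial}, and the bound on the number of local steps $K_r$, which requires a separate analysis of the local solver applied to the subproblem $F_{i,r}$.

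For the convergence guarantee, I would set $e_r^2 := \frac{\lambda(\mu+\lambda)}{8(r+1)(r+2)}$, which is exactly the prescribed per-round accuracy (it suffices that each device meets the per-device stopping rule $||\nabla F_{i,r}(\xx_{i,r+1})|| \le e_r ||\xx_{i,r+1} - \xx^r||$, since summing these squared inequalities over $i$ reproduces the hypothesis of Theorem~\ref{thm:ConvexFrameworkInExactSolutionSpecial}). The only remaining thing to check is the summability requirement $\sum_{r=0}^{+\infty} e_r^2 \le \frac{\lambda(\mu+\lambda)}{8}$, and this follows from the telescoping identity $\sum_{r=0}^{+\infty}\frac{1}{(r+1)(r+2)} = \sum_{r=0}^{+\infty}\bigl(\frac{1}{r+1}-\frac{1}{r+2}\bigr) = 1$, which makes the infinite sum equal to $\frac{\lambda(\mu+\lambda)}{8}$. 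Substituting $\lambda = 2\delta_A$ into the conclusion of Theorem~\ref{thm:ConvexFrameworkInExactSolutionSpecial} then yields the first displayed chain of inequalities verbatim.

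For the local step count, the starting observation is that $F_{i,r}$ is $(\mu+\lambda)$-strongly convex and $(L+\lambda)$-smooth, because the drift term is affine while the proximal term $\frac{\lambda}{2}||\xx - \xx^r||^2$ contributes $\lambda$ to both parameters. Its condition number is therefore $\kappa := \frac{L+\lambda}{\mu+\lambda}$; since each $f_i$ is convex and $L$-smooth we have $\delta_A \le L$, so $\lambda = 2\delta_A \le 2L$, $L+\lambda = \Theta(L)$, and hence $\kappa = \Theta\bigl(\frac{L}{\mu+\lambda}\bigr)$, matching the leading factor of the claimed $K_r$. Writing $\yy^\star$ for the exact minimizer of $F_{i,r}$ and $\yy^{(t)}$ for the $t$-th iterate of the local solver started at $\xx^r$, I would use smoothness to bound the numerator of the stopping rule, $||\nabla F_{i,r}(\yy^{(t)})|| \le (L+\lambda)||\yy^{(t)} - \yy^\star||$, and the triangle inequality to bound the denominator from below, $||\yy^{(t)} - \xx^r|| \ge ||\xx^r - \yy^\star|| - ||\yy^{(t)} - \yy^\star||$. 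Standard linear convergence of gradient descent gives $||\yy^{(t)} - \yy^\star|| \le (1-\tfrac{1}{\kappa})^{t/2}||\xx^r - \yy^\star||$, so once $(1-\tfrac{1}{\kappa})^{t/2} \le \tfrac12$ the denominator is at least $\tfrac12||\xx^r - \yy^\star||$ and the ratio $\frac{||\nabla F_{i,r}(\yy^{(t)})||}{||\yy^{(t)} - \xx^r||}$ is dominated by $2(L+\lambda)(1-\tfrac1\kappa)^{t/2}$. Crucially the unknown distance $||\xx^r - \yy^\star||$ cancels, so the criterion reduces to $2(L+\lambda)(1-\tfrac1\kappa)^{t/2} \le e_r$; solving for $t$ produces $K_r = \Theta\bigl(\kappa\,\ln\frac{L+\lambda}{e_r}\bigr)$, and substituting $e_r$ and $\kappa$ reproduces the stated bound, with the logarithmic argument $\frac{L}{\sqrt{\lambda(\mu+\lambda)}}(r+2)$ arising from $\frac{L+\lambda}{e_r} = \Theta\bigl(\frac{L(r+2)}{\sqrt{\lambda(\mu+\lambda)}}\bigr)$. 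The \algname{DICCO-FGD} estimate follows identically, replacing the gradient-descent contraction $(1-\tfrac1\kappa)$ by the accelerated rate $(1-\tfrac{1}{\sqrt\kappa})$, which turns $\kappa$ into $\sqrt\kappa$ in the iteration count.

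The main obstacle I anticipate is the lower bound on the denominator $||\yy^{(t)} - \xx^r||$: because the stopping rule is \emph{relative}, I must certify that the iterate has traveled a definite fraction of the way toward $\yy^\star$ before the criterion can be declared met, and I must treat the degenerate case $\xx^r = \yy^\star$ (where both sides vanish and the solver terminates at $t=0$) separately. The payoff of the relative phrasing is exactly this scale invariance: the required number of local steps depends only on $\kappa$ and the target relative accuracy $e_r$, never on the absolute, round-dependent distances, which is what keeps $K_r$ growing only logarithmically in $r$.
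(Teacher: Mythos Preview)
Your proposal is correct and follows essentially the same strategy as the paper: invoke Theorem~\ref{thm:ConvexFrameworkInExactSolutionSpecial} with the given $e_r$ (checking summability via telescoping), then reduce the relative stopping rule to an absolute gradient-norm target scaled by $\|\xx^r - \yy^\star\|$ and apply the standard linear convergence of (fast) gradient descent on the $(\mu+\lambda)$-strongly convex, $(L+\lambda)$-smooth subproblem. The only cosmetic difference is how the denominator $\|\xx_{i,r+1}-\xx^r\|$ is lower-bounded: the paper uses strong convexity, $\|\xx_{i,r+1}-\yy^\star\|\le\frac{1}{\mu+\lambda}\|\nabla F_{i,r}(\xx_{i,r+1})\|$, and solves the resulting implicit inequality for the gradient norm, whereas you use iterate contraction directly; both routes yield the same $K_r$.
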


\begin{proof}
    According to 
    Theorem~\eqref{thm:ConvexFrameworkInExactSolutionSpecial},
    to achieve the corresponding convergence rate, 
    the solutions returned by local solvers should satisfy 
    $\sum_{i=1}^n ||\nabla F_{i,r}(\xx_{i,r+1})||^2 \le e_r^2 \sum_{i=1}^n ||\xx_{i,r+1} - \xx^r||^2$ for any $r \ge 0$,
    with
    $\sum_{r=0}^{+\infty} e_r^2 \le \frac{\lambda (\mu + \lambda)}{8}$.

    Let $\xx_{i,r}^\star := \argmin_\xx\{F_{i,r}(\xx)\}$.
    Note that $F_{i,r}$ is $(\mu + \lambda)$-strongly convex. 
    This gives:
    \begin{align}
        ||\xx_{i,r+1} - \xx^r|| 
        &\ge ||\xx^r - \xx_{i,r}^\star|| - ||\xx_{i,r+1} - \xx_{i,r}^\star||
        \\
        &\ge ||\xx^r - \xx_{i,r}^\star|| - \frac{1}{\mu + \lambda} 
        ||\nabla F_{i,r}(\xx_{i,r+1})|| \;.
    \end{align}
    Therefore, suppose we want to have
    $\sum_{i=1}^n ||\nabla F_{i,r}(\xx_{i,r+1})||^2 \le e_r^2 \sum_{i=1}^n ||\xx_{i,r+1} - \xx^r||^2$, 
    it is sufficient to have:
    \begin{equation}
        ||\nabla F_{i,r}(\xx_{i,r+1})||
        \le 
        e_r \Bigl[
        ||\xx^r - \xx_{i,r}^\star|| - \frac{1}{\mu + \lambda} 
        ||\nabla F_{i,r}(\xx_{i,r+1})|| \Bigr], \qquad
        \forall i \in [n]
        \;,
    \end{equation}
    or equivalently:
    \begin{equation}
        ||\nabla F_{i,r}(\xx_{i,r+1})||
        \le 
        \frac{e_r}{1 + \frac{e_r}{\lambda + \mu}}
        ||\xx^r - \xx_{i,r}^\star||,
        \qquad
        \forall i \in [n]
        \;.
    \end{equation}
    
    By our choice of $e_r=\frac{\lambda (\mu + \lambda)} {8(r+1)(r+2)} $, 
    we have:
    \begin{equation}
        e_r \ge \frac{\sqrt{\lambda (\mu + \lambda})}{4(r+2)},
        \quad \text{and} \quad
        1 + \frac{e_r}{\lambda + \mu} 
        \le 
        1 + \frac{\mu + \lambda}{(r+1)(\mu + \lambda)}
        \le 
        2 \;.
    \end{equation}
    Thus, the accuracy condition is sufficient to be:
    \begin{equation}
        ||\nabla F_{i,r}(\xx_{i,r+1})||^2
        \le 
        \frac{\lambda (\mu + \lambda)}{64 (r+2)^2}
        ||\xx^r - \xx_{i,r}^\star||^2 \;.
        \label{eq:ProximalSolutionAccuracy}
    \end{equation}
    \textbf{DICCO-GD: Local Gradient Descent.}
    Recall that for any $\nu_1$-convex and $\nu_2$-smooth function $g$, initial point $\yy_0$ and any number of iteration steps $k$,
    the standard gradient method has the following convergence guarantee: 
    $||\nabla g(\yy_k)||^2 \le 
    \cO\Bigl( \nu_2^2\exp\bigl( -(\nu_1/\nu_2) k \bigr) 
    ||\yy_0 - \yy^\star||^2 \Bigr)$ 
    where $\yy^\star$ is the minimizer of $g$
    (See Theorem 3.12 \cite{bubeckbook} and use the fact that 
    $g(\yy_k) - g^\star \ge \frac{1}{2\nu_2}||\nabla g(\yy_k)||^2$).
    Now we use this result to compute the required local steps $K_r$ to 
    satisfy~\eqref{eq:ProximalSolutionAccuracy}.
    Recall that $F_{i,r}$ is $(\mu + \lambda)$-convex and $(L + \lambda)$-smooth.
    For any $r \ge 0$, we have that:
    \begin{equation}
        \Theta\Bigl( (L+\lambda)^2 \exp (-\frac{\mu + \lambda}{L + \lambda}K_r) \Bigr) 
        \le 
        \Theta\Biggl( \frac{\lambda(\mu + \lambda)}{(r+2)^2} \Biggr) \;.
    \end{equation}
    This gives:
    \begin{equation}
        K_r
        \ge 
        \Theta\Biggl( \frac{L}{\mu + \lambda} \ln 
        \biggl( \frac{L}{\sqrt{\lambda (\mu + \lambda)}} (r+2) \biggr) 
        \Biggr) \;.
    \end{equation}
    where we use the fact that $\lambda \lesssim L$. 

    \textbf{DICCO-FGD: Local Fast Gradient Descent.}
    Recall that for any $\nu_1$-convex and $\nu_2$-smooth function $g$, initial point $\yy_0$ and any number of iteration steps $k$,
    the fast gradient method has the following convergence guarantee: 
    $||\nabla g(\yy_k)||^2 \le 
    \cO\Bigl( \nu_2^2\exp\bigl( -\sqrt{\nu_1/\nu_2} k \bigr) 
    ||\yy_0 - \yy^\star||^2 \Bigr)$ 
    where $\yy^\star$ is the minimizer of $g$
    (See Theorem 3.18~\cite{bubeckbook} and use the fact that 
    $g(\yy_k) - g^\star \ge \frac{1}{2\nu_2}||\nabla g(\yy_k)||^2$) . 
    Now we use this result to compute the required local steps $K_r$ to 
    satisfy~\eqref{eq:ProximalSolutionAccuracy}.
    Recall that $F_{i,r}$ is $(\mu + \lambda)$-convex and $(L + \lambda)$-smooth.
    For any $r \ge 0$, we have that:
    \begin{equation}
        \Theta\Bigl( (L+\lambda)^2 \exp (-\sqrt{\frac{\mu + \lambda}{L + \lambda}} K_r) \Bigr) 
        \le 
        \Theta\Biggl( \frac{\lambda(\mu + \lambda)}{(r+2)^2} \Biggr) \;.
    \end{equation}
    This gives:
    \begin{equation}
        K_r
        \ge 
        \Theta\Biggl( \sqrt{\frac{L}{\mu + \lambda}} \ln 
        \biggl( \frac{L}{\sqrt{\lambda (\mu + \lambda)}} (r+2) \biggr) 
        \Biggr) \;.
    \end{equation}
    where we use the fact that $\lambda \lesssim L$. 

\end{proof}

\subsection{Non-convex results and proofs for Algorithm~\ref{Alg:FrameworkDeterministic} with control variate~\eqref{eq:ControlVariate2}}

\begin{lemma}
\label{thm:LowerBoundFir}
    Consider Algorithm~\ref{Alg:FrameworkDeterministic}
    with control variate~\eqref{eq:ControlVariate2}. 
    Let $f_i : \R^d \to \R$ be continuously differentiable for any $i \in [n]$.
    Assume that $\{f_i\}$ have $\delta_B$-BHD. 
    For any $r \ge 0$, $i \in [n]$, and any $\xx \in \R^d$, it holds that:
    \begin{equation}
        || \nabla F_{i,r}(\xx) - \nabla f(\xx)|| 
        \le
        (\delta_B + \lambda) ||\xx^r - \xx|| \;.
    \end{equation}
\end{lemma}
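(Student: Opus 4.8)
The plan is purely computational: I would write $\nabla F_{i,r}$ in closed form, substitute the control variate, and recognize the difference $\nabla F_{i,r}(\xx) - \nabla f(\xx)$ as the sum of a Hessian-dissimilarity term plus a regularization term, after which the bound follows from the triangle inequality and the $\delta_B$-BHD assumption.

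First I would differentiate $F_{i,r}(\xx) = f_i(\xx) - \lin{\xx, \hh_{i,r}} + \frac{\lambda}{2}\norm{\xx - \xx^r}^2$ to get $\nabla F_{i,r}(\xx) = \nabla f_i(\xx) - \hh_{i,r} + \lambda(\xx - \xx^r)$. Substituting the control variate $\hh_{i,r} = \nabla f_i(\xx^r) - \nabla f(\xx^r)$ from~\eqref{eq:ControlVariate2} yields $\nabla F_{i,r}(\xx) = \nabla f_i(\xx) - \nabla f_i(\xx^r) + \nabla f(\xx^r) + \lambda(\xx - \xx^r)$. The key step is then to subtract $\nabla f(\xx)$ and regroup the first-order terms: writing $h_i := f_i - f$, the cross terms combine so that $\nabla F_{i,r}(\xx) - \nabla f(\xx) = \bigl[\nabla f_i(\xx) - \nabla f(\xx)\bigr] - \bigl[\nabla f_i(\xx^r) - \nabla f(\xx^r)\bigr] + \lambda(\xx - \xx^r) = \nabla h_i(\xx) - \nabla h_i(\xx^r) + \lambda(\xx - \xx^r)$. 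This identity is the whole content of the lemma; everything else is a direct estimate.

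Finally I would apply the triangle inequality and bound the two pieces separately: the dissimilarity term by $\norm{\nabla h_i(\xx) - \nabla h_i(\xx^r)} \le \delta_B \norm{\xx - \xx^r}$ via $\delta_B$-BHD (Definition~\ref{df:MaxHessianSimilarity}, i.e.~\eqref{eq:MaxHessianSimilarity}), and the regularization term trivially by $\lambda \norm{\xx - \xx^r}$. Adding these gives $(\delta_B + \lambda)\norm{\xx - \xx^r} = (\delta_B + \lambda)\norm{\xx^r - \xx}$, as claimed. I do not expect any genuine obstacle here; the only thing to watch is the exact cancellation of the terms $\nabla f_i(\xx^r)$ and $\nabla f(\xx^r)$, which is precisely what makes the control variate convert the plain gradient difference $\nabla f_i(\xx) - \nabla f_i(\xx^r)$ into a difference of the dissimilarity function $h_i$ — the quantity that the BHD assumption controls.
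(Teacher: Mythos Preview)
Your proposal is correct and follows essentially the same argument as the paper: compute $\nabla F_{i,r}(\xx)-\nabla f(\xx)$, rewrite it as $\nabla h_i(\xx)-\nabla h_i(\xx^r)+\lambda(\xx-\xx^r)$, and apply the triangle inequality together with the $\delta_B$-BHD bound. The only cosmetic difference is that the paper defines $h_i:=f-f_i$ (opposite sign to your $h_i:=f_i-f$), which is immaterial since only $\norm{\nabla h_i(\xx)-\nabla h_i(\xx^r)}$ appears.
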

\begin{proof}
    Let $h_i := f - f_i$.
    Using the definition of $\nabla F_{i,r} (\xx)$, we get:
    \begin{align}
        ||\nabla F_{i,r} (\xx) - \nabla f(\xx)|| 
        &=
        ||\nabla h_i (\xx^r) - \nabla h_i(\xx) + \lambda (\xx - \xx^r)||
        \\
        &\le 
        ||\nabla h_i(\xx) - \nabla h_i (\xx^r)||
        + \lambda||\xx - \xx^r||
        \\
        &\stackrel{\eqref{eq:MaxHessianSimilarity}}{\le} 
        \delta_B ||\xx - \xx^r|| + \lambda||\xx - \xx^r|| \;.
        \qedhere
    \end{align}
\end{proof}

\begin{lemma}
    \label{thm:TwoPointRelationNonConvexFramework}
    Let $f_i : \R^d \to \R$ be continuously differentiable for any $i \in [n]$.
    Assume that $\{f_i\}$ have $\delta_B$-BHD. 
    For any $i \in [n]$ and any $\yy \in \R^d$, $\xx^r \in \R^d$, we have:
    \begin{equation}
        f_i(\xx^r) + \lin{\nabla h_i (\xx^r), \xx^r - \yy}
        - f_i(\yy) - \frac{\lambda}{2} ||\yy - \xx^r||^2
        \le 
        f(\xx^r) - f(\yy) - \frac{\lambda - \delta_B}{2} ||\yy - \xx^r||^2 \;.
    \end{equation}
    For Algorithm~\ref{Alg:FrameworkDeterministic} with control variate~\ref{eq:ControlVariate2}, 
    the left-hand side is equal to $F_{i,r}(\xx^r) - F_{i,r}(\yy)$.
\end{lemma}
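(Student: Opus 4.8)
The plan is to recognize the asserted inequality as nothing more than the quadratic upper bound \eqref{eq:SmoothUpperBound} applied to the single function $h_i := f - f_i$, and then to obtain the concluding identification with $F_{i,r}(\xx^r) - F_{i,r}(\yy)$ by a direct expansion of $F_{i,r}$ using the control variate \eqref{eq:ControlVariate2}. No real estimation is needed beyond smoothness of $h_i$.

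First I would note that the $\delta_B$-BHD assumption \eqref{eq:MaxHessianSimilarity} is exactly the statement that each $h_i$ has a $\delta_B$-Lipschitz gradient, i.e.\ $h_i$ is $\delta_B$-smooth (as already used in the distance-scaled form in the proof of Lemma~\ref{thm:LowerBoundFir}). Hence the smoothness upper bound \eqref{eq:SmoothUpperBound} with constant $\delta_B$ applies to $h_i$ at the pair $(\xx^r, \yy)$:
\[
h_i(\yy) \le h_i(\xx^r) + \lin{\nabla h_i(\xx^r), \yy - \xx^r} + \frac{\delta_B}{2}||\yy - \xx^r||^2 \;.
\]
Rearranging this into $h_i(\yy) - h_i(\xx^r) - \lin{\nabla h_i(\xx^r), \yy - \xx^r} \le \frac{\delta_B}{2}||\yy - \xx^r||^2$ and substituting $h_i = f - f_i$, so that $h_i(\yy) - h_i(\xx^r) = f(\yy) - f(\xx^r) - f_i(\yy) + f_i(\xx^r)$, turns the left side into $f(\yy) - f(\xx^r) - f_i(\yy) + f_i(\xx^r) - \lin{\nabla h_i(\xx^r), \yy - \xx^r}$. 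Moving the terms $f(\yy) - f(\xx^r)$ and the quadratic to the other side, and using $-\lin{\nabla h_i(\xx^r), \yy - \xx^r} = \lin{\nabla h_i(\xx^r), \xx^r - \yy}$ together with $\frac{\lambda}{2} - \frac{\delta_B}{2} = \frac{\lambda - \delta_B}{2}$, reproduces the claimed inequality verbatim.

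For the final sentence, I would expand $F_{i,r}$ at the two points, recalling that with control variate \eqref{eq:ControlVariate2} one has $\hh_{i,r} = \nabla f_i(\xx^r) - \nabla f(\xx^r) = -\nabla h_i(\xx^r)$ and that the regularizer $\frac{\lambda}{2}||\xx - \xx^r||^2$ vanishes at $\xx = \xx^r$. This gives $F_{i,r}(\xx^r) - F_{i,r}(\yy) = f_i(\xx^r) - f_i(\yy) - \lin{\xx^r - \yy, \hh_{i,r}} - \frac{\lambda}{2}||\yy - \xx^r||^2$, and replacing $-\hh_{i,r}$ by $\nabla h_i(\xx^r)$ matches the left-hand side of the lemma exactly. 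The one point demanding care—rather than an actual obstacle—is the bookkeeping of the sign convention $h_i := f - f_i$ adopted in this section (opposite to Definition~\ref{df:MaxHessianSimilarity}); this is harmless since $\delta_B$-BHD is invariant under $h_i \mapsto -h_i$, but it must be tracked consistently in the inner-product term so that $\lin{\nabla h_i(\xx^r), \xx^r - \yy}$ appears with the correct sign.
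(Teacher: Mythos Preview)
Your proposal is correct and follows essentially the same approach as the paper: both arguments reduce the inequality to the quadratic upper bound \eqref{eq:SmoothUpperBound} applied to $h_i := f - f_i$ (which is $\delta_B$-smooth by BHD), and both identify the left-hand side with $F_{i,r}(\xx^r) - F_{i,r}(\yy)$ by direct expansion using $\hh_{i,r} = -\nabla h_i(\xx^r)$. The only difference is presentational order---the paper expands $F_{i,r}(\xx^r) - F_{i,r}(\yy)$ first and then applies smoothness, whereas you establish the inequality first and verify the identification second---and your explicit remark on the sign convention of $h_i$ is a welcome clarification.
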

\begin{proof}
    Let $h_i := f - f_i$.
    Using the definition of $\nabla F_{i,r} (\xx)$, we get:
    \begin{align}
        F_{i,r}(\xx^r) - F_{i,r}(\yy) 
        &=
        f_i(\xx^r) + \lin{\nabla h_i (\xx^r), \xx^r - \yy}
        - f_i(\yy) - \frac{\lambda}{2} ||\yy - \xx^r||^2
        \\
        &= f(\xx^r) - f(\yy) + h_i(\yy) - h_i(\xx^r)
        - \lin{\nabla h_i(\xx^r), \yy - \xx^r} 
        - \frac{\lambda}{2} ||\yy - \xx^r||^2
        \\
        &\stackrel{\eqref{eq:MaxHessianSimilarity}}{\le}
        f(\xx^r) - f(\yy) - \frac{\lambda - \delta_B}{2} ||\yy - \xx^r||^2 \;.
    \end{align}
\end{proof}

\begin{lemma}
    \label{thm:LemmaNonConvexFramework}
    Consider Algorithm~\ref{Alg:FrameworkDeterministic} with control variate~\eqref{eq:ControlVariate2}.
    Let the global model be
    updated by choosing an arbitrary local model with an index set
    $(i_r)_{r=0}^{+\infty}$. 
    Let $f_i : \R^d \to \R$ be continuously differentiable for any $i \in [n]$.
    Assume that $\{f_i\}$ have $\delta_B$-BHD. 
    Suppose that the solutions returned by local solvers satisfy $F_{i,r}(\xx_{i,r+1}) \le F_{i,r}(\xx^r)$ for any $i \in [n]$.
    Then for any $r \ge 0$, it holds that:
    \begin{equation}
        \label{eq:SufficientDecreaseNonconvex}
        f(\xx_{i,r+1}) 
        + \frac{\lambda - \delta_B}{2} ||\xx_{i,r+1} - \xx^r||^2 
        \le 
        f(\xx^r) \;.
    \end{equation}
    and thus:
    \begin{equation}
        \label{eq:SufficientDecreaseNonconvex2}
        f(\xx^{r+1}) 
        \le 
        f(\xx^r) \;.
    \end{equation}
\end{lemma}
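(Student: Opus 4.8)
The plan is to derive the per-client descent inequality \eqref{eq:SufficientDecreaseNonconvex} as an almost immediate consequence of Lemma~\ref{thm:TwoPointRelationNonConvexFramework}, which has already absorbed the only nontrivial ingredient, namely the $\delta_B$-BHD bound. First I would record the observation already noted in that lemma: under control variate~\eqref{eq:ControlVariate2}, writing $h_i := f - f_i$ so that $\hh_{i,r} = -\nabla h_i(\xx^r)$, the linear and quadratic terms of $F_{i,r}$ rearrange so that $F_{i,r}(\xx^r) - F_{i,r}(\yy)$ is exactly $f_i(\xx^r) + \langle \nabla h_i(\xx^r), \xx^r - \yy \rangle - f_i(\yy) - \frac{\lambda}{2}||\yy - \xx^r||^2$, i.e.\ the left-hand side of Lemma~\ref{thm:TwoPointRelationNonConvexFramework}.

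Next I would instantiate Lemma~\ref{thm:TwoPointRelationNonConvexFramework} at $\yy = \xx_{i,r+1}$, which gives
\[
    F_{i,r}(\xx^r) - F_{i,r}(\xx_{i,r+1})
    \le
    f(\xx^r) - f(\xx_{i,r+1}) - \frac{\lambda - \delta_B}{2}||\xx_{i,r+1} - \xx^r||^2 .
\]
Rearranging to isolate $f(\xx_{i,r+1})$ yields
\[
    f(\xx_{i,r+1}) + \frac{\lambda - \delta_B}{2}||\xx_{i,r+1} - \xx^r||^2
    \le
    f(\xx^r) + \bigl( F_{i,r}(\xx_{i,r+1}) - F_{i,r}(\xx^r) \bigr) .
\]
At this point I would invoke the sole assumption on the local solver, $F_{i,r}(\xx_{i,r+1}) \le F_{i,r}(\xx^r)$, so that the parenthesised term is non-positive and can be dropped, establishing \eqref{eq:SufficientDecreaseNonconvex}.

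Finally, for \eqref{eq:SufficientDecreaseNonconvex2} I would use that $\lambda \ge \delta_B$ (as enforced by the choice $\lambda = a\delta_B$ with $a > 1$ in the theorem that invokes this lemma), so the term $\frac{\lambda - \delta_B}{2}||\xx_{i,r+1} - \xx^r||^2$ is non-negative and may be discarded, giving $f(\xx_{i,r+1}) \le f(\xx^r)$ for every $i$. Since the arbitrary-model aggregation rule sets $\xx^{r+1} = \xx_{i_r,r+1}$ for some index $i_r$, specialising this bound to $i = i_r$ immediately gives $f(\xx^{r+1}) \le f(\xx^r)$.

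I do not anticipate a genuine obstacle: the substance of the argument, converting the $\delta_B$-BHD assumption into the three-point upper bound on $F_{i,r}(\xx^r) - F_{i,r}(\yy)$ in terms of $f$-values, was already carried out in Lemma~\ref{thm:TwoPointRelationNonConvexFramework}. The only point requiring care is bookkeeping the sign of $\lambda - \delta_B$: the first inequality \eqref{eq:SufficientDecreaseNonconvex} holds for arbitrary $\lambda \ge 0$, whereas the monotonicity conclusion \eqref{eq:SufficientDecreaseNonconvex2} relies on $\lambda \ge \delta_B$ to ensure the discarded quadratic term is non-negative.
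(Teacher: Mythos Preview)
Your proposal is correct and follows essentially the same route as the paper: instantiate Lemma~\ref{thm:TwoPointRelationNonConvexFramework} at $\yy = \xx_{i,r+1}$, drop the nonpositive $F_{i,r}(\xx_{i,r+1}) - F_{i,r}(\xx^r)$ using the local-solver assumption, and then specialise to $i = i_r$. You are in fact slightly more careful than the paper in flagging that \eqref{eq:SufficientDecreaseNonconvex2} relies on $\lambda \ge \delta_B$, which the lemma statement does not list explicitly but which is supplied by the calling theorem.
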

\begin{proof}
    According to Lemma~\ref{thm:TwoPointRelationNonConvexFramework} with
    $\yy = \xx_{i,r+1}$, we get:
    \begin{equation}
         F_{i,r}(\xx^r) - F_{i,r}(\xx_{i,r+1})  \le 
        f(\xx^r) - f(\xx_{i,r+1}) - \frac{\lambda - \delta_B}{2} ||\xx_{i,r+1} - \xx^r||^2 \;.
    \end{equation}
    By the assumption that 
    the function value decreases locally, we have that:
    \begin{equation}
        f(\xx_{i,r+1}) 
        + \frac{\lambda - \delta_B}{2} ||\xx_{i,r+1} - \xx^r||^2 
        \le 
        f(\xx^r) \;.
        \qedhere
    \end{equation}
    Since the previous display holds for any $i \in [n]$ and 
    $\xx^{r+1} = \xx_{i_{r},r+1}$, we have:
    $f(\xx^{r+1}) \le f(\xx^r)$.
\end{proof}

\begin{theorem}
    \label{thm:NonConvexFramework}
    Consider Algorithm~\ref{Alg:FrameworkDeterministic} with control variate~\eqref{eq:ControlVariate2}. Let the global model be
    updated by choosing an arbitrary local model with an index set
    $(i_r)_{r=0}^{+\infty}$. 
    Let $f_i : \R^d \to \R$ be continuously differentiable for any $i \in [n]$.
    Assume that $\{f_i\}$ have $\delta_B$-BHD. 
    Suppose that the solutions returned by local solvers satisfy $F_{i,r}(\xx_{i,r+1}) \le F_{i,r}(\xx^r)$ and
    $||\nabla F_{i,r}(\xx_{i,r+1})|| \le e_{r+1}$
    for any $r \ge 0$ and any $i \in [n]$.
    Let $\lambda = a\delta_B$ with $a > 1$.
    Then after $R$ communication rounds, we have:
    \begin{equation}
        ||\nabla f(\Bar{\xx}^R)||^2
        \le
        \frac{4(a+1)^2}{(a-1)} \frac{\delta_B (f(\xx^0) - f^\star)}{R}
        + 2\frac{1}{R}\sum_{r=1}^{R} e_r^2 \;.
    \end{equation}
    where $\Bar{\xx}^R = \arg\min_{\xx\in\{\xx^r\}_{r=0}^R} 
    \{||\nabla f(\xx)||\}$.
\end{theorem}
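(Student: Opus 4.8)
The plan is to follow the standard template for nonconvex first-order methods: combine a \emph{sufficient-decrease} inequality with an \emph{approximate-stationarity transfer} from the local subproblem to the global objective, then telescope and average. Both ingredients are already prepared in the preceding lemmas: Lemma~\ref{thm:LemmaNonConvexFramework} supplies the per-round decrease, and Lemma~\ref{thm:LowerBoundFir} converts near-stationarity of $F_{i,r}$ into near-stationarity of $f$ at the cost of a term proportional to the displacement.

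First I would exploit that the global update picks an actual local iterate, $\xx^{r+1} = \xx_{i_r,r+1}$. Instantiating Lemma~\ref{thm:LemmaNonConvexFramework} with this index and $\lambda = a\delta_B$ (so $\lambda - \delta_B = (a-1)\delta_B$) gives
$$\frac{(a-1)\delta_B}{2}||\xx^{r+1} - \xx^r||^2 \le f(\xx^r) - f(\xx^{r+1}).$$
Summing over $r = 0,\dots,R-1$ telescopes the right-hand side to $f(\xx^0) - f(\xx^R) \le f(\xx^0) - f^\star$, yielding the displacement bound $\sum_{r=0}^{R-1}||\xx^{r+1} - \xx^r||^2 \le \tfrac{2(f(\xx^0)-f^\star)}{(a-1)\delta_B}$.

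Next I would control the gradient of $f$ at each new iterate. Since $\xx^{r+1} = \xx_{i_r,r+1}$, the triangle inequality together with Lemma~\ref{thm:LowerBoundFir} (evaluated at $\xx = \xx^{r+1}$, $i = i_r$) and the assumed local accuracy $||\nabla F_{i_r,r}(\xx^{r+1})|| \le e_{r+1}$ gives, using $\delta_B + \lambda = (a+1)\delta_B$,
$$||\nabla f(\xx^{r+1})|| \le e_{r+1} + (a+1)\delta_B\, ||\xx^{r+1} - \xx^r||.$$
Squaring via $(\alpha+\beta)^2 \le 2\alpha^2 + 2\beta^2$, summing over $r$, and inserting the telescoped displacement bound produces
$$\sum_{r=1}^{R}||\nabla f(\xx^r)||^2 \le \frac{4(a+1)^2\, \delta_B (f(\xx^0)-f^\star)}{a-1} + 2\sum_{r=1}^{R} e_r^2.$$
Finally, because $\Bar{\xx}^R$ minimizes $||\nabla f(\cdot)||$ over the iterates, $||\nabla f(\Bar{\xx}^R)||^2$ is at most the average $\tfrac1R\sum_{r=1}^R||\nabla f(\xx^r)||^2$, which is exactly the claimed bound after dividing by $R$.

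The computations are routine; the one conceptual point — and the reason randomized (arbitrary-index) averaging is used here rather than standard averaging — is that the estimate is anchored at $\xx^{r+1} = \xx_{i_r,r+1}$, a genuine output of a single local solver, so that Lemma~\ref{thm:LowerBoundFir} applies verbatim and the $\delta_B$-BHD bound transfers local stationarity to global stationarity without the curvature error that averaging several distinct near-stationary points would introduce. The only thing to watch is keeping the index $i_r$ consistent between the decrease step and the gradient-transfer step.
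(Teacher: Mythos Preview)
Your proof is correct and follows essentially the same route as the paper: both combine Lemma~\ref{thm:LemmaNonConvexFramework} for the sufficient decrease with Lemma~\ref{thm:LowerBoundFir} to transfer approximate stationarity from $F_{i_r,r}$ to $f$, then telescope. The only cosmetic difference is the order of operations---the paper lower-bounds $||\xx^{r+1}-\xx^r||^2$ by $\tfrac{1}{2(\lambda+\delta_B)^2}||\nabla f(\xx^{r+1})||^2 - \tfrac{1}{(\lambda+\delta_B)^2}e_{r+1}^2$ and inserts this into the per-round decrease before summing, whereas you first telescope the total displacement and then bound $\sum||\nabla f(\xx^r)||^2$; the resulting constants are identical.
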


\begin{proof}
    According to Lemma~\ref{thm:LemmaNonConvexFramework}, 
    for any $r \ge 0$ and any $i \in [n]$,
    it holds that
    \begin{equation}
        f(\xx_{i,r+1}) 
        + \frac{\lambda - \delta_B}{2} ||\xx_{i,r+1} - \xx^r||^2 
        \le 
        f(\xx^r) \;.
    \end{equation}
    It remains to lower bound $||\xx_{i,r+1} - \xx^r||^2$. 
    Recall that the solution returned by the local solver satisfies
    $||\nabla F_{i,r}(\xx_{i,r+1})|| \le e_{r+1}$.
    Using Lemma~\ref{thm:LowerBoundFir} with $\xx = \xx_{i,r+1}$,
    we get:
    \begin{align}
        e_{r+1}
        \ge 
        ||\nabla F_{i,r}(\xx_{i,r+1})||
        \ge
        ||\nabla f(\xx_{i,r+1})|| - \delta_B ||\xx^r - \xx_{i,r+1}||
        - \lambda ||\xx_{i,r+1} - \xx^r|| \;.
    \end{align}
    It follows that:
    \begin{equation}
        ||\xx_{i,r+1} - \xx^r||^2
        \ge
        \biggl( \frac{||\nabla f(\xx_{i,r+1})||}{\delta_B + \lambda} 
        - \frac{e_{r+1}}{\delta_B + \lambda} \biggr)^2
        \stackrel{\eqref{eq:BasicInequality1}}{\ge}
        \frac{||\nabla f(\xx_{i,r+1})||^2}{2(\lambda + \delta_B)^2}
        - \frac{e_{r+1}^2}{(\lambda + \delta_B)^2} 
        \;.
    \end{equation}
    Plugging this bound into the previous display, we get, for any $i \in [n]$:
    \begin{equation}
        \frac{\lambda - \delta_B}{4(\lambda + \delta_B)^2}
        ||\nabla f(\xx_{i,r+1})||^2 
        \le f(\xx^r) - f(\xx_{i,r+1}) 
        +\frac{\lambda - \delta_B}{2(\lambda + \delta_B)^2} e_{r+1}^2
        \;.
    \end{equation}
    Substituting $\lambda = a \delta_B$, using the fact that $\xx^{r+1} = \xx_{i_r, r+1}$ and the fact that the previous display holds
    for any $i \in [n]$,
    we obtain:
    \begin{equation}
        \frac{a-1}{4(a + 1)^2\delta_B} 
        ||\nabla f(\xx^{r+1})||^2
        \le 
        f(\xx^r) - f(\xx^{r+1})
        +
        \frac{a-1}{2(a+1)^2} e_{r+1}^2\;.
    \end{equation}
    Summing up from $r=0$ to $r=R-1$,
    and dividing both sides by $R$, we get the claim.
\end{proof}

\begin{corollary}
    Consider Algorithm~\ref{Alg:FrameworkDeterministic} with control variate~\eqref{eq:ControlVariate2}. Let the global model be
    updated by choosing an arbitrary local model with an index set
    $(i_r)_{r=0}^{+\infty}$. 
    Let $f_i : \R^d \to \R$ be continuously differentiable 
    and $L$-smooth for any $i \in [n]$.
    Assume that $\{f_i\}$ have $\delta_B$-BHD.
    Suppose for any $r \ge 0$,
    each local solver runs the standard gradient descent 
    starting from $\xx^r$ until 
    $||\nabla F_{i,r} (\xx_{i,r+1})||^2 
    \le \frac{18 \delta_B (f(\xx^0) - f^\star)}{R}$
    is satisfied.
    Let $\lambda_r = 2\delta_B$. 
    After $R$ communication rounds, we have:
    \begin{equation}
        ||\nabla f(\Bar{\xx}^R)||^2
        \le 
        \frac{72 \delta_B (f(\xx^0) - f^\star)}{R} \;.
    \end{equation}
    where $\Bar{\xx}^R = \arg\min_{\xx\in\{\xx^r\}_{r=0}^R} 
    \{||\nabla f(\xx)||\}$.
    For any $r \ge 0$,
    the required number of local steps $K_r$ at 
    round $r$ is no more than
    $\Theta\Bigl( \frac{L}{\delta_B}R \Bigr)$.
\end{corollary}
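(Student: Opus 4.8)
The plan is to combine the general convergence guarantee of Theorem~\ref{thm:NonConvexFramework} with a standard non-convex gradient-descent rate for the inner subproblems. First I would instantiate Theorem~\ref{thm:NonConvexFramework} with $a=2$ (so that $\lambda=2\delta_B$) and with the uniform inner accuracy $e_r^2 = \frac{18\delta_B(f(\xx^0)-f^\star)}{R}$ that the stopping rule enforces. Since $\frac{4(a+1)^2}{a-1}=36$ and $\frac{1}{R}\sum_{r=1}^R e_r^2 = \frac{18\delta_B(f(\xx^0)-f^\star)}{R}$, the two terms of the theorem contribute $36$ and $2\cdot 18 = 36$, giving exactly the claimed bound $\frac{72\delta_B(f(\xx^0)-f^\star)}{R}$. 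To invoke the theorem I must also verify its two hypotheses, namely $F_{i,r}(\xx_{i,r+1})\le F_{i,r}(\xx^r)$ and $\|\nabla F_{i,r}(\xx_{i,r+1})\|\le e_{r+1}$: the latter is exactly the stopping criterion, and the former holds because gradient descent with step size $1/(L+\lambda)$ decreases the objective monotonically, so every iterate it produces (in particular the minimum-gradient-norm one returned) stays below $F_{i,r}(\xx^r)$.

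For the bound on $K_r$ I would first record that $F_{i,r}$ is $(L+\lambda)$-smooth, since $\nabla F_{i,r}$ differs from $\nabla f_i$ only by the constant shift $-\hh_{i,r}$ and the $\lambda$-Lipschitz term $\lambda(\xx-\xx^r)$; crucially it need not be convex, so the right tool is the standard non-convex rate $\min_{0\le k<K}\|\nabla F_{i,r}(\yy_k)\|^2 \le \frac{2(L+\lambda)(F_{i,r}(\xx^r)-F_{i,r}^\star)}{K}$, obtained by telescoping the descent inequality from the initial point $\yy_0=\xx^r$. The one quantity that still needs control is the initial subproblem suboptimality $F_{i,r}(\xx^r)-F_{i,r}^\star$. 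Here I would apply Lemma~\ref{thm:TwoPointRelationNonConvexFramework} with an arbitrary $\yy$ and $\lambda=2\delta_B$, which yields $F_{i,r}(\xx^r)-F_{i,r}(\yy)\le f(\xx^r)-f(\yy)-\frac{\delta_B}{2}\|\yy-\xx^r\|^2\le f(\xx^r)-f^\star$ for every $\yy$; taking the infimum over $\yy$ gives both that $F_{i,r}$ is bounded below and that $F_{i,r}(\xx^r)-F_{i,r}^\star\le f(\xx^r)-f^\star$. Finally the monotonicity~\eqref{eq:SufficientDecreaseNonconvex2} from Lemma~\ref{thm:LemmaNonConvexFramework} upgrades this to $f(\xx^r)-f^\star\le f(\xx^0)-f^\star$.

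Putting these together, reaching accuracy $\epsilon' = \frac{18\delta_B(f(\xx^0)-f^\star)}{R}$ requires $K_r = \Theta\bigl(\frac{(L+\lambda)(f(\xx^0)-f^\star)}{\epsilon'}\bigr) = \Theta\bigl(\frac{(L+2\delta_B)R}{\delta_B}\bigr)$ local steps, and since each $f_i$ is $L$-smooth we have $\delta_B\le 2L$, hence $L+2\delta_B=\Theta(L)$ and $K_r=\Theta(\frac{L}{\delta_B}R)$, as claimed. The main obstacle I anticipate is precisely the suboptimality estimate $F_{i,r}(\xx^r)-F_{i,r}^\star \le f(\xx^0)-f^\star$: a priori $F_{i,r}$ is only $(L+\lambda)$-smooth and possibly non-convex, so $F_{i,r}(\xx^r)-F_{i,r}^\star$ could be large or even the function unbounded below, and it is exactly the regularized drift-correction structure captured by Lemma~\ref{thm:TwoPointRelationNonConvexFramework} together with the choice $\lambda>\delta_B$ that ties the subproblem gap to the outer objective gap. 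Everything else is routine telescoping and the elementary inequality $\delta_B\le 2L$.
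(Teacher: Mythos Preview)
Your proposal is correct and follows essentially the same route as the paper: instantiate Theorem~\ref{thm:NonConvexFramework} with $a=2$ and $e_r^2=\frac{18\delta_B(f(\xx^0)-f^\star)}{R}$ to get the $72$ constant, then bound the inner GD cost via Lemma~\ref{thm:TwoPointRelationNonConvexFramework} (giving $F_{i,r}(\xx^r)-F_{i,r}^\star\le f(\xx^r)-f^\star$) combined with the monotonicity~\eqref{eq:SufficientDecreaseNonconvex2} from Lemma~\ref{thm:LemmaNonConvexFramework}. Your explicit verification of the descent hypothesis $F_{i,r}(\xx_{i,r+1})\le F_{i,r}(\xx^r)$ and the observation $\delta_B\le 2L$ are good additions that the paper leaves implicit.
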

\begin{proof}
    According to Theorem~\ref{thm:NonConvexFramework}, for any
    $R \ge 1$, we have:
    \begin{equation}
        ||\nabla f(\Bar{\xx}^R)||^2
        \le
        \frac{36 \delta_B (f(\xx^0) - f^\star)}{R}
        + 2\frac{1}{R}\sum_{r=1}^{R} e_r^2 \;.
    \end{equation}
    For any $1 \le r \le R$, let
    $e_r ^2 \le \frac{18 \delta_B (f(\xx^0) - f^\star)}{R}$.
    We further get:
    \begin{equation}
        ||\nabla f(\Bar{\xx}^R)||^2
        \le
        \frac{72 \delta_B (f(\xx^0) - f^\star)}{R} \;.
    \end{equation}
    We next estimate the number of local steps required to have
    $||\nabla F_{i,r} (\xx_{i,r+1})||^2 
    \le \frac{18 \delta_B (f(\xx^0) - f^\star)}{R}$.
    By running standard gradient descent on $F_{i,r}$ at each round $r$
    for $K_{i,r}$ steps and returning the point with the minimum gradient norm, 
    we get:
    \begin{equation}
        ||\nabla F_{i,r}(\xx_{i,r+1})||^2 
        \le 
        \frac{(\lambda + L) (F_{i,r}(\xx^r) - F_{i,r}^\star)}{K_{i,r}} \;.
    \end{equation}
    where $F_{i,r}^\star = \inf_x\{ F_{i,r}(\xx) \}$.
    According to Lemma~\ref{thm:TwoPointRelationNonConvexFramework},
    for any $\yy \in \R^d$, we have:
    \begin{align}
        F_{i,r}(\xx^r) -  F_{i,r}(\yy)
        &\le
        f(\xx^r) - f(\yy) - \frac{\lambda - \delta_B}{2} ||\yy - \xx^r||^2 
        \\
        &\le 
        f(\xx^r) - f(\yy) \le f(\xx^r) - f^\star \;.
    \end{align}
    It follows that:
    \begin{equation}
        F_{i,r}(\xx^r) - F_{i,r}^\star \le f(\xx^r) - f^\star \;.
    \end{equation}
    According to Lemma~\ref{thm:LemmaNonConvexFramework}, the function value
    monotonically decreases. Hence, we get:
    \begin{equation}
        F_{i,r}(\xx^r) - F_{i,r}^\star 
        \le 
        f(\xx^0) - f^\star, \qquad \text{and} \qquad
        ||\nabla F_{i,r} (\xx_{i,r+1})||^2 
        \le 
        \frac{(\lambda + L) (f(\xx^0) - f^\star)}{K_{i,r}} \;.
    \end{equation}
    Let 
    $\frac{(\lambda + L) (f(\xx^0) - f^\star}{K_{i,r}}
    \le \frac{18 \delta_B (f(\xx^0) - f^\star)}{R}$. 
    We obtain, for any $i \in [n]$:

    \begin{equation}
        K_{i,r} \ge \frac{\lambda + L}{18 \delta_B} R
        = \Theta\Bigl( \frac{L}{\delta_B}R \Bigr) \;.
    \end{equation}
    This concludes the proof.
\end{proof}

\subsection{Algorithm~\ref{Alg:FrameworkDeterministic} with a special control variate under strong convexity}
\label{sec:ProofsForControlVariate1}

In this section, we consider the following choice of control variate:
\begin{equation}
\hh_{i, r+1}
:=
m (\xx^{r+1} - \xx_{i, r+1})
+
\hh_{i, r}
\;.
\label{eq:ControlVariate1}
\end{equation}
with $m \ge 0$ and $\frac{1}{n} \sum_{i=1}^n \hh_{i, 0} = 0$.

\begin{lemma}
\label{thm:BasicRecurrenceControlVariate1}
    Consider Algorithm~\ref{Alg:FrameworkDeterministic} with the standard averaging and control variate~\eqref{eq:ControlVariate1} with
    $m = \lambda$. 
    Let $f_i : \R^d \to \R$ be $L$-smooth and $\mu$-convex with $\mu > 0$. 
    Suppose that $n \ge 2$ and that each local solver provides
    the exact solution, then for any $\xx \in \R^d$, we have:
    \begin{equation}
    \begin{split}
        &||\xx^{r+1} - \xx^\star||^2 
        +
        \frac{2 \mu L}{\lambda (\mu + L)} \Avg ||\xx_{i,r+1} - \xx^\star||^2
        + \frac{1}{\lambda^2} 
        \Avg ||\hh_{i,r+1} - \nabla f_i (\xx^\star)||^2 
        \le 
        ||\xx^r - \xx^\star||^2
        \\
        &\quad+ \frac{1}{\lambda^2} \Avg ||\hh_{i,r} - \nabla f_i(\xx^\star)||^2
        - \frac{1}{\lambda^2} \Avg ||\nabla f_i(\xx_{i,r+1}) - \hh_{i,r}||^2 
        - \frac{2}{\lambda (\mu + L)} 
        \Avg ||\nabla f_i (\xx_{i,r+1}) - \nabla f_i (\xx^\star)||^2 \;.
    \end{split}
    \end{equation}
\end{lemma}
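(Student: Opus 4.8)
The plan is to read the displayed inequality as a one-step descent of the Lyapunov function $\mathcal{L}_r := \|\xx^r-\xx^\star\|^2 + \frac{1}{\lambda^2}\Avg\|\hh_{i,r}-\nabla f_i(\xx^\star)\|^2$, and to show that the entire chain is an exact identity except for a single application of the smooth--strongly-convex bound~\eqref{eq:ConvexSmoothInnerProductLowerBound}. First I would record the invariant $\Avg\hh_{i,r}=0$ for all $r$, by induction: the base case is the assumption $\Avg\hh_{i,0}=0$, and the inductive step follows by averaging the update~\eqref{eq:ControlVariate1} with $m=\lambda$, since $\Avg(\xx^{r+1}-\xx_{i,r+1})=0$ under standard averaging. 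Because each $f_i$ is $\mu$-convex with $\mu>0$, the minimizer $\xx^\star$ exists and satisfies $\Avg\nabla f_i(\xx^\star)=\nabla f(\xx^\star)=0$; I would use this fact repeatedly.

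Next I would extract the consequences of exact local minimization. The optimality condition $\nabla F_{i,r}(\xx_{i,r+1})=0$ gives $\nabla f_i(\xx_{i,r+1})=\hh_{i,r}-\lambda(\xx_{i,r+1}-\xx^r)$, equivalently $\xx^r-\xx_{i,r+1}=\frac{1}{\lambda}(\nabla f_i(\xx_{i,r+1})-\hh_{i,r})$. Substituting this into~\eqref{eq:ControlVariate1} yields the clean relation $\hh_{i,r+1}=\nabla f_i(\xx_{i,r+1})+\lambda(\xx^{r+1}-\xx^r)$, and averaging the optimality condition together with $\Avg\hh_{i,r}=0$ and $\nabla f(\xx^\star)=0$ gives $\lambda(\xx^{r+1}-\xx^r)=-\bar\ww$, where I abbreviate $\ww_i:=\nabla f_i(\xx_{i,r+1})-\nabla f_i(\xx^\star)$ and $\bar\ww:=\Avg\ww_i$. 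Consequently $\hh_{i,r+1}-\nabla f_i(\xx^\star)=\ww_i-\bar\ww$ and $\xx^{r+1}-\xx^\star=(\xx^r-\xx^\star)-\frac{1}{\lambda}\bar\ww$.

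With these identities I would expand the two ``new-iterate'' terms on the left. The variance identity $\Avg\|\ww_i-\bar\ww\|^2=\Avg\|\ww_i\|^2-\|\bar\ww\|^2$ makes the $\frac{1}{\lambda^2}\Avg\|\hh_{i,r+1}-\nabla f_i(\xx^\star)\|^2$ term combine with the $\frac{1}{\lambda^2}\|\bar\ww\|^2$ arising from $\|\xx^{r+1}-\xx^\star\|^2$, so the $\|\bar\ww\|^2$ contributions cancel and only $-\frac{2}{\lambda}\lin{\xx^r-\xx^\star,\bar\ww}+\frac{1}{\lambda^2}\Avg\|\ww_i\|^2$ survives as the excess over $\|\xx^r-\xx^\star\|^2$. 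I would then process the cross term by writing $\xx^r-\xx^\star=(\xx^r-\xx_{i,r+1})+(\xx_{i,r+1}-\xx^\star)$ inside $\Avg\lin{\xx^r-\xx^\star,\ww_i}$: the piece with $\xx_{i,r+1}-\xx^\star$ is lower-bounded via~\eqref{eq:ConvexSmoothInnerProductLowerBound}, producing exactly the $\frac{2\mu L}{\lambda(\mu+L)}\Avg\|\xx_{i,r+1}-\xx^\star\|^2$ and $\frac{2}{\lambda(\mu+L)}\Avg\|\ww_i\|^2$ terms, while the piece with $\xx^r-\xx_{i,r+1}$ equals $\frac{1}{\lambda}\lin{\uu_i,\ww_i}$ with $\uu_i:=\nabla f_i(\xx_{i,r+1})-\hh_{i,r}$.

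The decisive observation, which I expect to be the crux, is that $\ww_i-\uu_i=\hh_{i,r}-\nabla f_i(\xx^\star)$ holds exactly; completing the square then gives $\Avg\|\ww_i\|^2-2\Avg\lin{\uu_i,\ww_i}=\Avg\|\hh_{i,r}-\nabla f_i(\xx^\star)\|^2-\Avg\|\uu_i\|^2$. Feeding this into the surviving terms converts $\frac{1}{\lambda^2}\Avg\|\ww_i\|^2-\frac{2}{\lambda^2}\Avg\lin{\uu_i,\ww_i}$ precisely into $\frac{1}{\lambda^2}\Avg\|\hh_{i,r}-\nabla f_i(\xx^\star)\|^2-\frac{1}{\lambda^2}\Avg\|\uu_i\|^2$, and recalling $\uu_i=\nabla f_i(\xx_{i,r+1})-\hh_{i,r}$ this is exactly the $\hh$-Lyapunov term together with the $-\frac{1}{\lambda^2}\Avg\|\nabla f_i(\xx_{i,r+1})-\hh_{i,r}\|^2$ term in the statement. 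Collecting everything yields the claim. The only genuine difficulty is the bookkeeping: keeping the $\uu_i$, $\ww_i$, and $\bar\ww$ terms aligned so that the square completes and the whole chain is an equality except for the single use of~\eqref{eq:ConvexSmoothInnerProductLowerBound}.
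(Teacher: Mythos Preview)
Your proposal is correct and follows essentially the same route as the paper's proof. The paper reaches the key intermediate expression $\|\xx^r-\xx^\star\|^2-\frac{2}{\lambda}\Avg\lin{\xx^r-\xx^\star,\ww_i}+\frac{1}{\lambda^2}\Avg\|\ww_i\|^2$ by applying the bias--variance identity~\eqref{eq:AverageOfSquaredDifference} directly to the vectors $\xx_{i,r+1}-\frac{1}{\lambda}(\hh_{i,r}-\nabla f_i(\xx^\star))$ (whose average is $\xx^{r+1}$), whereas you get there by first computing $\hh_{i,r+1}-\nabla f_i(\xx^\star)=\ww_i-\bar\ww$ and $\xx^{r+1}-\xx^\star=(\xx^r-\xx^\star)-\frac{1}{\lambda}\bar\ww$ explicitly and then cancelling the $\|\bar\ww\|^2$ terms; from that point on, the splitting of the cross term, the single application of~\eqref{eq:ConvexSmoothInnerProductLowerBound}, and the completing-the-square identity are identical in both arguments.
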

\begin{proof}
    Recall that 
    $
    \hh_{i, r+1}
    :=
    \lambda (\xx^{r+1} - \xx_{i, r+1})
    +
    \hh_{i, r}
    $ and thus we have:
    \begin{align}
        ||\xx^{r+1} - \xx^\star||^2 + \frac{1}{\lambda^2} 
        \Avg ||\hh_{i,r+1} - \nabla f_i (\xx^\star)||^2
        &\stackrel{\eqref{eq:ControlVariate1}}{=}
        ||\xx^{r+1} - \xx^\star||^2 + 
        \Avg ||\xx^{r+1} - \xx_{i,r+1} + \frac{1}{\lambda}
        (\hh_{i,r} - \nabla f_i(\xx^\star))||^2
        \\
        &\stackrel{\eqref{eq:AverageOfSquaredDifference}}{=}
        \Avg ||\xx_{i,r+1} - \xx^\star 
        - \frac{1}{\lambda} (\hh_{i,r} - \nabla f_i(\xx^\star))||^2\;.
    \end{align}
    where we use the fact that
    $
    \Avg [\xx_{i.r+1} - \frac{1}{\lambda} 
    (\hh_{i,r} - \nabla f_i(\xx^\star))] = \xx^{r+1}
    $.
    
    Note that 
    $
    \nabla F_{i,r} (\xx_{i,r+1}) 
    = 
    \mathbf{0}
    =
    \nabla f_i(\xx_{i,r+1}) - \hh_{i,r} + \lambda (\xx_{i,r+1} - \xx^r)$,
    it follows that:
    \begin{align}
        ||\xx^{r+1} - \xx^\star||^2 
        &+ \frac{1}{\lambda^2} 
        \Avg ||\hh_{i,r+1} - \nabla f_i (\xx^\star)||^2
        \\
        &= 
        \Avg \Bigl|\Bigl| 
        \xx^r - \xx^\star - \frac{1}{\lambda} 
        \bigl( \nabla f_i (\xx_{i,r+1}) - \nabla f_i(\xx^\star) \bigr)
        \Bigr|\Bigr|^2
        \label{eq:BasicRecurrenceOneControlVariate1}
    \end{align}

    We now upper bound the last display. Unrolling it gives:
    \begin{align}
        &\quad\; \Avg \Bigl|\Bigl| 
        \xx^r - \xx^\star - \frac{1}{\lambda} 
        \bigl( \nabla f_i (\xx_{i,r+1}) - \nabla f_i(\xx^\star) \bigr)
        \Bigr|\Bigr|^2 
        \\ 
        &= ||\xx^r - \xx^\star||^2 
        -\frac{2}{\lambda} \Avg 
        \lin{\xx^r - \xx^\star, \nabla f_i(\xx_{i,r+1}) - \nabla f_i(\xx^\star)}
        + \frac{1}{\lambda^2} 
        \Avg ||\nabla f_i (\xx_{i,r+1}) - \nabla f_i(\xx^\star)||^2 
        \label{eq:UnrollOneControlVariate1}
    \end{align}
    Note that the inner product can be lower bounded by:
    \allowdisplaybreaks{
    \begin{align}
        &\quad\; \lin{\xx^r - \xx^\star, \nabla f_i(\xx_{i,r+1}) - \nabla f_i(\xx^\star)}
        \\
        &= 
        \lin{\xx^r - \xx_{i,r+1}, \nabla f_i(\xx_{i,r+1}) - \nabla f_i(\xx^\star)}
        +
        \lin{\xx_{i,r+1} - \xx^\star, \nabla f_i(\xx_{i,r+1}) - \nabla f_i(\xx^\star)}
        \\
        &\stackrel{\eqref{eq:ConvexSmoothInnerProductLowerBound}}{\ge} 
        \frac{1}{\lambda} 
        \lin{ \nabla f_i(\xx_{i,r+1}) - \hh_{i,r},
        \nabla f_i(\xx_{i,r+1}) - \nabla f_i(\xx^\star) }
        +\frac{\mu L}{\mu + L} ||\xx_{i,r+1} - \xx^\star||^2 
        + \frac{1}{\mu + L} ||\nabla f_i (\xx_{i,r+1}) - \nabla f_i (\xx^\star)||^2
        \\
        &=
        \frac{1}{\lambda} ||\nabla f_i (\xx_{i,r+1}) - \nabla f_i(\xx^\star)||^2
        +
        \frac{1}{\lambda} 
        \lin{\nabla f_i(\xx^\star) - \hh_{i,r},
        \nabla f_i(\xx_{i,r+1}) - \nabla f_i(\xx^\star) }
        + \frac{\mu L}{\mu + L} ||\xx_{i,r+1} - \xx^\star||^2 
        \\ 
        &\quad\; 
        + \frac{1}{\mu + L} ||\nabla f_i (\xx_{i,r+1}) - \nabla f_i (\xx^\star)||^2
        \\
        &=
        \frac{1}{\lambda} ||\nabla f_i (\xx_{i,r+1}) - \nabla f_i(\xx^\star)||^2
        -\frac{1}{\lambda} ||\hh_{i,r} - \nabla f_i(\xx^\star)||^2
        +\frac{1}{\lambda} 
        \lin{\nabla f_i(\xx^\star) - \hh_{i,r}, \nabla f_i(\xx_{i,r+1}) - \hh_{i,r}}
        \\
        &\quad\; 
        + \frac{\mu L}{\mu + L} ||\xx_{i,r+1} - \xx^\star||^2 
        + \frac{1}{\mu + L} ||\nabla f_i (\xx_{i,r+1}) - \nabla f_i (\xx^\star)||^2
        \;.
        \label{eq:LowerBoundInnerProductControlVariate1}
    \end{align}}
    Plugging~\eqref{eq:LowerBoundInnerProductControlVariate1} 
    into~\eqref{eq:UnrollOneControlVariate1}, we obtain: 
    \begin{align}
        &\quad\; \Avg \Bigl|\Bigl| 
        \xx^r - \xx^\star - \frac{1}{\lambda} 
        \bigl( \nabla f_i (\xx_{i,r+1}) - \nabla f_i(\xx^\star) \bigr)
        \Bigr|\Bigr|^2 
        \\
        &\le 
        ||\xx^r - \xx^\star||^2
        -\frac{2 \mu L}{\lambda (\mu + L)} \Avg ||\xx_{i,r+1} - \xx^\star||^2
        -\biggl( \frac{1}{\lambda^2} + \frac{2}{\lambda (\mu + L)} \biggr)
        \Avg ||\nabla f_i (\xx_{i,r+1}) - \nabla f_i (\xx^\star)||^2
        \\
        & 
        + \frac{2}{\lambda^2} \Avg ||\hh_{i,r} - \nabla f_i(\xx^\star)||^2
        + \frac{2}{\lambda^2} \Avg 
        \lin{\hh_{i,r} - \nabla f_i(\xx^\star), \nabla f_i(\xx_{i,r+1}) - \hh_{i,r}} \;.
        \label{eq:UnrollTwoControlVariate1}
    \end{align}
    Further note that:
    \begin{align}
        &\quad\; - ||\nabla f_i (\xx_{i,r+1}) - \nabla f_i(\xx^\star)||^2
        + 2 
        \lin{\hh_{i,r} - \nabla f_i(\xx^\star), \nabla f_i(\xx_{i,r+1}) - \hh_{i,r}}
        \\ 
        &= -\Bigl( ||\hh_{i,r} - \nabla f_i(\xx^\star)||^2 
            + \lin{\nabla f_i(\xx_{i,r+1}) - \hh_{i,r}, 
            \nabla f_i(\xx_{i,r+1}) +\hh_{i,r} - 2\nabla f_i(\xx^\star)} \Bigr)
        \\
        &\quad\; 
        + 2 
        \lin{\hh_{i,r} - \nabla f_i(\xx^\star), \nabla f_i(\xx_{i,r+1}) - \hh_{i,r}}
        \\
        &=
        - ||\hh_{i,r} - \nabla f_i(\xx^\star)||^2 
        - ||\nabla f_i(\xx_{i,r+1}) - \hh_{i,r}||^2 \;.
    \end{align}
    Hence~\eqref{eq:UnrollTwoControlVariate1} can be simplified to:
    \begin{align}
        &\quad\; \Avg \Bigl|\Bigl| 
        \xx^r - \xx^\star - \frac{1}{\lambda} 
        \bigl( \nabla f_i (\xx_{i,r+1}) - \nabla f_i(\xx^\star) \bigr)
        \Bigr|\Bigr|^2 
        \\
        &\le 
        ||\xx^r - \xx^\star||^2
        -\frac{2 \mu L}{\lambda (\mu + L)} \Avg ||\xx_{i,r+1} - \xx^\star||^2
        + \frac{1}{\lambda^2} \Avg ||\hh_{i,r} - \nabla f_i(\xx^\star)||^2
        - \frac{1}{\lambda^2} \Avg ||\nabla f_i(\xx_{i,r+1}) - \hh_{i,r}||^2 
        \\
        &\quad\;
        - \frac{2}{\lambda (\mu + L)} 
        \Avg ||\nabla f_i (\xx_{i,r+1}) - \nabla f_i (\xx^\star)||^2 \;.
    \end{align}
    Plugging this upper bound into~\eqref{eq:BasicRecurrenceOneControlVariate1}
    and rearranging give the claim.
\end{proof}

\begin{corollary}
    Consider Algorithm~\ref{Alg:FrameworkDeterministic} with the standard averaging and control variate~\eqref{eq:ControlVariate1}
    with $m = \lambda = \sqrt{\mu L}$. 
    Let $f_i : \R^d \to \R$ be $L$-smooth and $\mu$-strongly convex with $\mu > 0$.
    Suppose that $n \ge 2$ and that each local solver provides the exact solution, then after $R$ communication rounds, we have:
    \begin{equation}
        \Phi^{R} 
        \le
        \biggl(
        1 - \frac{2\sqrt{\mu}}{\sqrt{L} + \frac{\mu}{\sqrt{L}} + 2\sqrt{\mu}} 
        \biggr)^R 
        \Phi^0 \;,
    \end{equation}
    where 
    $
    \Phi^r 
    := 
    || \xx^{r} - \xx^\star||^2 
       +\frac{(\mu + L)}{\mu L (\mu + L) + 2 \mu L \sqrt{\mu L}}
       \Avg ||\hh_{i,r} - \nabla f_i (\xx^\star) ||^2 
    $.
\end{corollary}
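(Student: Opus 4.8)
The plan is to prove the corollary by establishing a one-step linear contraction of the potential $\Phi^r$ and then unrolling it over $R$ rounds. Throughout I abbreviate $a_r := \|\xx^r - \xx^\star\|^2$ and $b_r := \Avg\|\hh_{i,r} - \nabla f_i(\xx^\star)\|^2$, and I write the three non-negative ``dissipation'' quantities produced by Lemma~\ref{thm:BasicRecurrenceControlVariate1} as $c_{r+1} := \Avg\|\xx_{i,r+1}-\xx^\star\|^2$, $d_{r+1} := \Avg\|\nabla f_i(\xx_{i,r+1}) - \hh_{i,r}\|^2$, and $e_{r+1} := \Avg\|\nabla f_i(\xx_{i,r+1})-\nabla f_i(\xx^\star)\|^2$. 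Substituting $\lambda = \sqrt{\mu L}$ (so that $\frac{1}{\lambda^2} = \frac{1}{\mu L}$ and $\frac{2\mu L}{\lambda(\mu+L)} = \frac{2\lambda}{\mu+L}$) into Lemma~\ref{thm:BasicRecurrenceControlVariate1} gives the descent inequality $a_{r+1} + \frac{1}{\mu L}b_{r+1} \le a_r + \frac{1}{\mu L}b_r - D$, where $D := \frac{2\lambda}{\mu+L}c_{r+1} + \frac{1}{\mu L}d_{r+1} + \frac{2}{\lambda(\mu+L)}e_{r+1}\ge 0$.

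First I would record the identity $1-\rho = \frac{\mu+L}{(\sqrt\mu+\sqrt L)^2}$ for the stated rate $\rho = \frac{2\sqrt{\mu L}}{(\sqrt\mu+\sqrt L)^2}$, and observe that the potential's weight $\beta = \frac{\mu+L}{\mu L(\sqrt\mu+\sqrt L)^2}$ satisfies $\beta \le \frac{1}{\mu L}$ (since $\mu+L\le(\sqrt\mu+\sqrt L)^2$), so that $\Phi^{r+1} = a_{r+1} + \beta b_{r+1} \le a_{r+1} + \frac{1}{\mu L}b_{r+1}$. Combining this with the descent inequality, the target contraction $\Phi^{r+1}\le(1-\rho)\Phi^r$ reduces to the single scalar-coefficient inequality $\rho\,a_r + \big(\frac{1}{\mu L} - (1-\rho)\beta\big)\,b_r \le D$, whose left-hand side lives at round $r$ and whose right-hand side lives at round $r+1$.

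The crux is therefore to bound the round-$r$ quantities $a_r$ and $b_r$ from above by the dissipation terms at round $r+1$. The key tool is the exact optimality relation $\hh_{i,r} = \nabla f_i(\xx_{i,r+1}) + \lambda(\xx_{i,r+1}-\xx^r)$ used in the proof of Lemma~\ref{thm:BasicRecurrenceControlVariate1}, which gives $\hh_{i,r} - \nabla f_i(\xx^\star) = [\nabla f_i(\xx_{i,r+1}) - \nabla f_i(\xx^\star)] + \lambda(\xx_{i,r+1}-\xx^r)$ and hence $b_r \le 2e_{r+1} + 2d_{r+1}$ after using $d_{r+1} = \lambda^2\Avg\|\xx_{i,r+1}-\xx^r\|^2$. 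For $a_r$ I would use the per-index identity $\xx^r - \xx^\star = (\xx_{i,r+1}-\xx^\star) + \frac{1}{\lambda}(\nabla f_i(\xx_{i,r+1}) - \hh_{i,r})$ together with $\mu$-strong convexity and $L$-smoothness of each $f_i$ (which tie $c_{r+1}$ and $e_{r+1}$ via $\mu^2 c_{r+1}\le e_{r+1}\le L^2 c_{r+1}$) to express a bound for $a_r$ as a non-negative combination of $c_{r+1}$, $d_{r+1}$, $e_{r+1}$. Plugging both bounds in and collecting coefficients reduces the whole claim to verifying one inequality in $\mu$ and $L$ at the specific value of $\rho$; the contraction then telescopes over $r=0,\dots,R-1$ to give $\Phi^R\le(1-\rho)^R\Phi^0$.

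The main obstacle will be this final constant matching. Crude applications of $\|u+v\|^2\le 2\|u\|^2+2\|v\|^2$ inflate the coefficients and break the required inequality: for instance, a factor $3$ in front of $c_{r+1}$ in the bound for $a_r$ would demand $3\rho\le\frac{2\lambda}{\mu+L}$, i.e.\ $3(\mu+L)\le(\sqrt\mu+\sqrt L)^2$, which is the false statement $\mu+L\le\sqrt{\mu L}$. Obtaining the exact rate therefore forces me to use the strong-convexity and co-coercivity estimates in their tight form and to apportion each dissipation term carefully between the $a_r$ and $b_r$ contributions, rather than bounding them separately; I expect this bookkeeping, not any conceptual step, to be where the difficulty lies.
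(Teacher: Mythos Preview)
Your reduction to the scalar inequality $\rho\,a_r + \gamma\,b_r \le D$ is correct, but the route you outline for verifying it is both different from the paper's and left incomplete at exactly the point where all the work lies.

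The paper does \emph{not} attempt to bound the round-$r$ quantities $a_r,b_r$ above by the dissipation terms at round $r+1$. It works in the opposite direction, and for general $\lambda$ rather than fixing $\lambda=\sqrt{\mu L}$ up front. Starting from Lemma~\ref{thm:BasicRecurrenceControlVariate1}, it (i) uses $c_{r+1}\ge a_{r+1}$ (Jensen) to convert the $c_{r+1}$ term into an extra copy of $a_{r+1}$ on the left, and (ii) expands
\[
\|\nabla f_i(\xx_{i,r+1})-\nabla f_i(\xx^\star)\|^2
=\|\hh_{i,r}-\nabla f_i(\xx^\star)\|^2
+2\lin{\hh_{i,r}-\nabla f_i(\xx^\star),\,\nabla f_i(\xx_{i,r+1})-\hh_{i,r}}
+\|\nabla f_i(\xx_{i,r+1})-\hh_{i,r}\|^2,
\]
then applies a single Young inequality with parameter $\beta_r$ to the cross term, giving $e_{r+1}\ge(1-\tfrac{1}{\beta_r})\,b_r+(1-\beta_r)\,d_{r+1}$. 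Choosing $\beta_r=\tfrac{\mu+L}{2\lambda}+1$ annihilates the $d_{r+1}$ coefficient, so the negative $-\tfrac{2}{\lambda(\mu+L)}e_{r+1}$ becomes a clean shrink of $b_r$. After normalizing by $1+\tfrac{2\mu L}{\lambda(\mu+L)}$ one obtains $\Phi^{r+1}\le\max\{q_1(\lambda),q_2(\lambda)\}\,\Phi^r$ with two explicit factors, one increasing and one decreasing in $\lambda$; equalizing them yields $\lambda^\star=\sqrt{\mu L}$ and the stated rate. The Lyapunov weight $\beta$ is not guessed in advance but falls out of the normalization.

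Two concrete issues with your plan. First, your opening move $\Phi^{r+1}\le a_{r+1}+\tfrac{1}{\mu L}b_{r+1}$ discards $(\tfrac{1}{\mu L}-\beta)\,b_{r+1}>0$; the paper never makes this concession, and its normalization step is precisely what turns $\tfrac{1}{\lambda^2}$ into $\beta$ without loss. Second, your bound $b_r\le 2d_{r+1}+2e_{r+1}$ throws away the cross term $-2\Avg\lin{\nabla f_i(\xx_{i,r+1})-\nabla f_i(\xx^\star),\,\nabla f_i(\xx_{i,r+1})-\hh_{i,r}}$, whose sign and size are exactly what the paper exploits via the identity above. You correctly diagnose that crude squaring breaks the constants, but you do not propose a replacement; the paper's replacement is to lower-bound $e_{r+1}$ in terms of $b_r$ (not upper-bound $b_r$ in terms of $e_{r+1}$), which is the direction that keeps the single tunable parameter and makes the coefficients close.
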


\begin{proof}
    According to Lemma~\ref{thm:BasicRecurrenceControlVariate1}, 
    for any $r \ge 0$, we have:
    \begin{equation}
    \begin{split}
        &||\xx^{r+1} - \xx^\star||^2 
        +
        \frac{2 \mu L}{\lambda (\mu + L)} \Avg ||\xx_{i,r+1} - \xx^\star||^2
        + \frac{1}{\lambda^2} 
        \Avg ||\hh_{i,r+1} - \nabla f_i (\xx^\star)||^2 
        \le 
        ||\xx^r - \xx^\star||^2
        \\
        &\quad+ \frac{1}{\lambda^2} \Avg ||\hh_{i,r} - \nabla f_i(\xx^\star)||^2
        - \frac{1}{\lambda^2} \Avg ||\nabla f_i(\xx_{i,r+1}) - \hh_{i,r}||^2 
        - \frac{2}{\lambda (\mu + L)} 
        \Avg ||\nabla f_i (\xx_{i,r+1}) - \nabla f_i (\xx^\star)||^2 \;.
    \end{split}
    \end{equation}
    It remains to upper bound the last two terms. Note that:
    \begin{align}
        &\quad\; ||\nabla f_i (\xx_{i,r+1}) - \nabla f_i (\xx^\star)||^2
        \\
        &=
        ||\hh_{i,r} - \nabla f_i(\xx^\star)||^2
        +
        2\lin{\hh_{i,r} - \nabla f_i(\xx^\star), \nabla f_i(\xx_{i,r+1}) - \hh_{i,r}}
        +
        ||\nabla f_i(\xx_{i,r+1}) - \hh_{i,r}||^2
        \\
        &\stackrel{\eqref{eq:BasicInequality1}}{\ge}
        (1 - 2 \frac{1}{2 \beta_r})||\hh_{i,r} - \nabla f_i(\xx^\star)||^2
        + (1 -2 \frac{\beta_r}{2}) ||\nabla f_i(\xx_{i,r+1}) - \hh_{i,r}||^2 \;.
    \end{align}
    It follows that:
    \begin{align}
        &\quad\;
        - \frac{1}{\lambda^2} \Avg ||\nabla f_i(\xx_{i,r+1}) - \hh_{i,r}||^2 
        - \frac{2}{\lambda (\mu + L)} 
        \Avg ||\nabla f_i (\xx_{i,r+1}) - \nabla f_i (\xx^\star)||^2
        \\
        &\le 
        - \frac{2}{\lambda (\mu + L)}(1 - \frac{1}{\beta_r})
        \Avg ||\hh_{i,r} - \nabla f_i(\xx^\star)||^2 
        + \biggl[
        \frac{2}{\lambda (\mu + L)}(\beta_r - 1) - \frac{1}{\lambda^2} 
        \biggr]
        \Avg ||\nabla f_i(\xx_{i,r+1}) - \hh_{i,r}||^2 \;.
    \end{align}
    Let the second coefficient be zero. We get the largest possible 
    $\beta_r = \frac{\mu + L}{2 \lambda} + 1$. 
    We obtain the main recurrence:
    \begin{align}
        ||\xx^{r+1} - \xx^\star||^2 
        &+\frac{(\mu + L)}{\lambda^2 (\mu + L) + 2 \mu L \lambda}
        \Avg ||\hh_{i,r+1} - \nabla f_i (\xx^\star)||^2 
        \le 
        \\
        &\max\Bigl\{
        \frac{\lambda (\mu + L)}{\lambda (\mu + L) + 2 \mu L},
        1 - \frac{2\lambda}{2\lambda + \mu + L}
        \Bigr\}
        \biggl(
        ||\xx^{r} - \xx^\star||^2 
        +\frac{(\mu + L)}{\lambda^2 (\mu + L) + 2 \mu L \lambda}
        \Avg ||\hh_{i,r} - \nabla f_i (\xx^\star)||^2 
        \biggr)\;.
    \end{align}
    The left component of the contraction factor is increasing in $\lambda$
    while the right component is decreasing in $\lambda$. Therefore, it is clear 
    that the best $\lambda$ is such that the left and the right components 
    are equal. This gives us exactly $\lambda^\star = \sqrt{\mu L}$ and
    $
    \frac{\lambda^\star (\mu + L)}{\lambda^\star (\mu + L) + 2 \mu L} 
    = 
    1 - \frac{2\sqrt{\mu}}{\sqrt{L} + \frac{\mu}{\sqrt{L}} + 2\sqrt{\mu}}.
    $
\end{proof}

 In the context of Algorithm~\ref{Alg:GDLocalSolver}, control Variate~\eqref{eq:ControlVariate1} becomes:
\begin{equation}
    \hh_{i, k+1}
    :=
    m_k (\xx_{i, k+1} - \xx_{i, k+1})
    +
    \hh_{i, k}
    \;.
    \label{eq:ControlVariateLocal1}
\end{equation}
with $m_k \ge 0$ and $\frac{1}{n}\sum_{i=1}^n \hh_{i,0} = 0$.

\subsection{Convex result and proof for Algorithm~\ref{Alg:FedRed} with 
            control variate~\eqref{eq:ControlVariateLocal2}}
The proofs in this section are similar to the ones for Algorithm~\ref{Alg:GDLocalSolver}.
\begin{theorem}
    Consider Algorithm~\ref{Alg:FedRed} with control 
    variate~\eqref{eq:ControlVariateLocal2} and the standard
    averaging.
    Let $f_i : \R^d \to \R$ be continuously differentiable, 
    $\mu$-convex with $\mu \ge 0$ for any $i \in [n]$. 
    Assume that $\{f_i\}$ have $\delta_A$-AHD.
    By choosing $p = \frac{\lambda + \mu/2}{\eta + \mu/2}$ and
                $\eta \ge \lambda \ge \delta_A$, 
    for any $K \ge 1$, it holds that:
    \begin{equation}
        \E[ f(\Bar{\Bar{\xx}}_{K}) - f^\star] 
        + 
        \frac{\mu}{4}
        \E\biggl[ 
        ||\Tilde{\xx}_{K} - \xx^\star||^2
        + 
        \frac{1}{n} \sum_{i=1}^n
        ||\xx_{i,K} - \xx^\star||^2
        \biggr]
        \le 
        \frac{\mu}{2} \frac{||\xx_0 - \xx^\star||^2 }{(1 + \frac{\mu}{2\eta})^K - 1}
        \le
        \frac{\eta ||\xx_0 - \xx^\star||^2}{K}
        \;.
    \end{equation}
    where  
    $
    \Bar{\Bar\xx}_{K} 
    :=
    \sum_{k=1}^K \frac{1}{q^k} \Bar{\xx}_k / \sum_{k=1}^K \frac{1}{q^k}$,
    $\Bar{\xx}_k := \Avg \xx_{i,k}$,
    and $q := 1 - \frac{\mu}{2\eta + \mu}$.
\end{theorem}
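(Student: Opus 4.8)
The plan is to reduce everything to a single contraction inequality for the Lyapunov function $\Phi_k := \|\Tilde{\xx}_k - \xx^\star\|^2 + \Avg\|\xx_{i,k} - \xx^\star\|^2$ and then telescope it with Lemma~\ref{thm:StrongConvexityRecurrence}. First I would record the per-round subproblem inequality, mirroring the \algname{DANE+} analysis (Lemma~\ref{thm:BasicRecurrenceControlVariate2}) but keeping the extra $\eta$-regularizer. Since $F_{i,k}$ is $(\mu+\eta+\lambda)$-strongly convex and the exact solver returns its minimizer $\xx_{i,k+1}$, so that $\nabla F_{i,k}(\xx_{i,k+1})=0$, strong convexity~\eqref{df:stconvex} gives $F_{i,k}(\xx^\star)\ge F_{i,k}(\xx_{i,k+1})+\tfrac{\mu+\eta+\lambda}{2}\|\xx_{i,k+1}-\xx^\star\|^2$. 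Averaging over $i$, using $\Avg\hh_{i,k}=0$, and lower-bounding $\Avg f_i(\xx_{i,k+1})$ by $\mu$-convexity centered at $\Bar{\xx}_{k+1}:=\Avg\xx_{i,k+1}$ (so that $f(\Bar{\xx}_{k+1})$ appears), the inner-product terms collapse — exactly as in the \algname{DANE+} proof, using $\Avg\xx_{i,k+1}=\Bar{\xx}_{k+1}$ and the control variate~\eqref{eq:ControlVariateLocal2} — into the single drift term $\Avg\lin{\nabla h_i(\Bar{\xx}_{k+1})-\nabla h_i(\Tilde{\xx}_k),\,\xx_{i,k+1}-\Bar{\xx}_{k+1}}$ with $h_i:=f_i-f$.

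Next I would control this drift term. Applying Young's inequality~\eqref{eq:BasicInequality1} with parameter $\lambda$, the averaged Hessian dissimilarity~\eqref{eq:HessianSimilarity}, and the split $\tfrac{\lambda}{2}\Avg\|\xx_{i,k+1}-\Tilde{\xx}_k\|^2=\tfrac{\lambda}{2}\|\Bar{\xx}_{k+1}-\Tilde{\xx}_k\|^2+\tfrac{\lambda}{2}\Avg\|\xx_{i,k+1}-\Bar{\xx}_{k+1}\|^2$ from~\eqref{eq:AverageOfSquaredDifference}, the negative drift contribution is absorbed precisely when $\lambda\ge\delta_A$, leaving $\tfrac{\lambda}{2}(1-\delta_A^2/\lambda^2)\|\Bar{\xx}_{k+1}-\Tilde{\xx}_k\|^2\ge0$. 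Dropping the remaining nonnegative squared terms yields the clean one-step recurrence $f(\Bar{\xx}_{k+1})-f^\star+\tfrac{\mu+\eta+\lambda}{2}W_{k+1}\le\tfrac{\eta}{2}W_k+\tfrac{\lambda}{2}V_k$, where $V_k:=\|\Tilde{\xx}_k-\xx^\star\|^2$ and $W_k:=\Avg\|\xx_{i,k}-\xx^\star\|^2$.

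I would then inject the randomized communication. Conditioning on the state at step $k$, the only randomness is $\theta_k$, and $\E_{\theta_k}V_{k+1}=p\|\Bar{\xx}_{k+1}-\xx^\star\|^2+(1-p)V_k\le pW_{k+1}+(1-p)V_k$ (using $\|\Bar{\xx}_{k+1}-\xx^\star\|^2\le W_{k+1}$). Substituting the bound on $W_{k+1}$ from the recurrence into $\E_{\theta_k}[V_{k+1}+W_{k+1}]\le(1+p)W_{k+1}+(1-p)V_k$, the coefficients of $W_k$ and of $V_k$ both collapse to $q:=\tfrac{2\eta}{2\eta+\mu}$ exactly because $p=\tfrac{\lambda+\mu/2}{\eta+\mu/2}$; this is the crux, and it is what forces the two distance terms to be weighted \emph{equally} in $\Phi_k$. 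The outcome is the master inequality $\tfrac{4}{2\eta+\mu}\,\E_{\theta_k}[f(\Bar{\xx}_{k+1})-f^\star]+\E_{\theta_k}\Phi_{k+1}\le q\,\Phi_k$. I expect this coefficient-matching step — reconciling the drift control with the communication expectation so that $V$ and $W$ contract at a common rate — to be the main obstacle, since a mismatched Lyapunov weighting leaves an unabsorbed $V_k$ term.

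Finally I would telescope. Taking total expectations and applying Lemma~\ref{thm:StrongConvexityRecurrence} with $F_k=\tfrac{4}{2\eta+\mu}\E[f(\Bar{\xx}_k)-f^\star]$, $D_k=\E\Phi_k$, and $\Phi_0=2\|\xx_0-\xx^\star\|^2$ gives $\sum_{k=1}^K q^{-k}F_k+q^{-K}\E\Phi_K\le D_0$. Dividing by $S_K=\sum_{k=1}^K q^{-k}$, invoking Jensen's inequality pathwise for the convex combination $\Bar{\Bar\xx}_K$, using $S_Kq^K=\tfrac{1-q^K}{1-q}$ with $1-q=\tfrac{\mu}{2\eta+\mu}$ and $1/q=1+\tfrac{\mu}{2\eta}$, and bounding $\tfrac{1}{1-q^K}\ge1$ to replace the $\E\Phi_K$-coefficient $\tfrac{\mu}{4(1-q^K)}$ by $\tfrac{\mu}{4}$, delivers the stated bound $\E[f(\Bar{\Bar\xx}_K)-f^\star]+\tfrac{\mu}{4}\E\Phi_K\le\tfrac{\mu}{2}\|\xx_0-\xx^\star\|^2/((1+\tfrac{\mu}{2\eta})^K-1)$. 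The last inequality $\le\eta\|\xx_0-\xx^\star\|^2/K$ follows from $(1+\tfrac{\mu}{2\eta})^K-1\ge K\tfrac{\mu}{2\eta}$, and the degenerate case $\mu=0$ (where $q=1$) is handled by taking the limit, consistent with the uniform averaging $\Bar{\Bar\xx}_K=\tfrac1K\sum_k\Bar{\xx}_k$.
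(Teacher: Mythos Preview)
Your proposal is correct and follows essentially the same approach as the paper: derive the one-step inequality from strong convexity of $F_{i,k}$ and convexity of $f_i$ at $\Bar{\xx}_{k+1}$, absorb the drift term via $\delta_A$-AHD with $\lambda\ge\delta_A$, then use the choice $p=\tfrac{\lambda+\mu/2}{\eta+\mu/2}$ to obtain a $q$-contraction of the Lyapunov function $\|\Tilde{\xx}_k-\xx^\star\|^2+\Avg\|\xx_{i,k}-\xx^\star\|^2$ and telescope with Lemma~\ref{thm:StrongConvexityRecurrence}. The only cosmetic difference is bookkeeping in the communication step: the paper first splits $\tfrac{\eta+\lambda+\mu}{2}W_{k+1}$ into $\tfrac{\eta+\mu/2}{2}W_{k+1}+\tfrac{\lambda+\mu/2}{2}\|\Bar{\xx}_{k+1}-\xx^\star\|^2$ and then uses the exact identity for $\E_{\theta_k}\|\Tilde{\xx}_{k+1}-\xx^\star\|^2$, whereas you bound $\|\Bar{\xx}_{k+1}-\xx^\star\|^2\le W_{k+1}$ and rescale the recurrence by $\tfrac{2(1+p)}{\mu+\eta+\lambda}$; both routes collapse to the identical master inequality.
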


\begin{proof}
    Recall that the updates of Algorithm~\ref{Alg:GDLocalSolver}
    satisfies:
    \begin{equation}
        \xx_{i,k+1}
        =\argmin\{f_i(\xx)
        -
        \lin{ \xx, \hh_{i,k} }
        +
        \frac{\eta}{2}||\xx-\xx_{i,k}||^2
        + \frac{\lambda}{2}||\xx-\Tilde{\xx}_k||^2\} \;.
    \end{equation}
    Using strong convexity, we get:
    \begin{align}
        f_i(\xx^\star)
        -
        \lin{ \xx^\star, \hh_{i,k} }
        +
        \frac{\eta}{2}||\xx^\star-\xx_{i,k}||^2
        + \frac{\lambda}{2}||\xx^\star-\Tilde{\xx}_k||^2
        \stackrel{\eqref{df:stconvex}}{\ge}
        &f_i(\xx_{i,k+1})
        -
        \lin{ \xx_{i,k+1}, \hh_{i,k} }
        +
        \frac{\eta}{2}||\xx_{i,k+1}-\xx_{i,k}||^2
        \\
        &+ \frac{\lambda}{2}||\xx_{i,k+1}-\Tilde{\xx}_k||^2
        +\frac{\eta + \lambda + \mu}{2}
        ||\xx_{i,k+1} - \xx^\star||^2 \;.
    \end{align}
    By convexity of $f_i$, we further get:
    \begin{align}
        f_i(\xx^\star)
        -
        \lin{ \xx^\star, \hh_{i,k} }
        +
        \frac{\eta}{2}||\xx^\star-\xx_{i,k}||^2
        &+ \frac{\lambda}{2}||\xx^\star-\Tilde{\xx}_k||^2
        \stackrel{\eqref{df:stconvex}}{\ge}
        f_i(\Bar{\xx}_{k+1})
        +
        \lin{\nabla f_i(\Bar{\xx}_{k+1}), \xx_{i,k+1} - \Bar{\xx}_{k+1})}
        -
        \lin{ \xx_{i,k+1}, \hh_{i,k} }
        \\
        &+
        \frac{\eta}{2}||\xx_{i,k+1}-\xx_{i,k}||^2
        + \frac{\lambda}{2}||\xx_{i,k+1}-\Tilde{\xx}_k||^2
        +\frac{\eta + \lambda + \mu}{2}
        ||\xx_{i,k+1} - \xx^\star||^2 \;.
    \end{align} 
    Taking the average on both sides over $i=1$ to $n$, we get:
    \begin{equation}
    \begin{split}
        f(\xx^\star) + \frac{\eta}{2} \Avg ||\xx_{i,k} &- \xx^\star||^2
        + \frac{\lambda}{2} ||\Bar{\xx}_k - \xx^\star||^2
        \ge
        f(\Bar{\xx}_{k+1})
        +\Avg \biggl[
        \lin{\nabla f_i(\Bar{\xx}_{k+1}), \xx_{i,k+1} - \Bar{\xx}_{k+1}}
        -
        \lin{ \xx_{i,k+1}, \hh_{i,k} }
        \biggr]
        \\
        &+ \Avg \frac{\eta}{2}||\xx_{i,k+1}-\xx_{i,k}||^2
        + \Avg \frac{\lambda}{2}||\xx_{i,k+1}-\Tilde{\xx}_k||^2
        + \Avg \frac{\eta + \lambda + \mu}{2}
            ||\xx_{i,k+1} - \xx^\star||^2 \;.
    \end{split}
    \end{equation}
    Note that: 
    $\Avg
        \lin{\nabla f(\Bar{\xx}_{k+1}), \xx_{i,k+1} 
        - \Bar{\xx}_{k+1}}
        =
        \Avg \lin{ \Bar{\xx}_{k+1}, \hh_{i,k} } $.
    Let $h_i := f_i - f$.
    It follows that:
    \begin{align}
        &\quad \Avg \biggl[
        \lin{\nabla f_i(\Bar{\xx}_{k+1}), \xx_{i,k+1} - \Bar{\xx}_{k+1}}
        -
        \lin{ \xx_{i,k+1}, \hh_{i,k} }
        \biggr]
        +
        \Avg \frac{\lambda}{2}||\xx_{i,k+1}-\Tilde{\xx}_k||^2
        \\
        &\stackrel{\eqref{eq:AverageOfSquaredDifference}}{=}
        \Avg\biggl[
        \lin{\nabla h_i(\Bar{\xx}_{k+1}) - \nabla h_i(\Tilde{\xx}_{k}), 
        \xx_{i,k+1} - \Bar{\xx}_{k+1}}
        +
        \frac{\lambda}{2}||\xx_{i,k+1}-\Bar{\xx}_{k+1}||^2
        +
        \frac{\lambda}{2}||\Bar{\xx}_{k+1}-\Tilde{\xx}_k||^2
        \biggr]
        \\
        &\stackrel{\eqref{eq:QuadraticLowerBound}}{\ge}
        \Avg \biggl[
        -\frac{1}{2\lambda} ||\nabla h_i(\Bar{\xx}_{k+1}) 
        - \nabla h_i(\Tilde{\xx}_{k})||^2
        + \frac{\lambda}{2}||\Bar{\xx}_{k+1}-\Tilde{\xx}_k||^2
        \biggr]
        \\
        &\stackrel{\eqref{eq:HessianSimilarity}}{\ge}
        \Avg \biggl[
        -\frac{\delta_A^2}{2\lambda} ||\Bar{\xx}_{k+1}-\Tilde{\xx}_k||^2
        + \frac{\lambda}{2}||\Bar{\xx}_{k+1}-\Tilde{\xx}_k||^2
        \biggr] \ge 0\;.
    \end{align}
    where in the last inequality, we use the fact that $\lambda \ge \delta_A$.

    The main recurrence is then simplified as 
    (after dropping the non-negative $\frac{\eta}{2} \Avg ||\xx_{i,k+1} - \xx_{i,k}||^2$):
    \begin{align}
        f(\xx^\star) + \frac{\eta}{2} \Avg ||\xx_{i,k} - \xx^\star||^2
        + \frac{\lambda}{2} ||\Bar{\xx}_k - \xx^\star||^2
        &\ge
        f(\Bar{\xx}_{k+1})
        + \Avg \frac{\eta}{2}||\xx_{i,k+1}-\xx_{i,k}||^2
        + \Avg \frac{\eta + \lambda + \mu}{2}
            ||\xx_{i,k+1} - \xx^\star||^2 
        \nonumber
        \\
        &\stackrel{\eqref{eq:AverageOfSquaredDifference}}{\ge} 
        f(\Bar{\xx}_{k+1})
        + \frac{\eta + \mu / 2}{2} \Avg 
            ||\xx_{i,k+1} - \xx^\star||^2 
        + \frac{\lambda + \mu / 2}{2}  
            ||\Bar{\xx}_{k+1} - \xx^\star||^2 
            \;.
    \end{align}
    Taking expectation w.r.t $\theta_k$, we have:
    \begin{equation}
        \frac{\lambda + \mu/2}{2p} 
        \E_{\theta_k}[ ||\Tilde{\xx}_{k+1} - \xx^\star||^2 ]
        =
        \frac{\lambda + \mu/2}{2} ||\Bar{\xx}_{k+1} - \xx^\star||^2
        +
        \frac{\lambda + \mu/2}{2}(\frac{1}{p} - 1) 
        ||\Tilde{\xx}_k - \xx^\star||^2 \;,
    \end{equation} 
    Adding both sides by $\frac{\lambda + \mu/2}{2}(\frac{1}{p} - 1) 
        ||\Tilde{\xx}_k - \xx^\star||^2$ and taking the expectation
        w.r.t $\theta_k$, we get:
        \begin{equation}
        \begin{split}
            f(\xx^\star) 
            + \frac{\eta}{2} \Avg ||\xx_{i,k} - \xx^\star||^2
            + \Bigl( \frac{\lambda + \mu/2}{2p} - \frac{\mu}{4} \Bigr)
            ||\Bar{\xx}_k - \xx^\star||^2
            \ge
            \E_{\theta_k}\Bigl[ 
            f(\Bar{\xx}_{k+1}) 
            &+ \frac{\eta + \mu/2}{2}  \Avg ||\xx_{i,k+1} - \xx^\star||^2 
            \\
            &+ \frac{\lambda + \mu/2}{2p} 
                ||\Tilde{\xx}_{k+1} - \xx^\star||^2 
            \Bigr] \;.
        \end{split}
        \end{equation}
        By our choice of $p = \frac{\lambda + \mu/2}{\eta + \mu/2}$ with 
        $\eta \ge \lambda$,  we get:
        \begin{equation}
            \frac{\eta + \mu/2}{2} = \frac{\lambda + \mu/2}{2p}, \quad
            \frac{\lambda + \mu/2}{2p} - \frac{\mu}{4} = \frac{\eta}{2} \;.
        \end{equation}
        Taking the full expectation of the main recurrence, 
        dividing both sides by $\frac{\eta + \mu/2}{2}$,
        applying Lemma~\ref{thm:StrongConvexityRecurrence},
        using the convexity of $f$,
        and multiplying both sides by $\frac{\eta + \mu/2}{2}$,
        we get:
        \begin{equation}
        \E[ f(\Bar{\Bar{\xx}}_{K}) - f^\star] 
        + 
        \frac{\mu}{4}
        \E\biggl[ 
        ||\Tilde{\xx}_{K} - \xx^\star||^2
        + 
        \frac{1}{n} \sum_{i=1}^n
        ||\xx_{i,K} - \xx^\star||^2
        \biggr]
        \le 
        \frac{\mu}{2} \frac{||\xx_0 - \xx^\star||^2 }{(1 + \frac{\mu}{2\eta})^K - 1}
        \;.
        \qedhere
    \end{equation}
\end{proof}

\subsection{Non-convex result and proof for Algorithm~\ref{Alg:FedRed} with 
            control variate~\eqref{eq:ControlVariateLocal2}}

\begin{theorem}
    Consider Algorithm~\ref{Alg:FedRed} with control 
    variate~\eqref{eq:ControlVariateLocal2} and randomized averaging
    with random index set $\{i_k\}_{k=0}^{+\infty}$.
    Let $f_i : \R^d \to \R$ be continuously differentiable for any $i \in [n]$. 
    Assume that $\{f_i\}$ have $\delta_B$-BHD.
    Let $\lambda = \delta_B$, $p = \frac{\lambda}{\eta}$ and 
    $\eta \ge 4 \delta_B$.
    For any $K \ge 1$, it holds that:
    \begin{equation}
        \E\Bigl[ ||\nabla f(\Bar{\xx}_{K})||^2 \Bigr]
        \le 
        \frac{150\eta (f(\xx^0) - f^\star)}{K} 
        \;,
    \end{equation}
    where $\Bar{\xx}_K$ is uniformly sampled from 
    $(\xx_{i_k,k})_{k=0}^{K-1}$.
\end{theorem}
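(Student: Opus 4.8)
The plan is to telescope the averaged local objective value $G_k := \Avg f(\xx_{i,k})$, which starts at $f(\xx^0)$ (all iterates are initialized there) and is bounded below by $f^\star$, and to convert its total decrease into a bound on the accumulated gradient norms. Write $h_i := f_i - f$, so that $\delta_B$-BHD means each $h_i$ is $\delta_B$-smooth and the control variate is $\hh_{i,k} = \nabla h_i(\Tilde{\xx}_k)$. From the two guaranteed properties of the returned point, $\nabla F_{i,k}(\xx_{i,k+1}) = 0$ and the solver descent $F_{i,k}(\xx_{i,k+1}) \le F_{i,k}(\xx_{i,k})$, I would extract two per-step estimates. Substituting $f_i = f + h_i$ into the descent condition and bounding the second-order remainder of $h_i$ by $\tfrac{\delta_B}{2}\norm{\cdot}^2$ (which is exactly the smoothness of $h_i$), the linear term cancels against the control variate and, with $\lambda = \delta_B$, averaging over $i$ gives
\begin{equation}
 G_{k+1} + \tfrac{\eta}{2} S_k \le G_k + \delta_B V_k, \nonumber
\end{equation}
where $S_k := \Avg \norm{\xx_{i,k+1} - \xx_{i,k}}^2$ and the client drift is $V_k := \Avg \norm{\xx_{i,k} - \Tilde{\xx}_k}^2$. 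Separately, from stationarity together with $\nabla f = \nabla f_i - \nabla h_i$ and $\delta_B$-smoothness of $h_i$, I would derive $\norm{\nabla f(\xx_{i,k+1})} \le 2\delta_B \norm{\xx_{i,k+1} - \Tilde{\xx}_k} + \eta \norm{\xx_{i,k+1} - \xx_{i,k}}$, hence $\Avg \norm{\nabla f(\xx_{i,k+1})}^2 \le 8\delta_B^2\,\Tilde{V}_{k+1} + 2\eta^2 S_k$ with $\Tilde{V}_{k+1} := \Avg \norm{\xx_{i,k+1} - \Tilde{\xx}_k}^2$.

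Telescoping the first estimate yields $\tfrac{\eta}{2}\sum_{k} S_k \le f(\xx^0) - f^\star + \delta_B \sum_k V_k$. As a sanity check, when $\delta_B = 0$ (identical clients) the drift term vanishes, the iteration reduces to the proximal-point method, and since the gradient bound then reads $\norm{\nabla f(\xx_{i,k+1})}^2 = \eta^2\norm{\xx_{i,k+1}-\xx_{i,k}}^2$, one immediately gets $\tfrac1K\sum_k \Avg\norm{\nabla f(\xx_{i,k+1})}^2 = \cO(\eta(f(\xx^0)-f^\star)/K)$, the advertised rate. For $\delta_B > 0$ everything therefore hinges on absorbing $\delta_B \sum_k V_k$, i.e.\ on showing that the accumulated drift is dominated by $\tfrac{\eta}{2}\sum_k S_k$ (up to the decrease), and likewise that $\sum_k \Tilde{V}_{k+1}$ is controlled.

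The main obstacle is precisely this drift control, and it is subtle because \emph{no} global smoothness of $f$ or of the $f_i$ is assumed --- only the differences $h_i$ are smooth. The drift does not contract in a naive sense: solving the optimality condition in closed form gives $\xx_{i,k+1} - \Tilde{\xx}_k = \tfrac{\eta}{\eta+\lambda}(\xx_{i,k} - \Tilde{\xx}_k) + \tfrac{1}{\eta+\lambda}(\hh_{i,k} - \nabla f_i(\xx_{i,k+1}))$, and the nominal contraction factor $\tfrac{\eta}{\eta+\lambda} = \tfrac{1}{1+p}$ is offset by the $\delta_B$-Lipschitz mismatch $\nabla h_i(\Tilde{\xx}_k) - \nabla h_i(\xx_{i,k+1})$ buried in the perturbation, which is of the same order $p = \delta_B/\eta$. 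I would therefore not try to contract $V_k$ in isolation but instead run a squared-norm Lyapunov argument coupling the drift, the step sizes, and the gradient norms, taking conditional expectations over both the coin $\theta_k$ and the uniform averaging index $i_k$. Here the randomized reset enters through $\E[V_{k+1}\mid\cF_k] \le (1+p)\Tilde{V}_{k+1}$, which re-anchors the reference to the current ensemble every $\Theta(1/p)$ steps, and the precise tuning $\lambda = \delta_B$, $\eta \ge 4\delta_B$, $p = \lambda/\eta$ is what forces the residual drift to close self-consistently against $\sum_k S_k$ rather than blow up geometrically in $K$. I expect this balancing --- not any individual inequality --- to be where the real work and the absolute constant $150$ originate.

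Once $\sum_k \E[S_k]$ and $\sum_k \E[\Tilde{V}_{k+1}]$ are each shown to be $\cO((f(\xx^0)-f^\star)/\eta)$, substituting into the summed gradient bound and using $\delta_B \le \eta$ gives $\sum_{k=0}^{K-1}\Avg\E\norm{\nabla f(\xx_{i,k+1})}^2 = \cO(\eta(f(\xx^0)-f^\star))$. Since $i_k$ is drawn uniformly and independently of the local computations, averaging over it replaces the per-client quantity by the full average $\Avg$, so that drawing $\Bar{\xx}_K$ uniformly from the controlled iterates yields $\E\norm{\nabla f(\Bar{\xx}_K)}^2 = \tfrac1K\sum_{k}\Avg\E\norm{\nabla f(\xx_{i,k+1})}^2$; tracking the constants through the Lyapunov estimate then gives the stated bound.
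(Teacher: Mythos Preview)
Your per-step descent and gradient estimates are correct, but the drift control you propose cannot close. The inequality you write for the client-averaged drift, $\E[V_{k+1}\mid\cF_k]\le(1+p)\Tilde V_{k+1}$, is a \emph{growth} bound, not a contraction: when the reset fires, the new reference $\Tilde\xx_{k+1}=\xx_{i_k,k+1}$ is a single client's iterate, so for every other client $j\neq i_k$ the distance $\norm{\xx_{j,k+1}-\Tilde\xx_{k+1}}^2=\norm{\xx_{j,k+1}-\xx_{i_k,k+1}}^2$ need not shrink at all, and after averaging over a uniform $i_k$ one only gets $2\Avg\norm{\xx_{i,k+1}-\Bar\xx_{k+1}}^2\le 2\Tilde V_{k+1}$. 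Feeding this into any Lyapunov of the form $G_k+cV_k$ demands $A\le B/(1+p)$ for coefficients that necessarily satisfy $A>B$, which is impossible; your claimed ``re-anchoring every $\Theta(1/p)$ steps'' therefore does nothing to absorb $\delta_B\sum_kV_k$, and the argument stalls exactly at the step you flagged as the hard one.

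The paper's fix is to track the \emph{selected} client rather than the average. Because randomized averaging sets $\Tilde\xx_{k+1}=\xx_{i_k,k+1}$ when $\theta_k=1$, the drift of client $i_k$ satisfies $\norm{\xx_{i_k,k+1}-\Tilde\xx_{k+1}}=0$ exactly on reset, hence
\[
\E_{\theta_k}\bigl[\norm{\xx_{i_k,k+1}-\Tilde\xx_{k+1}}^2\bigr]=(1-p)\,\norm{\xx_{i_k,k+1}-\Tilde\xx_k}^2.
\]
Together with the observation that $i_k$ and $i_{k+1}$ are identically distributed and independent of the coins (so one may swap indices after taking expectation), this turns your $(1+p)$ into a $(1-p)$ and lets the Lyapunov $\E\bigl[f(\xx_{i_k,k})+A\norm{\xx_{i_k,k}-\Tilde\xx_k}^2\bigr]$ telescope once the parameters are tuned to satisfy $A\le B/(1-p)$. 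To make that tuning feasible one also needs a slightly sharper decomposition than yours: apply the $\delta_B$-smoothness of $h_i$ between the consecutive iterates $\xx_{i,k}$ and $\xx_{i,k+1}$ (at cost $\tfrac{\delta_B}{2}\norm{\xx_{i,k+1}-\xx_{i,k}}^2$), rather than between each of them and $\Tilde\xx_k$ as you do. Your version, combined with $\lambda=\delta_B$, exactly cancels the $\tfrac{\lambda}{2}\norm{\xx_{i,k+1}-\Tilde\xx_k}^2$ term on the descent side, whereas the paper retains it and trades part of the step-size term against it via a free parameter $\alpha$; this is what produces a positive $B$ and makes $A\le B/(1-p)$ achievable with $\lambda=\delta_B$, $p=\lambda/\eta$, $\eta\ge4\delta_B$.
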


\begin{proof}
    Let $h_i := f - f_i$. 
    Recall that for any $i \in [n]$, 
    the update of Algorithm~\ref{Alg:FedRed} satisfies: 
    $F_{i,k}(\xx_{i,k+1}) \le F_{i,k}(\xx_{i,k})$
    where
    $F_{i,k}(\xx) 
    := 
    f_i(\xx) 
    +
    \lin{ \nabla h_i(\Tilde{\xx}_k), \xx} 
    + \frac{\eta}{2} ||\xx - \xx_{i,k}||^2
    + \frac{\lambda}{2} ||\xx - \Tilde{\xx}_k||^2$.
    This gives:
    \begin{equation}
    \begin{split}
        f_i(\xx_{i,k}) 
        +
        \lin{\nabla h_i(\Tilde{\xx}_k), \xx_{i,k}}
        +
        \frac{\lambda}{2} ||\xx_{i,k} - \Tilde{\xx}_k||^2
        \ge 
        f_{i} (\xx_{i,k+1})
        +   
        &\lin{\nabla h_i(\Tilde{\xx}_k), \xx_{i,k+1}}
        +
        \frac{\lambda}{2} ||\xx_{i,k+1} - \Tilde{\xx}_k||^2
        +
        \frac{\eta}{2} ||\xx_{i,k+1} - \xx_{i,k}||^2 \;.
    \end{split}
    \end{equation}
    According to Lemma~\ref{thm:TwoPointRelationNonConvexFramework}
    with $\xx^r = \xx_{i,k}$ and $\yy = \xx_{i,k+1}$, we get:
    \begin{equation}
        f_i(\xx_{i,k}) + \lin{\nabla h_i (\xx_{i,k}), \xx_{i,k} - \xx_{i,k+1}}
        - f_i(\xx_{i,k+1}) \le f(\xx_{i,k}) - f(\xx_{i,k+1})
        + \frac{\delta_B}{2} ||\xx_{i,k+1} - \xx_{i,k}||^2 \;.
    \end{equation}
    Substituting this inequality into the previous display, we get:
    \begin{equation}
    \begin{split}
        f(\xx_{i,k})
        +
        \frac{\lambda}{2} ||\xx_{i,k} - \Tilde{\xx}_k||^2
        \ge 
        f(\xx_{i,k+1})
        -
        &\lin{ \nabla h_i(\xx_{i,k}) - \nabla h_i(\Tilde{\xx}_k), \xx_{i,k+1} - \xx_{i,k}}
        \\
        &+
        \frac{\lambda}{2} ||\xx_{i,k+1} - \Tilde{\xx}_k||^2
        +
        \frac{\eta - \delta_B}{2} ||\xx_{i,k+1} - \xx_{i,k}||^2 \;.
    \end{split}
    \end{equation}
    For any $\alpha > 0$, we have that:
    \begin{align}
        -
        \lin{\nabla h_i(\xx_{i,k}) - \nabla h_i(\Tilde{\xx}_k),
        \xx_{i,k+1} - \xx_{i,k}}
        &=
        \lin{\nabla h_i(\Tilde{\xx}_k) - \nabla h_i(\xx_{i,k}),
        \xx_{i,k+1} - \xx_{i,k}}
        \\
        &\stackrel{\eqref{eq:BasicInequality1}}{\ge} 
        -\frac{||\nabla h_i(\Tilde{\xx}_k) - \nabla h_i(\xx_{i,k})||^2}{2 \alpha}
        -\frac{\alpha ||\xx_{i,k+1} - \xx_{i,k}||^2}{2}
        \\
        &\ge
        -\frac{\delta_B^2||\Tilde{\xx}_k - \xx_{i,k}||^2}{2 \alpha}
        -\frac{\alpha ||\xx_{i,k+1} - \xx_{i,k}||^2}{2} \;.
    \end{align}
    It follows that:
    \begin{equation}
        f(\xx_{i,k})
        +
        \Bigl( \frac{\lambda}{2} + \frac{\delta_B^2}{2\alpha}\Bigr)
        ||\xx_{i,k} - \Tilde{\xx}_k||^2
        \ge 
        f(\xx_{i,k+1})
        +
        \frac{\lambda}{2} ||\xx_{i,k+1} - \Tilde{\xx}_k||^2
        +
        \frac{\eta - \delta_B - \alpha}{2} ||\xx_{i,k+1} - \xx_{i,k}||^2 \;.
        \label{eq:FedRedNonConvex1}
    \end{equation}
    Suppose that $\eta - \delta_B - \alpha > 0$.
    We lower bound the last term. Recall that $\xx_{i,k+1}$ is a stationary 
    point of $F_{i,k}$. We have:
    \begin{equation}
        \nabla f_i(\xx_{i,k+1}) + \nabla h_i (\Tilde{\xx}_k) 
        + \eta (\xx_{i,k+1} - \xx_{i,k}) + \lambda (\xx_{i,k+1} - \Tilde{\xx}_k)
        = 0 \;,
    \end{equation}
    which implies:
    \begin{equation}
        \xx_{i,k+1} - \xx_{i,k}
        =
        -\frac{1}
            {\eta + \lambda} \bigl(
            \nabla f_i(\xx_{i,k+1}) 
            + \nabla h_i (\Tilde{\xx}_k) + \lambda (\xx_{i,k} - \Tilde{\xx}_k) \bigr) \;.
    \end{equation}
    It follows that:
    \begin{align}
        (\eta + \lambda) ||\xx_{i,k+1} - \xx_{i,k}||
        &=
        ||\nabla f_i(\xx_{i,k+1}) 
            + \nabla h_i (\Tilde{\xx}_k) 
            + \lambda (\xx_{i,k} - \Tilde{\xx}_k)||
        \\
        &\ge ||\nabla f(\xx_{i,k+1})|| 
                - ||\nabla h_i (\Tilde{\xx}_k) - \nabla h_i (\xx_{i,k+1})||
                - \lambda ||\xx_{i,k} - \Tilde{\xx}_k||
        \\
        &\stackrel{\eqref{eq:MaxHessianSimilarity}}{\ge} 
        ||\nabla f(\xx_{i,k+1})|| 
        - \delta_B ||\Tilde{\xx}_k - \xx_{i,k+1}||
        - \lambda ||\xx_{i,k} - \Tilde{\xx}_k|| \;.
    \end{align}
    This gives:
    \begin{equation}
        (\eta + \lambda)^2 ||\xx_{i,k+1} - \xx_{i,k}||^2
        \stackrel{\eqref{eq:BasicInequality1}}{\ge}
        \frac{||\nabla f(\xx_{i,k+1})||^2}{4}
        -
        \frac{\delta_B^2 ||\Tilde{\xx}_k - \xx_{i,k+1}||^2}{2}
        -
        \lambda^2 ||\xx_{i,k} - \Tilde{\xx}_k||^2 \;.
    \end{equation}
    Plugging this inequality into~\eqref{eq:FedRedNonConvex1}, 
    we get, for any $i \in [n]$:
    \begin{equation}
        f(\xx_{i,k}) + A ||\xx_{i,k} - \Tilde{\xx}_k||^2 
        \ge
        f(\xx_{i,k+1}) + B ||\xx_{i,k+1} - \Tilde{\xx}_k||^2
        + C ||\nabla f(\xx_{i,k+1})||^2 \;.
    \end{equation}
    where 
    $A:= \frac{\lambda}{2} 
        + \frac{\delta_B^2}{2\alpha} 
        + \frac{\lambda^2(\eta - \delta_B - \alpha)}{2(\lambda + \eta)^2}$, 
    $B:= \frac{\lambda}{2} 
            - \frac{\delta_B^2 (\eta - \delta_B - \alpha)}
                    {4(\eta + \lambda)^2}$ 
    and 
    $C:= \frac{\eta - \delta_B - \alpha}{8 (\eta + \lambda)^2}$.
    
    Since the previous display holds for any $i \in [n]$, 
    we can take the expectation w.r.t $\theta_k$ and $i_k$ to get:
    \begin{equation}
        \E_{i_k,\theta_k}[
        f(\xx_{i_k,k}) + A ||\xx_{i_k,k} - \Tilde{\xx}_k||^2 ]
        \ge
        \E_{i_k,\theta_k}[
        f(\xx_{i_k,k+1}) + B ||\xx_{i_k,k+1} - \Tilde{\xx}_k||^2
        + C ||\nabla f(\xx_{i_k,k+1})||^2 
        ] \;.
    \end{equation}
    Recall that with probability $p$, $\Tilde{\xx}_{k+1} = \xx_{i_k,k+1}$.
    It follows that:
    \begin{equation}
        \E_{i_k,\theta_k}[|| \xx_{i_k,k+1} - \Tilde{\xx}_{k+1} ||^2]
        =
        (1 - p)
        \E_{i_k,\theta_k}[|| \xx_{i_k,k+1} - \Tilde{\xx}_k ||^2] \;.
    \end{equation}
    Substituting this identity into the previous display, we get:
    \begin{equation}
        \E_{i_k,\theta_k}[
        f(\xx_{i_k,k}) + A ||\xx_{i_k,k} - \Tilde{\xx}_k||^2 ]
        \ge
        \E_{i_k,\theta_k}\bigg[
        f(\xx_{i_k,k+1}) + \frac{B}{1-p} ||\xx_{i_k,k+1} - \Tilde{\xx}_{k+1}||^2
        + C ||\nabla f(\xx_{i_k,k+1})||^2 
        \biggr] \;.
    \end{equation}
    Note that $i_{k+1}$ and $i_k$ follow the same distribution and are 
    independent of $(\theta_k)_{k=0}^{+\infty}$. It follows that:
    $
        \E_{i_k, \theta_k}
        [f(\xx_{i_k,k+1})] = \E_{i_{k+1}, i_k, \theta_k}[f(\xx_{i_{k+1},k+1})]$
    ,
    $
    \E_{i_k, \theta_k}
    [||\xx_{i_k,k+1} - \Tilde{\xx}_{k+1}||^2]
    =
    \E_{i_{k+1}, i_k, \theta_k}[||\xx_{i_{k+1},k+1} - \Tilde{\xx}_{k+1}||^2] 
    $,
    and
    $
    \E_{i_k, \theta_k}
    [||\nabla f(\xx_{i_k,k+1})||^2]
    =
    \E_{i_{k+1}, i_k, \theta_k}[||\nabla f(\xx_{i_{k+1},k+1})||^2] 
    $,

    Taking expectation w.r.t $i_{k+1}$ on both sides of the previous display, substituting these three identities, and then taking the full expectation,
    we obtain our main recurrence:
    \begin{equation}
        \E[
        f(\xx_{i_k,k}) + A ||\xx_{i_k,k} - \Tilde{\xx}_k||^2 ]
        \ge
        \E\bigg[
        f(\xx_{i_{k+1},k+1}) + \frac{B}{1-p} ||\xx_{i_{k+1},k+1} - \Tilde{\xx}_{k+1}||^2
        + C ||\nabla f(\xx_{i_{k+1},k+1})||^2 
        \biggr] \;.
    \end{equation}
    Suppose that $A \le \frac{B}{1 - p}$ and that $\eta - \delta_B - \alpha > 0$. 
    Summing up from $k=0$ to $K-1$ and 
    dividing both sides by $K$, we get:
    \begin{equation}
        \frac{1}{K}\sum_{k=1}^{K}
        \E\Bigl[ ||\nabla f(\xx_{i_k,k})||^2 \Bigr]
        \le 
        \frac{f(\xx^0) - f^\star}{C K} 
        \;,
    \end{equation}
    where we use the fact that $\E[||\xx_{i_0,0} - \Tilde{\xx}_0||^2] = 0$.

    We next choose the parameters to satisfy $A \le \frac{B}{1 - p}$ and 
    $\eta - \delta_B - \alpha > 0$.
    Plugging in the definitions of $A$ and $B$, we get:
    \begin{equation}
        \frac{\lambda}{2} 
        + \frac{\delta_B^2}{2\alpha} 
        + \frac{\lambda^2(\eta - \delta_B - \alpha)}{2(\lambda + \eta)^2}
        + \frac{\delta_B^2 (\eta - \delta_B - \alpha)}
                    {4(\eta + \lambda)^2 (1-p)}
        \le
        \frac{\lambda}{2 (1-p)} \;.
    \end{equation}
    This is equivalent to:
    \begin{equation}
        (1-p)\frac{\delta_B^2}{2\alpha} 
        + \frac{\bigl(
        \delta_B^2 + 2(1-p)\lambda^2 \bigr) (\eta - \delta_B - \alpha)}
            {4(\eta + \lambda)^2} \le \lambda p \;.
    \end{equation}
    Note that:
    \begin{align}
        (1-p)\frac{\delta_B^2}{2\alpha} 
        + \frac{\bigl(
        \delta_B^2 + 2(1-p)\lambda^2 \bigr) (\eta - \delta_B - \alpha)}
            {4(\eta + \lambda)^2}
        &\le
        \frac{\delta_B^2}{2\alpha} 
        + \frac{\bigl(
        \delta_B^2 + 2\lambda^2 \bigr) (\eta - \delta_B - \alpha)}
            {4(\eta + \lambda)^2}
        \\
        &= 
        \frac{\delta_B^2}{2\alpha} 
        -\frac{\delta_B^2 + 2\lambda^2}{4(\eta + \lambda)^2} \alpha
        + 
        \frac{\bigl(
        \delta_B^2 + 2\lambda^2 \bigr) (\eta - \delta_B)}
            {4(\eta + \lambda)^2} \;.
    \end{align}
    Let $\lambda = \delta_B$, $p = \frac{\lambda}{\eta}$, $\eta \ge \lambda$
    and $\alpha = \frac{2}{3}\eta$. We have:
    \begin{equation}
        \frac{\delta_B^2}{2\alpha} 
        -\frac{\delta_B^2 + 2\lambda^2}{4(\eta + \lambda)^2} \alpha
        + 
        \frac{\bigl(
        \delta_B^2 + 2\lambda^2 \bigr) (\eta - \delta_B)}
            {4(\eta + \lambda)^2}
        \le 
        \lambda^2\biggl[  
        \frac{1}{2\alpha} + \frac{3(\eta - \alpha)}{4 (\eta + \lambda)^2}
        \biggr]
        \le 
        \lambda^2 \Bigl[\frac{3}{4\eta} + \frac{1}{4\eta} \Bigr] 
        = \lambda p \;.
    \end{equation}
    At the same time, $\eta$ should satisfy
    $\eta > 3 \delta_B$ such that
    $\eta - \delta_B - \alpha >0$. 
    Let $\eta \ge 4\delta_B$, we have:
    \begin{equation}
        C:= \frac{\eta - \delta_B - \alpha}{8 (\eta + \lambda)^2}
          = \frac{\frac{\eta}{3} - \delta_B}{8 (\eta + \delta_B)^2} 
          \ge 
          \frac{\frac{\eta}{3} - \frac{\eta}{4}}
          {8 (\eta + \frac{1}{4}\eta)^2}
          =
          \frac{\eta}{150} \;.
    \end{equation}

\end{proof}

\subsection{Convex result and proof for Algorithm~\ref{Alg:GDLocalSolver} with 
            control variate~\eqref{eq:ControlVariateLocal2}}

\begin{lemma}
    \label{thm:MainRecurrenceConvexFedRedGD}
    Consider Algorithm~\ref{Alg:GDLocalSolver} with control 
    variate~\eqref{eq:ControlVariateLocal2} and the standard
    averaging.
    Let $f_i : \R^d \to \R$ be continuously differentiable, 
    $\mu$-convex with $\mu \ge 0$ and $L$-smooth for any $i \in [n]$. 
    Assume that $\{f_i\}$ have $\delta_A$-AHD with $\delta_A \le L$.
    Assume that $\E[g_i(\xx)] = \nabla f_i(\xx)$ for any $\xx \in \R^d$ 
    and that $\E[||g_i(\xx) - \nabla f_i(\xx)||^2] \le \sigma^2$.
    Let $\lambda \ge \delta_A$ and $\eta \ge L$.
    Then for any $k \ge 0$, it holds that:
    \begin{equation}
    \begin{split}
        \E\Biggl[
        f(\xx^\star) 
        + \biggl[ \frac{\lambda + \mu/2}{2p} 
        - \frac{\mu}{4} \biggr] ||\Tilde{\xx}_k - \xx^\star||^2
        + \frac{\eta - \mu}{2} 
        \Avg &||\xx_{i,k} - \xx^\star||^2 \Biggr]
        \ge 
        \E\Biggl[
        f(\Bar{\xx}_{k+1}) 
        + \frac{\lambda + \mu/2}{2p} ||\Tilde{\xx}_{k+1} - \xx^\star||^2 
        \\
        &+ \frac{\eta - \mu/2}{2} 
        \Avg ||\xx_{i,k+1} - \xx^\star||^2
        \Biggr] 
        - \frac{\sigma^2}{2(\eta - L)} \;.
    \end{split}
    \end{equation}
\end{lemma}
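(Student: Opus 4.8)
The plan is to mirror the exact-solver convex analysis of Algorithm~\ref{Alg:FedRed}, replacing the exact prox step by the single linearized gradient step of Algorithm~\ref{Alg:GDLocalSolver} and carefully accounting for the stochastic noise under Assumption~\ref{assump:StochsticGradientfi}. Since $f_i$ is linearized at $\xx_{i,k}$, the subproblem objective $F_{i,k}$ is only $(\eta+\lambda)$-strongly convex (the curvature of $f_i$ is lost), and $\xx_{i,k+1}$ is its exact minimizer, so $\nabla F_{i,k}(\xx_{i,k+1})=0$. First I would write the prox inequality $F_{i,k}(\xx^\star)\ge F_{i,k}(\xx_{i,k+1})+\frac{\eta+\lambda}{2}||\xx_{i,k+1}-\xx^\star||^2$, expand both sides (the constant $f_i(\xx_{i,k})$ cancels), and rearrange the linear parts into $\lin{\gg_i(\xx_{i,k}),\xx^\star-\xx_{i,k+1}}+\lin{\hh_{i,k},\xx_{i,k+1}-\xx^\star}$ together with the squared-distance terms, writing $h_i:=f_i-f$ so that $\hh_{i,k}=\nabla h_i(\Tilde{\xx}_k)$ by~\eqref{eq:ControlVariateLocal2}.

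The heart of the argument is turning the stochastic-gradient inner product into function values. I would split $\lin{\gg_i(\xx_{i,k}),\xx^\star-\xx_{i,k+1}}=\lin{\gg_i(\xx_{i,k}),\xx^\star-\xx_{i,k}}+\lin{\gg_i(\xx_{i,k}),\xx_{i,k}-\xx_{i,k+1}}$. For the first piece, since $\xx_{i,k}$ is independent of the current sample, taking the conditional expectation $\E_{\xi}$ turns it into $\lin{\nabla f_i(\xx_{i,k}),\xx^\star-\xx_{i,k}}$, which $\mu$-convexity~\eqref{df:stconvex} bounds above by $f_i(\xx^\star)-f_i(\xx_{i,k})-\frac{\mu}{2}||\xx_{i,k}-\xx^\star||^2$; this $-\frac{\mu}{2}$ is exactly what degrades the coefficient of $\Avg||\xx_{i,k}-\xx^\star||^2$ from $\frac{\eta}{2}$ to $\frac{\eta-\mu}{2}$ on the left. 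For the second piece I would write $\gg_i(\xx_{i,k})=\nabla f_i(\xx_{i,k})+\zeta_i$ with $\zeta_i:=\gg_i(\xx_{i,k})-\nabla f_i(\xx_{i,k})$, bound $\lin{\nabla f_i(\xx_{i,k}),\xx_{i,k}-\xx_{i,k+1}}\le f_i(\xx_{i,k})-f_i(\xx_{i,k+1})+\frac{L}{2}||\xx_{i,k+1}-\xx_{i,k}||^2$ by smoothness~\eqref{eq:SmoothUpperBound}, and split off the noise by Young~\eqref{eq:BasicInequality1} as $\lin{\zeta_i,\xx_{i,k}-\xx_{i,k+1}}\le\frac{1}{2(\eta-L)}||\zeta_i||^2+\frac{\eta-L}{2}||\xx_{i,k+1}-\xx_{i,k}||^2$. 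Because $\frac{L}{2}+\frac{\eta-L}{2}=\frac{\eta}{2}$ (this is where $\eta\ge L$ enters), the two quadratics exactly cancel the $\frac{\eta}{2}||\xx_{i,k+1}-\xx_{i,k}||^2$ produced by the prox step, the $f_i(\xx_{i,k})$ contributions telescope away, and $\E_{\xi}\frac{1}{2(\eta-L)}||\zeta_i||^2\le\frac{\sigma^2}{2(\eta-L)}$ supplies the stated noise term.

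After averaging over $i$ and taking $\E_{\xi}$, the right side carries $\Avg f_i(\xx_{i,k+1})$, which I convert to $f(\Bar{\xx}_{k+1})$ by convexity of each $f_i$ at $\Bar{\xx}_{k+1}$, producing a cross term $\Avg\lin{\nabla h_i(\Bar{\xx}_{k+1}),\xx_{i,k+1}-\Bar{\xx}_{k+1}}$. This is handled exactly as in the exact-solver proof: using $\Avg\hh_{i,k}=\Avg\nabla h_i(\Tilde{\xx}_k)=0$ I re-center the control-variate term to $\Avg\lin{\nabla h_i(\Tilde{\xx}_k),\xx_{i,k+1}-\Bar{\xx}_{k+1}}$, split $\frac{\lambda}{2}\Avg||\xx_{i,k+1}-\Tilde{\xx}_k||^2$ via~\eqref{eq:AverageOfSquaredDifference}, and bundle the combined cross term $\Avg\lin{\nabla h_i(\Bar{\xx}_{k+1})-\nabla h_i(\Tilde{\xx}_k),\xx_{i,k+1}-\Bar{\xx}_{k+1}}$ with $\frac{\lambda}{2}\Avg||\xx_{i,k+1}-\Bar{\xx}_{k+1}||^2$ and $\frac{\lambda}{2}||\Bar{\xx}_{k+1}-\Tilde{\xx}_k||^2$; Young~\eqref{eq:BasicInequality1}, $\delta_A$-AHD~\eqref{eq:HessianSimilarity} and $\lambda\ge\delta_A$ then show this bundle is nonnegative and may be dropped. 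Re-splitting $\frac{\eta+\lambda}{2}=\frac{\eta-\mu/2}{2}+\frac{\lambda+\mu/2}{2}$ and applying~\eqref{eq:AverageOfSquaredDifference} to the $\frac{\lambda+\mu/2}{2}$ part of $\Avg||\xx_{i,k+1}-\xx^\star||^2$ (dropping the nonnegative variance piece) leaves $\frac{\eta-\mu/2}{2}\Avg||\xx_{i,k+1}-\xx^\star||^2+\frac{\lambda+\mu/2}{2}||\Bar{\xx}_{k+1}-\xx^\star||^2$ on the right.

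Finally I would take $\E_{\theta_k}$. Since $\Tilde{\xx}_{k+1}$ equals $\Bar{\xx}_{k+1}$ with probability $p$ and $\Tilde{\xx}_k$ otherwise, $\E_{\theta_k}\frac{\lambda+\mu/2}{2p}||\Tilde{\xx}_{k+1}-\xx^\star||^2=\frac{\lambda+\mu/2}{2}||\Bar{\xx}_{k+1}-\xx^\star||^2+\frac{\lambda+\mu/2}{2}\frac{1-p}{p}||\Tilde{\xx}_k-\xx^\star||^2$; moving the last summand to the left and combining with the $\frac{\lambda}{2}||\Tilde{\xx}_k-\xx^\star||^2$ already present gives the coefficient $\frac{\lambda}{2}+\frac{\lambda+\mu/2}{2}\frac{1-p}{p}=\frac{\lambda+\mu/2}{2p}-\frac{\mu}{4}$, which is precisely the stated left-hand coefficient; taking full expectations yields the claim. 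I expect the main obstacle to be the stochastic step of the second paragraph: one must keep the smoothness bound and the Young step valid pointwise in $\xi$ while the $\mu$-convexity bound holds only in conditional expectation, and split $\frac{\eta}{2}||\xx_{i,k+1}-\xx_{i,k}||^2$ so precisely that the residual noise is exactly $\frac{\sigma^2}{2(\eta-L)}$ with no leftover $||\xx_{i,k+1}-\xx_{i,k}||^2$; the control-variate/AHD manipulation and the $\theta_k$ bookkeeping are then routine adaptations of the exact-solver analysis.
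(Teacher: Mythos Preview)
Your proposal is correct and follows essentially the same approach as the paper's proof. The only cosmetic difference is organizational: the paper packages the linearized part into an auxiliary quadratic $G_{i,k}(\xx):=f_i(\xx_{i,k})+\lin{\gg_i(\xx_{i,k}),\xx-\xx_{i,k}}+\frac{\eta}{2}\|\xx-\xx_{i,k}\|^2$ and then bounds $\E_\xi[G_{i,k}(\xx^\star)]$ from above via $\mu$-convexity and $\E_\xi[G_{i,k}(\xx_{i,k+1})]$ from below via smoothness plus the quadratic lower bound~\eqref{eq:QuadraticLowerBound}, whereas you split the inner product $\lin{\gg_i(\xx_{i,k}),\xx^\star-\xx_{i,k+1}}$ directly; the resulting inequalities, the AHD/control-variate cancellation, the $\frac{\eta+\lambda}{2}=\frac{\eta-\mu/2}{2}+\frac{\lambda+\mu/2}{2}$ split, and the $\theta_k$ bookkeeping are identical.
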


\begin{proof}
    Let 
    $
    G_{i,k}(\xx) 
    := 
    f_i(\xx_{i,k})
    +
    \lin{g_i(\xx_{i,k}), \xx - \xx_{i,k}}
    +
    \frac{\eta}{2} ||\xx - \xx_{i,k}||^2
    $.
    Recall that the update of Algorithm~\ref{Alg:GDLocalSolver} satisfies:
    \begin{equation}
    \xx_{i,k+1} 
    = 
    \argmin_{\xx \in \R^d} 
    \{ G_{i,k} (\xx) 
    -
    \lin{ \hh_{i,k}, \xx} 
    + \frac{\lambda}{2} ||\xx - \Tilde{\xx}_k||^2 \} \;.
    \end{equation}
    Using strong convexity, we get:
    \begin{equation}
    \begin{split}
        G_{i,k} (\xx^\star)
        -
        \lin{\hh_{i,k}, \xx^\star} 
        + 
        \frac{\lambda}{2} ||\xx^\star - \Tilde{\xx}_k||^2 
        \stackrel{\eqref{df:stconvex}}{\ge} 
        G_{i,k} (\xx_{i,k+1})
        &-
        \lin{\hh_{i,k}, \xx_{i,k+1}} 
        \\
        &+ 
        \frac{\lambda}{2} ||\xx_{i,k+1} - \Tilde{\xx}_k||^2 
        +
        \frac{\lambda + \eta}{2} ||\xx_{i,k+1} - \xx^\star||^2 \;.
        \label{eq:GDControlVariate2ConvexFirstEquation}
    \end{split}
    \end{equation}

    Denote the randomness coming from $g_i(\xx_{i,k})$ by $\xi_i$ and
    denote all the randomness $\{\xi_i\}_{i=1}^n$ by $\xi$. 

    Recall that $f_i$ is $\mu$-strongly convex and $L$-smooth.
    Suppose that $\eta \ge L$, we get:
    \begin{equation}
        \E_\xi [ G_{i,k} (\xx^\star) ]
        \stackrel{\eqref{df:stconvex}}{\le}
        f_i(\xx^\star)
        +
        \frac{\eta - \mu}{2}
        ||\xx_{i,k} - \xx^\star||^2 \;,
    \end{equation}
    and
    \begin{align}
        \E_{\xi} [ G_{i,k} (\xx_{i,k+1}) ]
        &\stackrel{\eqref{eq:SmoothUpperBound}}{\ge}
        \E_\xi 
        \Bigl[
        f_i(\xx_{i,k+1}) + 
        \lin{g_i(\xx_{i,k}) - \nabla f_i(\xx_{i,k}), 
        \xx_{i.k+1} - \xx_{i,k} } 
        + \frac{\eta - L}{2} ||\xx_{i,k+1} - \xx_{i,k}||^2
        \Bigr]
        \\
        &\stackrel{\eqref{eq:QuadraticLowerBound}}{\ge}
        \E_\xi 
        \Bigl[
        f_i(\xx_{i,k+1}) 
        -
        \frac{||g_i(\xx_{i,k}) - \nabla f_i(\xx_{i,k})||^2}{2(\eta - L)}
        \Bigr]
        \\
        &\ge 
        \E_\xi[ f_i(\xx_{i,k+1})] - \frac{\sigma^2}{2 (\eta - L)} \;.
    \end{align}
    where in the last inequality, we use the assumption that 
    $\E_{\xi_i} [||g_i(\xx) - \nabla f_i(\xx)||^2] \le \sigma^2$ 
    for any $\xx \in \R^d$.

    Taking the expectation w.r.t $\xi$ on both sides
    of~\eqref{eq:GDControlVariate2ConvexFirstEquation}, plugging in these bounds 
    and taking the average on both sides over $i=1$ to $n$, we get:
    \allowdisplaybreaks{
    \begin{align}
        &\quad\; f(\xx^\star) + \frac{\lambda}{2} ||\Tilde{\xx}_k - \xx^\star||^2
        + \frac{\eta - \mu}{2} \Avg ||\xx_{i,k} - \xx^\star||^2
        \\
        &\ge 
        \E_{\xi}\biggl[ \Avg f_i(\xx_{i,k+1}) 
        + \frac{\lambda}{2} \Avg ||\xx_{i,k+1} - \Tilde{\xx}_k||^2
        + \frac{\lambda + \eta}{2} 
        \Avg ||\xx_{i,k+1} - \xx^\star||^2 
        -
        \Avg \lin{\hh_{i,k}, \xx_{i, k+1}} \biggr]
        - \frac{\sigma^2}{2(\eta - L)} 
        \\
        &\stackrel{\eqref{df:stconvex}}{\ge} 
        \E_{\xi} \biggl[ f(\Bar{\xx}_{k+1})
        + \frac{\lambda}{2} \Avg ||\xx_{i,k+1} - \Tilde{\xx}_k||^2
        + \frac{\lambda + \eta}{2} 
        \Avg ||\xx_{i,k+1} - \xx^\star||^2
        + \frac{\mu}{2} \Avg ||\xx_{i,k+1} - \Bar{\xx}_{k+1}||^2 
        \\
        &\quad 
        + \Avg \lin{\nabla f_i(\Bar{\xx}_{k+1}) - \nabla f(\Bar{\xx}_{k+1}) 
        - \nabla f_i (\Tilde{\xx}_k) + \nabla f(\Tilde{\xx}_k), 
        \xx_{i,k+1} - \Bar{\xx}_{k+1}} \biggr]
        - \frac{\sigma^2}{2(\eta - L)}  \;,
    \end{align}}
    where in the last inequality, we use the strong convexity of $f_i$ and 
    the fact that:
    \begin{equation}
    \Avg \lin{\hh_{i,k}, \Bar{\xx}_{k+1}}
    =
    \Avg \lin{\nabla f(\Bar{\xx}_{k+1}), \xx_{i,k+1} - \Bar{\xx}_{k+1}}
    =
    0 \;. 
    \end{equation}
    Let $h_i := f_i - f$.
    By the assumption that $\{f_i\}$ have $\delta_A$-AHD, it follows that:
    \allowdisplaybreaks{
    \begin{align}
        &\quad \Avg \lin{\nabla h_i (\Bar{\xx}_{k+1}) 
        - \nabla h_i (\Tilde{\xx}_k), 
        \xx_{i,k+1} - \Bar{\xx}_{k+1}}
        +
        \frac{\lambda}{2} \Avg ||\xx_{i,k+1} - \Tilde{\xx}_k||^2
        \\
        &\stackrel{\eqref{eq:AverageOfSquaredDifference}}=
        \Avg \lin{\nabla h_i (\Bar{\xx}_{k+1}) 
        - \nabla h_i (\Tilde{\xx}_k), 
        \xx_{i,k+1} - \Bar{\xx}_{k+1}}
        +
        \frac{\lambda}{2} \Avg ||\xx_{i,k+1} - \Bar{\xx}_{k+1}||^2
         +
        \frac{\lambda}{2} \Avg ||\Bar{\xx}_{k+1} - \Tilde{\xx}_k||^2
        \\
        &\stackrel{\eqref{eq:QuadraticLowerBound}}\ge 
        -\frac{1}{2\lambda} \Avg
        ||\nabla h_i (\Bar{\xx}_{k+1}) 
        - \nabla h_i (\Tilde{\xx}_k)||^2
        +
        \frac{\lambda}{2} \Avg ||\Bar{\xx}_{k+1} - \Tilde{\xx}_k||^2
        \\
        &\stackrel{\eqref{eq:HessianSimilarity}} \ge
        -\frac{\delta_A^2}{2\lambda} ||\Bar{\xx}_{k+1} - \Tilde{\xx}_k||^2 +
        \frac{\lambda}{2} \Avg ||\Bar{\xx}_{k+1} - \Tilde{\xx}_k||^2 \ge 0 \;.
    \end{align}}
    where we use the assumption that $\lambda \ge \delta_A$ in the last inequality.
    We can then simplify the main recurrence as:
    \begin{equation}
    \begin{split}
        f(\xx^\star) 
        + \frac{\lambda}{2} ||\Tilde{\xx}_k - \xx^\star||^2
        + \frac{\eta - \mu}{2} \Avg ||\xx_{i,k} - \xx^\star||^2
        &\ge 
        \E_{\xi} \biggl[ 
        f(\Bar{\xx}_{k+1}) 
        + \frac{\lambda + \eta}{2} 
        \Avg ||\xx_{i,k+1} - \xx^\star||^2\biggr] 
        - \frac{\sigma^2}{2(\eta - L)} 
        \\ 
        &\stackrel{\eqref{eq:AverageOfSquaredDifference}}\ge 
        \E_{\xi} \biggl[ f(\Bar{\xx}_{k+1}) 
        + \frac{\eta - \mu/2}{2} 
        \Avg ||\xx_{i,k+1} - \xx^\star||^2
        \\
        &\qquad \qquad \qquad+
        \frac{\lambda + \mu/2}{2}
        ||\Bar{\xx}_{k+1} - \xx^\star||^2
        \biggr] 
        - \frac{\sigma^2}{2(\eta - L)} 
        \;.
    \end{split}
    \end{equation}
    where we drop the non-negative $\frac{\mu}{2} \Avg ||\xx_{i,k+1} - \Bar{\xx}_{k+1}||^2 $.
    
    Taking expectation w.r.t $\theta_k$, we have:
    \begin{equation}
        \frac{\lambda + \mu/2}{2p} 
        \E_{\theta_k,\xi}[ ||\Tilde{\xx}_{k+1} - \xx^\star||^2 ]
        =
        \frac{\lambda + \mu/2}{2} \E_{\xi}[||\Bar{\xx}_{k+1} - \xx^\star||^2]
        +
        \frac{\lambda + \mu/2}{2}(\frac{1}{p} - 1) 
        ||\Tilde{\xx}_k - \xx^\star||^2 \;,
    \end{equation} 
    Adding both sides by 
    $\frac{\lambda + \mu/2}{2}(\frac{1}{p} - 1) 
        ||\Tilde{\xx}_k - \xx^\star||^2$ and
    taking the expectation over $\theta_k$ on both sides of the main recurrence,
    we get:
    \begin{equation}
    \begin{split}
        f(\xx^\star) 
        + \biggl[ \frac{\lambda + \mu/2}{2p} 
        - \frac{\mu}{4} \biggr] ||\Tilde{\xx}_k - \xx^\star||^2
        + \frac{\eta - \mu}{2} 
        \Avg ||\xx_{i,k} - \xx^\star||^2 
        &\ge 
        \E_{\theta_k,\xi}\biggl[
        f(\Bar{\xx}_{k+1}) 
        + \frac{\lambda + \mu/2}{2p} ||\Tilde{\xx}_{k+1} - \xx^\star||^2 
        \\
        &+ \frac{\eta - \mu/2}{2} 
        \Avg ||\xx_{i,k+1} - \xx^\star||^2
        \biggr] 
        - \frac{\sigma^2}{2(\eta - L)} \;.
        \label{eq:GDControlVariate2ConvexMainRecurrence}
    \end{split}
    \end{equation}
    Taking the full expectation on both sides, we get the claim.
\end{proof}

\begin{theorem}
    Consider Algorithm~\ref{Alg:GDLocalSolver} with control 
    variate~\eqref{eq:ControlVariateLocal2} and the standard
    averaging.
    Let $f_i : \R^d \to \R$ be continuously differentiable, 
    $\mu$-convex with $\mu \ge 0$ and $L$-smooth for any $i \in [n]$. 
    Assume that $\{f_i\}$ have $\delta_A$-AHD with $\delta_A + \mu \le L$.
    Assume that $\E[g_i(\xx)] = \nabla f_i(\xx)$ for any $\xx \in \R^d$ 
    and that $\E[||g_i(\xx) - \nabla f_i(\xx)||^2] \le \sigma^2$.
    By choosing 
    $\lambda = \delta_A$, $p = \frac{\lambda + \mu / 2}{\eta - \mu/2}$, 
    and $\eta > L$,
    for any $K \ge 1$,
    it holds that:
    \begin{equation}
    \begin{split}
        \E[ f(\Bar{\Bar{\xx}}_{K}) - f^\star] 
        + 
        \frac{\mu}{4}
        \E\biggl[  
        ||\Tilde{\xx}_{K} - \xx^\star||^2
        +
        \frac{1}{n} \sum_{i=1}^n
        ||\xx_{i,K} - \xx^\star||^2
        \biggr]
        &\le 
        \frac{\mu}{2} \frac{1}{(1 + \frac{\mu}{2\eta - 2\mu})^K - 1}
        ||\xx_0 - \xx^\star||^2
        + \frac{\sigma^2}{2(\eta - L)} 
        \\
        &\le
        \frac{\eta}{K} ||\xx^0 - \xx^\star||^2
        + \frac{\sigma^2}{2(\eta - L)}  \;.
    \end{split}
    \end{equation}
where  
$
\Bar{\Bar\xx}_{K} 
:=
\sum_{k=1}^K \frac{1}{q^k} \Bar{\xx}_k / \sum_{k=1}^K \frac{1}{q^k}$,
$\Bar{\xx}_k := \Avg \xx_{i,k}$,
and $q := 1 - \frac{\mu}{2\eta - \mu}$.

To reach $\epsilon$-accuracy, i.e. 
$\E[ f(\Bar{\Bar{\xx}}_{K}) - f^\star] \le \epsilon$, 
by choosing $\eta = \frac{\sigma^2}{\epsilon} + L$, we get:
\begin{equation}
    K \le \Biggl\lceil \Bigl( \frac{2L}{\mu} + \frac{2\sigma^2}{\mu \epsilon}\Bigr)
    \ln\Bigl(1 + \frac{\mu||\xx^0 - \xx^\star||^2}{\epsilon} \Bigr) \Biggr\rceil 
    \le 
    \Biggl\lceil  
    \frac{2L ||\xx^0 - \xx^\star||^2}{\epsilon} 
    +
    \frac{2\sigma^2 ||\xx^0 - \xx^\star||^2}{\epsilon^2}
    \Biggr\rceil 
    \;.
\end{equation}

\end{theorem}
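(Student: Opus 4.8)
The plan is to feed the one-step estimate of Lemma~\ref{thm:MainRecurrenceConvexFedRedGD} into a telescoping argument of the same type as Lemma~\ref{thm:StrongConvexityRecurrence}, but carrying the extra additive variance term $\frac{\sigma^2}{2(\eta-L)}$ throughout.

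First I would exploit the parameter choice to simplify the coefficients. Since $p = \frac{\lambda+\mu/2}{\eta-\mu/2}$, we have $\frac{\lambda+\mu/2}{2p} = \frac{\eta-\mu/2}{2}$, so the left-hand coefficient $\frac{\lambda+\mu/2}{2p}-\frac{\mu}{4}$ in Lemma~\ref{thm:MainRecurrenceConvexFedRedGD} collapses to $\frac{\eta-\mu}{2}$, which already equals the coefficient multiplying $\Avg||\xx_{i,k}-\xx^\star||^2$ on the left. Setting the potential $D_k := \frac{\eta-\mu/2}{2}\bigl(||\Tilde{\xx}_k-\xx^\star||^2 + \Avg||\xx_{i,k}-\xx^\star||^2\bigr)$ and $F_k := \E[f(\Bar{\xx}_k)-f^\star]$, and observing that $q = 1-\frac{\mu}{2\eta-\mu} = \frac{\eta-\mu}{\eta-\mu/2}$ so that $q\,\frac{\eta-\mu/2}{2} = \frac{\eta-\mu}{2}$, the lemma rearranges (after subtracting $f(\xx^\star)$ from both sides and taking full expectations) into $F_{k+1}+D_{k+1}\le qD_k+e$ with $e:=\frac{\sigma^2}{2(\eta-L)}$.

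Next I would replay the proof of Lemma~\ref{thm:StrongConvexityRecurrence} with the error term: divide by $q^{k+1}$, sum over $k=0,\dots,K-1$ to telescope the $D$-terms, and obtain $\sum_{k=1}^K \frac{F_k}{q^k} + \frac{D_K}{q^K} \le D_0 + e\sum_{k=1}^K q^{-k}$. Dividing by $\sum_{k=1}^K q^{-k} = \frac{q^{-K}-1}{1-q}$ and applying Jensen's inequality to the convex $f$ (with the deterministic weights $q^{-k}$) converts the weighted average of the $F_k$ into $\E[f(\Bar{\Bar{\xx}}_K)-f^\star]$, exactly the stated output. Two identities then finish the first bound: the initialization $\xx_{i,0}=\Tilde{\xx}_0=\xx^0$ together with $(1-q)\frac{\eta-\mu/2}{2}=\frac{\mu}{4}$ gives $D_0(1-q)=\frac{\mu}{2}||\xx^0-\xx^\star||^2$ and turns $\frac{(1-q)D_K}{1-q^K}\ge(1-q)D_K$ into the claimed $\frac{\mu}{4}$-weighted distance term; and $q^{-1}=1+\frac{\mu}{2\eta-2\mu}$ rewrites the denominator as $(1+\frac{\mu}{2\eta-2\mu})^K-1$.

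The rest is routine estimation. The second inequality follows from Bernoulli's inequality $(1+x)^K-1\ge Kx$ with $x=\frac{\mu}{2\eta-2\mu}$, giving $\frac{\mu}{2[(1+x)^K-1]}\le\frac{\eta-\mu}{K}\le\frac{\eta}{K}$. For the iteration complexity I would set $\eta=\frac{\sigma^2}{\epsilon}+L$ so that $\frac{\sigma^2}{2(\eta-L)}=\frac{\epsilon}{2}$, then force the first term below $\frac{\epsilon}{2}$; solving $(1+x)^K\ge 1+\frac{\mu||\xx^0-\xx^\star||^2}{\epsilon}$ for $K$ and bounding $\frac{1}{\ln(1+x)}\le 1+\frac1x=\frac{2\eta-\mu}{\mu}\le\frac{2\eta}{\mu}$ yields the stated $K$, with the looser final form coming from $\ln(1+t)\le t$. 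Since all the genuine analytic content — the descent step, the Hessian-dissimilarity cancellation under $\lambda\ge\delta_A$, and the variance control under $\eta\ge L$ — is already packaged inside Lemma~\ref{thm:MainRecurrenceConvexFedRedGD}, the only step requiring real care is propagating the additive error $e$ correctly through the geometric telescoping, which is where I expect the bookkeeping to be most delicate.
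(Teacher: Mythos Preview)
Your proposal is correct and follows essentially the same route as the paper: invoke Lemma~\ref{thm:MainRecurrenceConvexFedRedGD}, substitute $p=\frac{\lambda+\mu/2}{\eta-\mu/2}$ to collapse the coefficients, telescope \`a la Lemma~\ref{thm:StrongConvexityRecurrence}, apply Jensen, and then do the complexity calculation. If anything, you are more explicit than the paper about carrying the additive variance term $e=\frac{\sigma^2}{2(\eta-L)}$ through the geometric telescoping, since Lemma~\ref{thm:StrongConvexityRecurrence} as stated has no error term and the paper simply cites it.
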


\begin{proof}

    According to Lemma~\ref{thm:MainRecurrenceConvexFedRedGD},
    we have:
    \begin{equation}
    \begin{split}
        \E\Biggl[
        f(\xx^\star) 
        + \biggl[ \frac{\lambda + \mu/2}{2p} 
        - \frac{\mu}{4} \biggr] ||\Tilde{\xx}_k - \xx^\star||^2
        + \frac{\eta - \mu}{2} 
        \Avg &||\xx_{i,k} - \xx^\star||^2 \Biggr]
        \ge 
        \E\Biggl[
        f(\Bar{\xx}_{k+1}) 
        + \frac{\lambda + \mu/2}{2p} ||\Tilde{\xx}_{k+1} - \xx^\star||^2 
        \\
        &+ \frac{\eta - \mu/2}{2} 
        \Avg ||\xx_{i,k+1} - \xx^\star||^2
        \Biggr] 
        - \frac{\sigma^2}{2(\eta - L)} \;.
    \end{split}
    \end{equation}
    By our choices of parameters: 
    $\lambda = \delta_A$, and $p = \frac{\lambda + \mu / 2}{\eta - \mu/2}$,
    we have:
    \begin{equation}
        \frac{\lambda + \mu / 2}{2 p} 
        =
        \frac{\eta - \mu/2}{2}, \quad 
        \frac{\eta - \mu}{\eta - \mu/2} 
        =
        \frac{\frac{\lambda + \mu/2}{2p} - \frac{\mu}{4}}
        {\frac{\lambda + \mu/2}{2p}}
        = 1 - \frac{\mu}{2\eta - \mu} \;.
    \end{equation}
    
    Plugging these parameters into the main recurrence and 
    dividing both sides by $\frac{\eta - \mu/2}{2}$
    we get:
    \begin{equation}
    \begin{split}
        &\quad \E\biggl[
        \frac{2}{\eta - \mu/2}[
        f(\Bar{\xx}_{k+1}) - f^\star]
        +
        \Bigl[ 
        ||\Tilde{\xx}_{k+1} - \xx^\star||^2
        +
        \Avg ||\xx_{i,k+1} - \xx^\star||^2
        \Bigr]
        \biggr]
        \\
        &\le
        \E\biggl[ 
        \underbrace{\Bigl( 
        1 - \frac{\mu}{2\eta - \mu}
        \Bigr)}_{:=q}
        \Bigl[ 
        ||\Tilde{\xx}_{k} - \xx^\star||^2
        +
        \Avg ||\xx_{i,k} - \xx^\star||^2
        \Bigr]
        \biggr]
        +
        \frac{2 \sigma^2}{(\eta - L) (2\eta - \mu)} \;.
    \end{split}
    \end{equation}
    Applying Lemma~\ref{thm:StrongConvexityRecurrence},
    using the convexity of $f$ and multiplying
    both sides by $\frac{\eta - \mu/2}{2}$, 
    we get:
    \begin{equation}
        \E[ f(\Bar{\Bar{\xx}}_{K}) - f^\star] 
        + 
        \frac{\mu}{4}
        \E\biggl[ 
        ||\Tilde{\xx}_{K} - \xx^\star||^2
        + 
        \frac{1}{n} \sum_{i=1}^n
        ||\xx_{i,K} - \xx^\star||^2
        \biggr]
        \le 
        \frac{\mu}{2} \frac{1}{(1 + \frac{\mu}{2\eta - 2\mu})^K - 1}
        ||\xx_0 - \xx^\star||^2
        + \frac{\sigma^2}{2(\eta - L)} \;.
    \end{equation}
    This concludes the proof for the first claim.

    We next prove the second claim. 
    To achieve $\E[f(\Bar{\Bar{\xx}}_{K}) - f^\star]$, it is sufficient to let
    \begin{equation}
    \begin{cases}
        \frac{\mu}{2} \frac{1}{(1 + \frac{\mu}{2\eta - 2\mu})^K - 1}
        ||\xx_0 - \xx^\star||^2 \le \frac{\epsilon}{2}
        \\
        \frac{\sigma^2}{2 (\eta - L)} = \frac{\epsilon}{2}
    \end{cases}
    \Rightarrow
    \begin{cases}
        K \ge \frac{2\eta - \mu}{\mu} 
        \ln\Bigl(1 + \frac{\mu||\xx^0 - \xx^\star||^2}{\epsilon} \Bigr)
        \\
        \eta = \frac{\sigma^2}{\epsilon} + L
    \end{cases}\;.
    \end{equation}
    Plugging $\eta = \frac{\sigma^2}{\epsilon} + L$ into the expression for $K$,
    we get:
    \begin{equation}
        K \ge \Bigl( \frac{2L}{\mu} + \frac{2\sigma^2}{\mu \epsilon}\Bigr)
        \ln\Bigl(1 + \frac{\mu||\xx^0 - \xx^\star||^2}{\epsilon} \Bigr) \;.
        \qedhere
    \end{equation}
\end{proof}

\subsection{Non-convex result and proof for Algorithm~\ref{Alg:GDLocalSolver} with control variate~\eqref{eq:ControlVariateLocal2}}
\begin{theorem}
    Consider Algorithm~\ref{Alg:GDLocalSolver} with control 
    variate~\eqref{eq:ControlVariateLocal2} and randomized averaging
    with random index set $\{i_k\}_{k=0}^{+\infty}$.
    Let $f_i : \R^d \to \R$ be continuously differentiable
    and $L$-smooth for any $i \in [n]$. 
    Assume that $\{f_i\}$ have $\delta_B$-BHD with
    $\delta_B \le L$. 
    Assume that $\E[g_i(\xx)] = \nabla f_i(\xx)$ for any $\xx \in \R^d$ 
    and that $\E[||g_i(\xx) - \nabla f_i(\xx)||^2] \le \sigma^2$.
    By choosing 
    $\eta = 3L + \sqrt{9L^2 + \frac{L \sigma^2 K}{f(\xx^0) - f^\star} } $, 
    $\lambda = \delta_B$
    and $p = \frac{\delta_B}{L}$, for any $K \ge 1$, it holds that:
    \begin{equation}
        \E\bigl[||\nabla f(\Bar{\xx}_K)||^2 \bigr]
        \le 
        \frac{96 L (f (\xx^0) - f^\star)}{K}
        + 24 \sqrt{\frac{L (f(\xx^0) - f^\star)}{K}} \sigma
        \;.
    \end{equation}
    where $\Bar{\xx}_K$ is uniformly sampled from 
    $(\xx_{i_k,k})_{k=0}^{K-1}$.
\end{theorem}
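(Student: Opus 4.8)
The plan is to mirror the one-step Lyapunov analysis used for the exact version of \algname{FedRed} in the non-convex case, while treating the stochastic gradient with enough care that its variance enters at the correct (second) order. First I would fix an iteration $k$ and a client $i$ and exploit that $\xx_{i,k+1}$ is the exact minimizer of the $(\eta+\lambda)$-strongly convex quadratic surrogate $F_{i,k}$. Evaluating the strong-convexity inequality at $\xx=\xx_{i,k}$ gives a one-step inequality containing the inner product $\lin{\gg_i(\xx_{i,k}),\xx_{i,k+1}-\xx_{i,k}}$, the drift term $\lin{\hh_{i,k},\xx_{i,k+1}-\xx_{i,k}}$, a coefficient-$\tfrac{2\eta+\lambda}{2}$ multiple of $\|\xx_{i,k+1}-\xx_{i,k}\|^2$, and the two reference-point terms $\tfrac{\lambda}{2}\|\xx_{i,k}-\Tilde{\xx}_k\|^2$ and $\tfrac{\lambda}{2}\|\xx_{i,k+1}-\Tilde{\xx}_k\|^2$. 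The target is to turn this into a recurrence for the Lyapunov function $f(\xx_{i,k})+A\|\xx_{i,k}-\Tilde{\xx}_k\|^2$ of the \emph{selected} client.

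To pass from $f_i$ to the global $f$ I would argue as in Lemma~\ref{thm:TwoPointRelationNonConvexFramework}. Using $L$-smoothness of $f_i$ I replace $\lin{\nabla f_i(\xx_{i,k}),\xx_{i,k+1}-\xx_{i,k}}$ by $f_i(\xx_{i,k+1})-f_i(\xx_{i,k})-\tfrac{L}{2}\|\xx_{i,k+1}-\xx_{i,k}\|^2$; then, writing $h_i:=f-f_i$ and using that the control variate is $\hh_{i,k}=-\nabla h_i(\Tilde{\xx}_k)$, I convert the $f_i$-difference into an $f$-difference, paying a $\tfrac{\delta_B}{2}\|\xx_{i,k+1}-\xx_{i,k}\|^2$ term (from $\delta_B$-smoothness of $h_i$) and a mismatch $\lin{\nabla h_i(\Tilde{\xx}_k)-\nabla h_i(\xx_{i,k}),\xx_{i,k+1}-\xx_{i,k}}$. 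The latter I bound by Young's inequality and $\delta_B$-BHD, producing a $\|\xx_{i,k}-\Tilde{\xx}_k\|^2$ coupling term and an $\tfrac{\alpha}{2}\|\xx_{i,k+1}-\xx_{i,k}\|^2$ term absorbed into the quadratic budget, for a free parameter $\alpha\sim\eta$.

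The delicate part — and the step I expect to be the main obstacle — is the stochastic noise. Writing $\zeta_i:=\gg_i(\xx_{i,k})-\nabla f_i(\xx_{i,k})$ and $\xx_{i,k+1}-\xx_{i,k}=-\tfrac{1}{\eta+\lambda}(\zeta_i+\bar{\dd}_{i,k})$ with $\bar{\dd}_{i,k}$ the noiseless direction, the one-step inequality contains both the cross term $\lin{\zeta_i,\xx_{i,k+1}-\xx_{i,k}}$, whose conditional expectation is $-\tfrac{1}{\eta+\lambda}\E\|\zeta_i\|^2$, and the retained $\tfrac{2\eta+\lambda-L-\alpha}{2}\|\xx_{i,k+1}-\xx_{i,k}\|^2$, whose expectation contributes $\tfrac{2\eta+\lambda-L-\alpha}{2(\eta+\lambda)^2}\E\|\zeta_i\|^2$. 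I must keep the full quadratic term rather than only its mean: the two variance contributions largely cancel, leaving a benign noise of order $\tfrac{(\lambda+L)\sigma^2}{(\eta+\lambda)^2}=\cO(\tfrac{L\sigma^2}{\eta^2})$ per step. This cancellation is exactly what converts the naive $\cO(\sigma^2/\eta)$ error — which would \emph{not} vanish after dividing by the per-step gradient coefficient $C\sim1/\eta$ — into the vanishing $\sqrt{L\sigma^2/K}$ term; getting this bookkeeping right simultaneously with the BHD couplings is the crux.

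Finally I would lower-bound the retained quadratic term by a gradient norm: since $\bar{\dd}_{i,k}=\nabla f(\xx_{i,k})-(\nabla h_i(\xx_{i,k})-\nabla h_i(\Tilde{\xx}_k))+\lambda(\xx_{i,k}-\Tilde{\xx}_k)$, BHD gives $\|\bar{\dd}_{i,k}\|\ge\|\nabla f(\xx_{i,k})\|-(\delta_B+\lambda)\|\xx_{i,k}-\Tilde{\xx}_k\|$, so $\E\|\xx_{i,k+1}-\xx_{i,k}\|^2\ge\|\bar{\dd}_{i,k}\|^2/(\eta+\lambda)^2$ yields a $-C\|\nabla f(\xx_{i,k})\|^2$ contribution with $C\sim1/\eta$ plus a further $\|\xx_{i,k}-\Tilde{\xx}_k\|^2$ coupling. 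Collecting everything gives a per-client inequality $f(\xx_{i,k})+A\|\xx_{i,k}-\Tilde{\xx}_k\|^2\ge\E[f(\xx_{i,k+1})+B\|\xx_{i,k+1}-\Tilde{\xx}_k\|^2+C\|\nabla f(\xx_{i,k})\|^2]-N$. Taking expectation over the random index $i_k$ and the Bernoulli $\theta_k$, using the averaging identity $\E\|\xx_{i_k,k+1}-\Tilde{\xx}_{k+1}\|^2=(1-p)\E\|\xx_{i_k,k+1}-\Tilde{\xx}_k\|^2$ and the fact that $i_k,i_{k+1}$ are i.i.d.\ to re-index, I obtain a telescoping Lyapunov recurrence provided $A\le\tfrac{B}{1-p}$ and all coefficients are positive. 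With $\lambda=\delta_B$, $p=\delta_B/L$, and $\alpha=\tfrac23\eta$ (as in the exact case) these constraints hold for $\eta\gtrsim L$ and give $C\ge c/\eta$. Summing from $k=0$ to $K-1$, dividing by $CK$, and using $\xx_{i,0}=\Tilde{\xx}_0$ yields $\tfrac1K\sum_k\E\|\nabla f(\xx_{i_k,k})\|^2\lesssim\tfrac{\eta F_0}{K}+\tfrac{L\sigma^2}{\eta}$ with $F_0:=f(\xx^0)-f^\star$; substituting $\eta=3L+\sqrt{9L^2+L\sigma^2K/F_0}$ and using $\sqrt{a+b}\le\sqrt a+\sqrt b$ gives the claimed $\tfrac{96LF_0}{K}+24\sqrt{LF_0/K}\,\sigma$.
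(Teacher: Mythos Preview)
Your overall architecture matches the paper's proof almost step for step: strong convexity of the quadratic surrogate at $\xx_{i,k}$, the $f_i\to f$ conversion via Lemma~\ref{thm:TwoPointRelationNonConvexFramework}, Young's inequality on the mismatch $\lin{\nabla h_i(\Tilde{\xx}_k)-\nabla h_i(\xx_{i,k}),\xx_{i,k+1}-\xx_{i,k}}$, the explicit noise cancellation using the closed-form step, the gradient lower bound on $\|\bar{\dd}_{i,k}\|$, and the $(1-p)$-contraction from randomized averaging. However, your variance bookkeeping contains a genuine error that breaks the final bound.

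After absorbing $\tfrac{\alpha}{2}\|\xx_{i,k+1}-\xx_{i,k}\|^2$ into the quadratic budget, the retained quadratic has coefficient $\tfrac{2\eta+\lambda-L-\delta_B-\alpha}{2}$, and the net variance contribution is
\[
\frac{2\eta+\lambda-L-\delta_B-\alpha}{2(\eta+\lambda)^2}\,\sigma^2-\frac{1}{\eta+\lambda}\,\sigma^2
=-\frac{\lambda+L+\delta_B+\alpha}{2(\eta+\lambda)^2}\,\sigma^2.
\]
You dropped the $\alpha$ from this expression. With your choice $\alpha=\tfrac{2}{3}\eta$ (borrowed from the noiseless \algname{FedRed} analysis) this term is $\Theta(\sigma^2/\eta)$, not $\Theta(L\sigma^2/\eta^2)$; after dividing by the gradient coefficient $C\sim 1/\eta$ you get a constant $\Theta(\sigma^2)$ floor that does not vanish, so the claimed $\sqrt{LF_0/K}\,\sigma$ term cannot be reached. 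The paper fixes this by taking $\alpha=5L$ (any $\alpha=\Theta(L)$ works) and then re-verifying the Lyapunov constraint $A\le\tfrac{\lambda}{2(1-p)}$, which still holds with $p=\delta_B/L$, $\lambda=\delta_B$, $\eta\ge 3L$. An alternative fix, closer in spirit to what you wrote, is to take the conditional expectation of the mismatch inner product \emph{before} applying Young's inequality (the factor $\nabla h_i(\Tilde{\xx}_k)-\nabla h_i(\xx_{i,k})$ is deterministic, so the expectation passes through to $\E_\xi[\xx_{i,k+1}-\xx_{i,k}]$); then the $\alpha$-term multiplies only the deterministic $\|\bar{\dd}_{i,k}\|^2/(\eta+\lambda)^2$ and never enters the variance, and your choice $\alpha\sim\eta$ becomes admissible.
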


\begin{proof}
    Let $h_i := f - f_i$. 
    Let 
    $
    G_{i,k}(\xx) 
    := 
    f_i(\xx_{i,k})
    +
    \lin{g_i(\xx_{i,k}), \xx - \xx_{i,k}}
    +
    \frac{\eta}{2} ||\xx - \xx_{i,k}||^2
    $.
    Recall that for any $i \in [n]$,
    the update of Algorithm~\ref{Alg:GDLocalSolver} satisfies:
    \begin{equation}
    \xx_{i,k+1} 
    = 
    \argmin_{\xx \in \R^d} 
    \{ G_{i,k} (\xx) 
    +
    \lin{ \nabla h_i(\Tilde{\xx}_k), \xx} 
    + \frac{\lambda}{2} ||\xx - \Tilde{\xx}_k||^2 \} \;.
    \end{equation}
    Using strong convexity, we have:
    \begin{equation}
    \begin{split}
        f_i(\xx_{i,k}) 
        +
        \lin{\nabla h_i(\Tilde{\xx}_k), \xx_{i,k}}
        +
        \frac{\lambda}{2} ||\xx_{i,k} - \Tilde{\xx}_k||^2
        \stackrel{\eqref{df:stconvex}}{\ge} 
        G_{i,k} (\xx_{i,k+1})
        +
        &\lin{\nabla h_i(\Tilde{\xx}_k), \xx_{i,k+1}}
        +
        \frac{\lambda}{2} ||\xx_{i,k+1} - \Tilde{\xx}_k||^2
        \\
        &+
        \frac{\lambda + \eta}{2} ||\xx_{i,k+1} - \xx_{i,k}||^2 \;.
    \end{split}
    \end{equation}
    Denote the randomness coming from $g_i(\xx_{i,k})$ by $\xi_i$ and
    denote all the randomness $\{\xi_i\}_{i=1}^n$ by $\xi$. Denote the randomness from the random selection of the index at 
    iteration $k$ by $i_k$.
    
    Let $\eta \ge L$. We get:
    \begin{align}
        \E_{\xi} [ G_{i,k} (\xx_{i,k+1}) ]
        &\stackrel{\eqref{eq:SmoothUpperBound}}{\ge}
        \E_\xi 
        \Bigl[
        f_i(\xx_{i,k+1}) + 
        \lin{ g_i(\xx_{i,k}) - \nabla f_i(\xx_{i,k}), 
        \xx_{i.k+1} - \xx_{i,k} } 
        + \frac{\eta - L}{2} ||\xx_{i,k+1} - \xx_{i,k}||^2
        \Bigr] \;.
    \end{align}

    It follows that:
    \begin{equation}
    \begin{split}
        f_i(\xx_{i,k}) 
        +
        \lin{\nabla h_i(\Tilde{\xx}_k), \xx_{i,k}}
        +
        &\frac{\lambda}{2} ||\xx_{i,k} - \Tilde{\xx}_k||^2
        \ge
        \E_{\xi} \biggl[
        f_i(\xx_{i,k+1})
        +
        \lin{ \nabla h_i(\Tilde{\xx}_k), \xx_{i,k+1}}
        +
        \frac{\lambda}{2} ||\xx_{i,k+1} - \Tilde{\xx}_k||^2
        \\
        &+
        \lin{ g_i(\xx_{i,k}) - \nabla f_i(\xx_{i,k}), 
        \xx_{i.k+1} - \xx_{i,k} } 
        +
        \frac{\lambda + 2\eta - L}{2} ||\xx_{i,k+1} - \xx_{i,k}||^2 
        \biggr]  \;.
    \end{split}
    \end{equation}

    According to Lemma~\ref{thm:TwoPointRelationNonConvexFramework} with 
    $\xx^r = \xx_{i,k}$ and $\yy = \xx_{i,k+1}$, we get:
    \begin{equation}
        f_i(\xx_{i,k}) + \lin{\nabla h_i (\xx_{i,k}), \xx_{i,k} - \xx_{i,k+1}}
        - f_i(\xx_{i,k+1}) \le f(\xx_{i,k}) - f(\xx_{i,k+1})
        + \frac{\delta_B}{2} ||\xx_{i,k+1} - \xx_{i,k}||^2 \;.
    \end{equation}
    
    Substituting this inequality into the previous display, we get:
    \begin{equation}
    \begin{split}
        f(\xx_{i,k})
        &+
        \frac{\lambda}{2} ||\xx_{i,k} - \Tilde{\xx}_k||^2 
        \ge
        \E_{\xi}\biggl[
        f(\xx_{i,k+1}) 
        -
        \lin{ \nabla h_i(\xx_{i,k}) - \nabla h_i(\Tilde{\xx}_k), \xx_{i,k+1} - \xx_{i,k}}
        -
        \frac{\delta_B}{2} ||\xx_{i,k+1} - \xx_{i,k}||^2
        \\
        &+
        \lin{ g_i(\xx_{i,k}) - \nabla f_i(\xx_{i,k}), 
        \xx_{i.k+1} - \xx_{i,k} } 
        +
        \frac{\lambda}{2} ||\xx_{i,k+1} - \Tilde{\xx}_k||^2
        +
        \frac{\lambda + 2\eta - L}{2} ||\xx_{i,k+1} - \xx_{i,k}||^2 
        \biggr] \;.
    \end{split}
    \label{eq:GDControlVariate2NonConvexFirstEquation}
    \end{equation}
    For any $\alpha > 0$, we have that:
    \begin{align}
        -
        \lin{\nabla h_i(\xx_{i,k}) - \nabla h_i(\Tilde{\xx}_k),
        \xx_{i,k+1} - \xx_{i,k}}
        &=
        \lin{\nabla h_i(\Tilde{\xx}_k) - \nabla h_i(\xx_{i,k}),
        \xx_{i,k+1} - \xx_{i,k}}
        \\
        &\stackrel{\eqref{eq:BasicInequality1}}{\ge} 
        -\frac{||\nabla h_i(\Tilde{\xx}_k) - \nabla h_i(\xx_{i,k})||^2}{2 \alpha}
        -\frac{\alpha ||\xx_{i,k+1} - \xx_{i,k}||^2}{2}
        \\
        &\ge
        -\frac{\delta_B^2||\Tilde{\xx}_k - \xx_{i,k}||^2}{2 \alpha}
        -\frac{\alpha ||\xx_{i,k+1} - \xx_{i,k}||^2}{2} \;.
    \end{align}
    Plugging this inequality 
    into~\eqref{eq:GDControlVariate2NonConvexFirstEquation}, we obtain,
    for any $i \in [n]$:
    \begin{equation}
    \begin{split}
        f(\xx_{i,k})
        +
        \Bigl(
        \frac{\lambda}{2} + \frac{\delta_B^2}{2\alpha}
        \Bigr)
        ||\xx_{i,k} - \Tilde{\xx}_k||^2 
        \ge
        \E_{\xi}\biggl[ 
        f(\xx_{i,k+1}) 
        &+
        \frac{\lambda}{2} ||\xx_{i,k+1} - \Tilde{\xx}_k||^2
        +
        \frac{\lambda + 2\eta - L - \delta_B - \alpha}{2}
        ||\xx_{i,k+1} - \xx_{i,k}||^2 
        \\
        &+
        \lin{ g_i(\xx_{i,k}) - \nabla f_i(\xx_{i,k}), 
        \xx_{i.k+1} - \xx_{i,k} } \biggr]\;.
    \end{split}
    \label{eq:GDControlVariate2NonConvexSecondEquation}
    \end{equation}
    We now lower-bound the last two terms.
    Recall that $\xx_{i,k+1}$ satisfies:
    \begin{equation}
        g_i(\xx_{i,k}) + \eta (\xx_{i,k+1} - \xx_{i,k})
        + \nabla h_i(\Tilde{\xx}_k) + \lambda (\xx_{i,k+1} - \Tilde{\xx}_k) = 0 \;.
    \end{equation}
    which implies:
    \begin{equation}
        \xx_{i,k+1} - \xx_{i,k}
        =
        -\frac{1}{\eta + \lambda}
        \bigl( 
        g_i(\xx_{i,k})
        +
        \nabla h_i(\Tilde{\xx}_k)
        +
        \lambda (\xx_{i,k} - \Tilde{\xx}_k)
        \bigr)
        \;.
    \end{equation}
    It follows that:
    \begin{align}
        &\quad \E_{\xi}\Bigl[ \lin{ g_i(\xx_{i,k}) - \nabla f_i(\xx_{i,k}), 
        \xx_{i.k+1} - \xx_{i,k} }  \Bigr]
        \\
        &=
        \E_{\xi}\Biggl[ \lin{ g_i(\xx_{i,k}) - \nabla f_i(\xx_{i,k}), 
        -\frac{1}{\eta + \lambda}
        \bigl( 
        g_i(\xx_{i,k})
        +
        \nabla h_i(\Tilde{\xx}_k)
        +
        \lambda (\xx_{i,k} - \Tilde{\xx}_k) }  \Biggr]
        \\
        &=
        -\frac{1}{\eta + \lambda}
        \E_{\xi}\Bigl[ \lin{ g_i(\xx_{i,k}) - \nabla f_i(\xx_{i,k}), 
        g_i(\xx_{i,k}) - \nabla f_i(\xx_{i,k}) + \nabla f_i(\xx_{i,k})} 
        \Bigr]
        \\
        &=
        -\frac{1}{\eta + \lambda}
        \E_\xi\bigl[ ||g_i(\xx_{i,k}) - \nabla f_i(\xx_{i,k})||^2 \bigr] \;,
    \end{align}
    and that:
    \begin{align}
        \E_\xi[|| \xx_{i,k+1} - \xx_{i,k} ||^2]
        &=
        \frac{1}{(\eta + \lambda)^2} 
        \E_\xi \Bigl[  
        ||g_i(\xx_{i,k}) - \nabla f_i(\xx_{i,k}) 
        + 
        \nabla f_i(\xx_{i,k})
        +
        \nabla h_i(\Tilde{\xx}_k)
        +
        \lambda (\xx_{i,k} - \Tilde{\xx}_k)
        ||^2
        \Bigr]
        \\
        &=
        \frac{1}{(\eta + \lambda)^2} 
        \Biggl[
        \E_\xi \Bigl[  
        ||g_i (\xx_{i,k}) - \nabla f_i(\xx_{i,k})||^2
        \Bigr]
        +
        \E_\xi \Bigl[  
        ||\nabla f_i(\xx_{i,k})
        +
        \nabla h_i(\Tilde{\xx}_k)
        +
        \lambda (\xx_{i,k} - \Tilde{\xx}_k)||^2
        \Bigr]
        \Biggr] \;.
    \end{align}
    where we use the assumption that $g_i(\xx_{i,k})$ is an unbiased estimator 
    of $\nabla f_i (\xx_{i,k})$.
    The last two terms of~\eqref{eq:GDControlVariate2NonConvexSecondEquation}
    can thus be lower bounded by:
    \begin{align}
        &\quad \E_{\xi} \Biggl[ 
        \frac{\lambda + 2\eta - L - \delta_B - \alpha}{2}
        ||\xx_{i,k+1} - \xx_{i,k}||^2 
        +
        \lin{ g_i(\xx_{i,k}) - \nabla f_i(\xx_{i,k}), 
        \xx_{i.k+1} - \xx_{i,k} }
        \Biggr] 
        \\
        &\ge A \E_\xi \Bigl[  
        ||\nabla f_i(\xx_{i,k})
        +
        \nabla h_i(\Tilde{\xx}_k)
        +
        \lambda (\xx_{i,k} - \Tilde{\xx}_k)||^2 \Bigr]
        - 
        B \E_\xi \Bigl[  
        ||g_i (\xx_{i,k}) - \nabla f_i(\xx_{i,k})||^2
        \Bigr] 
        \\
        &\ge 
        A \E_\xi \Bigl[  
        ||\nabla f_i(\xx_{i,k})
        +
        \nabla h_i(\Tilde{\xx}_k)
        +
        \lambda (\xx_{i,k} - \Tilde{\xx}_k)||^2 \Bigr]
        -B \sigma^2 \;.
    \end{align}
    where $A := \frac{\lambda + 2\eta - L - \delta_B - \alpha}
    {2 (\eta + \lambda)^2}$ and 
    $B := \frac{\lambda + L + \delta_B + \alpha}{2(\eta + \lambda)^2}$.
    
    Further note that:
    \begin{align}
        ||\nabla f_i(\xx_{i,k})
        +
        \nabla h_i(\Tilde{\xx}_k)
        +
        \lambda (\xx_{i,k} - \Tilde{\xx}_k)||
        &\ge
        ||\nabla f_i(\xx_{i,k}) + \nabla h_i(\Tilde{\xx}_k)||
        -
        \lambda ||\Tilde{\xx}_k - \xx_{i,k}||
        \\
        &\ge
        ||\nabla f(\xx_{i,k})|| - 
        ||\nabla h_i(\Tilde{\xx}_k) - \nabla h_i(\xx_{i,k})||
        -
        \lambda ||\Tilde{\xx}_k - \xx_{i,k}||
        \\
        &\stackrel{\eqref{eq:MaxHessianSimilarity}}{\ge} 
        ||\nabla f(\xx_{i,k})|| - 
        \delta_B ||\xx_{i,k} - \Tilde{\xx}_k||
        -
        \lambda ||\Tilde{\xx}_k - \xx_{i,k}||
        \\
        &=
        ||\nabla f(\xx_{i,k})||
        -
        (\delta_B + \lambda) 
        ||\xx_{i,k} - \Tilde{\xx}_k|| \;.
    \end{align}
    This gives:
    \begin{equation}
        ||\nabla f_i(\xx_{i,k})
        +
        \nabla h_i(\Tilde{\xx}_k)
        +
        \lambda (\xx_{i,k} - \Tilde{\xx}_k)||^2
        \stackrel{\eqref{eq:BasicInequality1}}{\ge}
        \frac{||\nabla f(\xx_{i,k})||^2}{2} - 
        (\delta_B + \lambda)^2 ||\xx_{i,k} - \Tilde{\xx}_k||^2 \;.
    \end{equation}
    Substituting all the previous displays 
    into~\eqref{eq:GDControlVariate2NonConvexSecondEquation}, we get:
    \begin{equation}
        f(\xx_{i,k})
        +
        C
        ||\xx_{i,k} - \Tilde{\xx}_k||^2 
        +
        B \sigma^2
        \ge
        \E_{\xi}\biggl[ 
        f(\xx_{i,k+1}) 
        +
        \frac{\lambda}{2} ||\xx_{i,k+1} - \Tilde{\xx}_k||^2
        +
        \frac{A}{2}
        ||\nabla f(\xx_{i,k})||^2  \biggr]\;.
    \end{equation}
    where 
    $C:= 
    \frac{\lambda}{2} + \frac{\delta_B^2}{2\alpha} 
    + A (\delta_B + \lambda)^2$.

    Since the previous display holds for any $i \in [n]$,
    we can take the expectation w.r.t. $\theta_k$ and $i_k$
    and get:
    \begin{equation}
    \begin{split}
        \E_{i_k,\theta_k}\biggl[
        f(\xx_{i_k,k})
        +
        C ||\xx_{i_k,k} - \Tilde{\xx}_k||^2 
        +
        B \sigma^2
        \biggl]
        \ge
        \E_{i_k,\theta_k,\xi}\biggl[
         f(\xx_{i_k,k+1}) 
        +
        \frac{\lambda}{2} 
        ||\xx_{i_k,k+1} - \Tilde{\xx}_k||^2
        +
        \frac{A}{2} ||\nabla f(\xx_{i_k,k})||^2 \biggr] \;,
        \label{eq:GDControlVariate2NonConvexEquation3}
    \end{split}
    \end{equation}

    Recall that with probability $p$, $\Tilde{\xx}_{k+1} = \xx_{i_k,k+1}$.
    It follows that:
    \begin{equation}
        \E_{i_k,\theta_k,\xi}[||\xx_{i_k,k+1} - \Tilde{\xx}_{k+1}||^2]
        =
        (1-p) \E_{i_k,\theta_k,\xi}[ ||\xx_{i_k,k+1} - \Tilde{\xx}_{k}||^2 ]  \;,
    \end{equation}
    Substituting this identity into the previous display, we get:
    \begin{equation}
    \begin{split}
        \E_{i_k,\theta_k}\biggl[
        f(\xx_{i_k,k})
        +
        C ||\xx_{i_k,k} - \Tilde{\xx}_k||^2 
        +
        B \sigma^2
        \biggl]
        \ge
        \E_{i_k,\theta_k,\xi}\biggl[
         f(\xx_{i_k,k+1}) 
        +
        \frac{\lambda}{2(1-p)} 
        ||\xx_{i_k,k+1} - \Tilde{\xx}_{k+1}||^2
        +
        \frac{A}{2} ||\nabla f(\xx_{i_k,k})||^2 \biggr] \;.
    \end{split}
    \end{equation}
    Note that $i_k$ and $i_{k+1}$  follow the same distribution and 
    are independent of $\{\theta_k\}_{k=0}^{+\infty}$. 
    It follows that:
    \begin{equation}
        \E_{i_k,\theta_k,\xi}\bigl[
        f(\xx_{i_k,k+1}) \bigl]
        =
        \E_{i_{k+1},i_k,\theta_k,\xi}\bigl[
        f(\xx_{i_{k+1},k+1}) \bigl], \; \text{and}\;
        \E_{i_k,\theta_k,\xi}\bigl[
        ||\xx_{i_k,k+1} - \Tilde{\xx}_{k+1}||^2\bigl]
        =
        \E_{i_{k+1},i_k,\theta_k,\xi}\bigl[
        ||\xx_{i_{k+1},k+1} - \Tilde{\xx}_{k+1}||^2 \bigl] \;.
    \end{equation}
    
    Taking expectation w.r.t $i_{k+1}$ on both sides of the previous display,
    substituting these two identities, and then taking the full expectation,
    we obtain our main recurrence:
    \begin{equation}
    \begin{split}
        \E[ f(\xx_{i_k,k}) ]
        +
        C \E[ ||\xx_{i_k,k} - \Tilde{\xx}_k||^2 ]
        +
        B \sigma^2
        \ge 
        \E[ f(\xx_{i_{k+1},k+1}) ]
        +
        &\frac{\lambda}{2(1-p)}
        \E[||\xx_{i_{k+1},k+1} - \Tilde{\xx}_{k+1}||^2]
        \\
        &+
        \frac{A}{2} \E[ ||\nabla f(\xx_{i_k,k})||^2 ] \;.
        \label{eq:GDControlVariate2NonConvexMainEquation}
    \end{split}
    \end{equation}
    
    It is clear now we need to choose parameters such that 
    $C \le \frac{\lambda}{2(1-p)}$, which is equivalent to:
    \begin{equation}
      \frac{\lambda}{2} + \frac{\delta_B^2}{2\alpha} 
      + \frac{\lambda + 2\eta - L - \delta_B - \alpha}
         {2 (\eta + \lambda)^2} (\delta_B + \lambda)^2
      \le
      \frac{\lambda}{2(1-p)} \;,
    \end{equation}
    while at the same time keeping $A$ sufficiently large. 
    
    By choosing $p = \frac{\delta_B}{L}$, $\lambda = \delta_B$, $\eta \ge 3L$
    and
    $\alpha = 5L$, we obtain:
    \begin{equation}
      \frac{\lambda}{2} + \frac{\delta_B^2}{2\alpha}
      +
      \frac{\lambda + 2\eta - L - \delta_B - \alpha}{2}
      \frac{(\delta_B + \lambda)^2}{(\eta + \lambda)^2}
      \le
      \frac{\delta_B}{2} + \frac{\delta_B^2}{10 L} 
      + (\eta - 3L) \frac{4 \delta_B^2}{\eta^2}
      \le \frac{\delta_B}{2} + \frac{\delta_B^2}{2L} 
      \le \frac{\lambda}{2(1-p)} \;,
    \end{equation}
    where in the last second inequality, we use the fact that 
    $\frac{\eta - 3L}{\eta ^2} \le \frac{1}{12L}$ for any $\eta \ge 3L$.
    
    Plugging in these parameters into~\eqref{eq:GDControlVariate2NonConvexMainEquation} 
    , summing up from $k=0$ to $K-1$ and dividing both sides by $K$, 
    we obtain:
    \begin{equation}
        \frac{1}{K}\sum_{k=0}^{K-1}
        \E\Bigl[ ||\nabla f(\xx_{i_k,k})||^2 \Bigr]
        \le 
        \frac{2(f(\xx^0) - f^\star)}{A K} 
        +
        \frac{2B \sigma^2}{A}
        \;,
    \end{equation}
    where we use the fact that $\E[||\xx_{i_0,0} - \Tilde{\xx}_0||] = 0$.
    Plugging the definition of $A$ and $B$, we obtain:
    \begin{align}
        \frac{2(f(\xx^0) - f^\star)}{A K} 
        +
        \frac{2B \sigma^2}{A}   
        &=
        \frac{2 (\eta +\delta_A)^2 }{\eta - 3L} 
        \frac{(f(\xx^0) - f^\star)}{K}
        +
        \frac{2 (\delta_B + 3L)}{\eta - 3L} \sigma^2
        \\
        &\le 
        \frac{8 \eta^2}{\eta - 3L}  \frac{(f(\xx^0) - f^\star)}{K}
        +
        \frac{8 L}{\eta - 3L} \sigma^2
        \;.
    \end{align}
    Let $v_1 := \frac{f(\xx^0) - f^\star}{K}$ and $v_2 := L\sigma^2$.
    The upper bound can be written as:
    $
    8\bigl[ 
    v_1 (\eta - 3L) + \frac{9v_1 L^2 + v_2}{\eta - 3L} + 6 v_1 L
    \bigr]
    $.
    Minimizing the bound w.r.t. $\eta$ over $\eta \ge 3L$, we get
    $\eta^\star = 3L + \sqrt{9L^2 + \frac{v_2}{v_1}} $. Plugging this 
    choice into the upper bound and using the fact that
    $\eta^\star \ge 6L$, $\eta^\star \ge 3L + \sqrt{\frac{v_2}{v_1}}$,
    and $(\eta ^\star)^2 \le 18L^2 + 18L^2 + 2\frac{v_2}{v_1}$, 
    we get:
    \begin{align}
        \frac{8 \eta^2}{\eta - 3L}  \frac{(f(\xx^0) - f^\star)}{K}
        +
        \frac{8 L}{\eta - 3L} \sigma^2
        &\le 
        \frac{96 L (f (\xx^0) - f^\star)}{K}
        +
        16 \sqrt{\frac{v_2}{v_1}} \frac{f(\xx^0) - f^\star}{K}
        +
        8 L \sigma^2 \sqrt{\frac{v_1}{v_2}} 
        \\ 
        &=
        \frac{96 L (f (\xx^0) - f^\star)}{K}
        + 24 \sqrt{\frac{L (f(\xx^0) - f^\star)}{K}} \sigma \;.
    \end{align}
\end{proof}

\section{Additional experiments}
\label{sec:DLexperiments}

\textbf{Deep learning task.}
We consider the multi-class classification tasks with CIFAR10~\cite{cifar10}
and CIFAR100~\cite{cifar100} datasets using ResNet-18~\cite{resnet}.
We use $n=10$ and partition the dataset according to the Dirichlet distribution following~\cite{lin2020ensemble} with the concentration parameter $\alpha=0.5$ to simulate the heterogeneity scenario in the cross-silo setting. We compare \algname{DANE+-SGD} and \algname{FedRed-SGD} against \algname{Scaffold}~\cite{scaffold}, \algname{Scaffnew}~\cite{proxskip} 
and \algname{Fedprox}~\cite{fedprox}. We use \algname{SGD} with
a mini-batch size of $512$ for all the methods as the local subsolver.
For \algname{DANE+-SGD}, \algname{FedProx}, and \algname{Scaffold}, we fix the number of local steps and 
select the best one among $\{10, 20\}$. For \algname{FedRed-SGD} and \algname{Scaffnew},
we select the best $p$ from $\{0.1, 0.05\}$. We use $\eta_g=1$ for \algname{Scaffold}
and choose the best local constant learning rate from $\{0.01,0.02,0.05,0.1\}$.
For \algname{DANE+-SGD}, \algname{FedRed-SGD}, and \algname{FedProx}, we select the best $\lambda$
from $\{10^i\}_{i=-3,-2,...,1}$. 
We use the exact control variate~\eqref{eq:ControlVariate2} for \algname{Scaffold}, 
\algname{DANE+-SGD} and \algname{FedRed-SGD}. From Figure~\ref{fig:cifar-main},
we surprisingly observe that \algname{FedProx} consistently outperforms other methods and exhibits more stable convergence. The primary difference is that
\algname{FedProx} does not apply any drift correction. Given the reported ineffectiveness of variance reduction for training deep neural networks~\cite{vrineffectiveness}, there remains a need for deeper investigations into the effectiveness of drift correction in federated learning for the training of non-convex and non-smooth neural networks.

\begin{figure*}[tb!]
    \centering
    \includegraphics[width=1\textwidth]{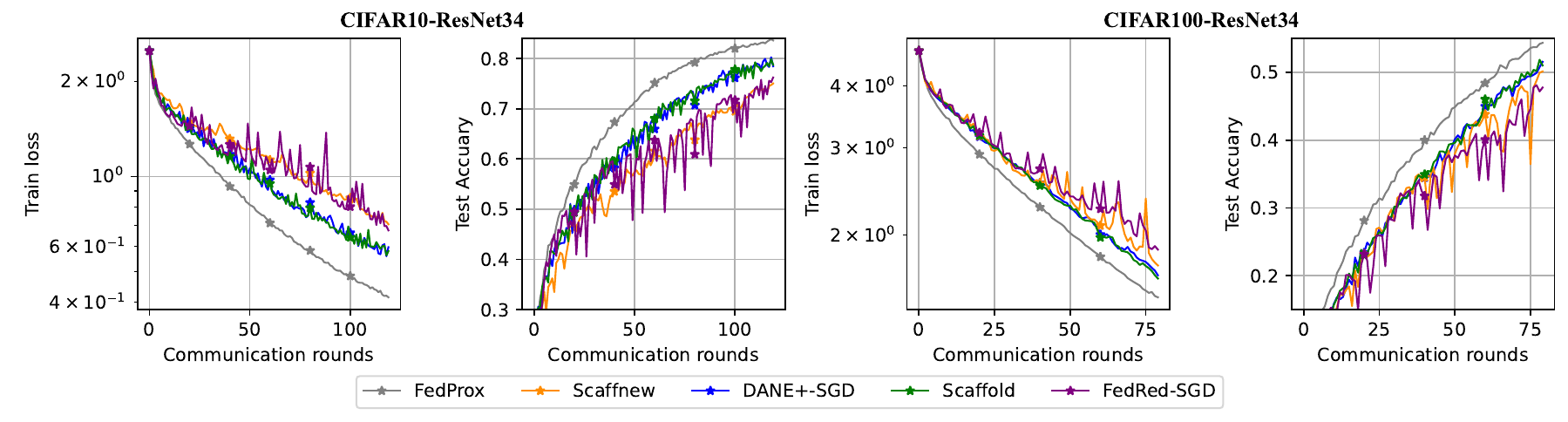}
    \caption{Comparison of \algname{DANE+-SGD} and \algname{FedRed-SGD} against three other distributed optimizers on multi-class classification tasks with CIFAR10 and CIFAR100 datasets using ResNet18 with softmax loss. 
    \algname{FedProx} (without drift correction) exhibits faster and more stable convergence compared to the other methods.}
    \label{fig:cifar-main}
\end{figure*}

\end{document}